\documentclass[11pt]{article}
\usepackage[utf8]{inputenc} 
\usepackage[T1]{fontenc}    
\usepackage{wrapfig}

\usepackage{setspace}
\usepackage{cprotect}
\usepackage{amsmath,amssymb,amsthm}
\usepackage[noend]{algorithmic}
\usepackage[ruled,vlined]{algorithm2e}
\usepackage{hyperref}
\usepackage{fullpage}
\usepackage{makeidx}
\usepackage{enumerate}
\usepackage{graphicx,float,psfrag,epsfig}
\usepackage{epstopdf}
\usepackage{color}
\usepackage{enumitem}
\usepackage{subfig}
\usepackage{caption}
\usepackage{bigints}
\usepackage{mathtools}
\usepackage[mathscr]{euscript}

\newcommand{\simone}[1]{\textcolor{black}{#1}}

\usepackage{amsmath}
\usepackage{amssymb}
\usepackage{mathtools}
\usepackage{amsthm}
\usepackage{amsfonts}
\usepackage{soul}
\usepackage{csquotes}
\usepackage{amsmath}
\usepackage{amsthm}
\usepackage{amsfonts}
\usepackage{hyperref}
\usepackage{comment}
\usepackage{graphicx}
\usepackage{xcolor}
\usepackage{parskip}
\usepackage{hyperref}       
\usepackage{url}            
\usepackage{booktabs}       
\usepackage{amsfonts}       
\usepackage{nicefrac}       
\usepackage{microtype}      
\usepackage{xcolor}         
\usepackage{microtype}
\usepackage{graphicx}
\usepackage{subfig}
\usepackage{amsfonts}
\usepackage{amsmath}
\usepackage{amssymb}
\usepackage{bbm}
\usepackage{multirow}
\usepackage{mathtools}
\usepackage{relsize}

\def\P{\mathbb{P}}

\def\diag{\mathrm{diag}}

\def\R{\mathbb{R}}

\def\cJ{\Tilde{J}}

\newcommand{\opnorm}[1]{\left\lVert#1\right\rVert_{\textup{op}}}

\def\b0{{0}}

\def\RR{\mathbb{R}}

\def\>{\rangle}

\def\vec{\operatorname{\mathop{vec}}}
\def\diag{\operatorname{\mathop{diag}}}

\def\Set#1{\left\{ #1 \right\}}

\newcommand{\E}{\mathbb{E}}

\newcommand{\distas}[1]{\mathbin{\overset{#1}{\sim}}}

\newcommand{\bigO}[1]{\mathcal{O}\left(#1\right)}
\newcommand{\bigOmg}[1]{\Omega\left(#1\right)}

\newcommand{\norm}[1]{\left\|#1\right\|}
\newcommand{\subGnorm}[1]{\left\|#1\right\|_{\psi_2}}
\newcommand{\subEnorm}[1]{\left\|#1\right\|_{\psi_1}}

\newcommand{\abs}[1]{\left|#1\right|}

\newcommand{\svmin}[1]{\sigma_{\rm min}\left(#1\right)}

\newcommand{\evmin}[1]{\lambda_{\rm min}\left(#1\right)}

\def\Lip{\mathrm{Lip}}
\def\op{\mathrm{op}}

\def\PP{\mathbb{P}}

\def\min{\mathop{\rm min}\nolimits}
\def\max{\mathop{\rm max}\nolimits}

\def\ie{\textit{i.e. }}

\numberwithin{equation}{section}

\newtheoremstyle{myexample} 
    {\topsep}                    
    {\topsep}                    
    {\rm }                   
    {}                           
    {\bf }                   
    {.}                          
    {.5em}                       
    {}  

\newtheoremstyle{myremark} 
    {\topsep}                    
    {\topsep}                    
    {\rm}                        
    {}                           
    {\bf}                        
    {.}                          
    {.5em}                       
    {}  

\newtheorem{claim}{Claim}[section]
\newtheorem{lemma}[claim]{Lemma}

\newtheorem{assumptions}{Assumption}

\newtheorem{theorem}{Theorem}
\newtheorem{proposition}[claim]{Proposition}
\newtheorem{corollary}[claim]{Corollary}

\theoremstyle{myremark}

\theoremstyle{myremark}

\theoremstyle{myexample}

\author{Simone Bombari\thanks{Institute of Science and Technology Austria (ISTA). Emails: \texttt{\{simone.bombari, marco.mondelli\}@ist.ac.at}.}\;,
\;\;Mohammad Hossein Amani\thanks{EPFL, Switzerland. Email: \texttt{mh.amani1998@gmail.com}.}\;,
\;\;Marco Mondelli\footnotemark[1].}

\title{Memorization and Optimization in Deep Neural Networks with Minimum Over-parameterization}

\begin{document}

\newtheorem*{theoremcentering}{Theorem \ref{thm:maincentering}}
\newtheorem*{theoremcentered}{Theorem \ref{thm:centered}}
\newtheorem*{corhammer}{Corollary \ref{cor:hammer}}
\newtheorem*{cormem}{Corollary \ref{cor:memcap}}
\newtheorem*{theoremoptim}{Theorem \ref{thm:optimization}}

\maketitle

\begin{abstract}
The Neural Tangent Kernel (NTK) has emerged as a powerful tool to provide memorization, optimization and generalization guarantees in deep neural networks.
A line of work has studied the NTK spectrum for two-layer and deep networks with at least a layer with $\Omega(N)$ neurons, $N$ being the number of training samples. Furthermore, there is increasing evidence suggesting that deep networks with sub-linear layer widths are powerful memorizers and optimizers, as long as the number of parameters exceeds the number of samples. Thus, a natural open question is whether the NTK is well conditioned in such a challenging sub-linear setup. In this paper, we answer this question in the affirmative. Our key technical contribution is a lower bound on the smallest NTK eigenvalue for deep networks with the \emph{minimum possible over-parameterization}: \simone{up to logarithmic factors, the number of parameters is $\Omega(N)$} and, hence, the number of neurons is as little as $\Omega(\sqrt{N})$. To showcase the applicability of our NTK bounds, we provide two results concerning memorization capacity and optimization guarantees for gradient descent training. 
\end{abstract}

\section{Introduction}


Training a neural network is a non-convex problem that exhibits disconnected local minima \cite{Auer96, SafranShamir2018,Yun2019}. Yet, in practice gradient descent (GD) and its variants routinely find solutions with zero training loss \cite{Zhang2017}. A framework to understand this phenomenon comes from the 
Neural Tangent Kernel (NTK). This quantity was introduced in \cite{JacotEtc2018}, where it was proved that, during GD training, the network follows the kernel gradient of the functional cost with respect to the NTK. Furthermore, as the layer widths go large, the NTK converges to a deterministic limit which stays constant during training. Hence, in this infinite-width limit, it suffices that the smallest eigenvalue of the NTK is bounded away from $0$ for gradient descent to reach zero loss. Going to finite widths, a recipe to prove GD convergence can be summarized as follows: show that \emph{(i)} the NTK is well conditioned at initialization, and \emph{(ii)} the NTK has not changed significantly by the time GD has reached zero loss (see e.g. \cite{oymak2019overparameterized,chizat2019lazy,bartlett2021deep}). A number of papers have exploited this recipe 
for networks with progressively smaller over-parameterization: two-layer networks \cite{DuEtal2018_ICLR, OymakMahdi2019, SongYang2020,wu2019global,song2021subquadratic}, deep networks with polynomially wide layers 
\cite{AllenZhuEtal2018,DuEtal2019,zou2020gradient,ZouGu2019}, and deep networks with a single wide layer \cite{QuynhMarco2020,nguyen2021proof}. Besides optimization, showing that the NTK is well conditioned directly implies a result on memorization capacity \cite{Andrea2020,tightbounds}, and the smallest eigenvalue of the NTK has also been related to generalization \cite{arora2019fine}. 

In \cite{tightbounds}, it is shown that, given $N$ training samples, a single layer with $\Omega(N)$ neurons suffices for the NTK to be well conditioned in networks of arbitrary depth. However, there is increasing evidence that, in the challenging setup in which the layer widths are \emph{sub-linear} in $N$, neural networks still memorize the training data \cite{bubeck2020network,yun2019small,vershynin2020memory}, reach zero loss under GD training in the two-layer setting \cite{theoreticalinsghts,OymakMahdi2019,Andrea2020}, and in the deep case, GD explores a nicely behaved region of the loss landscape \cite{nguyen2021solutions}. \simone{This is in agreement with simple back-of-the-envelope calculations on CIFAR-10 and ImageNet: CIFAR-10 has $N=50000$ images and roughly $10^6$ parameters suffice to fit random labels \cite{Zhang2017}; furthermore, in order to fit random labels to a subset of $1.2 \cdot 10^6$ ImageNet data points, $2.4 \cdot 10^7$ parameters are enough \cite{Zhang2017}. These numbers suggest that having a number of parameters of the same order as the dataset size is much closer to practice than having a number of neurons of that order.}
We also note that, by counting degrees of freedom or bounding the VC dimension \cite{bartlett2019nearly}, $\Omega(N)$ parameters (and, therefore,  $\Omega(\sqrt{N})$ neurons) are in general necessary to fit $N$ data points. This naturally brings forward the following open question:

\vspace{1em}
\begin{center}
\textit{
Is the NTK well conditioned for deep networks with the minimum possible over-parameterization (i.e., containing $\Omega(N)$ parameters corresponding to $\Omega(\sqrt{N})$ neurons)?}
\end{center}

\paragraph{Main contributions.} In this paper, \emph{we settle this open question} for a large class of deep networks. We consider \emph{(i)} a smooth activation function, \emph{(ii)} i.i.d. data satisfying Lipschitz concentration (e.g., data with a Gaussian distribution, uniform on the sphere/hypercube, or obtained via a Generative Adversarial Network), \emph{(iii)} a standard initialization of the weights (e.g., He's or LeCun's initialization), and \emph{(iv)} a loose pyramidal topology in which the layer widths can increase by at most a multiplicative constant as the network gets deep. Then, in Theorem \ref{thm:main} we show that the NTK is well conditioned under the \emph{minimum possible over-parameterization} requirement: \simone{the number of parameters between the last two layers has to be $\tilde\Omega(N)$ or, equivalently, the number of neurons has to be $\tilde\Omega(\sqrt{N})$, where $\tilde\Omega$ includes extra logarithmic factors}. We achieve this goal by giving a lower bound on the smallest eigenvalue of the NTK. This lower bound is tight when all the layer widths are of the same order.

Our NTK bounds open the way towards understanding the behavior of deep networks with minimum over-parameterization. In particular, an immediate consequence of the fact that the NTK is well conditioned is a result on the memorization capacity (Corollary \ref{cor:memcap}). Furthermore, by suitably choosing the initialization, we provide convergence guarantees for gradient descent training (Theorem \ref{thm:optimization}). Finally, we highlight that, in order to obtain our bounds on the smallest NTK eigenvalue, we give a number of tight estimates on the $\ell_2$ norms of feature vectors and of their centered counterparts, which may be of independent interest.


\paragraph{Proof ideas.} To prove Theorem \ref{thm:main}, we restrict to the kernel $K_{L-2}=J_{L-2}J_{L-2}^\top$, where $J_{L-2}$ is the Jacobian of the output w.r.t. the parameters between the last two hidden layers. This suffices as $K_{L-2}$ is a lower bound on the NTK in the positive semi-definite (PSD) sense. We note that the $i$-th row ($i\in \{1, \ldots, N\}$) of $J_{L-2}$ is given by the Kronecker product $\otimes$ between the feature vector at layer $L-2$ and the backpropagation term from the same layer. One key technical hurdle is to center $J_{L-2}$, so that its rows have the form $u \otimes v - \E \left[ u \otimes v \right]$, where $\E[u]=\E[v]=0$, and all expectations are taken with respect to the (random) training data. To do so, we perform three 
steps of centering: \emph{(i)} we center the feature vectors (corresponding to $u$), \emph{(ii)} we center the backpropagation terms (corresponding to $v$), and \emph{(iii)} we center again the whole row (corresponding to $u\otimes v$). These centering steps are approximate in the sense that the centered matrix is \emph{not necessarily} a lower bound (in the PSD sense) on the original one. However, we are able to control the operator norm of the difference, and show that it scales slower than the smallest eigenvalue of the centered kernel.
At this point, we leverage the structure of the rows of the centered Jacobian to bound their sub-exponential norm via a version of the Hanson-Wright inequality for (weakly) correlated random vectors \cite{HWconvex}. Finally, after providing also a tight estimate on the $\ell_2$ norms of such rows, we can exploit a result from \cite{hammer} to lower bound the smallest singular value of a matrix whose rows are independent random vectors with well controlled sub-exponential and $\ell_2$ norms. 

Existing work bounds the smallest NTK eigenvalue for networks with two layers \cite{theoreticalinsghts,Andrea2020}, or deep networks with a layer containing $\Omega(N)$ \emph{neurons} \cite{tightbounds}. 
In particular, \cite{theoreticalinsghts} also exploits the results \cite{HWconvex,hammer}. However, the centering of the Jacobian is achieved via a combination of whitening and dropping rows, which appears to be difficult to generalize to deep networks. In contrast, the 3-step centering we described above applies to networks of arbitrary depth $L$. A different approach is put forward in \cite{Andrea2020}, and it is based on a decomposition of the kernel via spherical harmonics. This technique allows to obtain the exact limit of the smallest NTK eigenvalue, 
it has been used to analyze random feature models \cite{ghorbani2021linearized,mei2022generalization,ghorbani2020neural,mei2021generalization} and to obtain generalization bounds for two-layer networks (see again \cite{Andrea2020}). However, understanding how to carry out such a decomposition in the multi-layer setup is an open problem. Finally, \cite{tightbounds} considers the deep case, and it relates the smallest eigenvalue of the NTK to the smallest singular value of a feature matrix. As feature matrices are full rank only when the number of neurons is $\Omega(N)$, this approach is inherently limited to networks with a linear-width layer. 

The rest of the paper is organized as follows: Section \ref{sec:setting} discusses the problem setup and our model assumptions; Section \ref{sec:main} presents our main result on the smallest NTK eigenvalue and gives a roadmap of the argument; Section \ref{sec:appl} provides two applications of our NTK bounds: memorization capacity and gradient descent training; Section \ref{sec:rel} discusses additional related work, and Section \ref{sec:concl} provides some concluding remarks. The details of the proofs are deferred to the appendices.  

\section{Preliminaries}\label{sec:setting}

\paragraph{Neural network setup.} 
We consider an $L$-layer neural network with feature maps $f_l: \RR^d \to \RR^{n_l}$ defined for every $x \in \RR^d$ as
\begin{align}\label{eq:def_feature_map}
	f_l(x)=\begin{cases}
		x, & l=0,\\
		\phi(W_l^\top f_{l-1}), & l\in[L-1],\\
		W_L^\top f_{L-1}, & l=L. 
	\end{cases}
\end{align}
Here, $W_l \in \RR^{n_{l-1} \times n_l}$ is the weight matrix at layer $l$, $\phi$ is the activation function and, given an integer $n$, we use the shorthand $[n]=\{1, \ldots, n \}$. 
We assume that the network has a single output, \ie $n_L=1$ and $W_L \in \RR^{n_{L-1}}$, and for consistency we have $n_0=d$. Let $g_l: \RR^d \to \RR^{n_l}$ be the pre-activation feature map so that $f_l(x)=\phi(g_l(x))$ for $l\in [L]$. We define $g_0(x) = f_0(x) = x$.
Let $X=[x_1, \ldots, x_N]^\top \in \RR^{N \times d}$ be the data matrix containing $N$ samples in $\RR^d$, 
$\theta = [\vec(W_1), \ldots, \vec(W_L)]$ be the vector of the parameters of the network,
and $F_L(\theta)=[f_L(x_1), \ldots, f_L(x_N)]^\top$ be the network output. We denote by $J$ the Jacobian of $F_L$ with respect to all the parameters of the network:

\begin{equation}\label{eq:Jac}
	J =\left[\frac{\partial F_L}{\partial\vec(W_1)}, \ldots, \frac{\partial F_L}{\partial\vec(W_L)}\right] \in \RR^{N\times\sum_{l=1}^Ln_{l-1}n_l}.
\end{equation}
Our key object of interest is the \emph{empirical Neural Tangent Kernel (NTK) Gram matrix}, denoted by $K \in\RR^{N\times N}$ and defined as:
\begin{equation}\label{eq:NTKgramdef}
	K
	=J J^T
	=\sum_{l=1}^{L} \left[\frac{\partial F_L}{\partial\vec(W_l)}\right] \left[\frac{\partial F_L}{\partial\vec(W_l)}\right]^\top.
\end{equation}
In \cite{JacotEtc2018}, it is shown that, as $n_l\to \infty$ for all $l\in [L-1]$, $K$ converges to a deterministic \emph{limit}, which stays constant during gradient descent training. The focus of this paper is on the \emph{finite-width} behavior of the empirical NTK \eqref{eq:NTKgramdef}. Quantitative bounds for the NTK convergence rate can be obtained from \cite{AroraEtal2019,buchanan2021deep}. However, these bounds lead to a significant over-parameterization requirement, see the discussion at the end of Section 3 in \cite{tightbounds}. Here, our main result consists in showing that the NTK is well conditioned for a class of neural networks with the \emph{minimum possible over-parameterization}, i.e., $\Omega(N)$ parameter or, equivalently, $\Omega(\sqrt{N})$ neurons. 

\paragraph{Weight and data distribution.} We consider the following initialization of the weight matrices: 
$(W_l)_{i,j}\distas{}_{\rm i.i.d.}\mathcal{N}(0,\beta_l^2 / n_{l-1})$ for $l\in[L-1], i\in[n_{l-1}], j\in[n_l]$, where $\beta_l$ is a numerical constant independent of the layer widths. This covers the popular cases of He's and LeCun's initialization \cite{XavierBengio2010,he2015delving,lecun2012efficient}. For the last layer, we assume that $(W_L)_{i}\distas{}_{\rm i.i.d.}\mathcal{N}(0,\beta_L^2)$ for $i\in [n_{L-1}]$. Throughout the paper, we let $(x_1,\ldots,x_N)$ be $N$ i.i.d.\ samples from the data distribution $P_X$, which satisfies the conditions below.

\begin{assumptions}[Data scaling]\label{ass:data_dist}
	The data distribution $P_X$ satisfies the following properties:
	\begin{enumerate}
		\item $\int \norm{x}_2 dP_X(x)=\Theta(\sqrt{d}).$
		\item $\int \norm{x}_2^2 dP_X(x)=\Theta(d).$
		\item $\int \norm{x-\int x'\, dP_X(x')}_2^2 dP_X(x)=\bigOmg{d}.$
	\end{enumerate}
\end{assumptions}

\begin{assumptions}[Lipschitz concentration]\label{ass:data_dist2}
	The data distribution $P_X$ satisfies the \emph{Lipschitz concentration property}. Namely, there exists an absolute constant $c>0$ such that, for every Lipschitz continuous function $\varphi: \RR^d \to \RR$, we have $\E |\varphi(X)| < + \infty$, and for all $t>0$,
	$$
	\PP\left(\abs{\varphi(x)-\int \varphi(x')\, dP_X(x')}>t\right)
	\leq 2e^{-ct^2 / \norm{\varphi}_{\Lip}^2}.
	$$
\end{assumptions}

Assumption \ref{ass:data_dist} is simply a scaling of the training data points and their centered counterparts. Assumption \ref{ass:data_dist2} covers a number of important cases, e.g., standard Gaussian distribution \cite{vershynin2018high}, uniform distributions on the sphere and on the unit (binary or continuous) hypercube \cite{vershynin2018high}, 
data produced via a Generative Adversarial Network (GAN)\footnote{By applying a Lipschitz map to a standard Gaussian distribution, the map output satisfies Assumption \ref{ass:data_dist2}.} \cite{seddik2020random}, and more generally any distribution satisfying the log-Sobolev inequality with a dimension-independent constant. We also remark that Assumption \ref{ass:data_dist2} is rather common in the related literature \cite{tightbounds}, or it is even replaced by a stronger requirement (e.g., Gaussian distribution or uniform on the sphere) \cite{Andrea2020}.

\begin{assumptions}[Activation function]\label{ass:activationfunc}
	The activation function $\phi$ satisfies the following properties:
	\begin{enumerate}
		\item $\phi$ is a non-linear (and therefore also non-constant) $M$-Lipschitz function;
		\item its derivative $\phi'$ is a $M'$-Lipschitz function;
	\end{enumerate}
\end{assumptions}
These requirements are satisfied by common activations, e.g. smoothed ReLU, sigmoid, or $\tanh$. 


\begin{assumptions}[Network topology]\label{ass:topology}
	The network satisfies a loose pyramidal topology condition, \ie $n_l = \bigO{n_{l-1}}$,	for all $l \in [L-1]$.
\end{assumptions}

A \emph{strict} pyramidal topology (namely, non-increasing layer widths) has been considered in prior work concerning the loss landscape \cite{QuynhICML2017,QuynhICML2018} and gradient descent training \cite{QuynhMarco2020}. Our Assumption \ref{ass:topology} requires a \emph{loose} pyramidal topology in the sense that, as we go deep, the layer widths are allowed to increase by a constant multiplicative factor. We also note that the widths of neural networks used in practice are often large in the first layers and then start decreasing \cite{han2017deep,VGG}.

\begin{assumptions}[Over-parameterization]\label{ass:overparam}
    \simone{We have that
    \begin{equation}\label{eq:overparamcond}
        N\cdot \log^8 N = o(n_{L-2}n_{L-1}).
    \end{equation}}
    Furthermore, there exists $\gamma > 0$ such that
    \begin{equation}\label{eq:polycomp}
        N^{\gamma} = \bigO{n_{L-1}}.
    \end{equation}
\end{assumptions}
Condition \eqref{eq:overparamcond} requires the number of parameters between the last two hidden layers to be linear in the number of samples\simone{, up to logarithmic factors}. This represents our \emph{key over-parameterization condition}. \simone{When all the widths have the same scaling, \eqref{eq:overparamcond} reduces to $N=\tilde o(d^2)$. This is satisfied by several ``standard'' datasets, such as MNIST ($N=60000$, $d=784$), CIFAR-10 ($N=5\cdot 10^4$, $d=3\cdot 32^2$), and ImageNet ($N=1.4\cdot 10^7$, $d=2\cdot 10^5$). We also note that, if $N\gg d^2$, the NTK is low-rank and $\evmin{K} =0$. Furthermore, Corollary \ref{cor:memcap} and Theorem \ref{thm:optimization} cannot generally hold when $N\gg d^2$, as there are more data points to fit than parameters to help with the fitting. A milder requirement is possible for classification tasks, see the discussion at the end of Section \ref{sec:rel}.} 
The second condition \eqref{eq:polycomp} is rather mild, as $\gamma$ can be taken to be arbitrarily small, and it avoids an exponential bottleneck in the last hidden layer.

    

\paragraph{Notation.} 
The feature matrix at layer $l$ is
$F_l = [f_l(x_1),\ldots,f_l(x_N)]^\top \in \RR^{N\times n_l}$, 
and $\Sigma_l(x)=\diag([\phi'(g_{l,j}(x))]_{j=1}^{n_l})$ for $l\in[L-1]$,
where $g_{l,j}(x)$ is the pre-activation neuron. Given two matrices $F,B\in\RR^{m\times n}$, we denote by $F\circ B$ their Hadamard product, 
and by $F\ast B=[(F_{1:}\otimes B_{1:}),\ldots,(F_{m:}\otimes B_{m:})]^T\in\RR^{m\times n^2}$ their row-wise Kronecker product (also known as Khatri-Rao product). Given a random vector $v$, let $\norm{v}_{\psi_2}$ and $\norm{v}_{\psi_1}$ denote its sub-Gaussian and sub-exponential norm, respectively (see also Appendix \ref{app:notation} for a detailed definition). Given a matrix $A$,
let $\norm{A}_{\op}$ be its operator norm, $\norm{A}_{F}$ its Frobenius norm, $\evmin{A}$ its smallest eigenvalue, and $\svmin{A}$ its smallest singular value.
We denote by $\norm{\varphi}_{\Lip}$ the Lipschitz constant of the function $\varphi$. 
All the complexity notations $\Omega(\cdot)$, $\mathcal{O}(\cdot)$, $o(\cdot)$ and $\Theta(\cdot)$ are understood for sufficiently large $N,d,n_1,n_2,\ldots,n_{L-1}$. \simone{Tildes on such symbols are meant to neglect logarithmic factors.} 

\section{NTK Bounds with Minimum Over-parameterization}\label{sec:main}

Our main technical contribution on the smallest eigenvalue of the empirical NTK  \eqref{eq:NTKgramdef} is stated below.

\begin{theorem}[Smallest NTK eigenvalue under minimum over-parameterization]\label{thm:main}
    Consider an $L$-layer neural network \eqref{eq:def_feature_map}, where the activation function satisfies Assumption \ref{ass:activationfunc} and the layer widths satisfy Assumptions \ref{ass:topology} and \ref{ass:overparam}.
    Let $\Set{x_i}_{i=1}^{N}\sim_{\rm i.i.d.}P_X$, 
    where $P_X$ satisfies the Assumptions \ref{ass:data_dist}-\ref{ass:data_dist2},
    and let $K$ be the empirical NTK Gram matrix \eqref{eq:NTKgramdef}.
    Assume that the weights of the network are initialized as
    $(W_l)_{i,j}\sim_{\rm i.i.d.}\mathcal{N}(0,\beta_l^2/n_{l-1})$ for $l\in[L-1]$, $i\in[n_{l-1}]$, $j\in [n_l]$ and $(W_L)_i\sim_{\rm i.i.d.}\mathcal N(0, \beta_L^2)$ for $i\in [n_{L-1}]$. Then, we have 
	\begin{equation}\label{eq:lbNTK}
		\evmin{K} = \Omega(n_{L-2} n_{L-1}),
	\end{equation}
	with probability at least
	$1 - C\,N e^{-c \log^2 n_{L-1}} - C e^{-c \log^2 N}
	$ over $(x_i)_{i=1}^N$ and $(W_k)_{k=1}^L$, where $c$ and $C$ are numerical constants.
	Moreover, we have that
	\begin{equation}\label{eq:ubNTK}
		\evmin{K} = \bigO{d n_{L-1}},
	\end{equation}
	with probability at least
	$1 - C e^{-c n_{L-1}}$, over $(x_i)_{i=1}^N$ and $(W_k)_{k=1}^L$.
\end{theorem}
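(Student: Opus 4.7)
\textbf{Plan for the lower bound \eqref{eq:lbNTK}.} Since $K$ is a sum of the PSD blocks coming from each layer, $K \succeq J_{L-2} J_{L-2}^\top =: K_{L-2}$, and it suffices to lower bound $\evmin{K_{L-2}}$. Each row of $J_{L-2}$ has the Khatri--Rao form $f_{L-2}(x_i) \otimes b(x_i)$, where $b(x_i)$ collects the backpropagation signal through the last two layers (a function of $\Sigma_{L-2}(x_i)$, $W_{L-1}$, $\Sigma_{L-1}(x_i)$, and $W_L$). My plan is to replace $J_{L-2}$ by a centered surrogate $\tilde J$ in three stages: (i) subtract $\E_x[f_{L-2}(x)]$ from the feature factor; (ii) subtract $\E_x[b(x)]$ from the backpropagation factor; (iii) subtract from each resulting row the conditional expectation of the Kronecker product given the weights. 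After each stage I would bound the operator norm of the discrepancy between the old and new Gram matrices via a small-rank perturbation argument: each difference decomposes as a sum of a few outer products of vectors whose $\ell_2$ norms admit tight estimates from Assumptions \ref{ass:data_dist} and \ref{ass:data_dist2}. The target is to show that the cumulative perturbation has operator norm $o(n_{L-2}n_{L-1})$, so that a lower bound on $\evmin{\tilde J \tilde J^\top}$ transfers back to $\evmin{K_{L-2}}$.

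\textbf{Smallest singular value of the centered Jacobian.} Conditionally on the weights, the rows of $\tilde J$ are independent and centered. I would establish two quantitative controls. First, $\|\tilde J_i\|_2^2 = \Theta(n_{L-2} n_{L-1})$ with high probability, obtained by combining the centered-input scaling (item 3 of Assumption \ref{ass:data_dist}) with the variance preservation induced by the Gaussian initialization along the forward and backward passes. Second, a sharp sub-exponential bound on $\tilde J_i$, viewed as a centered quadratic-plus-linear form in $x_i$, via the Hanson--Wright inequality of \cite{HWconvex} for vectors that satisfy the Lipschitz concentration property of Assumption \ref{ass:data_dist2}. With both in hand, the concentration lemma of \cite{hammer} for the smallest singular value of a matrix with independent sub-exponential rows of controlled $\ell_2$ norm, applied conditionally on the weights, yields $\evmin{\tilde J \tilde J^\top} = \Omega(n_{L-2} n_{L-1})$ with failure probability at most $C N e^{-c \log^2 n_{L-1}} + C e^{-c \log^2 N}$, after a union bound across the $N$ samples and across the weights. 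Combining with the centering estimates delivers \eqref{eq:lbNTK}.

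\textbf{Upper bound \eqref{eq:ubNTK}.} I would test $\evmin{K}$ against the first standard basis vector: $\evmin{K} \leq e_1^\top K e_1 = K_{11} = \|\nabla_\theta f_L(x_1)\|_2^2$. Splitting per layer gives $K_{11} = \sum_{l=1}^L \|f_{l-1}(x_1)\|_2^2 \, \|b_l(x_1)\|_2^2$, where $b_l(x_1)$ denotes the backpropagation vector at layer $l$. Under the chosen initialization and Assumption \ref{ass:activationfunc}, standard forward and backward concentration give $\|f_{l-1}(x_1)\|_2^2 = \Theta(n_{l-1})$ (with the convention $n_0 = d$) and $\|b_l(x_1)\|_2^2 = \Theta(n_{L-1})$ with probability $1 - C e^{-c n_{L-1}}$, after union-bounding across the $L$ layers. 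The loose pyramidal Assumption \ref{ass:topology} implies $n_l = O(d)$ for every $l$ and hence $\sum_{l=1}^L n_{l-1} = O(d)$, yielding $K_{11} = O(d\,n_{L-1})$ as claimed.

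\textbf{Main obstacle.} The bottleneck is the three-step centering. The feature vectors $f_{L-2}(x_i)$ and the backpropagation vectors $b(x_i)$ are correlated through the shared weights $W_{L-1}, W_L$, so their centerings do not decouple and the residual cross terms must be bounded at the tight scale $o(n_{L-2} n_{L-1}) = o(N^{1+\alpha})$ with essentially no slack. Simultaneously, the $\ell_2$ and sub-exponential norm estimates feeding the concentration lemma of \cite{hammer} must hold uniformly across all $N$ rows, which consumes probability budget against the $\exp(-c \log^2 n_{L-1})$ tails and is the place where the minimum over-parameterization condition \eqref{eq:overparamcond} is used sharply.
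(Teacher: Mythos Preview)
Your overall strategy matches the paper's: reduce to $K_{L-2}$, center the Jacobian rows, then apply the Hanson--Wright inequality of \cite{HWconvex} for the sub-exponential norm and the row-independent singular-value bound of \cite{hammer}; the upper-bound argument via $K_{11}$ is also the same.

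There is, however, a genuine gap in the centering plan. You propose to bound the operator-norm discrepancy \emph{after each stage separately}. This fails already at stage (i): after centering only $F_{L-2}$ and completing the square, the negative residual one must control is $\frac{\Lambda\mathbf 1\mathbf 1^\top\Lambda}{\|\nu\|_2^2}\circ B_{L-1}B_{L-1}^\top$, with $\nu=\E_x[f_{L-2}(x)]$ and $\Lambda_{ii}=\nu^\top f_{L-2}(x_i)-\|\nu\|_2^2$. The diagonal factor has operator norm $O(\log N)$, but the rows of the \emph{uncentered} $B_{L-1}$ share a mean of norm $\Theta(\sqrt{n_{L-1}})$, so $\opnorm{B_{L-1}B_{L-1}^\top}=\Theta(N n_{L-1})$; the residual is then of order $N n_{L-1}\log^2 N$, which is \emph{not} $o(n_{L-2}n_{L-1})$ under Assumption~\ref{ass:overparam}. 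The paper's resolution (Lemma~\ref{lemma:cent1}) is to carry this negative residual forward through the backprop-centering step rather than bound it, and then exploit an algebraic cancellation: the residual from stage (i) combines with the PSD square from stage (ii) so that the uncentered $B_{L-1}B_{L-1}^\top$ is replaced by the centered $\tilde B_{L-1}\tilde B_{L-1}^\top$, whose operator norm is only $O((N+n_{L-1})\log^2 n_{L-1})$. This cancellation is the technical crux and is invisible if the stages are bounded in isolation; the paper flags it explicitly as the reason the two centerings must be performed ``at the same time''.

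Two minor inaccuracies worth fixing: the backpropagation factor in $J_{L-2}$ is $(B_{L-1})_{i:}=\Sigma_{L-1}(x_i)W_L$ and does not involve $\Sigma_{L-2}$; and conditionally on the weights, the dependence between $f_{L-2}(x_i)$ and $b(x_i)$ is through the shared input $x_i$, not through $W_{L-1},W_L$ (indeed $f_{L-2}$ is independent of these).
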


\begin{wrapfigure}{r}{0.5\textwidth}
  \begin{center}
    \includegraphics[width=0.5\textwidth]{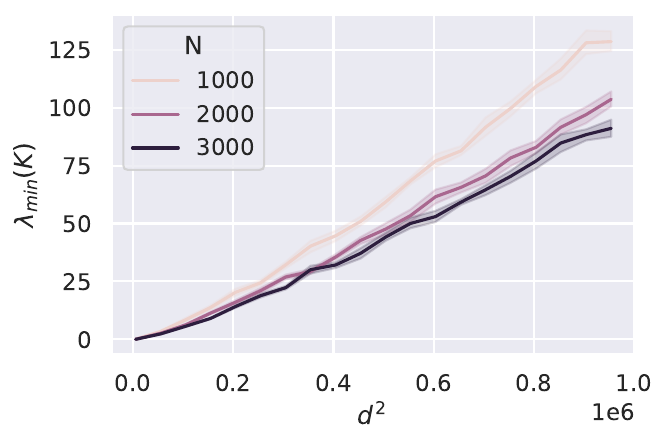}
  \end{center}
  \caption{$\evmin{K}$ as a function of $d^2$ in a 3-layer neural network, with sigmoid activation and $d = n_1 = n_2$.}
  \label{fig:scalingbody}
\end{wrapfigure}

This result implies that the NTK is well conditioned for a class of networks in which the number of parameters $n_{L-2}n_{L-1}$ between the last two hidden layers \simone{is linear in the number of data samples $N$, up to logarithmic factors.} This means that the total number of neurons of the network can be as little as $\tilde\Omega(\sqrt{N})$, which meets the \emph{minimum possible amount of over-parameterization}. 
The lower bound \eqref{eq:lbNTK} and the upper bound \eqref{eq:ubNTK} match when all the widths (up to layer $L-2$) have the same scaling, i.e., $d=\Theta(n_{L-2})$. Throughout the paper, we do not explicitly track the dependence of our bounds on $L$, in the sense that the numerical constants $c, C$ may depend on $L$. 

In Figure \ref{fig:scalingbody}, we consider a 3-layer neural network with $d = n_1 = n_2$, and we plot $\evmin{K}$ as a function of $d^2$, for three different values of $N$. The inputs are sampled from a standard Gaussian distribution, the activation function is the sigmoid $\sigma(x)=(1+e^{-x})^{-1}$, and we set $\beta_l = 1$ for all $l \in [L]$. We repeat the experiment 10 times, and report average and confidence interval at 1
standard deviation. The linear scaling of $\evmin{K}$ in $d^2$ is in agreement with the result of Theorem \ref{thm:main}. The code used to obtain the results of Figure \ref{fig:scalingbody} (and Figure \ref{fig:optimizationbody} as well) is available at \href{https://github.com/simone-bombari/smallest-eigenvalue-NTK/}{\texttt{https://github.com/simone-bombari/smallest-eigenvalue-NTK/}}.


\subsection{Roadmap of the proof of Theorem \ref{thm:main}}\label{subsec:sketch}

After an application of the chain rule and some standard manipulations, we have
\begin{equation}\label{eq:sumterms}
	JJ^\top = \sum_{k=0}^{L-1} K_k, \qquad \mbox{ with } K_k= F_{k} F_{k}^\top \circ B_{k+1}B_{k+1}^\top\in \mathbb R^{N\times N},
\end{equation}
where $B_k\in\RR^{N\times n_k}$ is a matrix whose $i$-th row is given by
\begin{equation}\label{eq:Bmatr}
	(B_k)_{i:} = 
	\begin{cases}
		\Sigma_{k}(x_i) \left(\prod_{l=k+1}^{L-1} W_l\Sigma_l(x_i)\right) W_L, &   k\in[L-2],\\
		\Sigma_{L-1}(x_i)W_L, & k=L-1,\\
		1, &  k=L.
	\end{cases}
\end{equation}

From the decomposition \eqref{eq:sumterms}, one readily obtains that $K \succeq K_{L-2}$, since $K_k$ is PSD for all $k\in \{0, \ldots, L-1\}$. Hence, it suffices to prove a lower bound on $\evmin{K_{L-2}}$.
We denote by $J_{L-2}$ the Jacobian obtained by computing the gradient only over the parameters between layer $L-2$ and layer $L-1$. Then, $K_{L-2}=J_{L-2} J_{L-2}^\top$ and, by using \eqref{eq:sumterms} and \eqref{eq:Bmatr}, the $i$-th row of $J_{L-2}$ can be expressed as
\vspace{0.5em}
\begin{equation}\label{eq:Jacobian}
	(J_{L-2})_{i:} = f_{L-2}(x_i) \otimes (D_L \phi'(W^\top_{L-1} f_{L-2}(x_i))), \qquad i\in [N], \vspace{0.5em}
\end{equation}
where 
$D_L$ is a diagonal matrix with the entries of $W_L$ on its diagonal. Note that, if we fix the weight matrices $(W_l)_{l=1}^L$, the rows $((J_{L-2})_{i:})_{i=1}^N$ are i.i.d., as the training data $(x_i)_{i=1}^N$ are i.i.d. too. 

The proof consists of two main parts. First, we construct the centered Jacobian $\tilde J_{L-2}$, which is obtained from $J_{L-2}$ by iteratively removing expectations with respect to $X$ to its parts, and we show that its smallest singular value is \emph{close} to the smallest singular value of $J_{L-2}$. The details of this part are contained in Appendix \ref{app:centeringJ}. Second, we bound the smallest eigenvalue of the kernel obtained from the centered Jacobian $\tilde J_{L-2}$ by providing an accurate estimate of the $\ell_2$ and sub-exponential norms of its rows. The details of this part are contained in Appendix \ref{app:E}. In order to carry out this program, we exploit a number of concentration results on the $\ell_2$ norms of feature and backpropagation vectors, and on the $\ell_2$ norms of their centered counterparts. These results are contained in Appendix \ref{app:concentration}, and they could be of independent interest. In particular, we provide tight high-probability estimates on \emph{(i)} $\norm{f_l(x)}_2$, \emph{(ii)} $\E_x [ \norm{f_{l}(x)}_2^2]$, \emph{(iii)} $\E_x [ \norm{f_{l}(x)}_2 ]$, \emph{(iv)} $\E_x [ \norm{f_{l}(x) - \E_x [f_{l}(x)] }_2^2]$,  $\E_x [ \norm{f_{l}(x) - \E_x [f_{l}(x)] }_2]$, and $\norm{f_{l}(x) - \E_x [f_{l}(x)] }_2$, and \emph{(v)} $\norm{D_L\phi'(g_{L-1}(x)) - \E_x [D_L\phi'(g_{L-1}(x)) ] }_2$. Some preliminary calculations are also contained in Appendix \ref{app:useful}. 



\paragraph{Part 1: Centering.} We consider the centered Jacobian $\tilde{J}_{L-2}$, whose 
$i$-th row is defined as
\begin{equation}\label{eq:centeredJ}
		(\tilde{J}_{L-2})_{i:} = \tilde f_{L-2}(x_i) \otimes (D_L \tilde{\phi'}(g_{L-1}(x_i))) - \E_{x_i} \left[ \tilde f_{L-2}(x_i) \otimes D_L \tilde{\phi'}(g_{L-1}(x_i)) \right], \quad i\in [N],
\end{equation}
where $\tilde f_{L-2}(x_i) = \tilde \phi (g_{L-2}(x_i))$, with $\tilde \phi (g_{L-2}(x_i)) = \phi (g_{L-2}(x_i)) - \E_{x_i} [\phi (g_{L-2}(x_i))]$ and $\tilde \phi' (g_{L-1}(x_i)) = \phi' (g_{L-1}(x_i)) - \E_{x_i}[ \phi' (g_{L-1}(x_i))]$ being the centered versions of the activation function and of its derivative, respectively.  
Our strategy is to relate $\lambda_{\rm min}(J_{L-2} J_{L-2}^\top)$ to $\lambda_{\rm min}(\tilde J_{L-2} \tilde{J}_{L-2}^\top)$ via the following two steps. 

\emph{\underline{Step (a): Centering $F_{L-2}$ and $B_{L-1}$.}} \simone{We show that $\lambda_{\rm min}(J_{L-2} J_{L-2}^\top)\ge \lambda_{\rm min}(\cJ_{FB} \cJ_{FB}^\top)-o(n_{L-2}n_{L-1})$, where $\cJ_{FB} \cJ_{FB}^\top = \Tilde F_{L-2} \Tilde F_{L-2}^\top \circ \Tilde B_{L-1} \Tilde B_{L-1}^\top$ and $\tilde{F}_{L-2}=F_{L-2}-\E_X[F_{L-2}]$ and $\tilde{B}_{L-1}=B_{L-1}-\E_X[B_{L-1}]$ (Lemma \ref{lemma:cent1} in Appendix \ref{app:centeringJ1}). 
Our strategy differs from existing work (e.g., \cite{theoreticalinsghts, tightbounds}), where either only the backpropagation term is centered (cf. Proposition 7.1 of \cite{theoreticalinsghts}) or only the features are centered (cf. Lemma 5.4 of \cite{tightbounds}). We remark that, in order to handle deep networks with minimum overparameterization\footnote{In contrast, \cite{theoreticalinsghts} considers shallow networks, and \cite{tightbounds} requires the existence of a layer with roughly $\Omega(N)$ neurons.}, the centering of $F_{L-2}$ and $B_{L-1}$ is crucially performed \emph{at the same time}. In fact, by doing so, certain terms containing an eigenvalue which is negative and large in modulus suitably cancel. As a result, the difference between the original kernel $J_{L-2} J_{L-2}^\top$ and the centered one $\cJ_{FB} \cJ_{FB}^\top$ can be bounded by a matrix whose operator norm is $o(n_{L-2}n_{L-1})$.} 

\emph{\underline{Step (b): Centering everything.}} \simone{We show that $\lambda_{\rm min}(\cJ_{FB} \cJ_{FB}^\top)\ge \lambda_{\rm min}(\cJ_{L-2} \cJ_{L-2}^\top)-o(n_{L-2}n_{L-1})$, where the rows of $\cJ_{L-2}$ are given by \eqref{eq:centeredJ} (Lemma \ref{lemma:cent3} in Appendix \ref{app:centeringJ3}). 
The idea is to decompose the difference $\cJ_{FB} \cJ_{FB}^\top-\cJ_{L-2} \cJ_{L-2}^\top$ into a rank-1 matrix plus a PSD matrix. Then, in order to bound the operator norm of the rank-1 term, we leverage the general version of the Hanson-Wright inequality given by Theorem 2.3 in \cite{HWconvex} (see the tail bound on quadratic forms that we provide in Lemma \ref{lemma:HW}). 
This strategy avoids whitening and dropping rows (as done in \cite{theoreticalinsghts}) and, hence, appears to be better suited to the deep case.}

The main result of this part is stated below, and it is proved by combining Lemmas \ref{lemma:cent1} and 
\ref{lemma:cent3}.

\begin{theorem}[Jacobian centering]\label{thm:maincentering}
	Consider the setting of Theorem \ref{thm:main}, and let the rows of $J_{L-2}$ and $\cJ_{L-2}$ be given by \eqref{eq:Jacobian} and \eqref{eq:centeredJ}, respectively. Then, we have 
	\begin{equation}
		\evmin{J_{L-2}J_{L-2}^\top} \geq \evmin{\cJ_{L-2} \cJ_{L-2}^\top} - o (n_{L-2}n_{L-1}),  
	\end{equation}
	with probability at least $1 - C \exp(-c \log^2 n_{L-1}) - C \exp(-c \log^2 N)$ over $(x_i)_{i=1}^N$ and $(W_k)_{k=1}^L$, where $c, C$ are numerical constants.
\end{theorem}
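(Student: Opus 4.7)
The plan is to establish the theorem by chaining together the two centering lemmas, Lemma~\ref{lemma:cent1} (Step (a)) and Lemma~\ref{lemma:cent3} (Step (b)), via Weyl's inequality, and then union-bounding the failure probabilities.

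For Step (a), I would exploit the Khatri--Rao structure of the rows of $J_{L-2}$ in \eqref{eq:Jacobian} to write $J_{L-2}J_{L-2}^\top = F_{L-2}F_{L-2}^\top \circ B_{L-1}B_{L-1}^\top$. Setting $F_{L-2}=\tilde F_{L-2}+\mathbf{1}\mu_f^\top$ and $B_{L-1}=\tilde B_{L-1}+\mathbf{1}\mu_b^\top$, where $\mu_f=\E_x[f_{L-2}(x)]$ and $\mu_b=\E_x[D_L\phi'(g_{L-1}(x))]$, the expansion of $FF^\top$ and $BB^\top$ produces the target kernel $\tilde F_{L-2}\tilde F_{L-2}^\top\circ\tilde B_{L-1}\tilde B_{L-1}^\top=\cJ_{FB}\cJ_{FB}^\top$ plus a collection of cross Hadamard products involving the outer products $a\mathbf{1}^\top+\mathbf{1}a^\top+\|\mu_f\|^2\mathbf{1}\mathbf{1}^\top$ with $a_i=\tilde f_{L-2}(x_i)^\top\mu_f$, and analogously with $b_i$ and $\mu_b$ on the backpropagation side. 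Pieces that are PSD (such as $c_f c_b\,\mathbf{1}\mathbf{1}^\top$ or $c_b\,\tilde F_{L-2}\tilde F_{L-2}^\top$) only raise $\evmin$, so they can be dropped from a lower bound. The remaining rank-$\le 2$ cross terms would be bounded using the estimates on $\|a\|_2$ and $\|b\|_2$ that follow from the Hanson--Wright-type tail bound of Lemma~\ref{lemma:HW} applied conditionally on the weights, together with the tight concentration bounds for $\|f_l(x)\|_2$, $\|\tilde f_l(x)\|_2$, $\|\mu_f\|_2$ and their backpropagation counterparts developed in Appendix~\ref{app:concentration}. An application of Weyl's inequality then converts the operator-norm control into the inequality $\evmin{J_{L-2}J_{L-2}^\top}\ge\evmin{\cJ_{FB}\cJ_{FB}^\top}-o(n_{L-2}n_{L-1})$.

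For Step (b), I would let $u_i=\tilde f_{L-2}(x_i)\otimes D_L\tilde\phi'(g_{L-1}(x_i))$ and $\mu_u=\E_{x_i}[u_i]$, so that $\cJ_{FB}$ has rows $u_i$ and $\cJ_{L-2}$ has rows $u_i-\mu_u$. A direct computation gives the identity
\begin{equation*}
\cJ_{FB}\cJ_{FB}^\top-\cJ_{L-2}\cJ_{L-2}^\top = (\cJ_{L-2}\mu_u)\mathbf{1}^\top + \mathbf{1}(\cJ_{L-2}\mu_u)^\top + \|\mu_u\|_2^2\,\mathbf{1}\mathbf{1}^\top,
\end{equation*}
in which the last summand is PSD (only helping $\evmin$) and the first two form a symmetric matrix of rank at most $2$ with operator norm bounded by $2\sqrt{N}\,\|\cJ_{L-2}\mu_u\|_2$. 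To control $\|\cJ_{L-2}\mu_u\|_2^2=\sum_{i=1}^N\langle u_i-\mu_u,\mu_u\rangle^2$, I would condition on the weights and apply Lemma~\ref{lemma:HW}: each summand is a centered functional of $x_i$ whose sub-exponential norm can be estimated through the concentration results on $\tilde f_{L-2}$ and $\tilde\phi'(g_{L-1})$ in Appendix~\ref{app:concentration}, while $\|\mu_u\|_2$ is controlled using the bounds on $\E_x[\|\tilde f_{L-2}(x)\|_2]$ and on the analogous quantity for the backpropagation term. Weyl's inequality then gives $\evmin{\cJ_{FB}\cJ_{FB}^\top}\ge\evmin{\cJ_{L-2}\cJ_{L-2}^\top}-o(n_{L-2}n_{L-1})$, and chaining with Step (a) concludes the theorem after a union bound over the two failure events.

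The main obstacle is Step (a): naive term-by-term estimates are insufficient, because $\|\mu_f\|_2^2=\Theta(n_{L-2})$ and $\|\mu_b\|_2^2=\Theta(n_{L-1})$ in general, so the pure ``constant'' piece $\|\mu_f\|_2^2\|\mu_b\|_2^2\,\mathbf{1}\mathbf{1}^\top$ alone has operator norm $\Theta(N\cdot n_{L-2}n_{L-1})$, and several rank-$\le 2$ cross terms (e.g.\ $ab^\top$ with $a_i=\tilde f_{L-2}(x_i)^\top\mu_f$) could have operator norm of order $N\sqrt{n_{L-2}n_{L-1}}$ if treated in isolation. The key is to group the expansion so that the PSD contributions are recognized as such and the remaining pieces collapse into products of the smaller quantities $\tilde F_{L-2}\mu_f$ and $\tilde B_{L-1}\mu_b$, which benefit from the zero-mean structure and can be controlled via Hanson--Wright. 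Step (b) is conceptually cleaner because the perturbation is already a manifest rank-$\le 2$ plus PSD matrix, but it still depends on the same sharp sub-exponential and $\ell_2$ norm estimates for the centered Kronecker rows.
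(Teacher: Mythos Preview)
Your overall framework (chain Lemma~\ref{lemma:cent1} and Lemma~\ref{lemma:cent3}, then union bound) is the paper's, but both steps contain a genuine gap: you discard PSD pieces that the paper keeps and uses to \emph{cancel} the large error terms.

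In Step~(b) this is already fatal. With your decomposition
\[
\cJ_{FB}\cJ_{FB}^\top-\cJ_{L-2}\cJ_{L-2}^\top
=(\cJ_{L-2}\mu_u)\mathbf{1}^\top+\mathbf{1}(\cJ_{L-2}\mu_u)^\top+\|\mu_u\|_2^2\,\mathbf{1}\mathbf{1}^\top,
\]
dropping the PSD term and bounding the rank-$2$ part by $2\sqrt{N}\,\|\cJ_{L-2}\mu_u\|_2$ gives at best $O\bigl(N\sqrt{n_{L-1}}\cdot\mathrm{polylog}\bigr)$, since Lemma~\ref{lemma:HW} yields $\|\langle u_i-\mu_u,\mu_u\rangle\|_{\psi_1}=O(\|A\|_F\log n_{L-1})$ and $\|A\|_F=O(\sqrt{n_{L-1}}\log n_{L-1})$ by Lemma~\ref{lemma:Asmall}. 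Under Assumption~\ref{ass:overparam} with small $\alpha$ (say $n_{L-2}=n_{L-1}=n$ and $N$ just below $n^2$), $N\sqrt{n_{L-1}}$ is \emph{not} $o(n_{L-2}n_{L-1})$. The paper instead completes the square, writing the three terms as a single rank-$1$ PSD matrix minus $\Lambda\mathbf{1}\mathbf{1}^\top\Lambda/\|A\|_F^2$; the division by $\|A\|_F^2$ removes the $\sqrt{n_{L-1}}$ factor and leaves $O(N\cdot\mathrm{polylog})$, which \emph{is} $o(n_{L-2}n_{L-1})$ by~\eqref{eq:lemmarel3}.

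Step~(a) is subtler still. Your ``group so the PSD parts are recognized and the rest collapses into $\tilde F_{L-2}\mu_f$, $\tilde B_{L-1}\mu_b$'' is not enough: even after completing the square for the $F$-centering, the remainder is $(\Lambda/\|\nu\|_2)^2\,\mathbf{1}\mathbf{1}^\top\circ B_{L-1}B_{L-1}^\top$ with the \emph{uncentered} $B_{L-1}$, and since $\|B_{L-1}B_{L-1}^\top\|_{\op}$ can be $\Theta(Nn_{L-1})$ this term is $O(Nn_{L-1}\log^2 N)$, again too large. The paper's key maneuver (see~\eqref{eq:secondcentering}) is to \emph{not} drop the full PSD matrix arising from the $B$-centering, but to retain its rank-$1$ slice $(\cdot)(\cdot)^\top\circ\tilde F_{L-2}(\nu\nu^\top/\|\nu\|_2^2)\tilde F_{L-2}^\top$; this slice exactly matches the $F$-centering remainder except that $B_{L-1}B_{L-1}^\top$ is replaced by $\tilde B_{L-1}\tilde B_{L-1}^\top$, yielding~\eqref{eq:firstsecondcentering}. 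Only after this cancellation are both error terms of the form $(\mathrm{diag})\cdot\tilde M$ with a centered $\tilde M$, so that Lemmas~\ref{lemma:opnormcentfeat}--\ref{lemma:opnormcentbackprop} give $O((N+n_l)\cdot\mathrm{polylog})=o(n_{L-2}n_{L-1})$. You correctly flag the obstacle, but the specific cancellation mechanism is the missing idea.
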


\paragraph{Part 2: Bounding the centered Jacobian.} If we fix the weight matrices $(W_l)_{l=1}^L$, the rows of $\cJ_{L-2}$ are i.i.d. vectors of the form $u \otimes v - \E \left[ u \otimes v \right]$, where both $u$ and $v$ are centered. By exploiting this structure, in Appendix \ref{app:bdrow} we bound the $\ell_2$ and sub-exponential norms of these rows. The $\ell_2$ norm is bounded in Lemma \ref{lemma:l2normsJacobian}, and this result relies on the tight estimates on the $\ell_2$ norms of centered features and centered backpropagation terms given by Lemma \ref{lemma:concnormcentered} and \ref{lemma:concnormcenteredwithD}, respectively. The sub-exponential norm is bounded  in Lemma \ref{lemma:psi1normacobian}, and we use again the tail bound of Lemma \ref{lemma:HW}, which exploits the version of the Hanson-Wright inequality in \cite{HWconvex}. At this point, the problem is reduced to bounding the smallest eigenvalue of a matrix such that its rows are i.i.d. and we have a good control on their $\ell_2$ and sub-exponential norms. This goal can be achieved via the following result, whose proof follows from Theorem 3.2 in \cite{hammer} and it is presented for completeness in Appendix \ref{app:hammproof}. 
\begin{proposition}\label{cor:hammer}
	Let $\cJ$ be a matrix with rows $\cJ_{i:}\in \mathbb R^{n_{L-2}n_{L-1}}$, for $i\in [N]$. Assume that $\{\cJ_{i:}\}_{i\in [N]}$ are independent sub-exponential random vectors with $\psi = \max_i \|\cJ_{i:}\|_{\psi_1}$. Let $\eta_{\textup{min}} = \min_i \|\cJ_{i:}\|_2$ and $\eta_{\textup{max}} = \max_i \|\cJ_{i:}\|_2$.
	Set $\xi = \psi K + K'$, and
	$\Delta := C \xi^2 N^{1/4} (n_{L-1}n_{L-2})^{3/4}$. Then, we have that
	\begin{equation}\label{eq:hm}
		\evmin{\cJ \cJ^\top }\geq \eta_{\textup{min}}^2 - \Delta
	\end{equation}
	holds with probability at least
	\begin{equation}
		1 -  \exp \left( -cK \sqrt{N} \log \left( \frac{2 n_{L-1}n_{L-2}}{N} \right) \right) - \mathbb{P}\left(\eta_{\text{max}} \geq K' \sqrt{n_{L-1}n_{L-2}}\right),
	\end{equation}
	where $c$ and $C$ are numerical constants.
\end{proposition}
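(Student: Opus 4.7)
The plan is to reduce the statement to a direct invocation of Theorem 3.2 of \cite{hammer}, which provides a general lower bound on the smallest singular value of a rectangular random matrix whose rows are independent sub-exponential vectors. First I would identify $\evmin{\cJ\cJ^\top}$ with $\svmin{\cJ}^2$, so the goal becomes controlling the smallest singular value of the short-and-wide $N\times n_{L-2}n_{L-1}$ matrix $\cJ$. The hypotheses of the proposition supply exactly the ingredients required by the cited theorem: independent rows, a uniform sub-exponential norm bound $\psi$, and a priori control on the row $\ell_2$ norms via $\eta_{\textup{min}}$ and $\eta_{\textup{max}}$.

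The main technical content of the result in \cite{hammer} can be viewed as the decomposition
\[
\cJ\cJ^\top = D + M,
\]
where $D$ is the diagonal matrix of squared row norms and $M$ is the zero-diagonal Gram matrix with $M_{ij}=\langle \cJ_{i:},\cJ_{j:}\rangle$. This immediately gives $\evmin{\cJ\cJ^\top}\geq \eta_{\textup{min}}^2 - \|M\|_{\op}$, reducing the problem to an operator-norm bound on $M$. The off-diagonal entries are bilinear forms in independent sub-exponential vectors, so conditionally on one of the two rows they behave like scalar sub-exponentials with parameter $\psi\cdot\eta_{\textup{max}}$; this is where the Hanson--Wright-type tail bound underlying \cite{hammer} enters. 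A chaining / $\epsilon$-net argument on $\mathbb{S}^{N-1}$, combined with the sub-exponential concentration of $u^\top M u$ for each fixed direction, yields an operator-norm bound scaling as $\xi^2\, N^{1/4}(n_{L-1}n_{L-2})^{3/4}$ with failure probability $\exp(-cK\sqrt{N}\log(2n_{L-1}n_{L-2}/N))$, matching the stated $\Delta$.

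To translate the conditional statement of \cite{hammer} (which treats the maximum row norm as a deterministic quantity) into the form claimed here, I would work on the event $\{\eta_{\textup{max}}\leq K'\sqrt{n_{L-1}n_{L-2}}\}$, on which the cited theorem applies with the effective constant $\xi = \psi K + K'$ absorbing both the sub-exponential norm and the $\ell_2$-norm contributions to the off-diagonal Gram bound. A union bound with the complementary event produces the additional $\PP(\eta_{\textup{max}}\ge K'\sqrt{n_{L-1}n_{L-2}})$ term in the final probability.

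The main obstacle is essentially bookkeeping: verifying that the constants in the cited theorem line up with the compound constant $\xi$ appearing in our $\Delta$, and carefully propagating the probabilistic conditioning on $\{\eta_{\textup{max}}\leq K'\sqrt{n_{L-1}n_{L-2}}\}$ so that the $\exp(-cK\sqrt{N}\log(\cdot))$ tail is preserved. No new concentration input is needed beyond what \cite{hammer} already provides; the proof is included in the appendix purely for completeness.
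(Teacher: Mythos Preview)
Your proposal is correct and follows essentially the same approach as the paper: both reduce $\evmin{\cJ\cJ^\top}$ to $\eta_{\textup{min}}^2$ minus the supremum of the off-diagonal quadratic form (your $\|M\|_{\op}$ is exactly the paper's $B^2$), and then invoke Theorem~3.2 of \cite{hammer} on the event $\{\eta_{\textup{max}}\le K'\sqrt{n_{L-1}n_{L-2}}\}$. The only detail the paper makes explicit that you leave as ``bookkeeping'' is the concrete parameter choice $\theta=(N/(n_{L-1}n_{L-2}))^{1/4}$ in the cited theorem, together with the verification (via Assumption~\ref{ass:overparam}) that the hypothesis $N\log^2(2/\theta)\le \theta^2 n_{L-1}n_{L-2}$ of Theorem~3.2 is satisfied.
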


The main result of this part is stated below, and it is proved in Appendix \ref{app:pfce}. 
\begin{theorem}[Bounding the centered Jacobian]\label{thm:centered}
    Consider the setting of Theorem \ref{thm:main}, and let the rows of $\cJ_{L-2}$ be given by  \eqref{eq:centeredJ}. Then, we have  
	\begin{equation}
		\evmin{\cJ_{L-2}\cJ_{L-2}^\top} = \Omega(n_{L-2} n_{L-1}),
	\end{equation}
	with probability at least $1 - C e^{ -c \sqrt{N}} - C\,N e^{-c \log^2 n_{L-1}}$ over $(x_i)_{i=1}^N$ and $(W_k)_{k=1}^L$, where $c$ and $C$ are numerical constants.
\end{theorem}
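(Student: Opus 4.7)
The approach is to apply Proposition \ref{cor:hammer} directly to $\cJ = \cJ_{L-2}$, after conditioning on the weight matrices $(W_l)_{l=1}^{L}$. Conditional on the weights, the $N$ rows of $\cJ_{L-2}$ given by \eqref{eq:centeredJ} are i.i.d., since the $(x_i)_{i=1}^N$ are i.i.d.\ and each row depends only on the corresponding $x_i$. Hence, the hypothesis of Proposition \ref{cor:hammer} is satisfied, and my task reduces to controlling the three quantities $\eta_{\min}$, $\eta_{\max}$, and $\psi = \max_i \|(\cJ_{L-2})_{i:}\|_{\psi_1}$, and then choosing the parameter $K$ of the proposition appropriately.

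For the norm bounds, I would invoke Lemma \ref{lemma:l2normsJacobian} from Appendix \ref{app:bdrow}, which gives $\|(\cJ_{L-2})_{i:}\|_2^2 = \Theta(n_{L-2} n_{L-1})$ with probability at least $1 - C \exp(-c \log^2 n_{L-1})$ for each fixed $i$. This lemma rests on the tight high-probability estimates on $\|\tilde f_{L-2}(x)\|_2$ from Lemma \ref{lemma:concnormcentered} (scaling as $\sqrt{n_{L-2}}$) and on $\|D_L \tilde{\phi'}(g_{L-1}(x))\|_2$ from Lemma \ref{lemma:concnormcenteredwithD} (scaling as $\sqrt{n_{L-1}}$), together with the observation that the norm of a centered Kronecker product can be related to the product of the individual centered norms. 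A union bound over $i \in [N]$ then yields $\eta_{\min}^2 = \Omega(n_{L-2} n_{L-1})$ and $\eta_{\max} \leq K' \sqrt{n_{L-2} n_{L-1}}$ with $K' = \Theta(1)$, with probability at least $1 - C N \exp(-c \log^2 n_{L-1})$. For the sub-exponential norm, I would invoke Lemma \ref{lemma:psi1normacobian}, which bounds $\psi$ by a polylogarithmic factor. The argument exploits the fact that each row is the centered Kronecker product $u \otimes v - \E[u \otimes v]$, where, conditional on the weights, both $u$ and $v$ are Lipschitz functions of $x_i$. Combined with Assumption \ref{ass:data_dist2}, the tail bound for quadratic forms of correlated vectors in Lemma \ref{lemma:HW} --- itself derived from the version of the Hanson-Wright inequality in \cite{HWconvex} --- yields the desired sub-exponential estimate.

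Plugging these bounds into Proposition \ref{cor:hammer} with $K = \Theta(1)$ gives $\xi = \bigO{1}$ up to polylog factors, and consequently $\Delta = \bigO{N^{1/4} (n_{L-1} n_{L-2})^{3/4}}$ times $\mathrm{polylog}(N)$. The over-parameterization hypothesis \eqref{eq:overparamcond}, $N^{1 + \alpha} = o(n_{L-2} n_{L-1})$, then ensures that $N^{1/4} \cdot \mathrm{polylog}(N) = o((n_{L-2} n_{L-1})^{1/4})$, so $\Delta = o(n_{L-2} n_{L-1})$. Combining this with $\eta_{\min}^2 = \Omega(n_{L-2} n_{L-1})$ via \eqref{eq:hm} yields $\evmin{\cJ_{L-2} \cJ_{L-2}^\top} = \Omega(n_{L-2} n_{L-1})$, as desired. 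The two failure probabilities combine as $C N e^{-c \log^2 n_{L-1}}$ from the norm estimates and $C e^{-c \sqrt{N}}$ from the tail term in Proposition \ref{cor:hammer} (the choice $K = \Theta(1)$ together with \eqref{eq:polycomp} makes $cK\sqrt{N}\log(2 n_{L-1}n_{L-2}/N) \geq c'\sqrt{N}$).

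The main obstacle I expect is ensuring quantitative compatibility between the ingredients. Concretely, the sub-exponential norm $\psi$ must stay below $(n_{L-2} n_{L-1} / N)^{1/4}$ up to constants, since this is the polynomial gap afforded by \eqref{eq:overparamcond}; a polynomial-in-$N$ growth of $\psi$ would overwhelm $\eta_{\min}^2$ once squared and multiplied by $N^{1/4} (n_{L-1} n_{L-2})^{3/4}$. This is precisely why Lemma \ref{lemma:psi1normacobian} is crucially obtained through the correlated-vector Hanson-Wright inequality of \cite{HWconvex}, rather than through a coordinate-independent bound: the entries of $u \otimes v$ share the common randomness of $x_i$, so a naive treatment of them as near-independent would be too loose to close the argument.
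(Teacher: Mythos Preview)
Your proposal is correct and follows essentially the same route as the paper: condition on the weights so that the rows of $\cJ_{L-2}$ are i.i.d., invoke Lemma \ref{lemma:l2normsJacobian} (with a union bound over $i\in[N]$) for $\eta_{\min}$ and $\eta_{\max}$, invoke Lemma \ref{lemma:psi1normacobian} for $\psi=\bigO{\log n_{L-1}}$, and feed these into Proposition \ref{cor:hammer} with $K=\Theta(1)$, using Assumption \ref{ass:overparam} (via Lemma \ref{lemma:alphabeta}) to show $\Delta=o(n_{L-2}n_{L-1})$. The only cosmetic differences are that the paper makes the weight/data probability split explicit via a $\sqrt{p}$ argument and that the polylog factor in $\xi$ is $\log n_{L-1}$ rather than $\log N$; neither affects the argument.
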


Recall that $K \succeq K_{L-2}$, hence $\evmin{K}\ge \evmin{K_{L-2}}=\evmin{J_{L-2}J_{L-2}^\top}$. Thus, the lower bound \eqref{eq:lbNTK} follows by combining the results from Theorem \ref{thm:maincentering} and Theorem \ref{thm:centered}. 

Finally, the upper bound \eqref{eq:ubNTK} is rather straightforward. First, we write
\begin{equation}
\evmin{K} = \evmin{JJ^\top} \leq (J J^\top)_{11} = \sum_{l=0}^{L-1} \norm{(F_l)_{1:}}_2^2 \norm{(B_{l+1})_{1:}}_2^2.    
\end{equation}
Then, by combining the bound on $\norm{(F_l)_{1:}}_2^2$ of Lemma \ref{lemma:concnorm} with a direct calculation for $\norm{(B_{l+1})_{1:}}_2^2$, the desired result \eqref{eq:ubNTK} follows. The details are contained in Appendix \ref{app:pfub}.

\section{Two Applications: Memorization and Optimization}\label{sec:appl}

\paragraph{Memorization capacity.} The fact that the NTK Gram matrix \eqref{eq:NTKgramdef} is well conditioned readily implies a result on 
memorization capacity. This was already observed in \cite{Andrea2020} for two-layer networks and in \cite{tightbounds} for deep networks with a layer containing $\Omega(N)$ neurons. Here, by using Theorem \ref{thm:main}, we show that $\Omega(N)$ parameters between the last pair of hidden layers are enough for the network to fit $N$ real-valued labels up to an arbitrarily small error. The result is stated below and, for completeness, we give the proof in Appendix \ref{app:newmem}. 


\begin{corollary}[Memorization]\label{cor:memcap}
	Consider an $L$-layer neural network \eqref{eq:def_feature_map}, where the activation function satisfies Assumption \ref{ass:activationfunc} and the layer widths satisfy Assumptions \ref{ass:topology} and \ref{ass:overparam}.
	Let $\Set{x_i}_{i=1}^{N}\sim_{\rm i.i.d.}P_X$, 
	where $P_X$ satisfies the Assumptions \ref{ass:data_dist}-\ref{ass:data_dist2}. 
	Then, it holds that for every $Y \in \R^N$, and for every $\varepsilon > 0$, there exists a set of parameters $\theta$ such that
	\begin{equation}
		\norm{F_L(\theta) - Y}_2 \leq \varepsilon,
	\end{equation}
	with probability at least
	$1 - C\,N e^{-c \log^2 n_{L-1}} - C e^{-c \log^2 N} 
	$ over $(x_i)_{i=1}^N$, where $c$ and $C$ are numerical constants.
\end{corollary}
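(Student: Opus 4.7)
The plan is to derive Corollary \ref{cor:memcap} from Theorem \ref{thm:main} via a standard NTK-based optimization argument. Since the decomposition \eqref{eq:sumterms} gives $K \succeq K_{L-2} = J_{L-2}J_{L-2}^\top$, Theorem \ref{thm:main} directly yields $\evmin{J_{L-2}(\theta_0) J_{L-2}(\theta_0)^\top} \geq \lambda_0 := c\, n_{L-2} n_{L-1}$ at initialization with the stated probability. It is therefore enough to optimize only the weights $W_{L-1}$ between the last two hidden layers, leaving every other weight matrix frozen at its initial value.

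Concretely, I would consider the gradient flow $\dot W_{L-1}(t) = -\nabla_{W_{L-1}} L(\theta_t)$ on the squared loss $L(\theta) = \tfrac{1}{2}\|F_L(\theta) - Y\|_2^2$, and use the textbook identity $\tfrac{d}{dt} L(\theta_t) = -\|\nabla_{W_{L-1}} L(\theta_t)\|_2^2 \leq -2\,\evmin{K_{L-2}(\theta_t)}\, L(\theta_t)$. As long as $\evmin{K_{L-2}(\theta_t)} \geq \lambda_0/2$ throughout $[0,T]$, this gives $L(\theta_T) \leq L(\theta_0)\, e^{-\lambda_0 T}$, which can be driven below $\varepsilon^2/2$ by choosing $T$ large, hence producing parameters $\theta_T$ with $\|F_L(\theta_T) - Y\|_2 \leq \varepsilon$. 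Direct integration of $\|\dot W_{L-1}(s)\|_2 \leq \sqrt{2\,\|K_{L-2}(\theta_s)\|_{\op}\, L(\theta_s)}$ together with the exponential loss decay gives the displacement bound $\|W_{L-1}(t) - W_{L-1}(0)\|_2 = O\bigl(\sqrt{L(\theta_0)/\lambda_0}\bigr)$, using that $\|K_{L-2}\|_{\op}$ and $\lambda_0$ are of the same polynomial order in $n_{L-2} n_{L-1}$. Pairing this with a Lipschitz estimate of the form $\|K_{L-2}(\theta) - K_{L-2}(\theta_0)\|_{\op} \leq \Lambda\,\|W_{L-1} - W_{L-1}(0)\|_2$, obtained from the smoothness of $\phi$ (Assumption \ref{ass:activationfunc}) and the high-probability concentration bounds on feature and backpropagation norms already collected in Appendix \ref{app:concentration}, a bootstrap argument keeps $\evmin{K_{L-2}(\theta_t)} \geq \lambda_0/2$ along the trajectory and closes the loop.

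The main obstacle is that for \emph{arbitrary} labels $Y \in \R^N$ the initial loss $L(\theta_0)$ can scale like $\|Y\|_2^2$, which may be too large to keep the Jacobian inside the region where $\evmin{K_{L-2}} \geq \lambda_0/2$. A clean way to bypass this is a preliminary rescaling of $W_L$ by a small factor $\kappa > 0$: because $F_L$ is linear in $W_L$, this sends $F_L(\theta_0) \to \kappa F_L(\theta_0)$ and $K_{L-2} \to \kappa^2 K_{L-2}$, so choosing $\kappa$ small enough makes $\|F_L(\theta_0)\|_2$ negligible compared to $\|Y\|_2$ while leaving the NTK conditioning intact up to a known $\kappa^2$ factor. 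After this rescaling, the displacement bound depends only on $\|Y\|_2^2 / \lambda_0$, which is tolerable because the polynomial gap $\lambda_0 = \Omega(N^{1+\alpha})$ granted by Assumption \ref{ass:overparam} dominates the (data-independent, fixed) target norm. Finally, the failure event in the corollary is identical to that of Theorem \ref{thm:main}, since the optimization step above is deterministic once $\theta_0$ is fixed, so the high-probability statement transfers verbatim.
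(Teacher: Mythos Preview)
Your gradient-flow route has a genuine gap in the rescaling step, and this gap is fatal for arbitrary $Y$. Scaling $W_L\mapsto\kappa W_L$ sends $\lambda_0\mapsto\kappa^2\lambda_0$ while $L(\theta_0)\to\|Y\|_2^2/2$ as $\kappa\to 0$, so your displacement bound becomes $\sqrt{L(\theta_0)/\lambda_0'}\sim\|Y\|_2/(\kappa\sqrt{\lambda_0})$, which \emph{diverges}. The bootstrap condition $\Lambda'\cdot(\text{displacement})<\lambda_0'/2$ then reads $\|Y\|_2<\kappa\,\lambda_0^{3/2}/(2\Lambda)$ and becomes tighter as $\kappa\to 0$, not looser. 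Scaling in the opposite direction is no better unless $F_L(\theta_0)=0$, since for large $\kappa$ the residual is dominated by $\kappa F_L(\theta_0)$ and the displacement stabilizes at a $\kappa$-independent value. More fundamentally, the corollary quantifies over \emph{all} $Y\in\R^N$ after the architecture (hence $\lambda_0$ and $\Lambda$) is fixed, so the claim that $\lambda_0$ ``dominates the fixed target norm'' has no content here. A correct version of your approach would need the symmetrized zero-output initialization \eqref{eq:theta0} together with a $\|Y\|_2$-dependent boost of $W_L$---precisely the machinery of Theorem~\ref{thm:optimization}.

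By contrast, the paper's proof runs no optimization. From $\evmin{K}>0$ at some $\theta_0$ the Jacobian $J(\theta_0)$ has a right inverse, so for any $Y$ one picks $\theta'$ with $J(\theta_0)\theta'=Y$; by the definition of the derivative, the difference quotient $h^{-1}\bigl(F_L(\theta_0+h\theta')-F_L(\theta_0)\bigr)$ converges to $Y$ as $h\to 0$. This quotient is itself realized by a network of the same depth with doubled hidden widths (still satisfying Assumptions~\ref{ass:topology}--\ref{ass:overparam}), so choosing $h$ small enough furnishes the required parameters. The dependence on $Y$ is absorbed entirely into $\theta'$ and $h$; no trajectory analysis, Lipschitz control of $K_{L-2}$, or rescaling is required.
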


\paragraph{Gradient descent training.} Theorem \ref{thm:main} has implications on the convergence of gradient descent algorithms. In particular, by choosing carefully the initialization, we show that gradient descent trained on the $N$ samples $\{x_i\}_{i=1}^N$ with labels $Y=(Y_1, \ldots, Y_N)$ converges to zero loss. This is -- at the best of our knowledge -- the \emph{first result of this kind for deep networks with minimum over-parameterization}.

To define our initialization, let us assume that the $(L-1)$-th layer has an even number of neurons. With a slight abuse of notation, we indicate its width as $2 n_{L-1}$. Thus, $W_{L-1}$ can be written in the form $\left[W_{L-1}^{(1)}, W_{L-1}^{(2)}\right]$, where $W_{L-1}^{(1)}, W_{L-1}^{(2)}\in\mathbb R^{n_{L-2} \times n_{L-1}}$. Similarly, $W_L\in\mathbb R^{2 n_{L-1}}$ is the concatenation of $W_{L}^{(1)}, W_{L}^{(2)}\in\mathbb R^{n_{L-1}}$. Then, we define the initialization $\theta_0 = [\vec(W_{1, 0}), \ldots, \vec(W_{L, 0})]$ as follows:
\begin{equation}\label{eq:theta0}
\begin{split}
    (W_{l, 0})_{i,j}&\distas{}_{\rm i.i.d.}\mathcal{N}(0,\beta_l^2 / n_{l-1}), \,\, l\in[L-2], \,i\in[n_{l-1}], \,j\in[n_l], \\[4pt]
    (W_{L-1, 0}^{(1)})_{i,j}&\distas{}_{\rm i.i.d.}\mathcal{N}(0,\beta_{L-1}^2 / n_{L-2}),\,\, i\in[n_{L-2}], \,j\in[n_{L-1}], \,\, \mbox{and} \,\, W_{L-1, 0}^{(2)} = W_{L-1, 0}^{(1)}, \\[4pt]
(W_{L, 0}^{(1)})_{i}&\distas{}_{\rm i.i.d.}\mathcal{N}(0,\beta_L^2 \gamma),\,\, i\in[n_{L-1}],\,\,\mbox{and} \,\, W_{L, 0}^{(2)} = - W_{L, 0}^{(1)}.
\end{split}
\end{equation}
As usual, $(\beta_l)_{l=1}^L$ are numerical constants independent of the layer widths. In contrast, the quantity $\gamma$ will be set to a suitably large value depending on the number of training samples and on the layer widths. In words, the initialization is standard for $W_{1, 0}, \ldots, W_{L-2, 0}$ and $W_{L-1, 0}^{(1)}$; the variance of $W_{L, 0}^{(1)}$ is boosted up by a factor $\gamma$; and we duplicate the neurons of layer $L-1$ so that $W_{L-1, 0}^{(2)} = W_{L-1, 0}^{(1)}$ and $W_{L, 0}^{(2)} = - W_{L, 0}^{(1)}$. At this point, we are ready to state our optimization result. 



\begin{theorem}[Optimization via gradient descent]\label{thm:optimization}
Consider an $L$-layer neural network \eqref{eq:def_feature_map}, where the activation function satisfies Assumption \ref{ass:activationfunc} and the layer widths satisfy Assumptions \ref{ass:topology} and \ref{ass:overparam}.
	Consider training data $\Set{x_i}_{i=1}^{N}\sim_{\rm i.i.d.}P_X$, 
	where $P_X$ satisfies the Assumptions \ref{ass:data_dist}-\ref{ass:data_dist2}, with labels $Y\in \mathbb R^N$ such that $\norm{Y}_2 = \Theta(\sqrt N)$. We consider solving the least-squares optimization problem
    \begin{equation}\label{eq:loss}
        \min_\theta \mathcal L(\theta) := \frac{1}{2} \min_\theta \norm{F_L(\theta) - Y}_2^2,
    \end{equation}
    by running gradient descent updates of the form 
    $\theta_{t+1} = \theta_t - \eta \nabla \mathcal L(\theta_t)$,
    where the initialization $\theta_0$ is defined in \eqref{eq:theta0} with $\gamma = d^3 N^2$ and $\eta \leq C (\gamma Nd n_{L-1})^{-1}$. Then, for all $t\ge 1$, 
        \begin{equation}\label{eq:linconv}
        \mathcal L(\theta_t) \leq \left( 1 - c \eta \gamma n_{L-2} n_{L-1}\right)^t \mathcal L(\theta_0),
    \end{equation}
    with probability at least
$1 - C\,N e^{-c \log^2 n_{L-1}} - C e^{-c \log^2 N}$
    over $(x_i)_{i=1}^N$ and the initialization $\theta_0$, where $c$ and $C$ are numerical constants.
\end{theorem}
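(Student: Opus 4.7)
The plan is to follow the standard NTK-based recipe for gradient-descent convergence (combine a lower bound on the smallest NTK eigenvalue at initialization with a local Lipschitzness estimate for the Jacobian, then close an induction on $t$), with $\gamma$ playing the role of the knob that buys enough room to keep $\theta_t$ in a neighborhood where the NTK has not degraded. The decisive feature of the initialization \eqref{eq:theta0} is that the equalities $W_{L-1,0}^{(2)}=W_{L-1,0}^{(1)}$ and $W_{L,0}^{(2)}=-W_{L,0}^{(1)}$ force $F_L(\theta_0)\equiv 0$ identically, hence $\mathcal L(\theta_0)=\tfrac12\norm{Y}_2^2=\Theta(N)$. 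This duplication is what decouples the variance boost from the initial loss: without it, boosting the variance of $W_L$ by $\gamma$ would multiply $\norm{F_L(\theta_0)}_2^2$ by the same factor and wipe out the benefit of an enlarged NTK.

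The second step is to establish the NTK bounds at $\theta_0$. Since the entries of $W_{L,0}^{(1)}$ have variance $\beta_L^2\gamma$ and the two halves of $W_L$ are antipodal, a direct computation from \eqref{eq:Jacobian}--\eqref{eq:Bmatr} shows that the kernel $K_{L-2}(\theta_0)$ is distributionally equal to $2\gamma$ times the ``vanilla'' kernel with standard initialization. Theorem~\ref{thm:main} (whose proof is insensitive to the numerical constant $\beta_L$) then yields $\evmin{K(\theta_0)}\ge \evmin{K_{L-2}(\theta_0)}\ge c\,\gamma\,n_{L-2}n_{L-1}$ with the claimed probability. In parallel, the trace bound argued exactly as in \eqref{eq:ubNTK} (using Lemma~\ref{lemma:concnorm} to control $\norm{(F_l)_{1:}}_2$ and a Gaussian-norm estimate on the boosted $W_L$) gives $\norm{K(\theta_0)}_{\op}=O(\gamma\,N\,d\,n_{L-1})$, which explains the step-size restriction $\eta\le C(\gamma N d n_{L-1})^{-1}$ and guarantees that a single descent step from a point with well-conditioned NTK contracts the loss by $1-\Theta(\eta\,\gamma n_{L-2}n_{L-1})$, matching \eqref{eq:linconv}.

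The third ingredient is the control of the Jacobian along the trajectory. Using the chain rule applied to \eqref{eq:def_feature_map} together with the concentration estimates on $\norm{f_l(x)}_2$ and on backpropagation vectors gathered in Appendix~\ref{app:concentration} (Lemmas~\ref{lemma:concnorm}, \ref{lemma:concnormcentered} and \ref{lemma:concnormcenteredwithD}), I would show in a ball $\cB(\theta_0,R)$ a local Lipschitz bound $\norm{J(\theta)-J(\theta_0)}_{\op}\le L_J\norm{\theta-\theta_0}_2$ and a uniform bound $\norm{J(\theta)}_{\op}\le 2\norm{J(\theta_0)}_{\op}$, with $L_J$ polynomial in $d$, $N$, $\sqrt{n_l}$, and entering $\gamma$ only through $\norm{W_{L,0}}_{\op}=\Theta(\sqrt{\gamma n_{L-1}})$, i.e.\ through a factor $\sqrt{\gamma}$. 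The induction is then standard: assuming $\theta_t\in\cB(\theta_0,R)$, one has $\evmin{K(\theta_t)}\ge \tfrac12\evmin{K(\theta_0)}$, so the loss contracts as in \eqref{eq:linconv}, and summing the GD updates yields
\begin{equation*}
	\norm{\theta_t-\theta_0}_2\le \sum_{s=0}^{t-1}\eta\,\norm{\nabla\mathcal L(\theta_s)}_2\le \frac{C\sqrt{\mathcal L(\theta_0)}}{\sqrt{\gamma\, n_{L-2} n_{L-1}}}=O\!\left(\sqrt{\frac{N}{\gamma\, n_{L-2} n_{L-1}}}\right).
\end{equation*}
The final check is that $\gamma=d^3 N^2$ makes this radius small enough for $L_J\cdot R\ll \svmin{J(\theta_0)}=\Theta(\sqrt{\gamma n_{L-2}n_{L-1}})$, closing the induction.

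The main obstacle is exactly this last check: one must expose the precise polynomial dependence of $L_J$ on $d$, $N$ and the $n_l$ (which is where the tight norm estimates from Appendix~\ref{app:concentration} enter critically), then verify that the choice $\gamma=d^3 N^2$ simultaneously dominates (i) the NTK-stability factor $L_J R/\svmin{J(\theta_0)}$, (ii) the second-order remainder $\norm{F_L(\theta)-F_L(\theta_0)-J(\theta_0)(\theta-\theta_0)}_2$ that appears in the per-step loss-decrease calculation, and (iii) the union bound over $t$ which must merge the high-probability statements of Theorem~\ref{thm:main}, of the concentration of $\norm{W_{L,0}}_{\op}$ with boosted variance, and of the along-trajectory Lipschitz bounds. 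A secondary subtlety is that after the first GD step the two duplicated halves of $W_{L-1}$ drift apart and $F_L(\theta_t)\not\equiv 0$, but this asymmetry only strengthens the NTK and does not interfere with the argument, so it is enough to exploit the duplication at $t=0$.
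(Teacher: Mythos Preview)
Your plan is correct and mirrors the paper's argument closely: the paper also exploits $F_L(\theta_0)=0$ from the antipodal duplication, obtains $\svmin{J(\theta_0)}=\Omega(\sqrt{\gamma n_{L-2}n_{L-1}})$ and $\opnorm{J(\theta_0)}=O(\sqrt{\gamma N d n_{L-1}})$, then proves the Jacobian stays close in a ball of radius $R=\Theta(\sqrt{N/(\gamma n_{L-2}n_{L-1})})$. The only packaging difference is that the paper invokes a black-box convergence result (Theorem~2.1 of \cite{oymak2019overparameterized}, restated as Proposition~\ref{prop:optim}) rather than running the induction by hand; this black box asks for the slightly sharper uniform bound $\sup_{\theta_1,\theta_2\in\mathcal D}\opnorm{J(\theta_1)-J(\theta_2)}\le \alpha^2/(2\beta)$ instead of your $L_J R\ll\alpha$, and it is this sharper inequality that the choice $\gamma=d^3N^2$ is calibrated to.

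Two small corrections. First, there is no union bound over $t$: all the high-probability events (NTK bounds, feature/backprop norm estimates, weight operator norms) are established once at initialization, and the Jacobian-stability lemmas in the ball are deterministic given that conditioning. Second, the asymmetry after the first step does not ``strengthen'' the NTK; the point is simply that the stability argument controls $J(\theta)$ uniformly over $\mathcal D$ without ever using the duplicated structure beyond $t=0$ (indeed, the paper shows the Jacobian blocks for $W_1,\dots,W_{L-2}$ vanish exactly at $\theta_0$, but this plays no role in the lower bound, which rests on $K_{L-2}$ alone).
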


In words, \eqref{eq:linconv} guarantees that gradient descent \emph{linearly} converges to a point with zero loss. We also note that (our upper bound on) the convergence rate can be expressed as a simple function of the learning rate $\eta$, the number of training samples $N$, and the layer widths $\{n_i\}_{i=0}^{L-1}$.

We now provide a sketch of the argument of Theorem \ref{thm:optimization}, deferring the detailed proof to Appendix \ref{app:pfopt}. As a consequence of Theorem \ref{thm:main}, 
the NTK is well conditioned at the initialization $\theta_0$, which implies that gradient descent makes progress at the beginning. The idea is that the NTK remains sufficiently well conditioned also during training and, hence, the loss vanishes with the linear rate in \eqref{eq:linconv}, since the trained weights remain confined in a ball of sufficiently small radius centered at $\theta_0$. More specifically, we exploit the optimization framework developed in \cite{oymak2019overparameterized}. There, a convergence result is proved if the following condition holds: there exists $\alpha, \beta$ such that 
\begin{equation}\label{eq:optass3}
\begin{split}
&    
\alpha \leq \sigma_{\textup{min}}(J(\theta)) \leq \opnorm{J(\theta)} \leq \beta, 
\quad\mbox{ and }\quad\opnorm{J(\theta_1) - J(\theta_2)} \leq \frac{\alpha^2}{2 \beta},
\end{split}
\end{equation}
for all $\theta, \theta_1, \theta_2\in \mathcal D = \mathcal B(\theta_0, R)$, where $\mathcal D$ denotes a ball centered at $\theta_0$ of radius $R:=4\|F_L(\theta_0)-Y\|_2/\alpha$ (cf. Proposition \ref{prop:optim}). In order to control the radius $R$, we take advantage of the form \eqref{eq:theta0} of the initialization $\theta_0$. In particular, the spectrum of the Jacobian $J$ (and, hence,  $\alpha$) scales linearly in $\sqrt{\gamma}$, and the network output at initialization $F_L(\theta_0)$ is equal to $0$ (cf. Lemma \ref{lemma:firstlemmaopt}). Thus, as $\norm{Y}_2 = \Theta(\sqrt N)$, we have that $R=\Theta(\sqrt{N}/\alpha)$, which can be controlled by choosing suitably $\gamma$. At this point, the argument consists in providing a number of estimates on feature vectors, backpropagation terms, and finally the Jacobian $J(\theta)$ for all $\theta\in\mathcal D$ (cf. Lemmas \ref{lemma:optW}-\ref{lemma:opnormsigminball}). This allows us to find $\alpha, \beta$ such that \eqref{eq:optass3} holds and conclude.

In Figure \ref{fig:optimizationbody}, we give an illustrative example that 4-layer networks achieve $0$ loss when the number of parameters is at least linear in the number of training samples, i.e., under minimum over-parameterization. To ease the experimental setup, we use a ReLU activation, with Adam optimizer. We initialize the network as in the setting of Theorem \ref{thm:main}, picking $\beta_l = 1$ for all $l \in [L]$. The inputs, as well as the targets, are sampled from a standard Gaussian distribution. The plot is averaged over $10$ independent trials. As predicted by Theorem \ref{thm:optimization}, the loss experiences a phase transition and it goes from $1$ to $0$ when the layer widths are of order $\sqrt{N}$.

\section{Related Work and Discussion} \label{sec:rel}

\paragraph{Random matrices in deep learning.} The limiting spectra of several random matrices related to neural networks have been the subject of a recent line of work. In particular, the 
Conjugate Kernel (CK) -- namely, the Gram matrix of the features from the last hidden layer -- has been studied for models with two layers \cite{pennington2017nonlinear,liao2018spectrum,louart2018random,peche2019note}, with a bias term \cite{adlam2019random,piccolo2021analysis}, and with multiple layers \cite{benigni2019eigenvalue}. The Hessian matrix has been considered in \cite{pennington2017geometry}, and the closely related Fisher information matrix in \cite{pennington2018spectrum}. Using tools from free probability, the input-output Jacobian (as opposed to the parameter-output Jacobian considered in this work) of deep networks is studied in \cite{pennington2018emergence} and the NTK of a two-layer model in \cite{adlam2020neural}. In \cite{fan2020spectra}, the spectrum of NTK and CK for deep networks is characterized via an iterated Marchenko-Pastur map. The generalization error has also been studied via the spectrum of suitable random matrices, see \cite{hastie2022surprises,mei2022generalization,liao2020random,Andrea2020} and Section 6 of the review \cite{bartlett2021deep}. Most of the existing results focus on the linear-width asymptotic regime,
where the widths of the various layers are linearly proportional. An exception is \cite{wang2021deformed}, which focuses on the ultra-wide two-layer case ($n_1\gg N$). 


\paragraph{Smallest eigenvalue of empirical kernels.} In the line of work mentioned above, the limiting spectrum of the random matrix is characterized in terms of weak convergence, which does not lead to a direct implication on the behavior of its smallest eigenvalue. In fact, lower bounding the smallest eigenvalue often requires understanding the speed of convergence to the limit spectrum. Quantitative bounds for random Fourier features are obtained in \cite{avron2017random}. In the two-layer setting, the smallest NTK eigenvalue is lower bounded in \cite{theoreticalinsghts,Andrea2020,wang2021deformed}, and concentration bounds can also be obtained from \cite{adlam2020neural,SongYang2020,hu2020surprising}. In particular, \cite{Andrea2020} establishes a tight bound for two-layer networks with roughly as many parameters as training samples (hence, under minimum over-parameterization), and for a wide class of activations. However, this result still requires that 
$n_1=\Omega(d)$. 
For deep networks, a convergence rate on the NTK can be obtained from \cite{AroraEtal2019} and potentially from \cite{buchanan2021deep}, however these results require all the layers to be wide. In \cite{tightbounds}, it is proved that a single layer of linear width suffices for the NTK to be well conditioned. Here, Theorem \ref{thm:main} provides the first result for deep networks under minimum over-parameterization, therefore allowing for sub-linear layer widths.

\begin{wrapfigure}{l}{0.4\textwidth}
  \begin{center}
    \includegraphics[width=0.4\textwidth]{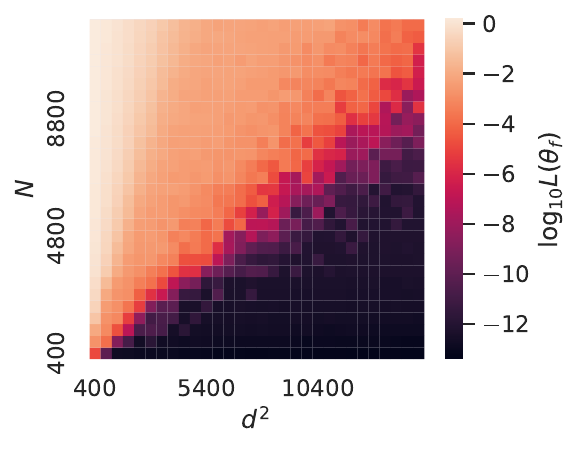}
  \end{center}
  \vspace{-0.5em}
  \caption{Value of the loss $\mathcal L(\theta_f)$ once gradient descent has converged for a 4-layer ReLU network with $d = n_1 = n_2 = n_3$.}
  \vspace{-0.5em}
  \label{fig:optimizationbody}
\end{wrapfigure}

\paragraph{Memorization capacity and gradient descent training.}
For binary classification, 
the memorization capacity of neural networks was first studied by Cover \cite{cover1965geometrical}, who solved the single-neuron case. Later, Baum  \cite{Baum1988} proved that, for two-layer networks, the memorization capacity is lower bounded by the number of parameters, and upper bounds of the same order were given in \cite{kowalczyk1994counting,sakurai1992nh}.
Recently, tight results for two-layer networks in the regression setting have been provided \cite{bubeck2020network, Andrea2020}. 
In particular, \cite{bubeck2020network} generalizes the construction of \cite{Baum1988}, while the memorization result of \cite{Andrea2020} comes from a bound on the smallest NTK eigenvalue. This same route is also followed to analyze deep networks in \cite{tightbounds} and in this work. Results for multiple layers have been provided in \cite{bartlett2019nearly, yun2019small,vershynin2020memory,ge2019mildly}. 
As concerns efficient algorithms achieving memorization, \cite{daniely2020memorizing,daniely2020neural} study classification with two-layer networks. A popular line of research has exploited the NTK view to give optimization guarantees on gradient descent. Existing work has focused on the two-layer case \cite{theoreticalinsghts,DuEtal2018_ICLR,OymakMahdi2019,SongYang2020,wu2019global,song2021subquadratic}, and optimal results in terms of over-parameterization have been obtained \cite{Andrea2020}. For deep networks, \cite{DuEtal2019} assumes a lower bound on the smallest NTK eigenvalue, \cite{AllenZhuEtal2018,zou2020gradient,ZouGu2019} require all the layer widths to be (rather large) polynomials in the number of samples, and \cite{QuynhMarco2020,nguyen2021proof} reduce the over-parameterization to a single layer of linear width. In particular, in these last two papers, the analysis of the NTK is reduced to that of a certain feature matrix (the first one in \cite{QuynhMarco2020} and the last one in \cite{nguyen2021proof}). Hence, these approaches do not seem suitable to tackle networks with sub-linear layer widths\footnote{If the width is sub-linear, then the feature matrix is low-rank and, hence, its smallest eigenvalue is $0$.}. \simone{Finally, we remark that milder over-paramerization requirements are sufficient for classification problems. More specifically, it has been proved that polylogarithmic width suffices for two-layer \cite{ji2019polylogarithmic} and deep \cite{chen2020much} ReLU networks to converge and generalize. In fact, achieving memorization for regression is harder than for classification, where it suffices to satisfy inequality constraints \cite{Andrea2020}.} 





\section{Concluding Remarks}\label{sec:concl}

In this paper, we show that the NTK is well conditioned for deep networks with sub-linear layer widths. As consequences of our NTK bounds, we also prove results on memorization and optimization for such a class of networks. Our approach requires smooth activations and layer widths which do not significantly grow in size with the depth. We believe such assumptions to be purely technical, and we leave as an open question their removal. Our novel NTK analysis could potentially be applied beyond standard gradient descent, e.g., for optimization with momentum \cite{wang2021modular}, or federated learning \cite{huang2021fl}.

\section*{Acknowledgements}

The authors were partially supported by the 2019 Lopez-Loreta prize, and they would like to thank Quynh Nguyen, Mahdi Soltanolkotabi and Adel Javanmard for helpful discussions.

{
\small

\bibliographystyle{plain}
\bibliography{bibliography.bib}

}

\newpage

\appendix

\section{Additional Notations}\label{app:notation}
Given a sub-exponential random variable $X$, let $\|X\|_{\psi_1} = \inf \{ t>0 \,\,: \,\,\mathbb E[\exp(|X|/t)] \le 2 \}$.
Similarly, for a sub-Gaussian random variable, let $\|X\|_{\psi_2} = \inf \{ t>0 \,\,: \,\,\mathbb E[\exp(X^2/t^2)] \le 2 \}$.

We use the analogous definitions for vectors. In particular, let $X \in \mathbb R^n$ be a random vector, then $\subGnorm{X} := \sup_{\norm{u}_2=1} \subGnorm{u^\top X}$ and $\subEnorm{X} := \sup_{\norm{u}_2=1} \subEnorm{u^\top X}$.

We indicate with $C$ and $c$ absolute, strictly positive, numerical constants, that do not depend on the layer widths of the network $\{n_l\}_{l=0}^{L-1}$ or the number of training samples $N$. Their value may change from line to line.

\section{Some Useful Estimates}\label{app:useful}

\begin{lemma}\label{lemma:alphabeta}
    \simone{Under Assumption \ref{ass:overparam}, we have that
    \begin{equation}\label{eq:lemmarel1}
        N \cdot   \log^8 n_{L-1} = o (n_{L-1}n_{L-2}),
    \end{equation}}
    \begin{equation}\label{eq:lemmarel4}
     n_{L-2}\cdot   \log^2 N\cdot  \log^2 n_{L-1} = o (n_{L-1}n_{L-2}),
    \end{equation}
    \begin{equation}\label{eq:lemmarel5}
     n_{L-1}\cdot   \log^2 N\cdot  \log^2 n_{L-1} = o (n_{L-1}n_{L-2}),
    \end{equation}
\simone{    \begin{equation}\label{eq:lemmarel3}
     N\cdot   \log^2 n_{L-1} \cdot  \log^4 N = o (n_{L-1}n_{L-2}).
    \end{equation}}
\end{lemma}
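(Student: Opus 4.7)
The plan is to derive all four estimates from two structural observations. First, since $\beta = 1 - 1/(1+\alpha)$, raising to the $(1+\alpha)$-th power converts a factor of $n_{L-1}^\beta$ into $n_{L-1}^\alpha$. Second, Assumptions \ref{ass:topology} and \ref{ass:overparam} jointly force $n_{L-2}, n_{L-1} \to \infty$ with $n_{L-2} \geq n_{L-1}/C$ (loose pyramidal) and $\log N \leq C' \log n_{L-1}$ (since $n_{L-1} \geq c N^\gamma$ yields $\gamma \log N \leq \log n_{L-1} + O(1)$). These two ingredients will drive everything.

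For \eqref{eq:lemmarel1}, I would rewrite the claim equivalently as $N^{1+\alpha} = o(n_{L-1} n_{L-2}^{1+\alpha})$ by raising $N/(n_{L-1}^{1/(1+\alpha)} n_{L-2})$ to the power $1+\alpha$. Since $n_{L-2}^\alpha \geq 1$ for large $n_{L-2}$, we have $n_{L-1} n_{L-2}^{1+\alpha} \geq n_{L-1} n_{L-2}$, and the claim follows directly from \eqref{eq:overparamcond}.

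For \eqref{eq:lemmarel4} and \eqref{eq:lemmarel5}, the bound $\log N \leq C' \log n_{L-1}$ gives $\log^2 N \log^2 n_{L-1} \leq (C')^2 \log^4 n_{L-1} = o(n_{L-1})$, since polylogarithms are negligible against polynomials. Dividing \eqref{eq:lemmarel4} through by $n_{L-1} n_{L-2}$ reduces it to $\log^2 N \log^2 n_{L-1} = o(n_{L-1})$, which is exactly what we just proved; dividing \eqref{eq:lemmarel5} reduces it to $\log^2 N \log^2 n_{L-1} = o(n_{L-2})$, and $n_{L-2} \geq n_{L-1}/C$ from Assumption \ref{ass:topology} yields the same conclusion.

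The step I expect to require the most care is \eqref{eq:lemmarel3}, because Assumption \ref{ass:overparam} gives no polynomial \emph{upper} bound on $n_{L-1}$ in terms of $N$, so $\log n_{L-1}$ cannot simply be absorbed into $\log N$. I would set $M := n_{L-1} n_{L-2} / N^{1+\alpha}$, which tends to infinity by \eqref{eq:overparamcond}, and use
\[
\log n_{L-1} \leq \log(n_{L-1} n_{L-2}) = (1+\alpha) \log N + \log M,
\]
hence $\log^2 n_{L-1} \leq 2(1+\alpha)^2 \log^2 N + 2 \log^2 M$. Multiplying by $N \log^k N$ gives two contributions. The first, of order $N \log^{k+2} N$, is $o(N^{1+\alpha}) = o(n_{L-1} n_{L-2})$ since $\log^{k+2} N = o(N^\alpha)$. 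The second satisfies
\[
\frac{N \log^k N \log^2 M}{n_{L-1} n_{L-2}} = \frac{\log^k N}{N^\alpha} \cdot \frac{\log^2 M}{M} \longrightarrow 0,
\]
because $\log^k N = o(N^\alpha)$ and $\log^2 M = o(M)$ as $N, M \to \infty$. Summing the two contributions yields \eqref{eq:lemmarel3} and completes the proof.
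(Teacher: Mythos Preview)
Your proposal is correct. For \eqref{eq:lemmarel1}, \eqref{eq:lemmarel4}, and \eqref{eq:lemmarel5}, your arguments are essentially the same as the paper's (the paper also reduces \eqref{eq:lemmarel1} to $N = o(n_{L-2}\,n_{L-1}^{1/(1+\alpha)})$ and handles \eqref{eq:lemmarel4}--\eqref{eq:lemmarel5} via the polynomial comparison $N^\gamma = \mathcal{O}(n_{L-1})$ from Assumption~\ref{ass:overparam}).

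The one place where you take a genuinely different route is \eqref{eq:lemmarel3}. The paper introduces an interpolation exponent $\zeta := (1+\alpha)^{-1/2}\in(0,1)$ and bounds
\[
N\,\log^2 n_{L-1}\,\log^k N \;=\; \mathcal{O}\!\left(N^{(1+\alpha)\zeta}\,(n_{L-1}n_{L-2})^{1-\zeta}\right),
\]
after which \eqref{eq:overparamcond} finishes the job since the right-hand side is a geometric mean of $N^{1+\alpha}$ and $n_{L-1}n_{L-2}$. Your alternative---writing $\log n_{L-1}\le (1+\alpha)\log N+\log M$ with $M:=n_{L-1}n_{L-2}/N^{1+\alpha}\to\infty$ and splitting into a term controlled by $\log^{k+2}N=o(N^\alpha)$ and a term controlled by $(\log^k N/N^\alpha)(\log^2 M/M)\to 0$---is equally valid and arguably more transparent, since it makes explicit exactly which polylog-versus-power comparisons are being used. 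The paper's interpolation trick is slicker but hides the same mechanism; both exploit only \eqref{eq:overparamcond} and need no further hypotheses.
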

\begin{proof}
\simone{We start by proving \eqref{eq:lemmarel1}. If $n_{L-1}=\bigO{N^2}$, then
\begin{equation}
    N \cdot   \log^8 n_{L-1} = \bigO{N \cdot   \log^8 N}= o(n_{L-1}n_{L-2}),
\end{equation}
where the last passage follows from \eqref{eq:overparamcond}. Conversely, if $n_{L-1}=\Omega(N^2)$, then 
\begin{equation}
    N \cdot   \log^8 n_{L-1} = \bigO{\sqrt{n_{L-1}}\cdot   \log^8 n_{L-1}} = o(n_{L-1}) = o(n_{L-1}n_{L-2}),
\end{equation}
which concludes the proof of \eqref{eq:lemmarel1}.}

\simone{    To obtain \eqref{eq:lemmarel4}, we can exploit the second requirement of Assumption \ref{ass:overparam}, which implies that $\log N = \bigO{\log n_{L-1}}$. This readily implies \eqref{eq:lemmarel4}. Notice that \eqref{eq:lemmarel5} naturally follows since $n_{L-1} = \bigO{n_{L-2}}$ by Assumption \ref{ass:topology}.}

\simone{    Finally, to obtain \eqref{eq:lemmarel3}, we write
    \begin{equation}
        N\cdot   \log^2 n_{L-1} \cdot  \log^4 N = \bigO{N\cdot   \log^6 n_{L-1} }=o (n_{L-1}n_{L-2}),
    \end{equation}
    where in the first passage we use that $\log N = \bigO{\log n_{L-1}}$ (from the second requirement of Assumption \ref{ass:overparam}) and the last passage follows from \eqref{eq:lemmarel1}. }
\end{proof}

\begin{lemma}[Lipschitz constant of function of the features]\label{lemma:lipschitzconst}
	For all $l \in [L-1]$, and for every Lipschitz function $\varphi$, we have
	\begin{equation}
		\norm{\varphi(g_l(x))}_{\Lip} = \bigO{1},
	\end{equation}
	with probability at least
	\begin{equation}\label{eq:pm}
		1 - 2 l \exp{(-n_{L-1})},
	\end{equation}
	over $(W_k)_{k=1}^l$. We recall that $\varphi$ is applied component-wise to $g_l(x)$, and $\varphi(g_l(x)) : \R^d \to \R^{n_l}$ is intended as a function of $x$.
\end{lemma}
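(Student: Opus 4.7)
The plan is to bound the Lipschitz constant of $\varphi(g_l(\cdot))$ recursively in $l$, by controlling the operator norms of the weight matrices $W_1,\ldots,W_l$ with high probability. Concretely, for any two inputs $x_1,x_2\in\R^d$, since $\varphi$ is applied component-wise and is Lipschitz,
\begin{equation*}
\|\varphi(g_l(x_1))-\varphi(g_l(x_2))\|_2 \le \|\varphi\|_{\Lip}\,\|g_l(x_1)-g_l(x_2)\|_2,
\end{equation*}
so it suffices to bound $\|g_l(\cdot)\|_{\Lip}$. Using $g_l(x)=W_l^\top \phi(g_{l-1}(x))$ together with the fact that $\phi$ is $M$-Lipschitz (Assumption \ref{ass:activationfunc}), one obtains
\begin{equation*}
\|g_l(x_1)-g_l(x_2)\|_2 \le M\,\|W_l\|_{\op}\,\|g_{l-1}(x_1)-g_{l-1}(x_2)\|_2,
\end{equation*}
and iterating down to $g_0(x)=x$ yields $\|g_l(\cdot)\|_{\Lip}\le M^{l-1}\prod_{k=1}^l \|W_k\|_{\op}$.

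Next, I would bound each operator norm $\|W_k\|_{\op}$ by a numerical constant with high probability. Since $W_k$ has i.i.d.\ $\mathcal N(0,\beta_k^2/n_{k-1})$ entries, the standard Gaussian matrix concentration bound (e.g.\ Corollary 7.3.3 in \cite{vershynin2018high}) gives
\begin{equation*}
\|W_k\|_{\op}\le \beta_k\bigl(\sqrt{n_k/n_{k-1}}+1+t/\sqrt{n_{k-1}}\bigr)
\end{equation*}
with probability at least $1-2e^{-t^2/2}$. Taking $t=\sqrt{2 n_{L-1}}$, the probability becomes $1-2e^{-n_{L-1}}$, and the bound reduces to $\beta_k(\sqrt{n_k/n_{k-1}}+1+\sqrt{2n_{L-1}/n_{k-1}})$. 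By the loose pyramidal Assumption \ref{ass:topology}, $n_{k}=\mathcal O(n_{k-1})$ for $k\in[L-1]$, which iterated gives $n_{L-1}=\mathcal O(n_{k-1})$ for every $k\le L-1$ (with a constant depending only on $L$). Consequently $\|W_k\|_{\op}=\mathcal O(1)$ on this event.

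Finally, I would union-bound over $k\in[l]$ to get that every $\|W_k\|_{\op}$ is simultaneously $\mathcal O(1)$ with probability at least $1-2l\,e^{-n_{L-1}}$, matching \eqref{eq:pm}. Plugging this into the telescoped Lipschitz bound gives $\|\varphi(g_l(\cdot))\|_{\Lip}\le \|\varphi\|_{\Lip}\cdot M^{l-1}\prod_{k=1}^l\|W_k\|_{\op}=\mathcal O(1)$, since $L$ is treated as a constant. There is no genuine obstacle in this argument; the only point that requires care is ensuring the factor $\sqrt{n_{L-1}/n_{k-1}}$ in the operator-norm tail bound stays bounded, which is precisely where the loose pyramidal Assumption \ref{ass:topology} is used.
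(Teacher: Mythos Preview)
Your proposal is correct and follows essentially the same approach as the paper: telescope the Lipschitz constant of $\varphi\circ g_l$ into $\|\varphi\|_{\Lip}\,M^{l-1}\prod_{k=1}^l\|W_k\|_{\op}$, bound each $\|W_k\|_{\op}$ by a constant via Gaussian matrix concentration (the paper cites Theorem~4.4.5 of \cite{vershynin2018high} with failure probability $2e^{-n_k}$, you cite Corollary~7.3.3 with $t=\sqrt{2n_{L-1}}$), invoke Assumption~\ref{ass:topology} to ensure the resulting bound is $\mathcal O(1)$, and union-bound over the $l$ layers.
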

\begin{proof}
	Note that $\varphi(g_l)$ is a composition of Lipschitz functions. Thus,
	\begin{equation}
		\begin{aligned}
			\norm{\varphi(g_l)}_{\Lip} & \leq \norm{\varphi}_{\Lip} \norm{g_l}_{\Lip} \leq \norm{\varphi}_{\Lip} \opnorm{W_l}\prod_{k=1}^{l-1} \left(\opnorm{W_k} \norm{\phi}_{\Lip}\right) \\
			& \le \norm{\varphi}_{\Lip}\opnorm{W_l} M^{l-1}  \prod_{k=1}^{l-1} \opnorm{W_k}, 
		\end{aligned}
	\end{equation}
	where the last step is justified by Assumption \ref{ass:activationfunc}. 
	
	Recall that, by the assumption on the initialization of the weights, $(W_k)_{i,j}\distas{}_{\rm i.i.d.}\mathcal{N}(0, \beta^2_k/n_{k-1})$, for some constant $\beta_k$ which does not depend on the layer widths. Then, by Theorem 4.4.5 of \cite{vershynin2018high}, we have that, for any $k\in [l]$,
	\begin{equation}\label{eq:opnmW1}
		\opnorm{W_k} \leq C \frac{\beta_k}{\sqrt{n_{k-1}}} (\sqrt{n_{k-1}} + 2\sqrt{n_{k}}),
	\end{equation}
	with probability at least $1 - 2 \exp{(- n_k)}$, $C$ being a numerical constant. By Assumption \ref{ass:topology} on the topology of the network, we can rewrite this result as
	\begin{equation}\label{eq:opWknm}
		\opnorm{W_k} = \bigO{1},
	\end{equation}
	with probability at least $1 - 2 \exp{(-n_{L-1})}$. 
	To conclude, using a union bound over the layers up to layer $l$, we have that
	\begin{equation}
		\norm{\varphi(g_l)}_{\Lip} = \bigO{1},
	\end{equation}
	with probability at least $1 - 2l \exp{(-n_{L-1})}$ over $(W_k)_{k=1}^l$.
\end{proof}

\begin{lemma}\label{lemma:opnormlog}
We have that
	\begin{equation}
		\opnorm{D_L} \leq \log{n_{L-1}},
	\end{equation}
	with probability at least $1 - 2 \exp (- c \log ^2 n_{L-1})$ over $W_L$, where $c$ is a numerical constant.
\end{lemma}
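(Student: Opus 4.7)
The plan is to reduce the operator norm bound to a maximum of i.i.d. Gaussians and apply a standard Gaussian tail bound together with a union bound. Since $D_L$ is diagonal with the entries of $W_L$ on its diagonal, we have the identity
\begin{equation}
\opnorm{D_L} = \max_{i \in [n_{L-1}]} |(W_L)_i|,
\end{equation}
so it suffices to control the maximum absolute value of $n_{L-1}$ i.i.d. $\mathcal{N}(0,\beta_L^2)$ random variables.

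First, for each fixed $i \in [n_{L-1}]$, the standard Gaussian tail bound gives
\begin{equation}
\PP\bigl(|(W_L)_i| > \log n_{L-1}\bigr) \leq 2 \exp\!\left(-\frac{\log^2 n_{L-1}}{2\beta_L^2}\right).
\end{equation}
Next, I would apply a union bound over the $n_{L-1}$ coordinates to obtain
\begin{equation}
\PP\bigl(\opnorm{D_L} > \log n_{L-1}\bigr) \leq 2 n_{L-1} \exp\!\left(-\frac{\log^2 n_{L-1}}{2\beta_L^2}\right) = 2 \exp\!\left(\log n_{L-1} - \frac{\log^2 n_{L-1}}{2\beta_L^2}\right).
\end{equation}
Finally, since $\beta_L$ is a numerical constant independent of the layer widths, for $n_{L-1}$ sufficiently large the quadratic term dominates the linear one, so there exists a numerical constant $c > 0$ such that $\log n_{L-1} - \log^2 n_{L-1}/(2\beta_L^2) \leq -c \log^2 n_{L-1}$, which yields the claimed bound $1 - 2\exp(-c \log^2 n_{L-1})$.

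There is no real obstacle here: the argument is entirely routine once one observes that the operator norm of a diagonal matrix equals the maximum absolute entry. The only mild subtlety is verifying that the $\log n_{L-1}$ factor arising from the union bound is absorbed by the $\log^2 n_{L-1}$ Gaussian exponent with constants independent of the widths, which holds in the asymptotic regime we work in.
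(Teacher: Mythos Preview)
Your proposal is correct and essentially identical to the paper's proof: both reduce $\opnorm{D_L}$ to the maximum of $n_{L-1}$ i.i.d.\ $\mathcal{N}(0,\beta_L^2)$ variables, apply the Gaussian tail bound and a union bound, and absorb the resulting $\log n_{L-1}$ factor into the $\log^2 n_{L-1}$ exponent.
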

\begin{proof}
Recall that	$D_L = \text{diag}(W_L)$ contains on the diagonal $n_{L-1}$ independent Gaussian random variables $(D_{L})_{ii} \sim \mathcal N(0, \beta^2_L)$. Thus, for any $i\in [n_{L-1}]$,
	\begin{equation}
		\P(|(D_{L})_{ii}| > \log n_{L-1}) < 2 \exp (-\log ^2 n_{L-1} / (2 \beta^2_L) ),
	\end{equation}
	which gives
	\begin{equation}
		\begin{aligned}
			\P(\opnorm{D_L} > \log n_{L-1}) =& \P(\max_{i\in [n_{L-1}]} |(D_{L})_{ii}| > \log n_{L-1}) \\ \leq& n_{L-1} \P(|(D_{L})_{11}| > \log n_{L-1}) \\
			<& 2 \exp (\log n_{L-1} - \log ^2 n_{L-1} / (2 \beta^2_L)) \\
			< & 2 \exp (- c \log ^2 n_{L-1}),
		\end{aligned}
	\end{equation}
	where the second step is a union bound on the entries of $D_L$. This gives the desired result.
\end{proof}

\begin{lemma}\label{lemma:lipconstbacklastlayer}
We have that 
	\begin{equation}
		\norm{D_L\phi'(g_{L-1}(x))}_{\Lip} = \mathcal O(\log n_{L-1}),
	\end{equation}
	with probability at least $1 - 2 \exp (- c \log ^2 n_{L-1}) - C \exp{(-n_{L-1})}$ \, over $(W_k)_{k=1}^L$. We recall that $\phi'$ is applied component-wise to $g_l(x)$, $D_L\phi'(g_{L-1}(x)) : \R^d \to \R^{n_{L-1}}$ is intended as a function of $x$, and $c$ is a numerical constant.
\end{lemma}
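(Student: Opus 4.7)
The plan is to view $D_L \phi'(g_{L-1}(x))$ as a composition of two Lipschitz maps: the random nonlinear feature map $x \mapsto \phi'(g_{L-1}(x))$, and the linear map $y \mapsto D_L y$. Then the overall Lipschitz constant is at most the product of the individual Lipschitz constants, and each factor has already been estimated in the preceding lemmas.

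Concretely, I would first apply Lemma \ref{lemma:lipschitzconst} with $\varphi = \phi'$. Since Assumption \ref{ass:activationfunc} guarantees that $\phi'$ is $M'$-Lipschitz, this gives
\begin{equation}
\norm{\phi'(g_{L-1}(x))}_{\Lip} = \mathcal O(1),
\end{equation}
on an event of probability at least $1 - 2(L-1)\exp(-n_{L-1})$ over $(W_k)_{k=1}^{L-1}$. Next, I would apply Lemma \ref{lemma:opnormlog} to obtain $\opnorm{D_L}\le \log n_{L-1}$ with probability at least $1 - 2\exp(-c\log^2 n_{L-1})$ over $W_L$. Since $D_L$ acts as a fixed linear operator once $W_L$ is drawn, $y \mapsto D_L y$ is $\opnorm{D_L}$-Lipschitz.

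Combining the two via the chain rule for Lipschitz constants gives
\begin{equation}
\norm{D_L\phi'(g_{L-1}(x))}_{\Lip} \le \opnorm{D_L}\cdot \norm{\phi'(g_{L-1}(x))}_{\Lip} = \mathcal O(\log n_{L-1}).
\end{equation}
A union bound over the two events (using independence of $W_L$ from $(W_k)_{k=1}^{L-1}$ is not even required, since we are just intersecting events) yields failure probability at most $2\exp(-c\log^2 n_{L-1}) + 2(L-1)\exp(-n_{L-1})$, which matches the claimed bound once $L-1$ is absorbed into the constant $C$.

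There is no real obstacle here: this lemma is essentially a bookkeeping step that packages Lemmas \ref{lemma:lipschitzconst} and \ref{lemma:opnormlog} into a single statement tailored to the backpropagation term from layer $L-1$. The only point worth double-checking is that $\phi'$ applied componentwise to an $n_{L-1}$-dimensional vector is still $M'$-Lipschitz in the $\ell_2\to\ell_2$ sense, which follows immediately since $\|\phi'(u)-\phi'(v)\|_2^2 = \sum_i (\phi'(u_i)-\phi'(v_i))^2 \le (M')^2\|u-v\|_2^2$; this is what justifies invoking Lemma \ref{lemma:lipschitzconst} with $\varphi=\phi'$.
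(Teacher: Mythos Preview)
Your proof is correct and follows essentially the same approach as the paper: decompose via $\norm{D_L\phi'(g_{L-1})}_{\Lip} \le \opnorm{D_L}\cdot\norm{\phi'(g_{L-1})}_{\Lip}$, then invoke Lemma~\ref{lemma:lipschitzconst} with $\varphi=\phi'$ and Lemma~\ref{lemma:opnormlog}. Your version is slightly more detailed (e.g., the explicit check that componentwise $\phi'$ is $M'$-Lipschitz in $\ell_2$), but the argument is the same.
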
	
\begin{proof}
	We know by composition of Lipschitz functions that
	\begin{equation}
		\norm{D_L\phi'(g_{L-1})}_{\Lip} \leq \opnorm{D_L}\norm{\phi'(g_{L-1})}_{\Lip}.
	\end{equation}
By Assumption \ref{ass:activationfunc}, $\phi'$ is Lipschitz. Hence, by combining Lemma \ref{lemma:lipschitzconst} (where we use $\varphi = \phi'$) and Lemma \ref{lemma:opnormlog}, the result follows.
\end{proof}

\begin{lemma}[Exponential tails of quadratic forms]\label{lemma:HW}
	Let $x \sim P_X$. Let $u: \R^d \to \R^{d_u}$ and $v: \R^d \to \R^{d_v}$ be mean-0 Lipschitz functions with respect to $x$, i.e., $\mathbb E_x[u(x)]=0$, $\mathbb E_x[v(x)]=0$, $\norm{u}_{\Lip} = c_1$ and $\norm{v}_{\Lip} = c_2$. 
	Let $U$ be a $d_u \times d_v$ matrix, and
	\begin{equation}
		\Gamma(x) = u(x)^\top U v(x) - \E_x \left[ u(x)^\top U v(x) \right].
	\end{equation}
	Then,
	\begin{equation}
		\norm{\Gamma}_{\psi_1} < C K^2 \norm{U}_F.
	\end{equation}
	where $K = \sqrt{c_1^2 + c_2^2}$, and $C$ is a numerical constant.
\end{lemma}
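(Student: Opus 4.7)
The strategy is to reduce the statement to a direct application of the Hanson–Wright inequality for random vectors satisfying the convex concentration property (Theorem 2.3 in \cite{HWconvex}), after rewriting $\Gamma(x)$ as a centered quadratic form in a single Lipschitz image of $x$.

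First, I would stack the two functions into a single map $w: \RR^d \to \RR^{d_u+d_v}$ defined by $w(x) = (u(x)^\top, v(x)^\top)^\top$. Because $u$ and $v$ are Lipschitz with constants $c_1$ and $c_2$, a direct calculation gives $\|w\|_{\Lip} \le \sqrt{c_1^2+c_2^2} = K$, and by assumption $\E_x[w(x)] = 0$. Next, I would introduce the symmetric block matrix
\begin{equation*}
\tilde U = \frac{1}{2}\begin{pmatrix} 0 & U \\ U^\top & 0 \end{pmatrix} \in \RR^{(d_u+d_v)\times(d_u+d_v)},
\end{equation*}
so that $w(x)^\top \tilde U\, w(x) = u(x)^\top U v(x)$, and in particular
\begin{equation*}
\Gamma(x) = w(x)^\top \tilde U\, w(x) - \E_x\!\left[w(x)^\top \tilde U\, w(x)\right],
\end{equation*}
with $\|\tilde U\|_F = \|U\|_F/\sqrt 2$ and $\|\tilde U\|_{\op} \le \|U\|_{\op} \le \|U\|_F$.

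Second, I would argue that the random vector $w(x)$ satisfies the convex concentration property required by \cite{HWconvex}. Indeed, $P_X$ satisfies Lipschitz concentration by Assumption \ref{ass:data_dist2}, so for every Lipschitz function $\varphi: \RR^{d_u+d_v} \to \RR$ the composition $\varphi \circ w$ is Lipschitz in $x$ with constant $K\,\|\varphi\|_{\Lip}$. Applying Assumption \ref{ass:data_dist2} to $\varphi \circ w$ yields a sub-Gaussian tail for $\varphi(w(x))$ around its mean with parameter proportional to $K \|\varphi\|_{\Lip}$, which in particular establishes the convex concentration property for $w(x)$ with constant of order $K$.

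Third, I would invoke the Hanson–Wright inequality of \cite{HWconvex} (Theorem 2.3), which yields
\begin{equation*}
\PP\!\left(|\Gamma(x)| > t\right) \le 2\exp\!\left(-c\,\min\!\left(\frac{t^2}{K^4\,\|\tilde U\|_F^2},\; \frac{t}{K^2\,\|\tilde U\|_{\op}}\right)\right),
\end{equation*}
for all $t>0$, where $c$ is a numerical constant. Using $\|\tilde U\|_F \le \|U\|_F$ and $\|\tilde U\|_{\op} \le \|U\|_F$, this simplifies to
\begin{equation*}
\PP\!\left(|\Gamma(x)| > t\right) \le 2\exp\!\left(-c\,\min\!\left(\frac{t^2}{K^4\,\|U\|_F^2},\; \frac{t}{K^2\,\|U\|_F}\right)\right).
\end{equation*}
Finally, I would translate this mixed Gaussian/exponential tail bound into a sub-exponential norm estimate via the standard equivalence (e.g.\ Proposition 2.7.1 of \cite{vershynin2018high}), obtaining $\|\Gamma\|_{\psi_1} \le C\,K^2 \|U\|_F$ for a numerical constant $C$, which is the claim.

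The main conceptual obstacle is the very first step: checking that one is allowed to apply the vector-valued Hanson–Wright inequality of \cite{HWconvex} to $w(x)$. This requires verifying that the Lipschitz concentration of $x$ transfers cleanly to convex concentration of $w(x)$ with the correct dependence on $K$, and being careful that the bilinear form $u^\top U v$ can indeed be written as a quadratic form in the concatenated vector without introducing extra diagonal terms that would change the value of the centered expression. All subsequent steps are routine manipulations of norms and tail bounds.
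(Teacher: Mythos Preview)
Your proposal is correct and follows essentially the same approach as the paper: concatenate $u$ and $v$ into a single Lipschitz vector, rewrite $\Gamma$ as a centered quadratic form via the block off-diagonal matrix $\tfrac{1}{2}\begin{pmatrix}0 & U\\ U^\top & 0\end{pmatrix}$, apply the Hanson--Wright inequality from \cite{HWconvex}, and convert the resulting mixed tail into a $\psi_1$-norm bound. The only cosmetic differences are that the paper keeps $\|\tilde U\|_{\op}=\|U\|_{\op}$ in the tail bound before passing to $\|U\|_F$, and cites Lemma~5.5 of \cite{theoreticalinsghts} rather than Proposition~2.7.1 of \cite{vershynin2018high} for the tail-to-norm conversion.
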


\begin{proof}
	Consider the function $z(x): \R^d \to \R^{d_u + d_v}$ obtained by concatenating the vectors $u$ and $v$, i.e.,
	\begin{equation}
		z(x) := [u(x), v(x)]^\top.
	\end{equation}
	One can readily verify that $\norm{z}_{\Lip} \le \sqrt{c_1^2 + c_2^2}:=K$. 
Let us set
	\begin{equation}
		M = \frac{1}{2}
		\left(\begin{array}{@{}c|c@{}}
			0 & U \\
			\hline
			U^\top & 0
		\end{array}\right).
	\end{equation}
Then, we have that
	\begin{equation}
		\Gamma = z^\top M z - \E_x \left[ z^\top M z \right].
	\end{equation}
	Since $x$ satisfies Assumption \ref{ass:data_dist2} and $z(x)$ is Lipschitz, in order to obtain tail bounds on $\Gamma$, we can apply the version of the Hanson-Wright inequality given by Theorem 2.3 in \cite{HWconvex}:
	\begin{equation}
		\begin{aligned}
			\P(|\Gamma| > t) =& \P(|z^\top M z - \E_x \left[ z^\top M z \right]| > t) \\
			< & 2 \exp \left( -\frac{1}{C_1} \min \left( \frac{t^2}{K^4 \norm{M}^2_F}, \frac{t}{K^2 \opnorm{M}} \right) \right) \\
			\leq & 2 \exp \left( -\frac{1}{C_1} \min \left( \frac{t^2}{ K^4 \norm{U}^2_F}, \frac{t}{K^2 \opnorm{U}} \right) \right),
		\end{aligned}
	\end{equation}
	where $C_1$ is a numerical constant, and in the last step we use that $\opnorm{M} = \opnorm{U}$ and $\norm{M}_F^2 = \norm{U}_F^2 / 2\le \norm{U}_F^2$.
	Thus, by Lemma 5.5 of \cite{theoreticalinsghts}, we conclude that
	\begin{equation}
		\norm{\Gamma}_{\psi_1} < C_2 K^2 \norm{U}_F,
	\end{equation}
	for some numerical constant $C_2$, which gives the desired result.
\end{proof}

\begin{lemma}\label{lemma:opnormcovariance}
	Let $u\in \R^{d_u}$ and $v \in \R^{d_v}$ be two mean-0 sub-Gaussian vectors such that $\norm{u}_{\psi_2} = c_1$ and $\norm{v}_{\psi_2} = c_2$. Set $A_{uv} = \E \left[ uv^\top \right]$.  Then,
	\begin{equation}
		\opnorm{A_{uv}} \leq C (c_1 + c_2)^2,
	\end{equation}
	where $C$ is a numerical constant.
\end{lemma}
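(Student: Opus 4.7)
The bound follows from a very short variational argument combined with the standard moment bound on sub-Gaussian random variables, so I do not expect any serious technical obstacle here; the lemma is essentially a tool lemma.

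First, I would rewrite the operator norm in variational form:
\begin{equation*}
\opnorm{A_{uv}} \;=\; \sup_{\|a\|_2=\|b\|_2=1} a^\top A_{uv}\, b \;=\; \sup_{\|a\|_2=\|b\|_2=1} \E\bigl[(a^\top u)(b^\top v)\bigr].
\end{equation*}
For fixed unit vectors $a\in\R^{d_u}$ and $b\in\R^{d_v}$, the scalar random variables $a^\top u$ and $b^\top v$ are mean-zero (since $u$ and $v$ are mean-zero) and, directly from the vector sub-Gaussian norm definition $\subGnorm{u}=\sup_{\|a\|_2=1}\subGnorm{a^\top u}$, they satisfy $\subGnorm{a^\top u}\le c_1$ and $\subGnorm{b^\top v}\le c_2$.

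Next, I would apply Cauchy--Schwarz to the inner expectation,
\begin{equation*}
\bigl|\E[(a^\top u)(b^\top v)]\bigr| \;\le\; \sqrt{\E[(a^\top u)^2]\,\E[(b^\top v)^2]},
\end{equation*}
and then use the classical second-moment bound for mean-zero sub-Gaussian scalars, namely $\E[X^2]\le C_0\subGnorm{X}^2$ for a universal constant $C_0$ (see e.g.\ Proposition 2.5.2 in \cite{vershynin2018high}). This yields $\E[(a^\top u)^2]\le C_0 c_1^2$ and $\E[(b^\top v)^2]\le C_0 c_2^2$, hence $|a^\top A_{uv} b|\le C_0\, c_1 c_2$ uniformly in the unit vectors $a,b$.

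Finally, I would upgrade $c_1 c_2$ to $(c_1+c_2)^2$ via the trivial AM--GM inequality $c_1 c_2 \le \tfrac{1}{4}(c_1+c_2)^2$, which gives
\begin{equation*}
\opnorm{A_{uv}} \;\le\; \tfrac{C_0}{4}(c_1+c_2)^2 \;\le\; C(c_1+c_2)^2,
\end{equation*}
as claimed. The only ``choice'' in the argument is which second-moment constant to quote; everything else is mechanical.
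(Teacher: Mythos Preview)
Your proof is correct but takes a genuinely different route from the paper. You work directly with the variational characterization $\opnorm{A_{uv}}=\sup_{\|a\|=\|b\|=1}\E[(a^\top u)(b^\top v)]$, apply Cauchy--Schwarz, and then invoke the scalar moment bound $\E[X^2]\le C_0\subGnorm{X}^2$; this immediately yields the sharper estimate $\opnorm{A_{uv}}\le C_0\,c_1c_2$, which you then relax to $(c_1+c_2)^2$. The paper instead concatenates $z=[u,v]^\top$, bounds the sub-Gaussian norm $\subGnorm{z}\le c_1+c_2$ by the triangle inequality, controls the full covariance $\opnorm{\E[zz^\top]}\le C(c_1+c_2)^2$ via the same moment bound applied to the top eigenvector, and finally observes that $A_{uv}$ is the off-diagonal block of the PSD matrix $\E[zz^\top]$, whence $\opnorm{A_{uv}}\le\opnorm{\E[zz^\top]}$. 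Your argument is more elementary and yields a better constant; the paper's concatenation trick mirrors the device used elsewhere (e.g., in Lemma~\ref{lemma:HW}) but is not needed here.
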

\begin{proof}
	Consider the vector
	\begin{equation}
		z := [u, v]^\top.
	\end{equation}
	Then, $z$ is sub-Gaussian and, by triangle inequality on the vectors $[u, 0]$ and $[0, v]$, we have that $\norm{z}_{\psi_2} \leq c_1 + c_2$. Since $u$ and $v$ are mean-0, then $z$ is also mean-0 and its covariance matrix can be written as $A_{z} := \E \left[ zz^\top \right]$. Furthermore, we can show that
	\begin{equation}\label{eq:opnormsubgaussian}
		\opnorm{A_{z}} \leq C (c_1 + c_2)^2.
	\end{equation}
	In fact, let $w$ be the unitary eigenvector associated to the maximum eigenvalue of $A_z$. Then,
	\begin{equation}\label{eq:int1}
		\opnorm{A_{z}} = w^\top A_z w = \E \left[ w^\top zz^\top w \right]=\E \left[ (w^\top z)^2 \right].
	\end{equation}
	Furthermore, we have that
	\begin{equation}\label{eq:int2}
		\subGnorm{z} := \sup_{w' \text{s.t.} \norm{w'}_2 = 1} \subGnorm{(w')^\top z} \geq \subGnorm{ w^\top z} \geq \frac{1}{C} \sqrt{ \E \left[ (w^\top z)^2 \right]},
	\end{equation}
	where $C$ is a numerical constant, and the last inequality comes from Eq. (2.15) of \cite{vershynin2018high}. By combining \eqref{eq:int1} and \eqref{eq:int2} with $\norm{z}_{\psi_2} \leq c_1 + c_2$, \eqref{eq:opnormsubgaussian} readily follows.  
	
	Finally, we have that
	\begin{equation}
		A_z = 
		\left(\begin{array}{@{}c|c@{}}
			A_u & A_{uv} \\
			\hline
			A_{uv}^\top & A_v
		\end{array}\right),
	\end{equation}
	where $A_{u} := \E \left[ uu^\top \right]$ and $A_{v} := \E \left[ vv^\top \right]$. As $A_{u}$ and $A_{v}$ are PSD, we have that 
	\begin{equation}\label{eq:ubop}
		\opnorm{A_{uv}} \leq \opnorm{A_{z}}.
	\end{equation}	
	Hence, the desired result follows from \eqref{eq:opnormsubgaussian} and \eqref{eq:ubop}. 
\end{proof}

\begin{lemma}\label{lemma:matiidrows}
    Let $A$ be an $N \times n$ matrix whose rows $A_i$ are i.i.d. mean-0 sub-Gaussian random vectors in $\R^n$. Let $K = \subGnorm{A_i}$ the sub-Gaussian norm of each row. Then, we have
    \begin{equation}
        \opnorm{AA^\top} = K^2 \bigO{N + n},
    \end{equation}
    with probability at least $1 - 2 \exp(-c n)$, for some numerical constant $c$.
\end{lemma}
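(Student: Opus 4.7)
The plan is to reduce the statement to a bound on $\opnorm{A}$ itself, since $\opnorm{AA^\top} = \opnorm{A}^2 = \sup_{u \in S^{n-1}} \|Au\|_2^2$. Thus it suffices to show $\opnorm{A} \leq CK(\sqrt{N} + \sqrt{n})$ with probability at least $1 - 2\exp(-cn)$. I would handle this by the standard $\varepsilon$-net plus Bernstein argument (essentially Theorem 4.6.1 or Exercise 4.6.2 of Vershynin's \emph{High-Dimensional Probability}).

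First, pick a $(1/4)$-net $\mathcal{N}$ of the unit sphere $S^{n-1}$ with $|\mathcal{N}| \leq 9^n$ via the usual volumetric argument; this gives $\opnorm{A} \leq 2 \max_{u \in \mathcal{N}} \|Au\|_2$. For each fixed $u \in S^{n-1}$, the definition of the sub-Gaussian norm yields $\subGnorm{A_i^\top u} \leq \subGnorm{A_i} \leq K$, and since $A_i$ is mean-zero we also get $\E[(A_i^\top u)^2] \leq C_0 K^2$ and $\subEnorm{(A_i^\top u)^2} \leq C_1 K^2$. Therefore $\|Au\|_2^2 = \sum_{i=1}^N (A_i^\top u)^2$ is a sum of $N$ i.i.d.\ non-negative sub-exponential random variables, and Bernstein's inequality gives
$$
\P\bigl(\|Au\|_2^2 \geq C_0 K^2 N + s\bigr) \leq 2\exp\!\Bigl(-c_0 \min\bigl(s^2/(NK^4),\, s/K^2\bigr)\Bigr)
$$
for every $s > 0$.

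Second, choose $s = C_2 K^2 n$ with $C_2$ large enough that $c_0 \min(C_2^2 n^2/N,\, C_2 n) \geq (2\log 9 + 1)\, n$ in both Bernstein regimes; union-bounding over $\mathcal{N}$ then yields $\max_{u \in \mathcal{N}} \|Au\|_2^2 \leq (C_0 + C_2) K^2 (N + n)$ with probability at least $1 - 2\exp(-cn)$. Combining with the net bound $\opnorm{A}^2 \leq 4 \max_{u \in \mathcal{N}} \|Au\|_2^2$ gives $\opnorm{AA^\top} = \opnorm{A}^2 = K^2 \bigO{N+n}$, as claimed. There is no real obstacle here: the only mildly delicate point is splitting the two Bernstein regimes when calibrating $C_2$, and this is routine because we may take $C_2$ as large as needed without affecting the asymptotic form of the bound.
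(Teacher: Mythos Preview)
Your overall strategy (net plus Bernstein on $\|Au\|_2^2$) is the standard one and is essentially what underlies the result the paper invokes, but your calibration of the deviation $s$ is wrong and the argument breaks when $N \gg n$. With $s = C_2 K^2 n$, the Bernstein exponent is $c_0 \min(C_2^2 n^2/N,\, C_2 n)$; in the sub-Gaussian regime this equals $c_0 C_2^2 n^2/N$, and to absorb the union bound over $9^n$ net points you need this to be at least of order $n$, i.e.\ $C_2^2 \gtrsim N/n$. No numerical constant $C_2$ achieves this when $N/n \to \infty$ (already $N = n^2$ kills the argument: the exponent becomes $c_0 C_2^2$, a constant). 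Your closing remark that ``we may take $C_2$ as large as needed without affecting the asymptotic form'' is precisely the step that fails.

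The fix is simple: take $s = C_2 K^2 (n + \sqrt{Nn})$. Then $s^2/(NK^4) \geq C_2^2 n$ (since $(n+\sqrt{Nn})^2 \geq Nn$) and $s/K^2 \geq C_2 n$, so the Bernstein exponent is at least $c_0 \min(C_2, C_2^2)\, n$ uniformly in $N$, and a fixed large $C_2$ suffices for the union bound. The resulting bound $\|Au\|_2^2 \leq C_0 K^2 N + C_2 K^2(n + \sqrt{Nn}) = \bigO{K^2(N+n)}$ (using $\sqrt{Nn} \leq (N+n)/2$) is exactly what you want. The paper sidesteps this calibration by invoking Remark~5.40 of \cite{vershrandmat}, which packages the same net-plus-Bernstein argument as a bound on $\opnorm{A^\top A/N - \Sigma}$ with $\Sigma = \E[A_i A_i^\top]$, sets $t=\sqrt{n}$, and then adds back $N\opnorm{\Sigma}$ via the triangle inequality together with Lemma~\ref{lemma:opnormcovariance}.
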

\begin{proof} 
Without loss of generality, we can assume $K=1$ to simplify the proof. Let $\Sigma$ be the second moment matrix of each of the rows of $A$. Then, $\Sigma = \E \left[ A_i A_i^\top \right]$, since the rows are mean-0. Note that, as the rows of $A$ are i.i.d., $\Sigma$ is independent of $i$. Furthermore, Lemma \ref{lemma:opnormcovariance} implies that the covariance matrix $\E \left[ A_i A_i^\top \right]$ has operator norm bounded by a constant, since the sub-Gaussian norm of the rows is 1. Then, by using Remark 5.40 in \cite{vershrandmat}, we have that
\begin{equation}
    \opnorm{\frac{A^\top A}{N} - \Sigma} \leq \max(\delta, \delta^2), \qquad \mbox{ where }\delta = C \sqrt{\frac{n}{N}} + \frac{t}{\sqrt{N}},
\end{equation}
with probability at least $1 - 2 \exp(-c t^2)$, where $c$ and $C$ are numerical constants. Setting $t = \sqrt{n}$ and using a triangular inequality gives that, with probability at least $1 - 2 \exp(-c t^2)$,
\begin{equation}
    \opnorm{AA^\top} = \opnorm{A^\top A} \leq N \opnorm{\Sigma} + \max(C \sqrt{nN} + \sqrt{nN}, (C \sqrt{n} + \sqrt{n})^2) = \bigO{N + n},
\end{equation}
which implies the desired result (after re-scaling by $K$).
\end{proof}

\begin{lemma}\label{lemma:opnormcentfeat}
    Let $\tilde F_l=F_l-\E_x [F_l]\in\mathbb R^{N\times n_l}$ be the centered features matrix at layer $l$. Then, we have
    \begin{equation}
        \opnorm{\tilde F_l \tilde F_l^\top} = \bigO{N + n_l},
    \end{equation}
    with probability at least $1-C\exp{(-c n_{L-1})}$ over $(W_k)_{k=1}^l$ and $(x_i)_{i=1}^N \sim_{\rm i.i.d.} P_X$.
\end{lemma}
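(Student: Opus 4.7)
The plan is to apply Lemma \ref{lemma:matiidrows} to the centered feature matrix $\tilde F_l$. Condition on the weights $(W_k)_{k=1}^l$. Since the $x_i$ are i.i.d.\ and $\tilde F_{l,i:} = f_l(x_i) - \E_x[f_l(x)]$, the rows of $\tilde F_l$ are i.i.d.\ mean-zero random vectors in $\mathbb R^{n_l}$. Thus, the only remaining task is to bound their sub-Gaussian norm by a constant with high probability over the weights.

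First, I would control the Lipschitz constant of the feature map $f_l$ as a function of $x$. Applying Lemma \ref{lemma:lipschitzconst} with $\varphi = \phi$ (which is $M$-Lipschitz by Assumption \ref{ass:activationfunc}) gives $\norm{f_l}_{\Lip} = \norm{\phi(g_l(\cdot))}_{\Lip} = \bigO{1}$ on an event $\mathcal E_W$ of probability at least $1 - 2 l\exp(-n_{L-1})$ over $(W_k)_{k=1}^l$. Now fix the weights in $\mathcal E_W$ and take any unit vector $u \in \mathbb R^{n_l}$. The scalar function $x \mapsto u^\top f_l(x)$ is Lipschitz in $x$ with constant bounded by $\norm{u}_2 \norm{f_l}_{\Lip} = \bigO{1}$, so by the Lipschitz concentration property of Assumption \ref{ass:data_dist2},
\begin{equation}
    \PP\bigl(|u^\top(f_l(x_i) - \E_x[f_l(x)])| > t\bigr) \le 2 \exp(-c\, t^2),
\end{equation}
which yields $\subGnorm{u^\top \tilde F_{l,i:}} = \bigO{1}$. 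Taking the supremum over unit $u$ gives $\subGnorm{\tilde F_{l,i:}} = \bigO{1}$ for every $i\in [N]$.

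At this point Lemma \ref{lemma:matiidrows}, applied to the $N \times n_l$ matrix $\tilde F_l$ with $K = \bigO{1}$, yields
\begin{equation}
    \opnorm{\tilde F_l \tilde F_l^\top} = \bigO{N + n_l},
\end{equation}
on an event of conditional probability at least $1 - 2\exp(-c\, n_l)$. A union bound with the weight event $\mathcal E_W$, combined with Assumption \ref{ass:topology} (which gives $n_l \gtrsim n_{L-1}$ up to a constant, so that $\exp(-c\, n_l) \le C\exp(-c'\, n_{L-1})$), produces the claimed overall probability $1 - C\exp(-c\, n_{L-1})$.

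There is no genuinely hard step: the argument is essentially a two-line chain (Lipschitz concentration $\Rightarrow$ sub-Gaussian rows $\Rightarrow$ covariance-type concentration). The only minor bookkeeping point to watch is ensuring that the failure probability $\exp(-c\, n_l)$ inherited from Lemma \ref{lemma:matiidrows} is upgraded to $\exp(-c\, n_{L-1})$ via the loose-pyramidal assumption, and that the conditioning on the weight event is handled cleanly before taking expectations over $(x_i)_{i=1}^N$.
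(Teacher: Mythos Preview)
Your proposal is correct and follows essentially the same argument as the paper: condition on the high-probability event that $\norm{f_l}_{\Lip}=\bigO{1}$ via Lemma \ref{lemma:lipschitzconst}, deduce that the rows of $\tilde F_l$ are i.i.d.\ mean-zero sub-Gaussian with constant norm using Assumption \ref{ass:data_dist2}, and then apply Lemma \ref{lemma:matiidrows}. Your explicit remark about upgrading $\exp(-c\,n_l)$ to $\exp(-c\,n_{L-1})$ via Assumption \ref{ass:topology} is exactly the bookkeeping the paper does implicitly in its final union bound.
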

\begin{proof}
    From Lemma \ref{lemma:lipschitzconst}, we have that
    \begin{equation}\label{eq:flLipt}
		\norm{f_l(x)}_{\Lip} = \Theta(1),
	\end{equation}
	with probability at least
	\begin{equation}
		1 - C' \exp{(-n_{L-1})},
	\end{equation}
	over $(W_k)_{k=1}^l$. We condition on this event in the rest of the proof.
	
	Since $(x_i)_{i=1}^N \sim_{\rm i.i.d.} P_X$ and $P_X$ satisfies Assumption \ref{ass:data_dist2}, all the rows of $\tilde F_l$ are mean-0 sub-Gaussian vectors, with sub-Gaussian norm bounded by a numerical constant. Here, we fix $(W_k)_{k=1}^l$ s.t. \eqref{eq:flLipt} holds, and the ``mean-0'' and the ``sub-Gaussian norm'' is intended w.r.t. the probability space of $(x_i)_{i=1}^N$.
	
	
	An application of Lemma \ref{lemma:matiidrows} gives that
	\begin{equation}
		\opnorm{\tilde F_l \tilde F_l^\top} = \bigO{N + n_l},
	\end{equation}
	with probability at least $1 - 2 \exp{(-c n_l)}$ over $(x_i)_{i=1}^N \sim_{\rm i.i.d.} P_X$. Taking into account the previous conditioning, we conclude that
	\begin{equation}
	    \opnorm{\tilde F_l \tilde F_l^\top} = \bigO{N + n_l},
	\end{equation}
	with probability at least $1 - (C' + 2)\exp{(-c n_{L-1})}$ over $(W_k)_{k=1}^l$ and $(x_i)_{i=1}^N \sim_{\rm i.i.d.} P_X$.
\end{proof}

\begin{lemma}\label{lemma:opnormcentbackprop}
Let $\tilde B_{L-1}=B_{L-1}-\E_x [B_{L-1}]\in\mathbb R^{N\times n_{L-1}}$ be the centered back-propagation matrix at layer $L-1$. Then, we have
    \begin{equation}
        \opnorm{\tilde B_{L-1} \tilde B_{L-1}^\top} = \bigO{(N + n_{L-1})\log^2 n_{L-1} },
    \end{equation}
    with probability at least $1-C\exp{(-c n_{L-1})}$ over $(W_k)_{k=l}^L$ and $(x_i)_{i=1}^N \sim_{\rm i.i.d.} P_X$.
\end{lemma}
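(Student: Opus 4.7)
The strategy mirrors that of Lemma \ref{lemma:opnormcentfeat}, with the additional bookkeeping needed because the Lipschitz constant of the back-propagation vector carries the extra factor $\log n_{L-1}$ coming from $\opnorm{D_L}$. First I would apply Lemma \ref{lemma:lipconstbacklastlayer} to obtain
\begin{equation}
\norm{D_L\phi'(g_{L-1}(x))}_{\Lip} = \mathcal O(\log n_{L-1}),
\end{equation}
with probability at least $1-2\exp(-c\log^2 n_{L-1})-C\exp(-n_{L-1})$ over $(W_k)_{k=1}^L$, and condition on this event for the rest of the argument.

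Once the weights are fixed, each row of $B_{L-1}$ is the deterministic function $x\mapsto D_L\phi'(g_{L-1}(x))$ evaluated at an i.i.d.\ sample $x_i\sim P_X$. For any unit vector $u\in\mathbb R^{n_{L-1}}$, the scalar function $x\mapsto u^\top D_L\phi'(g_{L-1}(x))$ is Lipschitz with constant at most $\norm{D_L\phi'(g_{L-1})}_{\Lip}=\mathcal O(\log n_{L-1})$. Since $P_X$ satisfies Assumption \ref{ass:data_dist2}, this scalar is sub-Gaussian with $\psi_2$-norm $\mathcal O(\log n_{L-1})$; taking the supremum over unit $u$ gives
\begin{equation}
\subGnorm{(B_{L-1})_{i:}} = \mathcal O(\log n_{L-1}),
\end{equation}
and the same bound transfers to the centered vector $(\tilde B_{L-1})_{i:}=(B_{L-1})_{i:}-\mathbb E_{x_i}[(B_{L-1})_{i:}]$ (up to a numerical constant), which is moreover mean-zero by construction. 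Because the $(x_i)$ are i.i.d., the rows of $\tilde B_{L-1}$ are i.i.d.\ mean-zero sub-Gaussian vectors with sub-Gaussian norm $K=\mathcal O(\log n_{L-1})$.

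At this point I would invoke Lemma \ref{lemma:matiidrows} with $A=\tilde B_{L-1}$ and $n=n_{L-1}$, which gives
\begin{equation}
\opnorm{\tilde B_{L-1}\tilde B_{L-1}^\top} = K^2\,\mathcal O(N+n_{L-1}) = \mathcal O\bigl((N+n_{L-1})\log^2 n_{L-1}\bigr),
\end{equation}
with probability at least $1-2\exp(-c\,n_{L-1})$ over $(x_i)_{i=1}^N$. A final union bound with the conditioning event on $(W_k)_{k=1}^L$ absorbs the $\exp(-c\log^2 n_{L-1})$ and $\exp(-n_{L-1})$ failure probabilities into a single $C\exp(-c\,n_{L-1})$ term (using that $\log^2 n_{L-1}\ll n_{L-1}$), yielding the claim. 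No step is genuinely hard; the only point that requires care is ensuring the Lipschitz bound of Lemma \ref{lemma:lipconstbacklastlayer} is in force before invoking Lipschitz concentration, so that the $\log n_{L-1}$ factor appears exactly once per row (and hence squared in the operator norm).
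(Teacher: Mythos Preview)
Your proof is essentially identical to the paper's: condition on the Lipschitz event of Lemma \ref{lemma:lipconstbacklastlayer}, observe the rows of $\tilde B_{L-1}$ are i.i.d.\ mean-zero sub-Gaussian with $\psi_2$-norm $\mathcal O(\log n_{L-1})$, and apply Lemma \ref{lemma:matiidrows}. One bookkeeping slip: the claim that $\exp(-c\log^2 n_{L-1})$ can be absorbed into $C\exp(-cn_{L-1})$ ``using that $\log^2 n_{L-1}\ll n_{L-1}$'' has the inequality backwards (the $\log^2$ term is the \emph{larger} failure probability); the paper's proof makes the same unexplained absorption, and downstream (Lemma \ref{lemma:cent1}) the $\exp(-c\log^2 n_{L-1})$ term is retained anyway, so this is cosmetic rather than a real gap.
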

\begin{proof}
From Lemma \ref{lemma:lipconstbacklastlayer}, we have that
    \begin{equation}\label{eq:flLipt2}
		\norm{D_L\phi'(g_{L-1}(x))}_{\Lip} = \mathcal O(\log n_{L-1}),
	\end{equation}
	with probability at least
	\begin{equation}
		1 - 2 \exp (- c \log ^2 n_{L-1}) - C \exp{(-n_{L-1})},
	\end{equation}
	over $(W_k)_{k=1}^{L}$. We condition on this event in the rest of the proof.

	Since $(x_i)_{i=1}^N \sim_{\rm i.i.d.} P_X$ and $P_X$ satisfies Assumption \ref{ass:data_dist2}, all the rows of $\tilde B_{L-1}$ are mean-0 sub-Gaussian vectors, with sub-Gaussian norm $\mathcal O(\log n_{L-1})$. Here, we fix $(W_k)_{k=1}^L$ s.t. \eqref{eq:flLipt2} holds, and the ``mean-0'' and the ``sub-Gaussian norm'' is intended w.r.t. the probability space of $(x_i)_{i=1}^N$.

	An application of Lemma \ref{lemma:matiidrows} gives that
	\begin{equation}
		\opnorm{\tilde B_{L-1} \tilde B_{L-1}^\top} = \bigO{(N + n_{L-1})\log^2 n_{L-1} },
	\end{equation}
	with probability at least $1 - 2 \exp{(-c n_{L-1})}$ over $(x_i)_{i=1}^N \sim_{\rm i.i.d.} P_X$. Taking into account the previous conditioning, we conclude that
	\begin{equation}
	    \opnorm{\tilde B_{L-1} \tilde B_{L-1}^\top} = \bigO{(N + n_{L-1})\log^2 n_{L-1} },
	\end{equation}
	with probability at least $1 - (C' + 2)\exp{(-c n_{L-1})}$ over $(W_k)_{k=1}^L$ and $(x_i)_{i=1}^N \sim_{\rm i.i.d.} P_X$.
\end{proof}

	\begin{lemma}\label{lemma:lipschitz}
		Let $\rho$ be a standard Gaussian random variable, then we have that
		\begin{equation}
			\varphi_1(c) := \E_\rho \left[ \phi(c\rho) \right],
		\end{equation}
		and
		\begin{equation}
			\varphi_2(c) := \E_\rho \left[ \phi^2(c\rho) \right],
		\end{equation}
		are continuous functions in $c$. Furthermore, $\varphi_1(c)$ is Lipschitz in $c$, and
		\begin{equation}
		    \left|\varphi_2(c_1) - \varphi_2(c_2) \right| \leq C_1 \left| c_1 - c_2 \right| + C_2 |c_1^2 - c_2^2|,
		\end{equation}
		where $C_1$ and $C_2$ are numerical constants (independent of $c_1, c_2$).
	\end{lemma}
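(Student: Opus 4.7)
The Lipschitz property of $\varphi_1$ is immediate from Assumption \ref{ass:activationfunc}: since $\phi$ is $M$-Lipschitz,
\[
|\varphi_1(c_1) - \varphi_1(c_2)| \leq \E_\rho |\phi(c_1\rho) - \phi(c_2\rho)| \leq M|c_1-c_2|\,\E_\rho |\rho| = M\sqrt{2/\pi}\,|c_1-c_2|,
\]
which also yields continuity. For $\varphi_2$, the natural starting point is the factorization $a^2-b^2=(a-b)(a+b)$ with $a = \phi(c_1\rho)$, $b = \phi(c_2\rho)$, combined with the two elementary bounds $|a-b| \leq M|\rho||c_1-c_2|$ and $|a+b| \leq 2|\phi(0)| + M|\rho|(|c_1|+|c_2|)$. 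Taking expectation and using $\E_\rho|\rho| = \sqrt{2/\pi}$, $\E_\rho \rho^2 = 1$, one obtains
\[
|\varphi_2(c_1)-\varphi_2(c_2)| \leq 2M|\phi(0)|\sqrt{2/\pi}\,|c_1-c_2| + M^2 |c_1-c_2|(|c_1|+|c_2|).
\]

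The only subtle point is that $|c_1-c_2|(|c_1|+|c_2|)$ equals $|c_1^2-c_2^2|$ only when $c_1$ and $c_2$ share the same sign; when they have opposite signs this product can be much larger than $|c_1^2-c_2^2|$ (e.g.\ it equals $4c_1^2$ at $c_1=-c_2$, while $|c_1^2-c_2^2|$ vanishes there). The plan is to resolve this by exploiting the symmetry of the standard Gaussian: since $\rho \stackrel{d}{=} -\rho$, one has $c\rho \stackrel{d}{=} |c|\rho$, so $\varphi_2(c)=\varphi_2(|c|)$. It therefore suffices to prove the displayed bound for the case $c_1,c_2 \geq 0$, in which $|c_1-c_2|(|c_1|+|c_2|)= |c_1^2-c_2^2|$ holds with equality. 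The general case follows by applying the non-negative version to $|c_1|,|c_2|$ and invoking $\bigl||c_1|-|c_2|\bigr| \leq |c_1-c_2|$ together with $\bigl||c_1|^2-|c_2|^2\bigr| = |c_1^2-c_2^2|$. This yields the claim with $C_1 = 2M|\phi(0)|\sqrt{2/\pi}$ and $C_2 = M^2$, and continuity of $\varphi_2$ is an immediate consequence.

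I do not anticipate any serious technical obstacle; the only mild subtlety is the symmetry reduction for $\varphi_2$, which is needed precisely because the factor $|c_1|+|c_2|$ produced by the elementary bound is not controlled by $|c_1+c_2|$ when $c_1$ and $c_2$ have opposite signs. Note that only the Lipschitz part of Assumption \ref{ass:activationfunc} is needed; the extra regularity that $\phi'$ is $M'$-Lipschitz plays no role in this particular lemma.
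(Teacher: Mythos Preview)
Your argument is correct and follows essentially the same route as the paper: factor $\phi^2(c_1\rho)-\phi^2(c_2\rho)=(\phi(c_1\rho)-\phi(c_2\rho))(\phi(c_1\rho)+\phi(c_2\rho))$, bound the first factor by $M|\rho||c_1-c_2|$ via the Lipschitz property, bound the second by $2|\phi(0)|+M(|c_1|+|c_2|)|\rho|$, and take expectations. Your symmetry reduction $\varphi_2(c)=\varphi_2(|c|)$ is in fact more careful than the paper's final step, which passes directly from $C_2|c|\,|\varepsilon|+C_3\varepsilon^2$ to $C_6|(c+\varepsilon)^2-c^2|$; that step is unjustified when $c$ and $c+\varepsilon$ have opposite signs (take $\varepsilon=-2c$), whereas your reduction to the non-negative case closes this gap cleanly---and in the paper's applications the arguments $\tilde c(x),\tilde c$ are non-negative anyway, so nothing downstream is affected.
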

	\begin{proof}
	Let $p(\rho)=\frac{1}{\sqrt{2\pi}}e^{-\rho^2/2}$. Then, we have
		\begin{equation}
			\begin{aligned}
				\left| \varphi_1(c + \varepsilon) - \varphi_1(c) \right| &\leq \int p(\rho) \left| \phi((c + \varepsilon) \rho) -  \phi(c\rho) \right| d\rho \\
				&\leq \int p(\rho) \left| M \varepsilon \rho \right| d\rho \\
				&= M \varepsilon \E_\rho \left[ |\rho| \right]\\
				&= C \varepsilon,
			\end{aligned}
		\end{equation}
		where in the second line we use that $\phi$ is $M$-Lipschitz by Assumption \ref{ass:activationfunc}.
		Similarly, we have
		\begin{equation}
			\begin{aligned}
				\left| \varphi_2(c + \varepsilon) - \varphi_2(c) \right| &\leq \int p(\rho) \left| \phi^2((c + \varepsilon) \rho) -  \phi^2(c\rho) \right| d\rho \\
				&= \int p(\rho) \left| \phi((c + \varepsilon) \rho) -  \phi(c\rho) \right|\left| \phi((c + \varepsilon) \rho) +  \phi(c\rho) \right| d\rho \\
				&\leq \int p(\rho) \left| M \varepsilon \rho \right| \left(  2\left| \phi(0)\right| + M (|c + \varepsilon|+|\varepsilon|)|\rho|
				\right) d\rho \\
				&= C_1 \varepsilon \E_\rho \left[ |\rho| \right]
				+ C_2 \varepsilon |c| \E_\rho \left[ \rho^2 \right]
				+ C_3 \varepsilon^2\E_\rho \left[ \rho^2 \right]\\
				&= C_4 \varepsilon + C_2 |c| \varepsilon + C_3 \varepsilon^2 \\
				&\leq C_5 \varepsilon + C_6 \left| (c + \varepsilon)^2 - c^2 \right|.
			\end{aligned}
		\end{equation}
	\end{proof}

	\begin{lemma}\label{lemma:constantexpectationphi}
		Let $\rho$ be a standard Gaussian distribution, and $c \neq 0$ be an absolute constant. Then, we have
		\begin{equation}
			|\E_{\rho}\left[ \phi(c \rho)\right]| = \bigO{1}.
		\end{equation}
		It also holds
		\begin{equation}
			|\E_{\rho}\left[ \phi'(c \rho)\right]| = \bigO{1}.
		\end{equation}
	\end{lemma}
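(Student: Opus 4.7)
The plan is to bound each expectation directly using the Lipschitz hypothesis on $\phi$ and $\phi'$ from Assumption \ref{ass:activationfunc}, since $c$ is a fixed constant and $\rho$ is standard Gaussian with finite moments of every order.

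For the first claim, I would use that $\phi$ is $M$-Lipschitz, which gives the pointwise bound $|\phi(c\rho)| \le |\phi(0)| + M|c||\rho|$. Taking expectations then yields
\begin{equation}
    |\E_\rho[\phi(c\rho)]| \le \E_\rho|\phi(c\rho)| \le |\phi(0)| + M|c|\,\E_\rho[|\rho|] = |\phi(0)| + M|c|\sqrt{2/\pi},
\end{equation}
which is $\bigO{1}$ since $c$, $M$, and $\phi(0)$ are absolute constants independent of the layer widths.

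For the second claim, I would use that $\phi$ being $M$-Lipschitz implies $|\phi'(t)| \le M$ for all $t$ at which $\phi'$ exists (and Assumption \ref{ass:activationfunc} in fact gives $\phi'$ continuous since $\phi'$ is $M'$-Lipschitz). Hence $|\phi'(c\rho)| \le M$ almost surely, and taking expectations gives $|\E_\rho[\phi'(c\rho)]| \le M = \bigO{1}$.

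There is no real obstacle here: the lemma is essentially a standard consequence of the Lipschitz regularity of $\phi$ and the integrability of $|\rho|$ under the Gaussian measure, once one recalls that $c$ is a fixed constant (not scaling with $N$ or the widths). The whole argument is just two applications of the triangle inequality together with $\E_\rho[|\rho|]=\sqrt{2/\pi}$.
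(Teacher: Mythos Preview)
Your proof is correct. The first claim is handled exactly as in the paper: bound $|\phi(c\rho)|\le|\phi(0)|+M|c||\rho|$ via the Lipschitz property and integrate.

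For the second claim there is a minor difference. The paper repeats the same argument with $\phi'$ in place of $\phi$, using that $\phi'$ is $M'$-Lipschitz (so $|\phi'(c\rho)|\le|\phi'(0)|+M'|c||\rho|$). You instead observe that $\phi$ being $M$-Lipschitz already forces $|\phi'|\le M$ everywhere, giving the bound immediately without integrating against $|\rho|$. Both arguments are valid and elementary; yours is slightly more direct, while the paper's has the virtue of being a verbatim repetition of the first step.
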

	\begin{proof}
		For the first statement, we exploit the fact that $\phi$ is Lipschitz:
		\begin{equation}\label{eq:ubreuse2}
			|\E_{\rho}\left[ \phi(c \rho)\right]| \leq  \E_{\rho}\left[ |\phi(0)| + M|c \rho|\right] =  |\phi(0)| + M |c|\E_{\rho}\left[ |\rho|\right] = C_1.
		\end{equation}
		
		The statement on $\phi'$ is easily derived following the same proof and using that $\norm{\phi'}_{\Lip} \leq M'$.
	\end{proof}

	\begin{lemma}\label{lemma:constantexpectationphi2}
		Let $\rho$ be a standard Gaussian distribution, and $c \neq 0$ be an absolute constant. Then, we have
		\begin{equation}
			\E_{\rho}\left[ \phi^2(c \rho)\right] = \Theta(1),
		\end{equation}
		and
		\begin{equation}
			\E_{\rho}\left[ (\phi'(c \rho))^2\right] = \Theta(1).
		\end{equation}
	\end{lemma}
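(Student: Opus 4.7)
The plan is to handle the upper and lower bounds separately, as they rely on different parts of Assumption \ref{ass:activationfunc}. The upper bounds $\mathcal{O}(1)$ are essentially cosmetic extensions of the proof of Lemma \ref{lemma:constantexpectationphi}: since $\phi$ is $M$-Lipschitz, one has $\phi^2(c\rho)\leq 2\phi^2(0)+2M^2c^2\rho^2$, and taking expectation against the standard Gaussian gives $\E_\rho[\phi^2(c\rho)]\leq 2\phi^2(0)+2M^2c^2=\mathcal{O}(1)$. Since $\phi'$ is $M'$-Lipschitz, exactly the same calculation with $\phi'$ in place of $\phi$ and $M'$ in place of $M$ yields $\E_\rho[(\phi'(c\rho))^2]=\mathcal{O}(1)$.

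The substantive part of the proof is the matching $\Omega(1)$ lower bound, which is where I would rely crucially on the \emph{non-linearity / non-constancy} clause of Assumption \ref{ass:activationfunc}. For the first quantity, I would argue: since $\phi$ is non-constant, there exists $x_0\in\mathbb{R}$ with $\phi(x_0)\neq 0$ (otherwise $\phi$ would be identically zero). Because $\phi$ is continuous (being Lipschitz), we can pick an open interval $U\ni x_0$ on which $|\phi(x)|\geq |\phi(x_0)|/2$. Then, writing $p$ for the Gaussian density,
\begin{equation}
\E_\rho[\phi^2(c\rho)]\;\geq\;\int_{U/c}\phi^2(c\rho)\,p(\rho)\,d\rho\;\geq\;\frac{\phi^2(x_0)}{4}\,\Pr(c\rho\in U),
\end{equation}
and the last probability is strictly positive because $c\neq 0$ and the Gaussian density is positive on every open set. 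Since $c$ and $\phi$ are fixed absolute constants, this lower bound is $\Omega(1)$.

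For $\E_\rho[(\phi'(c\rho))^2]$, the step where I expect the main (though mild) obstacle is justifying that $\phi'$ is not identically zero. This follows from the non-constancy of $\phi$: if $\phi'\equiv 0$, then $\phi$ would be constant, contradicting Assumption \ref{ass:activationfunc}. Hence there exists $y_0$ with $\phi'(y_0)\neq 0$, and since $\phi'$ is continuous (because it is $M'$-Lipschitz by Assumption \ref{ass:activationfunc}), there is an open neighborhood $V\ni y_0$ on which $|\phi'(x)|\geq |\phi'(y_0)|/2$. Repeating the display above with $(\phi')^2$ replacing $\phi^2$ and $V$ replacing $U$ gives the desired $\Omega(1)$ lower bound, completing the proof.
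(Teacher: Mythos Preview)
Your proof is correct and follows essentially the same approach as the paper: Lipschitz growth gives the $\mathcal{O}(1)$ upper bound, and continuity plus non-vanishing on some interval gives the $\Omega(1)$ lower bound via the positive Gaussian mass of that interval. The paper's write-up is terser (it simply asserts $\phi'$ is ``non-zero'' without spelling out the contrapositive you give), but the logic is identical.
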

	\begin{proof}
		For the upper-bound of the first statement, we exploit the fact that $\phi$ is Lipschitz:
		\begin{equation}\label{eq:ubreuse}
			\E_{\rho}\left[ \phi^2(c \rho)\right] \leq  \E_{\rho}\left[ (|\phi(0)| + M|c \rho|)^2\right] =  \phi^2(0) + 2M |c| |\phi(0)|\E_{\rho}\left[ |\rho|\right] + M^2 c^2 \E_{\rho}\left[ \rho^2\right] = C_1.
		\end{equation}
		
		For the lower bound, since $\phi$ is non-zero and continuous, we have that there exist a strictly positive constant $c'>0$ and an interval $[c_1, c_2]$ with $c_2>c_1$ such that $\phi^2(x) \geq c'$ for each $x \in [c_1, c_2]$. Therefore, we have
		\begin{equation}
			\E_{\rho}\left[ \phi^2(c \rho)\right] \geq c' \PP(c_1 \leq c \rho \leq c_2) = C_2.
		\end{equation}
		
		The second statements is proved in the same way, as $\phi'$ is a non-zero Lipschitz function.
	\end{proof}

	\begin{lemma}\label{lemma:squarenonsquare}
		Let $\varphi: \R^d \to \R$ a Lipschitz function, and let $x\sim P_X$. Then,
		\begin{equation}
			\E_x^2 \left[ \varphi(x) \right] \geq \E_x \left[ \varphi(x)^2 \right] - c \norm{\varphi}_{\Lip}^2,
		\end{equation}
		where $c$ is a numerical constant.	
	\end{lemma}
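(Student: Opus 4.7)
The statement is equivalent to the variance bound $\Var_x(\varphi(x)) \le c \,\norm{\varphi}_{\Lip}^2$, so the plan is to derive this directly from the Lipschitz concentration property of $P_X$ (Assumption \ref{ass:data_dist2}).

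First, I would apply Assumption \ref{ass:data_dist2} to the Lipschitz function $\varphi$, which yields the sub-Gaussian tail bound
\begin{equation*}
\PP\bigl(|\varphi(x) - \E_x[\varphi(x)]| > t\bigr) \le 2 e^{-c\, t^2 / \norm{\varphi}_{\Lip}^2}, \qquad t > 0,
\end{equation*}
for an absolute constant $c>0$. Next, I would express the variance as an integral of the tail:
\begin{equation*}
\Var_x(\varphi(x)) = \E_x\!\bigl[(\varphi(x) - \E_x[\varphi(x)])^2\bigr] = \int_0^\infty 2t\, \PP\bigl(|\varphi(x) - \E_x[\varphi(x)]| > t\bigr)\, dt.
\end{equation*}
Plugging in the tail bound and computing the Gaussian integral gives $\Var_x(\varphi(x)) \le (2/c)\,\norm{\varphi}_{\Lip}^2$. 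Rearranging $\E_x[\varphi(x)^2] - \E_x[\varphi(x)]^2 \le (2/c)\norm{\varphi}_{\Lip}^2$ produces exactly the stated inequality, with the numerical constant in the lemma being $2/c$.

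There is no real obstacle here; the only thing to be careful about is ensuring that $\E_x[\varphi(x)]$ and $\E_x[\varphi(x)^2]$ are finite, which follows because Assumption \ref{ass:data_dist2} guarantees $\E|\varphi(x)| < \infty$ for Lipschitz $\varphi$ and the sub-Gaussian tail automatically yields finiteness of all moments. This lemma is just the standard fact that Lipschitz concentration implies a Poincaré-type variance inequality, and the proof is a one-line tail integration argument.
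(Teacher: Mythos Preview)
Your proposal is correct and essentially identical to the paper's proof: both recognize the statement as a variance bound, apply the sub-Gaussian tail from Assumption~\ref{ass:data_dist2}, and integrate the tail to get $\Var_x(\varphi(x)) \le c\,\norm{\varphi}_{\Lip}^2$. The only cosmetic difference is that the paper writes the variance as $\int_0^\infty \PP(|\varphi-\E\varphi|>\sqrt{t})\,dt$ whereas you use the equivalent form $\int_0^\infty 2t\,\PP(|\varphi-\E\varphi|>t)\,dt$.
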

	\begin{proof}
		We have
		\begin{equation}
			\begin{aligned}
				\E_x^2 \left[ \varphi(x) \right] &= \E_x \left[ (\varphi(x))^2 \right] - \E_x \left[ \left( 
				\varphi(x) - \E_x \left[ \varphi(x) \right]
				\right)^2 \right] \\
				&= \E_x \left[ \varphi(x)^2 \right] - \int_0^{+\infty} \PP \left( \left( 
				\varphi(x) - \E_x \left[ \varphi(x) \right]
				\right)^2 > t \right) dt \\
				&= \E_x \left[ \varphi(x)^2 \right] - \int_0^{+\infty} \PP \left( \left| 
				\varphi(x) - \E_x \left[ \varphi(x) \right]
				\right| > \sqrt{t} \right) dt \\
				&\geq \E_x \left[ \varphi(x)^2 \right] - \int_0^{+\infty} 2 \exp\left(-Ct / \norm{\varphi}_\Lip^2 \right) dt \\
				&= \E_x \left[ \varphi(x)^2 \right] - 2 \norm{\varphi}_\Lip^2 /C,
			\end{aligned}
		\end{equation}
		where the inequality is a consequence of Assumption \ref{ass:data_dist2}.
	\end{proof}

\begin{lemma}\label{lemma:cclosetoexpect}
	Let $x \sim P_X$, and define
	$\tilde c_l(x) = \beta_l \norm{f_l(x)} / \sqrt{n_l}$.
	
	Then, we have
	\begin{equation}
		\E_x \left[ \left| \tilde c_l(x) - \E_x \left[ \tilde c_l(x) \right]  \right|\right] \leq C \frac{\norm{f_l}_\Lip}{\sqrt{n_l}},
	\end{equation}
	and
	\begin{equation}
		\E_x \left[ \left( \tilde c_l(x) - \E_x \left[ \tilde c_l(x) \right]  \right)^2 \right] \leq C \frac{\norm{f_l}^2_\Lip}{n_l},
	\end{equation}
	where $C$ is a numerical constant.
\end{lemma}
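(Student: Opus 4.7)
The plan is to observe that $\tilde c_l(x) = \beta_l \|f_l(x)\|/\sqrt{n_l}$ is a Lipschitz function of $x$ with a controlled Lipschitz constant, and then invoke Assumption \ref{ass:data_dist2} to deduce the two bounds from the sub-Gaussian tail of $\tilde c_l(x) - \mathbb{E}_x[\tilde c_l(x)]$.

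First, I would show that $\|\tilde c_l\|_{\Lip} \le \beta_l \|f_l\|_{\Lip}/\sqrt{n_l}$. Indeed, the Euclidean norm $v \mapsto \|v\|_2$ is $1$-Lipschitz, so $x \mapsto \|f_l(x)\|_2$ is $\|f_l\|_{\Lip}$-Lipschitz by composition; multiplying by the constant $\beta_l/\sqrt{n_l}$ gives the claim. Then Assumption \ref{ass:data_dist2} applied to the function $\tilde c_l$ yields, for every $t>0$,
\begin{equation}
\PP\bigl(|\tilde c_l(x) - \E_x[\tilde c_l(x)]| > t\bigr) \le 2\exp\!\left(-c\, t^2\, n_l/ (\beta_l^2\|f_l\|_{\Lip}^2)\right).
\end{equation}

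Next, both claims follow by integrating this tail bound. For the first moment, using $\E[|Y|] = \int_0^\infty \PP(|Y|>t)\,dt$ with $Y = \tilde c_l(x) - \E_x[\tilde c_l(x)]$, we get
\begin{equation}
\E_x\bigl[|\tilde c_l(x) - \E_x[\tilde c_l(x)]|\bigr] \le \int_0^\infty 2\exp\!\bigl(-c\, t^2 n_l/\|f_l\|_{\Lip}^2\bigr)\,dt = C \frac{\|f_l\|_{\Lip}}{\sqrt{n_l}},
\end{equation}
absorbing $\beta_l$ into the numerical constant. For the second moment, using $\E[Y^2] = \int_0^\infty \PP(Y^2>s)\,ds = \int_0^\infty 2t\,\PP(|Y|>t)\,dt$, the same tail bound gives
\begin{equation}
\E_x\bigl[(\tilde c_l(x) - \E_x[\tilde c_l(x)])^2\bigr] \le \int_0^\infty 4t \exp\!\bigl(-c\, t^2 n_l/\|f_l\|_{\Lip}^2\bigr)\,dt = C \frac{\|f_l\|_{\Lip}^2}{n_l}.
\end{equation}

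There is no real obstacle here: the statement is essentially the standard fact that a Lipschitz function under a Lipschitz-concentration measure has sub-Gaussian fluctuations, from which control on the $L^1$ and $L^2$ norms of the centering follows by direct tail integration. The only bookkeeping point is to absorb the constant $\beta_l$ (a numerical constant independent of widths) into the generic constant $C$.
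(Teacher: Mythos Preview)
Your proposal is correct and essentially identical to the paper's proof: both compute the Lipschitz constant of $\tilde c_l$ (the paper does this implicitly via a change of variables in the tail probability), apply Assumption \ref{ass:data_dist2}, and integrate the resulting Gaussian tail. The only cosmetic difference is that for the second moment you use $\E[Y^2]=\int_0^\infty 2t\,\PP(|Y|>t)\,dt$, whereas the paper uses the equivalent form $\E[Y^2]=\int_0^\infty \PP(Y^2>s)\,ds$.
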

\begin{proof}
We have that
	\begin{equation}
		\begin{aligned}
			\E_x \left[ \left| \tilde c_l(x) - \E_x \left[ \tilde c_l(x) \right]  \right|\right] &= \int_0^{+\infty} \PP \left( \left| \tilde c_l(x) - \E_x \left[ \tilde c_l(x) \right]  \right| > t \right) dt \\
			&= \int_0^{+\infty} \PP \left( \left| \norm{f_l(x)} - \E_x \left[ \norm{f_l(x)} \right]  \right| > \sqrt{n_l} t / \beta_l \right) dt \\
			&\leq \int_0^{+\infty} 2 \exp \left(-c n_l t^2 / \norm{f_l}_\Lip^2 \right) dt \\
			&= C \frac{\norm{f_l}_\Lip}{\sqrt{n_l}},
		\end{aligned}
	\end{equation}
where $c$ and $C$ are numerical constants, and the third line is justified by Assumption \ref{ass:data_dist2}.

Similarly, we have
\begin{equation}
	\begin{aligned}
		\E_x \left[ \left( \tilde c_l(x) - \E_x \left[ \tilde c_l(x) \right]  \right)^2 \right] &= \int_0^{+\infty} \PP \left( \left( \tilde c_l(x) - \E_x \left[ \tilde c_l(x) \right]  \right)^2 > t \right) dt \\
		&= \int_0^{+\infty} \PP \left( \left| \norm{f_l(x)} - \E_x \left[ \norm{f_l(x)} \right]  \right| > \sqrt{n_l t} / \beta_l \right) dt \\
		&\leq \int_0^{+\infty} 2 \exp \left(-c n_l t / \norm{f_l}_\Lip^2 \right) dt \\
		&= C \frac{\norm{f_l}^2_\Lip}{n_l},
	\end{aligned}
\end{equation}
where, again, $c$ and $C$ are numerical constants and the third line is justified by Assumption \ref{ass:data_dist2}.
\end{proof}

\begin{lemma}\label{lemma:quadlipschphimixed}
	Let $\rho_1$ and $\rho_2$ be two standard Gaussian random variables, possibly correlated. Then, we have
	\begin{equation}
		\begin{aligned}
			& \left| \E_{\rho_1 \rho_2} \left[ \phi(\rho_1 x_1) \phi(\rho_2 x_2) - \phi(\rho_1 y_1) \phi(\rho_2 y_2) \right] \right| \leq \\[3pt]
			& \hspace{1cm} \leq C_1 \left| x_1 - y_1 \right| + C_2 \left|x_2 \right| \left| x_1 - y_1 \right| + C_3 \left| x_2 - y_2 \right| + C_4 \left|y_1 \right| \left| x_2 - y_2 \right|,
		\end{aligned}
	\end{equation}
	where $C_1, C_2, C_3, C_4$ are numerical constants (which do not depend on $x_1, x_2, y_1, y_2$). Furthermore, the same result holds with $\phi'$ instead of $\phi$.
\end{lemma}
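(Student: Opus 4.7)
The approach is a standard telescoping decomposition combined with the Lipschitz and linear-growth properties of $\phi$ guaranteed by Assumption \ref{ass:activationfunc}. First I would write
\[
\phi(\rho_1 x_1)\phi(\rho_2 x_2) - \phi(\rho_1 y_1)\phi(\rho_2 y_2)
= \bigl[\phi(\rho_1 x_1) - \phi(\rho_1 y_1)\bigr]\phi(\rho_2 x_2)
+ \phi(\rho_1 y_1)\bigl[\phi(\rho_2 x_2) - \phi(\rho_2 y_2)\bigr],
\]
take absolute values, and apply the expectation term by term.

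For the two differences I would apply the $M$-Lipschitz property of $\phi$ to get $|\phi(\rho_1 x_1)-\phi(\rho_1 y_1)| \le M|\rho_1||x_1-y_1|$ and analogously $|\phi(\rho_2 x_2)-\phi(\rho_2 y_2)| \le M|\rho_2||x_2-y_2|$. For the remaining factors I would use the crude linear bound $|\phi(t)| \le |\phi(0)| + M|t|$, giving $|\phi(\rho_2 x_2)| \le |\phi(0)| + M|\rho_2||x_2|$ and $|\phi(\rho_1 y_1)| \le |\phi(0)| + M|\rho_1||y_1|$. Multiplying and taking expectations yields an upper bound of the form
\[
M|x_1-y_1|\bigl(|\phi(0)|\,\E|\rho_1| + M|x_2|\,\E|\rho_1\rho_2|\bigr)
+ M|x_2-y_2|\bigl(|\phi(0)|\,\E|\rho_2| + M|y_1|\,\E|\rho_1\rho_2|\bigr).
\]

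The possible correlation between $\rho_1$ and $\rho_2$ is harmless: since each is standard Gaussian, $\E|\rho_i|$ is an absolute constant, and by Cauchy--Schwarz $\E|\rho_1\rho_2| \le (\E\rho_1^2)^{1/2}(\E\rho_2^2)^{1/2} = 1$. Collecting these moment bounds into numerical constants $C_1,\ldots,C_4$ gives exactly the claimed inequality. For the statement with $\phi'$ in place of $\phi$, the same argument applies verbatim: by Assumption \ref{ass:activationfunc}, $\phi'$ is $M'$-Lipschitz (so differences are controlled), and $|\phi'(t)| \le |\phi'(0)| + M'|t|$ (so the remaining factors satisfy the same linear growth used above). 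There is no real obstacle here; the only mildly delicate point is keeping track of the four asymmetric terms so that the final bound carries $|x_2|$ multiplying $|x_1-y_1|$ and $|y_1|$ multiplying $|x_2-y_2|$ (as opposed to, say, symmetric variants), which is dictated by the order in which one telescopes.
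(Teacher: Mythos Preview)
Your proposal is correct and matches the paper's proof essentially line by line: the same telescoping split, the same use of the $M$-Lipschitz property for the differences and the linear growth bound $|\phi(t)|\le|\phi(0)|+M|t|$ for the remaining factors, and the same observation that the Gaussian moments $\E|\rho_i|$ and $\E|\rho_1\rho_2|$ are bounded by absolute constants independent of the correlation.
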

\begin{proof}
We have
\begin{equation}
	\begin{aligned}
		& \left| \E_{\rho_1 \rho_2} \left[ \phi(\rho_1 x_1) \phi(\rho_2 x_2) - \phi(\rho_1 y_1) \phi(\rho_2 y_2) \right] \right| \\[3pt]
		& \leq \left| \E_{\rho_1 \rho_2} \left[ \phi(\rho_1 x_1) \phi(\rho_2 x_2) - \phi(\rho_1 y_1) \phi(\rho_2 x_2) \right] + \E_{\rho_1 \rho_2} \left[ \phi(\rho_1 y_1) \phi(\rho_2 x_2) - \phi(\rho_1 y_1) \phi(\rho_2 y_2) \right] \right| \\[3pt]
		& \leq \E_{\rho_1 \rho_2} \left[ \left|  \phi(\rho_1 x_1) - \phi(\rho_1 y_1) \right| \left| \phi(\rho_2 x_2) \right| \right] + \E_{\rho_1 \rho_2} \left[ \left| \phi(\rho_2 x_2) - \phi(\rho_2 y_2) \right| \left| \phi(\rho_1 y_1) \right| \right] \\[3pt]
		& \leq \E_{\rho_1 \rho_2} \left[ \left| M \rho_1 (x_1 - y_1) \right| \left( \left| \phi(0) \right| + M\left| \rho_2 x_2 \right| \right) \right] + \E_{\rho_1 \rho_2} \left[ \left| M \rho_2 (x_2 - y_2) \right| \left( \left| \phi(0) \right| + M\left| \rho_1 y_1 \right| \right) \right] \\[3pt]
		& \leq C_1 \left| x_1 - y_1 \right| \E \left[ |\rho_1 | \right] + C_2 \left|x_2 \right| \left| x_1 - y_1 \right| \E \left[ |\rho_1 ||\rho_2| \right]  \\[3pt]
		& \hspace{1cm} + C_3 \left| x_2 - y_2 \right| \E \left[ |\rho_2 | \right] + C_4 \left|y_1 \right| \left| x_2 - y_2 \right| \E \left[ |\rho_1 ||\rho_2| \right] \\[3pt]
		& \leq C_1 \left| x_1 - y_1 \right| + C_2 \left|x_2 \right| \left| x_1 - y_1 \right| + C_3 \left| x_2 - y_2 \right| + C_4 \left|y_1 \right| \left| x_2 - y_2 \right|,
	\end{aligned}
\end{equation}
where in third inequality we use that $\phi$ is $M$-Lipschitz, and in the last inequality we use that the quantities $\E \left[ |\rho_1 | \right]$, $\E \left[ |\rho_2 | \right]$ and $\E \left[ |\rho_1 ||\rho_2| \right]$ are all smaller than $1$ (regardless of the correlation between $\rho_1$ and $\rho_2$). Since we only used the fact that $\phi$ is $M$-Lipschitz, the same result holds with $\phi'$ in place of $\phi$.

\end{proof}

\section{Concentration of $\ell_2$ Norms}\label{app:concentration}
	
	In this appendix, we state and prove a number of high-probability estimates on the $\ell_2$ norms of feature and backpropagation vectors. More specifically, our results can be summarized as follows:
	\begin{itemize}
	    \item Lemma \ref{lemma:concnorm} gives tight bounds on $\norm{f_l(x)}_2$, \ie the $\ell_2$ norm of the feature vector at layer $l$. The statement holds with high probability over $x$ and $(W_k)_{k=1}^l$.
	    
	    \item Lemmas \ref{lemma:concexpnorm2} and \ref{lemma:concexpnorm} give tight bounds on $\E_x \left[ \norm{f_{l}(x)}_2^2 \right]$ and $\E_x \left[ \norm{f_{l}(x)}_2 \right]$, respectively. These quantities represent the \emph{expectation} with respect to $x$ of the (squared) $\ell_2$ norm of the feature vector at layer $l$. The statements hold with high probability over $(W_k)_{k=1}^l$. 
	    
	    
	    
	    \item Lemma \ref{lemma:concnormcentered} focuses on the \emph{centered} feature vector $f_{l}(x) - \E_x \left[f_{l}(x)\right]$, and it gives tight bounds on \emph{(i)} its expected (w.r.t. $x$) squared $\ell_2$ norm $\E_x \left[ \norm{f_{l}(x) - \E_x \left[f_{l}(x)\right] }_2^2\right]$, \emph{(ii)} its expected (w.r.t. $x$) $\ell_2$ norm $\E_x \left[ \norm{f_{l}(x) - \E_x \left[f_{l}(x)\right] }_2\right]$, and \emph{(iii)} its $\ell_2$ norm $\norm{f_{l}(x) - \E_x \left[f_{l}(x)\right] }_2$. The first two statements hold with high probability over $(W_k)_{k=1}^l$, and the probability in the last statement is also over $x$. 
	    
	    \item Lemma \ref{lemma:concnormcenteredwithD} focuses on the \emph{centered} backpropagation vector at layer $L-1$, and it gives tight bounds on its $\ell_2$ norm $\norm{D_L\phi'(g_{L-1}(x)) - \E_x \left[D_L\phi'(g_{L-1}(x)) \right] }_2$. This statement holds with high probability over $x$ and $(W_k)_{k=1}^l$.
	\end{itemize}
	Throughout this appendix, we always assume that $P_X$ satisfies Assumptions \ref{ass:data_dist} and \ref{ass:data_dist2}, and that the layer widths satisfy Assumption \ref{ass:topology}. Furthermore, we use that the activation $\phi$ and its derivative $\phi'$ are Lipschitz (see Assumption \ref{ass:activationfunc}). 

\begin{lemma}[$\ell_2$ norm of features]\label{lemma:concnorm}
		Let $x\sim P_X$.
		Then, for every $0 \leq l \leq L-1$,
		\begin{equation}
			\norm{f_l(x))}_2 = \Theta(\sqrt{n_{l}}),
		\end{equation}
		with probability at least $1 - C \exp(-c n_{L-1})$ over $x$ and $(W_k)_{k=1}^l$. As usual, $\phi$ is applied component-wise on $g_l(x)$, and $c$ and $C$ are numerical constants.
	\end{lemma}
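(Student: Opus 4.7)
The plan is to induct on $l$, carrying the statement ``$\norm{f_l(x)}_2 = \Theta(\sqrt{n_l})$'' with an accumulated failure probability that stays of order $\exp(-c n_{L-1})$. The base case $l=0$ is immediate: since $f_0(x)=x$, the function $\varphi(x)=\norm{x}_2$ is $1$-Lipschitz, so Assumption \ref{ass:data_dist2} gives $|\norm{x}_2-\E_x\norm{x}_2|\le \sqrt{d}/2$ with probability at least $1-2e^{-cd}$; Assumption \ref{ass:data_dist}.1 pins $\E_x\norm{x}_2=\Theta(\sqrt{d})$, hence $\norm{x}_2=\Theta(\sqrt{d})=\Theta(\sqrt{n_0})$. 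Iterating Assumption \ref{ass:topology} backwards from $n_{L-1}$ gives $n_l=\Omega(n_{L-1})$ for every $l$, so in particular $d=n_0=\Omega(n_{L-1})$ and the base-case failure probability is absorbed into $O(e^{-c n_{L-1}})$.

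For the inductive step, I would fix $x$ and $W_1,\dots,W_{l-1}$ on the event $\norm{f_{l-1}(x)}_2=\Theta(\sqrt{n_{l-1}})$ and view $\norm{f_l(x)}_2^2$ as a function of the fresh matrix $W_l$ only. Since the columns of $W_l$ are i.i.d.\ $\mathcal{N}(0,(\beta_l^2/n_{l-1})\mathbb{I})$, the pre-activations $(g_l(x))_j=\langle (W_l)_{:,j},\,f_{l-1}(x)\rangle$ are conditionally i.i.d.\ $\mathcal{N}(0,\sigma^2)$ with $\sigma^2=\beta_l^2\norm{f_{l-1}(x)}_2^2/n_{l-1}=\Theta(1)$. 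Thus
\begin{equation*}
\norm{f_l(x)}_2^2=\sum_{j=1}^{n_l}\phi^2\bigl((g_l(x))_j\bigr)
\end{equation*}
is a sum of $n_l$ i.i.d.\ random variables whose common mean is $\Theta(1)$ by Lemma \ref{lemma:constantexpectationphi2}, and which are sub-exponential with $\mathcal{O}(1)$ norm (each $\phi(\sigma Z)$ is a Lipschitz function of the standard Gaussian $Z$, hence sub-Gaussian with constant norm, and squaring a sub-Gaussian gives a sub-exponential). Bernstein's inequality then yields $\norm{f_l(x)}_2^2=\Theta(n_l)$ with failure probability at most $2\exp(-c n_l)$.

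A union bound between the previous-layer event and the current Bernstein event, using $n_l=\Omega(n_{L-1})$, propagates the claim with failure probability $O(l\exp(-c n_{L-1}))$; after $L$ layers this collapses to $C\exp(-c n_{L-1})$ since $L$ is treated as a constant. I do not foresee a serious obstacle: every ingredient (Lipschitz concentration of $x$, Gaussianity of a linear combination of independent Gaussians, and Bernstein for sub-exponential sums) is textbook. The only care required lies in the conditioning, namely recognizing that after fixing $f_{l-1}(x)$ the newly sampled $W_l$ makes the entries of $g_l(x)$ genuinely i.i.d.\ Gaussian across $j$, which is what allows Bernstein to produce $n_l$ independent summands and in turn the $\exp(-c n_l)$ tail bound that, via Assumption \ref{ass:topology}, becomes the desired $\exp(-c n_{L-1})$.
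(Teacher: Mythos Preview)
Your proposal is correct and matches the paper's proof essentially step by step: induction on $l$, base case via Lipschitz concentration of $\norm{x}_2$ and Assumption~\ref{ass:data_dist}, inductive step by conditioning on $f_{l-1}(x)$ so that the pre-activations become i.i.d.\ $\mathcal{N}(0,\Theta(1))$, then Bernstein on the sub-exponential summands $\phi^2(\cdot)$ with the mean controlled by Lemma~\ref{lemma:constantexpectationphi2}. The only cosmetic difference is that the paper phrases the conditional Gaussianity via rotational invariance and names the variance $\tilde c$, whereas you invoke independence of the columns of $W_l$ directly; the content is identical.
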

	\begin{proof}
		We prove this by induction over $l$, and we start with the base case ($l=0$). Recall that we have defined $f_0(x) := x$. As the $\ell_2$ norm is a 1-Lipschitz function, by Assumption \ref{ass:data_dist2}, we have that
		\begin{equation}\label{eq:ncon}
			\PP\left( \abs{\norm{x}_2- \E [\norm{x}_2] } > t \right) \leq 2e^{-ct^2}.
		\end{equation}
		Furthermore, Assumption \ref{ass:data_dist} implies that $\E [\norm{x}_2] = \Theta(\sqrt d)$, hence setting $t = \E [\norm{x}_2] / 2$ in \eqref{eq:ncon} proves the desired result for the base case (recalling that $n_{L-1} = \bigO{d}$ by Assumption \ref{ass:topology}).
		
		By inductive hypothesis, we have
		\begin{equation}\label{eq:1indhp}
			\norm{f_{l-1}(x))}_2 = \Theta(\sqrt{ n_{l-1}}),
		\end{equation}
		with probability at least $1 - C \exp(- c n_{L-1})$.
		
		Define $\tilde{c} := \beta_l \norm{f_{l-1}(x)}_2 / \sqrt{n_{l-1}}$. From now on, we condition on a realization of $x$ and $(W_k)_{k=1}^{l-1}$ such that $\tilde{c}=\Theta(1)$. By \eqref{eq:1indhp}, this happens with probability at least $1 - C \exp(- c n_{L-1})$.

		To ease the notation, we use the shorthands $f := f_{l-1}(x)$ and $W := W_l$. Then, we can write
		\begin{equation}\label{eq:1concnormendwith2terms}
			\begin{aligned}
				\norm{f_l(x)}_2 &=			\norm{\phi (W^\top f)}_2 = \sqrt{ n_l} \sqrt{\frac{1}{n_l} \sum_{i=1}^{n_{l}} \phi^2((W^\top)_{i:}  f)} .
			\end{aligned}
		\end{equation}
		Recall that $(W_l)_{i,j}\distas{}_{\rm i.i.d.}\mathcal{N}(0, \beta^2_l/n_{l-1})$ and that the Gaussian distribution is rotationally invariant. Thus, the RHS of \eqref{eq:1concnormendwith2terms} has the same distribution as 
		\begin{equation}\label{eq:1concnormendwith2terms20}
			\begin{aligned}
				\sqrt{ n_l} \sqrt{\frac{1}{n_l} \sum_{i=1}^{n_{l}} \phi^2\left(\tilde{c} \rho_i \right)} = \sqrt{ n_l} \sqrt{ \E_{\rho_1} \left[ \phi^2\left(\tilde{c} \rho_1 \right)\right] + \frac{1}{n_l} \sum_{i=1}^{n_l} Z_i},
			\end{aligned}
		\end{equation}
		where $(\rho_i)_{i=1}^{n_l}\sim_{\rm i.i.d.}\mathcal N(0, 1)$ and also independent of $f$, and 
		we have defined the independent, mean-0 random variables
		\begin{equation}\label{eq:1Zinorm}
			Z_i = \phi^2\left(\tilde{c} \rho_i \right) - \E_{\rho_1} \left[ \phi^2\left(\tilde{c} \rho_1 \right)\right].
		\end{equation}
		Note that, in the definition of $Z_i$, the randomness comes only from $\rho_i$, since we are conditioning on $\tilde{c}$.

		We have that
		\begin{equation}\label{eq:1subGn}
			\subGnorm{ \phi \left( \tilde{c}\rho_i \right) } 
			\leq \subGnorm{\phi \left( \tilde{c}\rho_i \right)  - \E_{\rho_i} \left[\phi \left( \tilde{c}\rho_i \right) \right]} 
			+ \subGnorm{\E_{\rho_i} \left[ \phi \left( \tilde{c}\rho_i \right) \right]} \leq C_1 + C_2 =  C_3,
		\end{equation}
		where the first term is bounded by a constant by Theorem 5.2.2 in \cite{vershynin2018high}, and the bound on the second term follows from Lemma \ref{lemma:constantexpectationphi}.
		As a consequence, we have
		\begin{equation}\label{eq:1fromsubGtosubE}
			\begin{aligned}
				\subEnorm{Z_i} = & \subEnorm{\phi^2\left(\tilde{c} \rho_i \right) - \E_{\rho_i} \left[ \phi^2\left(\tilde{c} \rho_i \right)\right]} \\
				\leq & C_4 \subEnorm{\phi^2\left(\tilde{c} \rho_i \right)} \\
				= & C_4 \subGnorm{\phi\left(\tilde{c} \rho_i \right)}^2 \\
				\le & C_5,
			\end{aligned}
		\end{equation}
		where the inequality in the second line follows from Exercise 2.7.10 of \cite{vershynin2018high}, the equality in the third line follows from Lemma 2.7.6 of \cite{vershynin2018high}, and the inequality in the last line follows from \eqref{eq:1subGn}. Hence, the $Z_i$-s are i.i.d. sub-exponential random variables, with sub-exponential norm bounded by a numerical constant. An application of Bernstein inequality (cf. Corollary 2.8.3. in \cite{vershynin2018high}) gives that 
		\begin{equation}\label{eq:1concnormbigpart0}
			\P \left(\left |\frac{1}{n_l} \sum_{i=1}^{n_l} Z_i\right | > t\right) \leq 2 \exp \left( -c \min \left(\frac{t^2}{C_6^2},\frac{t}{C_6}\right) n_l \right),
		\end{equation}
		where $c, C_6$ are numerical constants. Furthermore, by Lemma \ref{lemma:constantexpectationphi2}, we have
		\begin{equation}\label{eq:1concnormbigpart}
			\E_{\rho_1} \left[ \phi^2\left(\tilde{c} \rho_1 \right)\right] = \Theta(1).
		\end{equation}
		By setting $t = \E_{\rho_1} \left[ \phi^2\left(\tilde{c} \rho_1 \right)\right] / 2$ into \eqref{eq:1concnormbigpart0} and using \eqref{eq:1concnormendwith2terms20} and \eqref{eq:1concnormbigpart}, we conclude that
		\begin{equation}
			\norm{\phi(W^\top f)}_2 = \Theta(\sqrt{n_l}),
		\end{equation}
		with probability at least $1 - C\exp(- c n_{L-1}) -  2 \exp (-c n_l) \geq 1 - C_1\exp(- c n_{L-1})$, for some numerical constant $c$ and $C_1$, which concludes the proof.
	\end{proof}

\begin{lemma}[Expected squared $\ell_2$ norm of features]\label{lemma:concexpnorm2}
	Let $x\sim P_X$.
	Then, for every $0 \leq l \leq L-1$,
	\begin{equation}\label{eq:statproof}
		\E_x \left[ \norm{f_{l}(x)}_2^2 \right]  = \Theta(n_{l}),
	\end{equation}
	with probability at least $1 - C\exp(- c n_{L-1})$ over $(W_k)_{k=1}^l$. As usual, $c$ and $C$ are numerical constants.
\end{lemma}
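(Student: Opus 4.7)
The plan is to decompose
\begin{equation*}
\E_x\bigl[\|f_l(x)\|_2^2\bigr] \;=\; \mu_l^2 \;+\; \Var_x\bigl(\|f_l(x)\|_2\bigr), \qquad \mu_l := \E_x\bigl[\|f_l(x)\|_2\bigr],
\end{equation*}
and estimate each piece via the Lipschitz concentration of $P_X$ combined with the pointwise estimate from Lemma \ref{lemma:concnorm}. The base case $l=0$ is immediate from Assumption \ref{ass:data_dist}, so I restrict attention to $l\ge 1$.

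For the variance, I would apply Lemma \ref{lemma:lipschitzconst} with $\varphi=\phi$ to obtain $\|f_l\|_{\Lip}=\bigO{1}$ with probability at least $1-2l\exp(-n_{L-1})$ over $(W_k)_{k=1}^l$. On this event, the reverse triangle inequality implies that $x\mapsto \|f_l(x)\|_2$ is also $\bigO{1}$-Lipschitz, and Assumption \ref{ass:data_dist2} then produces sub-Gaussian concentration of $\|f_l(x)\|_2$ around $\mu_l$ with parameter $\bigO{1}$. Integrating the tail bound yields $\Var_x(\|f_l(x)\|_2)=\bigO{1}$.

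For the mean, the subtle point is that Lemma \ref{lemma:concnorm} controls $\|f_l(x)\|_2$ only jointly over $x$ and the weights, whereas I need a weights-only statement about $\mu_l$. I would apply Fubini and Markov's inequality to the complementary event: since $\PP_{x,W}(\|f_l(x)\|_2\in[c_1\sqrt{n_l},c_2\sqrt{n_l}])\ge 1-C\exp(-cn_{L-1})$, there is a set of weights of probability at least $1-2C\exp(-cn_{L-1})$ on which this event holds with probability at least $1/2$ over $x$. Intersecting with the Lipschitz event, the sub-Gaussian bound above additionally gives $\PP_x(|\|f_l(x)\|_2-\mu_l|\le c_1\sqrt{n_l}/4)\ge 1-2\exp(-cn_l)$, which exceeds $1/2$ for large $n_l$. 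The two events then have nonempty intersection in $x$, forcing $\mu_l\in[3c_1\sqrt{n_l}/4,\,c_2\sqrt{n_l}+c_1\sqrt{n_l}/4]$, i.e.\ $\mu_l=\Theta(\sqrt{n_l})$. Substituting back, $\E_x[\|f_l(x)\|_2^2]=\mu_l^2+\bigO{1}=\Theta(n_l)$, with the desired probability after a union bound over the two weight events.

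The main obstacle I anticipate is this Fubini/Markov transfer: converting Lemma \ref{lemma:concnorm}'s joint-probability bound into a statement that holds with high probability over the weights alone requires the joint tail to be small enough to coexist with the sub-Gaussian fluctuations around $\mu_l$ at scale $\sqrt{n_l}$. Assumption \ref{ass:topology} is what ensures $n_l$ is comparable to $n_{L-1}$ and prevents a mismatch of scales in this argument.
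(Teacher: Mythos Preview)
Your argument is correct, and it is a genuinely different route from the paper's. The paper proves the statement by induction on $l$: it expands $\E_x[\|f_l(x)\|_2^2]$ as $n_l$ times an average over the columns $w_i$ of $W_l$, writes this as a deterministic term $\E_x\E_{w_1}[\phi^2(w_1^\top f_{l-1}(x))]$ plus a sum of i.i.d.\ mean-zero sub-exponential variables $Z_i$ in the $w_i$, and applies Bernstein's inequality over the columns. The deterministic term is then shown to be $\Theta(1)$ using rotational invariance and (for the lower bound) exactly the Fubini/Markov transfer from Lemma \ref{lemma:concnorm} that you describe. By contrast, you bypass the column-wise decomposition entirely: you control $\Var_x(\|f_l(x)\|_2)=\bigO{1}$ via Lipschitz concentration, and extract $\mu_l=\Theta(\sqrt{n_l})$ directly from Lemma \ref{lemma:concnorm} by the Fubini/Markov argument combined with the sub-Gaussian fluctuation bound. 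Your approach is shorter and requires no new induction, since it piggybacks on the induction already done in Lemma \ref{lemma:concnorm}; the paper's approach is more laborious here but sets up the column-wise $Z_i$ machinery and the rotational-invariance bookkeeping that it reuses verbatim in the harder centered-feature estimates (Lemmas \ref{lemma:concnormcentered} and \ref{lemma:concnormcenteredwithD}). Your remark about Assumption \ref{ass:topology} is also correct: it guarantees $n_l=\Omega(n_{L-1})$, so that $2\exp(-cn_l)<1/2$ and the two $x$-events you intersect are indeed compatible.
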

\begin{proof}

    The argument is by induction over $l$. The base case is a direct consequence of Assumption \ref{ass:data_dist}, since $f_0(x) = x$. 
	
	By inductive hypothesis, we have
	\begin{equation}\label{eq:2indhp}
		\E_x\left[ \norm{f_{l-1}(x))}^2_2 \right] = \Theta( n_{l-1}),
	\end{equation}
	with probability at least $1 - C\exp(- c n_{L-1})$. Define $\tilde{c}(x) := \beta_l \norm{f_{l-1}(x)}_2 / \sqrt{n_{l-1}}$. From now on, we condition on a realization of $(W_k)_{k=1}^{l-1}$ such that $\E_x \left[ \tilde{c}^2(x) \right] =\Theta(1)$. By \eqref{eq:2indhp}, this happens with probability at least $1 - C\exp(- c n_{L-1})$.

	To ease the notation, we use the shorthands $f := f_{l-1}(x)$, $W := W_l$ and $w_i = W_{:i}$. Then, we can write
	\begin{equation}\label{eq:2concnormendwith2terms}
		\begin{aligned}
			\E_x \left[ \norm{f_l(x))}^2_2 \right] &= \E_x \left[ \norm{\phi (W^\top f)}^2_2 \right] \\
			&= n_l \left( \frac{1}{n_l} \sum_{i=1}^{n_{l}} \E_x \left[ \phi^2((W^\top)_{i:}  f) \right] \right) \\
			&=  n_l \left(  \E_{w_1} \E_x \left[ \phi^2(w_1^\top f) \right]  + \frac{1}{n_l} \sum_{i=1}^{n_{l}} Z_i \right),
		\end{aligned}
	\end{equation}
	where we use that the $w_i$-s are equally distributed and we have defined the independent, mean-0 random variables
	\begin{equation}\label{eq:2Zinorm}
		Z_i = \E_x \left[ \phi^2\left(w_i^\top f(x) \right) \right] - \E_{w_i} \E_x \left[ \phi^2\left(w_i^\top f(x) \right)\right].
	\end{equation}
	Note that, in the definition of $Z_i$, the randomness comes only from $w_i$, since we are conditioning on $(W_k)_{k=1}^{l-1}$.

	We have that
	\begin{equation}\label{eq:2fromsubGtosubE}
		\begin{aligned}
			\subEnorm{Z_i} \leq & \E_x \left[ \subEnorm{\phi^2\left(w_i^\top f(x) \right) - \E_{w_i} \left[ \phi^2\left(w_i^\top f(x) \right)\right]} \right] \\
			\leq & \E_x \left[ C_1 \subEnorm{\phi^2\left(w_i^\top f(x) \right)} \right] \\
			= & C_1 \E_x \left[\subGnorm{\phi\left(w_i^\top f(x) \right)}^2 \right],
		\end{aligned}
	\end{equation}
	where the first line follows from Jensen's inequality as $\subEnorm{\cdot}$ is convex, the inequality in the second line follows from Exercise 2.7.10 of \cite{vershynin2018high}, and the equality in the third line follows from Lemma 2.7.6 of \cite{vershynin2018high}.
	
	Recall that $(W_l)_{i,j}\distas{}_{\rm i.i.d.}\mathcal{N}(0, \beta^2_l/n_{l-1})$ and that the Gaussian distribution is rotationally invariant. Thus, $\phi\left(w_i^\top f(x) \right)$ has the same distribution as $\phi\left(\tilde c(x) \rho_i \right)$, where $(\rho_i)_{i=1}^{n_l}\sim_{\rm i.i.d.}\mathcal N(0, 1)$ and also independent of $\tilde c(x)$.
	We now condition on a realization of $x$ and $(W_k)_{k=1}^{l-1}$ and provide an upper bound on the sub-Gaussian norm $\subGnorm{\phi\left(w_i^\top f(x) \right)}$, where the only randomness comes again from $w_i$ (and, hence, from $\rho_i$). We have that
	\begin{equation}\label{eq:2subGn}
		\begin{aligned}
			\subGnorm{\phi\left(w_i^\top f(x) \right)} &= \subGnorm{ \phi \left( \tilde{c}(x)\rho_i \right) } \\
			&\leq \subGnorm{\phi \left( \tilde{c}(x)\rho_i \right)  - \E_{\rho_i} \left[\phi \left( \tilde{c}(x)\rho_i \right) \right]} 
			+ \subGnorm{\E_{\rho_i} \left[ \phi \left( \tilde{c}(x)\rho_i \right) \right]} \\
			&\leq C_1\tilde{c}(x) + C_2\tilde{c}(x) +C_3=  C_4(\tilde{c}(x)+1).
		\end{aligned}
	\end{equation}
	where the first term in the RHS in the second line is bounded by $C_1\tilde{c}(x)$ by Theorem 5.2.2 in \cite{vershynin2018high}, and the second term is bounded by $C_2\tilde{c}(x)+C_3$ by following the same proof of Lemma \ref{lemma:constantexpectationphi} as $\phi$ is Lipschitz.	By combining \eqref{eq:2fromsubGtosubE} and \eqref{eq:2subGn}, we get
	\begin{equation}
		\subEnorm{Z_i} \leq C_4^2 \E_x \left[ (\tilde c(x)+1)^2 \right] \leq C_5,
	\end{equation}
	where we use that $\E_x \left[ \tilde{c}^2(x) \right] =\Theta(1)$. 
	
	Hence, the $Z_i$-s are i.i.d. sub-exponential random variables, with sub-exponential norm bounded by a numerical constant. An application of Bernstein inequality (cf. Corollary 2.8.3. in \cite{vershynin2018high}) gives that 
	\begin{equation}\label{eq:2concnormbigpart0}
		\P \left(\left |\frac{1}{n_l} \sum_{i=1}^{n_l} Z_i\right | > t\right) \leq 2 \exp \left( -c \min \left(\frac{t^2}{C_5^2},\frac{t}{C_5}\right) n_l \right),
	\end{equation}
	where $c, C_5$ are numerical constants.
	
	Let us consider the first term in \eqref{eq:2concnormendwith2terms}:
	\begin{equation}
		\E_x \E_{w_1} \left[ \phi^2(w_1^\top f) \right] = \E_x  \E_{\rho_1} \left[ \phi^2\left(\tilde{c}(x) \rho_1 \right)\right],
	\end{equation}
	where the equality comes again from the rotational invariance of the Gaussian distribution of $w_1$. We will show that
	\begin{equation}\label{eq:lbub}
	    \E_x  \E_{\rho_1} \left[ \phi^2\left(\tilde{c}(x) \rho_1 \right)\right]=\Theta(1),
	\end{equation}
	with probability at least $1-C\exp(-cn_{L-1})$ over $(W_k)_{k=1}^{l-1}$.
	
	The upper bound in \eqref{eq:lbub} follows from the same passages in \eqref{eq:ubreuse}, as $\mathbb E_x[\tilde{c}^2(x)]=\Theta(1)$. We now prove the lower bound.
	By Lemma \ref{lemma:concnorm}, we have that there exist numerical constants $c_2>c_1>0$ such that $\tilde c(x) \in [c_1, c_2]$ with probability at least $1-C\exp(-cn_{L-1})$ over $x$ and $(W_k)_{k=1}^{l-1}$. Hence, with probability at least $1-2C\exp(-cn_{L-1})$ over $(W_k)_{k=1}^{l-1}$, we have that
	\begin{equation}\label{eq:pnew}
	    \mathbb P_x(\tilde c(x) \in [c_1, c_2])\ge 1/2,
	\end{equation}
	where we use the symbol $\mathbb P_x$ to highlight that this last probability is taken over $x$. Let us condition on a realization of $(W_k)_{k=1}^{l-1}$ s.t. \eqref{eq:pnew} holds. Then, we have
	\begin{equation}
		  \E_x  \E_{\rho_1} \left[ \phi^2\left(\tilde{c}(x) \rho_1 \right)\right] \geq \frac{1}{2} \inf_{c \in [c_1, c_2]} \E_{\rho_1} \left[ \phi^2\left(c \rho_1 \right)\right].
	\end{equation}
	By Lemma \ref{lemma:lipschitz}, we have that $\varphi(c) = \E_{\rho_1} \left[ \phi^2\left(c \rho_1 \right)\right]$ is continuous in $c$. Therefore, by Weierstrass theorem, there exists a strictly positive $c^* \in [c_1, c_2]$ such that $\inf_{c \in [c_1, c_2]} \E_{\rho_1} \left[ \phi^2\left(c \rho_1 \right)\right] = \E_{\rho_1} \left[ \phi^2\left(c^* \rho_1 \right)\right]$. Thus,
	\begin{equation}\label{eq:2concnormbigpart}
		\E_x  \E_{\rho_1} \left[ \phi^2\left(\tilde{c}(x) \rho_1 \right)\right] \geq \frac{1}{2} \E_{\rho_1} \left[ \phi^2\left(c^* \rho_1 \right)\right] = \Theta(1),
	\end{equation}
	where the last equality is a consequence of Lemma \ref{lemma:constantexpectationphi2}. This concludes the proof of the lower bound in \eqref{eq:lbub}.
	
	By setting $t = \E_{\rho_1} \left[ \phi^2\left(c^* \rho_1 \right)\right] / 4$ into \eqref{eq:2concnormbigpart0} and using \eqref{eq:lbub} and \eqref{eq:2concnormendwith2terms}, we conclude that
	\begin{equation}
			\E_x \left[ \norm{f_l(x))}^2_2 \right] = \Theta(n_l),
	\end{equation}
	with probability at least $1 - C\exp(- c n_{L-1})$, for some numerical constants $C$ and $c$, which concludes the proof.
\end{proof}

\begin{lemma}[Expected $\ell_2$ norm of features]\label{lemma:concexpnorm}
	Let $x\sim P_X$.
	Then, for every $0 \leq l \leq L-1$,
	\begin{equation}
		\E_x \left[ \norm{f_{l}(x)}_2 \right]  = \Theta(\sqrt{n_{l}}),
	\end{equation}
	with probability at least $1 -C \exp(- c n_{L-1})$ over $(W_k)_{k=1}^l$. As usual, $c$ and $C$ are numerical constants.
\end{lemma}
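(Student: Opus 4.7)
The plan is to derive the target estimate directly from the two already-proved companion results, namely Lemma \ref{lemma:concnorm} (which controls $\|f_l(x)\|_2$ jointly over $x$ and the weights) and Lemma \ref{lemma:concexpnorm2} (which controls $\E_x[\|f_l(x)\|_2^2]$ with high probability over the weights). The upper bound will come from Jensen's inequality, whereas the lower bound will require a short Fubini--Markov argument to transfer the joint high-probability bound of Lemma \ref{lemma:concnorm} into a statement that holds with high probability over the weights alone, conditional on which the expectation over $x$ is taken.

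For the upper bound, Jensen's inequality gives
\begin{equation}
\E_x\bigl[\|f_l(x)\|_2\bigr] \le \sqrt{\E_x\bigl[\|f_l(x)\|_2^2\bigr]},
\end{equation}
and Lemma \ref{lemma:concexpnorm2} then yields $\E_x[\|f_l(x)\|_2] = O(\sqrt{n_l})$ with probability at least $1 - C\exp(-c n_{L-1})$ over $(W_k)_{k=1}^l$.

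For the lower bound, Lemma \ref{lemma:concnorm} supplies constants $c_1, c, C > 0$ such that
\begin{equation}
\PP\bigl(\|f_l(x)\|_2 \ge c_1 \sqrt{n_l}\bigr) \ge 1 - C\exp(-c n_{L-1}),
\end{equation}
the probability being taken over $x$ and $(W_k)_{k=1}^l$ jointly. Setting $p(W) := \PP_x(\|f_l(x)\|_2 < c_1 \sqrt{n_l})$, Fubini gives $\E_W[p(W)] \le C \exp(-c n_{L-1})$, and Markov's inequality then yields
\begin{equation}
\PP_W\bigl(p(W) > \sqrt{C}\exp(-c n_{L-1}/2)\bigr) \le \sqrt{C}\exp(-c n_{L-1}/2).
\end{equation}
On the complementary (high-probability) event over the weights, $\E_x[\|f_l(x)\|_2] \ge c_1 \sqrt{n_l}\bigl(1 - \sqrt{C}\exp(-c n_{L-1}/2)\bigr) = \Omega(\sqrt{n_l})$. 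A union bound with the upper-bound event closes the argument at the stated confidence level (after absorbing constants into $c, C$).

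Since both ingredients are already in hand, no substantive obstacle is expected; the only mildly non-trivial step is the Fubini--Markov transfer, which is standard. An alternative route would be to mimic the inductive structure of Lemmas \ref{lemma:concnorm}--\ref{lemma:concexpnorm2}, invoking Lemma \ref{lemma:lipschitz} to argue that $c \mapsto \E_\rho[|\phi(c\rho)|]$ is continuous and bounded away from zero on the relevant compact interval, but this would essentially reconstruct the content of Lemma \ref{lemma:concnorm} and is unnecessary given the direct two-line derivation above.
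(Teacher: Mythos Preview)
Your proposal is correct. The upper bound via Jensen and Lemma \ref{lemma:concexpnorm2} is exactly what the paper does. For the lower bound, however, you take a genuinely different route: you invoke Lemma \ref{lemma:concnorm} and run a Fubini--Markov transfer to pass from the joint-in-$(x,W)$ high-probability statement to a high-probability-in-$W$ statement about $\E_x[\|f_l(x)\|_2]$. The paper instead conditions on $\|\,\|f_l(x)\|_2\,\|_{\Lip}=O(1)$ (Lemma \ref{lemma:lipschitzconst}) and applies Lemma \ref{lemma:squarenonsquare}, which says that under Lipschitz concentration $\E_x[\varphi]^2 \ge \E_x[\varphi^2] - c\|\varphi\|_{\Lip}^2$; plugging in $\varphi = \|f_l(\cdot)\|_2$ and the squared-norm estimate from Lemma \ref{lemma:concexpnorm2} gives $\E_x[\|f_l(x)\|_2] \ge \sqrt{\Theta(n_l) - O(1)} = \Theta(\sqrt{n_l})$. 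Your approach trades the Lipschitz/variance machinery for the already-established pointwise bound and a probability transfer, which is slightly more elementary and avoids citing two auxiliary lemmas; the paper's approach keeps the argument entirely within second-moment and Lipschitz information, which is the same template it reuses later (e.g., in the proofs of Lemmas \ref{lemma:concnormcentered} and \ref{lemma:concnormcenteredwithD}).
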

\begin{proof}
	We condition on the events
	\begin{equation}
		\E_x \left[ \norm{f_{l}(x)}_2^2 \right]  = \Theta(n_{l}),
	\end{equation}
	and
	\begin{equation}
		\norm{\norm{f_l(x)}_2}_\Lip = \bigO{1},
	\end{equation}
	which happen with probability at least $1 - C\exp(- c n_{L-1})$ over $(W_k)_{k=1}^l$ by Lemma \ref{lemma:concexpnorm2} and \ref{lemma:lipschitzconst}.
	
	The upper bound is a direct consequence of Jensen's inequality:
	\begin{equation}
		\E_x \left[ \norm{f_{l}(x)}_2 \right] \leq \sqrt{\E_x \left[ \norm{f_{l}(x)}_2^2 \right]} = \Theta(\sqrt{n_l}).
	\end{equation}

	For the lower bound, we use Lemma \ref{lemma:squarenonsquare}, and we obtain
	\begin{equation}
		\E_x \left[ \norm{f_{l}(x)}_2 \right] \geq \sqrt{\E_x \left[ \norm{f_{l}(x)}_2^2 \right] - c \norm{\norm{f_l(x)}_2}_\Lip^2} = \Theta(\sqrt{n_l}).
	\end{equation}
\end{proof}

\begin{lemma}[$\ell_2$ norms of centered features]\label{lemma:concnormcentered}
	Let $x\sim P_X$. Then, for every $0 \leq l \leq L-1$, the following results hold.
	\begin{enumerate}
	    \item With probability at least $1 - C \exp(-c n_{L-1})$ over $(W_k)_{k=1}^{l}$, we have that 
	    \begin{equation}\label{eq:res1}
		\E_x \left[ \norm{f_{l}(x) - \E_x \left[f_{l}(x)\right] }_2^2 \right]  = \Theta(n_{l}).
	\end{equation}
		
		\item With probability at least $1 - C \exp(-c n_{L-1})$ over $(W_k)_{k=1}^{l}$, we have that
			\begin{equation}\label{eq:res2}
		\E_x \left[ \norm{f_{l}(x) - \E_x \left[f_{l}(x)\right] }_2 \right]  = \Theta(\sqrt{n_{l}}).
	\end{equation}
	
	\item With probability at least $1 - C \exp(-c n_{L-1})$ over $(W_k)_{k=1}^{l}$ and $x$, we have that
	\begin{equation}\label{eq:res3}
		\norm{f_{l}(x) - \E_x \left[f_{l}(x)\right] }_2  = \Theta(\sqrt{n_{l}}).
	\end{equation}	
	\end{enumerate}
	\end{lemma}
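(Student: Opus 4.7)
The three parts are naturally proved in order, with Part 1 being the main technical point and Parts 2 and 3 following from Part 1 by standard concentration arguments.

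The plan for Part 1 is an induction on $l$. The base case $l=0$ is immediate from Assumption \ref{ass:data_dist}(3), since $f_0(x)=x$. The upper bound at general $l$ is trivial: $\E_x[\|f_l(x)-\E_x f_l(x)\|_2^2] = \E_x[\|f_l(x)\|_2^2] - \|\E_x f_l(x)\|_2^2 \leq \E_x[\|f_l(x)\|_2^2] = O(n_l)$ by Lemma \ref{lemma:concexpnorm2}. For the inductive step providing the lower bound, fix the weights $(W_k)_{k=1}^{l-1}$ in the good event (probability $1 - C\exp(-cn_{L-1})$) where $\E_x[\|f_{l-1}-\E_x f_{l-1}\|_2^2] = \Theta(n_{l-1})$, and denote the columns of $W_l$ by $w_i\sim_{\rm i.i.d.}\mathcal N(0,(\beta_l^2/n_{l-1})I)$. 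Writing $f_l(x)$ component-wise,
\begin{equation*}
    \E_x[\|f_l(x)-\E_x f_l(x)\|_2^2] \;=\; \sum_{i=1}^{n_l} \mathrm{Var}_x[\phi(w_i^\top f_{l-1}(x))] \;=:\; \sum_{i=1}^{n_l} Z_i(w_i),
\end{equation*}
where the $Z_i$'s are i.i.d.\ functions of the $w_i$'s once the earlier weights are frozen. The plan is to apply Bernstein's inequality (Corollary 2.8.3 in \cite{vershynin2018high}) after showing that (a) $\|Z_i\|_{\psi_1}=O(1)$ over the randomness of $w_i$, and (b) $\E_{w_i}[Z_i] = \Theta(1)$.

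Claim (a) will follow from Lemma \ref{lemma:squarenonsquare} applied to $\varphi(x)=\phi(w_i^\top f_{l-1}(x))$: this gives $Z_i \leq c\,\|\phi(w_i^\top f_{l-1})\|_{\Lip}^2 \leq cM^2\|w_i\|_2^2 \|f_{l-1}\|_{\Lip}^2$, where $\|f_{l-1}\|_{\Lip}=O(1)$ by Lemma \ref{lemma:lipschitzconst} on the good event, and $\|w_i\|_2^2$ has $O(1)$ sub-exponential norm since $w_i$ is Gaussian. For claim (b), write
\begin{equation*}
    2\,\E_{w_i}[Z_i] \;=\; \E_{x,x'}\E_{w_i}\!\bigl[(\phi(w_i^\top f_{l-1}(x))-\phi(w_i^\top f_{l-1}(x')))^2\bigr],
\end{equation*}
with $x,x'\sim_{\rm i.i.d.}P_X$. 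The upper bound $O(1)$ follows from Lipschitzness of $\phi$ together with the inductive hypothesis bounding $\E_{x,x'}\|f_{l-1}(x)-f_{l-1}(x')\|_2^2 = 2\E_x\|f_{l-1}-\E_x f_{l-1}\|_2^2 = O(n_{l-1})$ and the Gaussian scaling $\E_{w_i}[(w_i^\top v)^2]=(\beta_l^2/n_{l-1})\|v\|_2^2$. For the matching lower bound, we condition on the event $\mathcal E$ on which $\|f_{l-1}(x)-f_{l-1}(x')\|_2 = \Theta(\sqrt{n_{l-1}})$ and both $\tilde c(x),\tilde c(x')=\Theta(1)$; this event has $\Theta(1)$ probability by Part 3 of the lemma applied at layer $l-1$ (already established by the induction) combined with Lemma \ref{lemma:concnorm} and a triangle-inequality/Lipschitz-concentration argument. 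On $\mathcal E$, the pair $(w_i^\top f_{l-1}(x),w_i^\top f_{l-1}(x'))$ is a zero-mean Gaussian with a $2\times 2$ covariance whose eigenvalues are bounded away from $0$; the non-linearity of $\phi$ (Assumption \ref{ass:activationfunc}), specifically the existence of a point $y_0$ with $\phi'(y_0)\neq 0$ and a neighborhood on which $\phi$ is bi-Lipschitz, then forces the expected squared increment of $\phi$ to be $\Omega(1)$. Combined, this yields $\E_{w_i}[Z_i]=\Theta(1)$, and Bernstein gives Part 1.

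With Part 1 in hand, Part 2 is immediate: Jensen gives $\E_x\|f_l-\E_x f_l\|_2 \le \sqrt{\E_x\|f_l-\E_x f_l\|_2^2}=O(\sqrt{n_l})$, while Lemma \ref{lemma:squarenonsquare} applied to $\varphi(x)=\|f_l(x)-\E_x f_l(x)\|_2$ (Lipschitz with constant $O(1)$ by Lemma \ref{lemma:lipschitzconst}) combined with Part 1 gives $(\E_x\|f_l-\E_x f_l\|_2)^2 \ge \E_x\|f_l-\E_x f_l\|_2^2 - O(1) = \Theta(n_l)$. Part 3 is then a direct application of Lipschitz concentration (Assumption \ref{ass:data_dist2}) to the $O(1)$-Lipschitz function $x\mapsto\|f_l(x)-\E_x f_l(x)\|_2$ around its expectation, which is $\Theta(\sqrt{n_l})$ by Part 2; the deviation event has probability at most $2\exp(-cn_l)$. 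The main obstacle throughout is claim (b) in Part 1---transferring the inductive lower bound on the spread of $f_{l-1}$ through the random projection $w_i$ and the non-linear $\phi$---and the cleanest way to handle it appears to be the pair-of-samples formulation above combined with the local bi-Lipschitzness of $\phi$.
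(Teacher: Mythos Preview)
Your overall architecture is sound and Parts 2 and 3 are handled exactly as in the paper. Claim (a) in Part 1 is also fine: bounding $Z_i$ by $c\|w_i\|_2^2\|f_{l-1}\|_{\Lip}^2$ via Lemma~\ref{lemma:squarenonsquare} is a clean alternative to the paper's chain of $\psi_2$/$\psi_1$-norm inequalities.

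The gap is in claim (b). Your event $\mathcal E$ asserts $\|f_{l-1}(x)-f_{l-1}(x')\|_2=\Theta(\sqrt{n_{l-1}})$ and $\|f_{l-1}(x)\|_2,\|f_{l-1}(x')\|_2=\Theta(\sqrt{n_{l-1}})$, and you then claim the $2\times 2$ covariance of $(w_i^\top f_{l-1}(x),w_i^\top f_{l-1}(x'))$ has eigenvalues bounded away from $0$. This does not follow. Writing $u=f_{l-1}(x)$, $v=f_{l-1}(x')$, the smallest eigenvalue scales like $(\|u\|^2\|v\|^2-(u^\top v)^2)/(\|u\|^2+\|v\|^2)$, i.e.\ you need $|\cos\angle(u,v)|$ bounded away from~$1$. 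Your conditions only force $\cos\angle(u,v)$ away from $+1$ (via $\|u-v\|^2=\|u\|^2+\|v\|^2-2u^\top v$); nothing prevents $u\approx -v$, which is consistent with all three constraints yet makes the covariance degenerate. Your bi-Lipschitz argument downstream is fine once the covariance is genuinely well-conditioned, but you have not established that.

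The paper closes exactly this gap by decomposing $f_{l-1}=\tilde f+\bar f$ with $\bar f=\E_x f_{l-1}$ and showing that the cross terms $\tilde f(x)^\top f(y)$ and $\bar f^\top\tilde f(z)$ are $O(n_{l-1}^{3/4})$ with high probability (Lipschitz concentration in one sample, independence of the other), while $\|\tilde f(z)\|_2^2=\Theta(n_{l-1})$ by the inductive hypothesis. This forces $|u^\top v|/(\|u\|\|v\|)\le C_1<1$ with uniformly positive probability---handling both signs at once. The paper then reads off the lower bound via a Hermite expansion of $\tilde\phi(t)=\phi(\tilde c\,t)$, after first replacing $\tilde c(x)$ by its mean $\tilde c$ (using Lemmas~\ref{lemma:lipschitz}, \ref{lemma:quadlipschphimixed}, \ref{lemma:cclosetoexpect}); non-linearity of $\phi$ guarantees a nonzero Hermite coefficient $\mu_j$ with $j\ge 1$, and the bound $|\cos\theta|<1$ makes the corresponding summand $\mu_j^2(1-\E[\cos^j\theta])$ strictly positive. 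Your local bi-Lipschitz route would also succeed once you import the same $|\cos\theta|\le C_1<1$ estimate, so the missing ingredient is precisely that cross-term concentration step.
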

\begin{proof}
	The argument is by induction over $l$. The base case for \eqref{eq:res1} follows directly from Assumption \ref{ass:data_dist}, since $f_0(x) = x$. Since the $\ell_2$ norm is a 1-Lipschitz function, from Jensen inequality and Lemma \ref{lemma:squarenonsquare} we readily obtain the base case for \eqref{eq:res2}. Note that $\norm{\norm{x - \E_x[x]}_2}_\Lip \le 1$. Then, the base case for \eqref{eq:res3} is a direct consequence of Assumption \ref{ass:data_dist2} on $x$ and of the base case of \eqref{eq:res2}, and it holds with probability at least $1 - C' \exp(-cd) \geq 1 - C' \exp(-c n_{L-1})$ over $x$.
	
	By inductive hypothesis, we assume the three statements to be true for layer $l-1$, for $l \in [L-1]$. We will now prove \eqref{eq:res1} for layer $l$.
	
	To ease the notation, we use the shorthands $f(x) := f_{l-1}(x)$, $f = \E_x[f(x)]$, $\tilde f(x) = f(x) - f$, $W := W_l$ and $w_i = W_{:i}$. We also define $\tilde c(x) = \beta_{l} \norm{f( x)}/\sqrt{n_{l-1}}$ and $\tilde c = \E_x [\tilde c(x)]$.
		
	We condition on the following events in the probability space of $(W_k)_{k=1}^{l-1}$:
	\begin{enumerate}
		\item[(a)] $\norm{f(x)}_\Lip = \bigO{1}$, which happens with probability at least $1 - C' \exp(-c n_{L-1})$ by Lemma \ref{lemma:lipschitzconst}.
		\item[(b)] $\tilde c = \Theta(1)$, which happens with probability at least $1 - C' \exp(-c n_{L-1})$ by Lemma \ref{lemma:concexpnorm}. Notice that by Jensen inequality this also implies $\norm{f}_2^2 = \bigO{n_{l-1}}$.
		\item[(c)] By inductive hypothesis, we have that, with probability at least $1 - C' \exp(-c n_{n-1})$ over $x$ and $(W_k)_{k=1}^{l-1}$, 
		\begin{equation}
			\norm{\tilde f(x)}_2^2 = \Theta(n_{l-1}).
		\end{equation}
		Hence, with probability at least $1 - 2C' \exp(-c n_{n-1})$ over $(W_k)_{k=1}^{l-1}$, we have that
		\begin{equation}\label{eq:lastconditioningonW}
			\mathbb P_x \left( c_1 n_{l-1}\le \norm{\tilde f(x)}_2^2 \le c_2 n_{l-1} \right) \geq 1/2,
		\end{equation}
		for some numerical constants $c_2>c_1>0$. In \eqref{eq:lastconditioningonW}, we use the symbol $\mathbb P_x$ to highlight that this last probability is taken over $x$. For the rest of the argument, we condition on a realization of $(W_k)_{k=1}^{l-1}$ s.t. \eqref{eq:lastconditioningonW} holds.
	\end{enumerate}
By taking a union bound, the events (a)-(c) happen with probability at least $1 - 4C' \exp(-c n_{n-1})$ over $(W_k)_{k=1}^{l-1}$.

Now, we can write
	\begin{equation}\label{eq:expansionofsquarednorm}
		\begin{aligned}
			\E_x \left[ \norm{f_l(x) - \E_x \left[f_{l}(x)\right] }_2^2 \right]  &= \E_x \left[ \norm{\phi(W^\top f(x)) - \E_x \left[\phi(W^\top f(x))\right] }_2^2 \right] \\
			&= n_l \left( \frac{1}{n_l} \sum_{i=1}^{n_{l}} \E_x \left[ \left( \phi((W^\top)_{i:} f(x)) - \E_x \left[\phi((W^\top)_{i:}f(x))\right] \right)^2 \right] \right) \\
			&= n_l \left( \E_{xw_1} \left[ \left( \phi(w_1^\top f(x)) - \E_x \left[\phi(w_1^\top f(x))\right] \right)^2 \right] + \frac{1}{n_l} \sum_{i=1}^{n_{l}} Z_i \right),
		\end{aligned}
	\end{equation}
	where we use that the $w_i$-s are identically distributed and we have defined the independent, mean-0 random variables
	\begin{equation}\label{eq:3Zinormxx}
		Z_i = \E_x \left[ \left( \phi(w_i^\top f(x)) - \E_x \left[\phi(w_i^\top f(x))\right] \right)^2 \right] - \E_{w_1x} \left[ \left( \phi(w_1^\top f(x)) - \E_x \left[\phi(w_1^\top f(x))\right] \right)^2 \right].
	\end{equation}
	As in the proof of Lemma \ref{lemma:concexpnorm2}, in the definition of $Z_i$, the randomness comes only from $w_i$, since we are conditioning on $(W_k)_{k=1}^{l-1}$.
	
	We have that
	\begin{equation}\label{eq:3fromsubGtosubE2}
		\begin{aligned}
			\subEnorm{Z_i} &\leq C_0 \subEnorm{\E_x \left[ \left( \phi(w_i^\top f(x)) - \E_x \left[\phi(w_i^\top f(x))\right] \right)^2 \right] } \\
			&\leq C_0 \E_x \left[ \subEnorm{ \left( \phi(w_i^\top f(x)) - \E_x \left[\phi(w_i^\top f(x))\right] \right)^2  } \right]\\
			&= C_0 \E_x \left[ \subGnorm{\phi(w_i^\top f(x)) - \E_x \left[\phi(w_i^\top f(x))\right]}^2 \right] \\
			&\leq C_0 \E_x \left[ \left( \subGnorm{\phi(w_i^\top f(x))} + \subGnorm{\E_x \left[\phi(w_i^\top f(x))\right]} \right)^2 \right] \\
			&\leq C_0 \E_x \left[ \left( \subGnorm{\phi(w_i^\top f(x))} + \E_x\left[ \subGnorm{\phi(w_i^\top f(x))}\right] \right)^2 \right],
		\end{aligned}
	\end{equation}
	where $C_0$ is a numerical constant, the first inequality follows from Exercise 2.7.10 of \cite{vershynin2018high}, the second line follows from Jensen's inequality as $\subEnorm{\cdot}$ is convex, the equality follows from Lemma 2.7.6 of \cite{vershynin2018high}, and the last line follows from Jensen's inequality as  $\subGnorm{\cdot}$ is convex.
	
	Recall that $(W)_{i,j}\distas{}_{\rm i.i.d.}\mathcal{N}(0, \beta^2_l/n_{L-1})$ and that the Gaussian distribution is rotationally invariant. Thus, $\phi'\left(w_i^\top f(x) \right)$ has the same distribution as $\phi'\left(\tilde c(x) \rho_i \right)$, where $(\rho_i)_{i=1}^{n_{L-1}}\sim_{\rm i.i.d.}\mathcal N(0, 1)$ and also independent of $\tilde c(x)$.
	Therefore,
	\begin{equation}\label{eq:3subGn}
		\begin{aligned}
			\subGnorm{\phi\left(w_i^\top f(x) \right)} &= \subGnorm{ \phi \left( \tilde{c}(x)\rho_i \right) } \\
			&\leq \subGnorm{\phi \left( \tilde{c}(x)\rho_i \right)  - \E_{\rho_i} \left[\phi \left( \tilde{c}(x)\rho_i \right) \right]} 
			+ \subGnorm{\E_{\rho_i} \left[ \phi \left( \tilde{c}(x)\rho_i \right) \right]} \\
			&\leq C_1\tilde{c}(x) + C_2\tilde{c}(x)+C_3 \leq C_4(\tilde{c}(x)+1),
		\end{aligned}
	\end{equation}
	where the first term in the RHS in the second line is bounded by $C_1\tilde{c}(x)$ for Theorem 5.2.2 in \cite{vershynin2018high}, and the second term is bounded by $C_2\tilde{c}(x)+C_3$ by following the same proof of Lemma \ref{lemma:constantexpectationphi}.

	Merging together \eqref{eq:3subGn} and \eqref{eq:3fromsubGtosubE2} we get
	\begin{equation}\label{eq:ZiinC5aresubE}
		\begin{aligned}
			\subEnorm{Z_i} \leq& C_0 \E_x \left[ \left( C_4 \tilde c(x) + C_4 \tilde c + C_5 \right)^2 \right] \\
			=& C_6 \E_x[(\tilde c(x) - \tilde c)^2] + C_4 \tilde c^2 + C_7 \tilde c + C_8 \\
 			=& C_6 \bigO{n_l^{-1}} + C_4 \tilde c^2 + C_7 \tilde c + C_8 \\
 			\leq& C_9,
		\end{aligned}
	\end{equation}
	where in the third line we use Lemma \ref{lemma:cclosetoexpect}.
	
	Hence, the $Z_i$-s are i.i.d. sub-exponential random variables, with sub-exponential norm bounded by a numerical constant. An application of Bernstein inequality (cf. Corollary 2.8.3. in \cite{vershynin2018high}) gives that 
	\begin{equation}\label{eq:bernsteincenteredfeatures}
		\P \left(\left |\frac{1}{n_l} \sum_{i=1}^{n_l} Z_i\right | > t\right) \leq 2 \exp \left( -c \min \left(\frac{t^2}{C^2},\frac{t}{C}\right) n_l \right),
	\end{equation}
	where $c, C$ are numerical constants. We recall that this probability is intended over $W_l$.
	
	Let's now focus on the first term in the last line of \eqref{eq:expansionofsquarednorm}, using the shorthand $w = w_1$, to ease the notation. We can rewrite this term as
	\begin{equation}\label{eq:firstthesis}
		\begin{aligned}\
			&\E_w \left[ \E_x \left[ \phi^2 (w^\top f(x)) \right] - \E_{xy} \left[\phi(w^\top f(x)) \phi (w^\top f(y)) \right] \right] \\
			& \hspace{1cm} = \E_x \E_w \left[ \phi^2 (w^\top f(x)) \right] - \E_{xy} \E_w \left[\phi(w^\top f(x)) \phi (w^\top f(y)) \right].
		\end{aligned}
	\end{equation}
	The aim of this part of the proof is to show that the quantity in \eqref{eq:firstthesis} is $\Theta(1)$.
	
Recall that $(W_l)_{i,j}\distas{}_{\rm i.i.d.}\mathcal{N}(0, \beta^2_l/n_{l-1})$ and that the Gaussian distribution is rotationally invariant. Thus, $\phi\left(w^\top f(x) \right)$ has the same distribution as $\phi\left(\tilde c(x) \rho \right)$, where $\rho\sim\mathcal N(0, 1)$ is independent of $\tilde c(x)$. We therefore have
	\begin{equation}\label{eq:dc1}
		\begin{aligned}
			& \left|\E_x \E_w \left[ \phi^2 \left(w^\top f(x) \right)-\phi^2 \left(w^\top f(x)\frac{\tilde{c}}{\tilde{c}(x)} \right)   \right]\right|  \\
			& =\left|\E_x \E_\rho \left[ \phi^2 (\rho \tilde c(x))-\phi^2 (\rho \tilde c)  \right]\right|  \\
			&\leq \E_x \left[ C_1 |\tilde c - \tilde c(x)| + C_2 \left| \tilde c^2 - \tilde c(x)^2\right| \right]\\
			&\leq \E_x \left[ C_1 |\tilde c - \tilde c(x)| + C_2 ( \tilde c - \tilde c(x))^2 + 2C_2 \tilde c \left| \tilde c - \tilde c(x) \right| \right]\\
			&= \bigO{n_{l-1}^{-1/2}},
		\end{aligned}
	\end{equation}
	where the third line follows from Lemma \ref{lemma:lipschitz}, and the last passage follows from Lemma \ref{lemma:cclosetoexpect}. Similarly, we have
	\begin{equation}\label{eq:dc2}
		\begin{aligned}
			& \left|\E_{xy} \E_w \left[  \phi(w^\top f(x)) \phi (w^\top f(y)) - \phi\left(w^\top f(x) \frac{\tilde{c}}{\tilde{c}(x)}\right) \phi \left(w^\top f(y) \frac{\tilde{c}}{\tilde{c}(y)}\right) \right] \right|\\
			& = \left|\E_{xy} \E_{\rho_1 \rho_2} \left[\phi(\rho_1 \tilde c(x)) \phi(\rho_2 \tilde c(y)) - \phi(\rho_1 \tilde c) \phi(\rho_2 \tilde c) \right] \right| \\
			&\leq \E_{xy} \left[ C_1 \left| \tilde c(x) - \tilde c \right| + C_2 \tilde c(y) \left| \tilde c(x) - \tilde c \right| + C_3 \left| \tilde c(y) - \tilde c \right| + C_4 \tilde c \left| \tilde c(y) - \tilde c \right| \right]\\
			&= \bigO{n_{l-1}^{-1/2}},
		\end{aligned}
	\end{equation}
	where $\rho_1$ and $\rho_2$ indicate two standard Gaussian random variables with correlation $f(x)^\top f(y) / (\norm{f(x)}\norm{f(y)})$, the third line follows from \ref{lemma:quadlipschphimixed}, and the last passage follows from Lemma \ref{lemma:cclosetoexpect}. By combining \eqref{eq:dc1} and \eqref{eq:dc2}, we have that
	\begin{equation}
		\begin{aligned}
			&\left|\E_w \left[ \E_x \left[ \phi^2 (w^\top f(x)) \right] - \E_{xy} \left[\phi(w^\top f(x)) \phi (w^\top f(y)) \right] \right]-\xi\right|= \bigO{n_{l-1}^{-1/2}},
		\end{aligned}
	\end{equation}
	with
	\begin{equation}\label{eq:dc3}
	\begin{split}
	    \xi:=&\E_x \E_w \left[ \phi^2 \left(w^\top f(x)\frac{\tilde{c}}{\tilde{c}(x)} \right)   \right]-\E_{xy} \E_w \left[   \phi\left(w^\top f(x) \frac{\tilde{c}}{\tilde{c}(x)}\right) \phi \left(w^\top f(y) \frac{\tilde{c}}{\tilde{c}(y)}\right) \right]\\
	    =&\E_{\rho_1} \left[ \tilde\phi^2 (\rho_1) \right] - \E_{xy} \left[ \E_{\rho_1 \rho_2} \left[ \tilde\phi(\rho_1)\tilde\phi(\rho_2) \right] \right],	\end{split}
	\end{equation}
	where we have set $\tilde \phi(t) = \phi(\tilde ct)$. Hence, in order to obtain that the quantity in \eqref{eq:firstthesis} is $\Theta(1)$, it suffices to prove that $\xi=\Theta(1)$.

As $\phi$ is Lipschitz and $\tilde{c}$ is $\Theta(1)$,  $\tilde \phi$ is also Lipschitz, which readily implies that 
$\xi=\bigO{1}$. We now prove that $\xi=\Omega(1)$. By  exploiting the Hermite expansion of $\tilde \phi$, we have that
	\begin{equation}\label{eq:sumher}
		\xi = \sum_{i=0}^\infty \mu^2_i \left( 1 - \E_{xy} \left[ \left( \frac{f(x)^\top f(y)}{\norm{f(x)}\norm{f(y)}} \right)^i \right] \right),
	\end{equation}
	where $\mu_i$ is the $i$-th Hermite coefficient of $\tilde \phi$. Note that, since we conditioned on $\tilde c = \Theta(1)$, these coefficients are numerical constants.
	As $\phi$ (and therefore $\tilde \phi$) is non constant, there exist $j>0$ such that $\mu_j \neq 0$. Furthermore, we have that the sum in \eqref{eq:sumher} contains only positive terms, as $|f(x)^\top f(y)|\le \norm{f(x)}\cdot \norm{f(y)}$ by Cauchy-Schwarz. Therefore, in order to show that $\xi=\Omega(1)$, it suffices to prove that, for all $j>0$,
	\begin{equation}\label{eq:expprove}
		\E_{xy} \left[ \left( \frac{\left| f(x)^\top f(y)\right| }{\norm{f(x)}\norm{f(y)}} \right)^j \right] \leq C_0 < 1,
	\end{equation}
	where $C_0$ is an absolute constant strictly smaller than $1$. Furthermore, \eqref{eq:expprove} is implied by the following:
	\begin{equation}\label{eq:probthesis}
		\PP_{x y} \left( \frac{\left| f(x)^\top f(y) \right|}{\norm{f(x)}\norm{f(y)}}  \leq C_1 \right) \geq c_1,
	\end{equation}
	where $C_1 < 1$ and $c_1 > 0$ are numerical constants.
	
	By writing $f(x)$ as $\tilde f(x) + f$ and $f(y)$ as $\tilde f(y) + f$, we have
	\begin{equation}\label{eq:5terms}
		\begin{aligned}
			\frac{\left| f(x)^\top f(y) \right| }{\norm{f(x)}\norm{f(y)}} &= \frac{\left| (\tilde f(x) + f)^\top (\tilde f(y) + f) \right| }{\norm{\tilde f(x) + f}\norm{\tilde f(y) + f}} \\
			&\leq \frac{\left| \tilde f(x)^\top (\tilde f(y) + f)\right|  + \left| f^\top \tilde f(y)\right|  + \norm{f}^2}{\min_{z \in \{x, y\}}\norm{\tilde f(z) + f}^2}\\
			&\leq \frac{\left| \tilde f(x)^\top f(y) \right| + \left| f^\top \tilde f(y) \right| + \norm{f}^2}{\min_{z \in \{x, y\}} \left( \norm{\tilde f(z)}^2 -2 \left| f^\top \tilde f(z)\right| \right) + \norm{f}^2}.
		\end{aligned}
	\end{equation}
Let us provide bounds on the various terms appearing in \eqref{eq:5terms}:

\begin{itemize}
    \item[(i)] Part (d) of the conditioning (cf. \eqref{eq:lastconditioningonW}) gives that
		\begin{equation}\label{eq:bdplug1}
			\mathbb P_{xy} \left(\min_{z \in \{x, y\}} \norm{\tilde f(z)}_2^2 \geq c n_{l-1} \right) \geq \frac{1}{4},
		\end{equation}
    for some numerical constant $c>0$.

    \item[(ii)] Part (b) of the conditioning gives that
		\begin{equation}\label{eq:normfub}
			\norm{f}_2^2 \leq C' n_{l-1}.
		\end{equation}
		
	\item[(iii)] 	Part (a) of the conditioning gives that $\norm{f_{l-1}(x)}_{\Lip} = \bigO{1}$, and part (b) of the conditioning gives that $\mathbb E_y\left[\norm{f_{l-1}(y)}\right] = \Theta(\sqrt{n_{l-1}})$. Hence, as $y\sim P_X$, Assumption \ref{ass:data_dist2} implies that
		\begin{equation}\label{eq:normy}
			\norm{f_{l-1}(y)} = \Theta(\sqrt{n_{l-1}}),
		\end{equation}
		with probability at least $1 - 2 \exp (-cn_{l-1})$ over $y$, where $c$ is a numerical constant. Furthermore, by recalling that $\E_x[\tilde f(x)]=0$ and using again Assumption \ref{ass:data_dist2}, we have that, for 
any fixed vector $u$,
		\begin{equation}\label{eq:ppx}
			\PP_x(|\tilde f(x)^\top u| > t) \leq 2 e^{-c_0 t^2 / \norm{u}_2^2},
		\end{equation}
		where $c_0$ is another numerical constant.
		Since $x$ and $y$ are independent, \eqref{eq:ppx} implies that
		\begin{equation}
			\PP_x(|\tilde f(x)^\top f(y)| > n_{l-1}^{3/4}) \leq 2 e^{-c_0 n_{l-1}^{3/2} / \norm{f(y)}_2^2} \leq 2 e^{-c_2 n_{l-1}^{1/2}},
		\end{equation}
		where the first inequality holds for every $y$, and the second inequality holds with probability at least $1 - 2 \exp (-cn_{l-1})$ over $y$ by \eqref{eq:normy}. As a result, we have
		\begin{equation}\label{eq:bdplug2}
			\PP_{xy}(|\tilde f(x)^\top f(y)| > n_{l-1}^{3/4}) \leq 2 e^{-c_2 \sqrt{n_{l-1}}} + 2 e^{-c n_{l-1}} \leq 4 e^{-c_3 \sqrt{n_{l-1}}}.
		\end{equation}

        \item[(iv)] By setting $t=n_{l-1}^{3/4}$ and $u=f$ into \eqref{eq:ppx}, we obtain
		\begin{equation}\label{eq:ppyfin}
			\PP_y(|\tilde f(y)^\top f| > n_{l-1}^{3/4}) \leq 2 e^{-c_4 \sqrt{n_{l-1}}},
		\end{equation}
		where $c_4$ is a numerical constant and we have also used \eqref{eq:normfub}.

    \item[(v)] Finally, as $x$ and $y$ are independent, \eqref{eq:ppyfin} implies that 
		\begin{equation}\label{eq:bdplug3}
			\PP_{xy}(\max_{z \in \{ x, y\}}|\tilde f(z)^\top f| > n_{l-1}^{3/4}) \leq 4 e^{-c_4 \sqrt{n_{l-1}}}.
		\end{equation}
\end{itemize}
By plugging into \eqref{eq:5terms} the bounds \eqref{eq:bdplug1}, \eqref{eq:bdplug2}, \eqref{eq:ppyfin} and \eqref{eq:bdplug3}, we obtain that
	\begin{equation}\label{eq:com5}
		\PP_{x, y} \left( \frac{\left|f(x)^\top f(y) \right|}{\norm{f(x)}\norm{f(y)}}  \leq \frac{2n_{l-1}^{3/4}+\norm{f}_2^2}{c\,n_{l-1}-2n_{l-1}^{3/4}+\norm{f}_2^2} \right) \geq 1/4 - 10 e^{-c_5 \sqrt{n_{l-1}}}.
	\end{equation}
	By using also \eqref{eq:normfub}, we have that \eqref{eq:probthesis} readily follows from \eqref{eq:com5}. From \eqref{eq:probthesis}, we have that $\xi=\Theta(1)$. Hence, the quantity in \eqref{eq:firstthesis} is $\Theta(1)$ and in particular it is lower bounded by a numerical constant, call it $C_0$.

	
	By setting $t = C_0 /2$ in \eqref{eq:bernsteincenteredfeatures}, we conclude that
	\begin{equation}\label{eq:stat1fin}
		\E_x \left[ \norm{f_{l}(x) - \E_x \left[f_{l}(x)\right] }_2^2 \right]  = \Theta(n_{l}),
	\end{equation}
	with probability at least $1 - 2\exp (-c n_{l-1})$ over $W_l$, where $c$ is an absolute constant. By taking into account the conditioning made at the beginning of the proof over the space $(W_k)_{k=1}^{l-1}$, we obtain that \eqref{eq:stat1fin} holds 
	with probability at least $1 - 5C' \exp(-c n_{n-1}) - 2\exp(-c n_{n-1}) \geq 1 - C \exp(-c n_{n-1})$ over $(W_k)_{k=1}^{l}$, where $C$ is a numerical constant, which concludes the proof of \eqref{eq:res1}.

	Finally, we prove \eqref{eq:res2} and \eqref{eq:res3}, again for layer $l$. By Lemma \ref{lemma:lipschitzconst}, we have that $\norm{f_l(x)}_{\Lip} = \bigO{1}$, with probability at least $1 - C' \exp(-c n_{L-1})$ over $(W_k)_{k=1}^l$. By conditioning on this event, we also have that $\norm{\norm{f_l(x) - \E_x [f_l(x) ]}_2}_{\Lip} = \bigO{1}$. Furthermore, we condition on a realization of $(W_k)_{k=1}^{l}$ such that \eqref{eq:res1} holds.
	
	To obtain \eqref{eq:res2}, we apply Jensen's inequality and Lemma \ref{lemma:squarenonsquare}, which give that
	\begin{equation}
		\E_x \left[ \norm{f_l(x) - \E_x [f_l(x)]}_2 \right] = \Theta(\sqrt{n_{l}}),
	\end{equation}
	with probability at least $1 - C' \exp(-c n_{L-1}) - 5C' \exp(-c n_{n-1}) - 2\exp(-c n_{n-1}) \geq 1 - C \exp(-c n_{n-1})$ over $(W_k)_{k=1}^l$.
	
	To obtain \eqref{eq:res3}, we condition on a realization of $(W_k)_{k=1}^{l}$ such that $\norm{\norm{f_l(x) - \E_x [f_l(x) ]}_2}_{\Lip} = \bigO{1}$ and \eqref{eq:res2} holds. Then, by Assumption \ref{ass:data_dist2}, we have that 
	\begin{equation}
		\begin{aligned}
			&\P_x \left( \left|\norm{f_l(x) - \E_x [f_l(x)]}_2 - \E_x \left[ \norm{f_l(x) - \E_x [f_l(x)]}_2 \right]\right| > \E_x \left[ \norm{f_l(x) - \E_x [f_l(x)]}_2 \right] / 2 \right) \\
			& \hspace{1cm} \leq 2 \exp(-c_1 n_l) \leq 2 \exp(-c n_{L-1}),
		\end{aligned}
	\end{equation}
	where $c$ is a numerical constant. This gives that
	\begin{equation}
		\norm{f_l(x) - \E_x [f_l(x)]} = \Theta(\sqrt{n_l}),
	\end{equation}
	with probability at least $1 - 6C' \exp(-c n_{n-1}) - 2\exp(-c n_{n-1}) - 2\exp(-c n_{n-1}) \geq 1 - C \exp(-c n_{n-1})$ over $x$ and $(W_k)_{k=1}^l$, which concludes the proof.
\end{proof}

\begin{lemma}[$\ell_2$ norms of centered backpropagation]\label{lemma:concnormcenteredwithD}
	Let $x\sim P_X$. Then, we have
	\begin{equation}\label{eq:statementC7}
		\norm{D_L\phi'(g_{L-1}(x)) - \E_x \left[D_L\phi'(g_{L-1}(x)) \right] }_2  = \Theta(\sqrt{n_{L-1}}),
	\end{equation}
	with probability at least $1 - 10 \exp(-c \log^2 n_{L-1})- C \exp(-cn_{L-1})$ over $x$ and $(W_k)_{k=1}^L$ over $(W_k)_{k=1}^{l}$ and $x$.
	\end{lemma}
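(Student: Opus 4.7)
Define $h(x) := D_L\phi'(g_{L-1}(x)) - \E_x[D_L\phi'(g_{L-1}(x))] = D_L\tilde{\phi'}(g_{L-1}(x))$, where $\tilde{\phi'}(g_{L-1}(x)) := \phi'(g_{L-1}(x)) - \E_x[\phi'(g_{L-1}(x))]$, and note that we may pull $D_L$ outside the expectation since it does not depend on $x$. The plan is to control $\norm{h(x)}_2$ by first showing that $\E_x\norm{h(x)}_2 = \Theta(\sqrt{n_{L-1}})$ with high probability over the weights, and then applying the Lipschitz concentration of Assumption \ref{ass:data_dist2} to $x\mapsto\norm{h(x)}_2$, whose Lipschitz constant is $\mathcal O(\log n_{L-1})$ thanks to Lemma \ref{lemma:lipconstbacklastlayer}.

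First I condition on the good events of Lemma \ref{lemma:opnormlog} (so $\opnorm{D_L}\le \log n_{L-1}$) and Lemma \ref{lemma:lipconstbacklastlayer}. The main technical step is the two-sided estimate $\E_x\norm{h(x)}_2^2 = \Theta(n_{L-1})$ with high probability over $(W_k)_{k=1}^L$. Writing this out, with $\alpha_i := \E_x[\tilde{\phi'}(g_{L-1,i}(x))^2]$, we have $\E_x\norm{h(x)}_2^2 = \sum_{i=1}^{n_{L-1}}(W_L)_i^2 \alpha_i$. The $\alpha_i$ depend only on $(W_k)_{k=1}^{L-1}$ and, since $\phi'$ is bounded by $M$, satisfy $\max_i\alpha_i \le M^2$. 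Moreover, $\sum_i \alpha_i = \E_x\norm{\tilde{\phi'}(g_{L-1}(x))}_2^2 = \Theta(n_{L-1})$ with high probability over $(W_k)_{k=1}^{L-1}$: this follows by running the proof of Lemma \ref{lemma:concnormcentered}(1) verbatim with $\phi$ replaced by $\phi'$, which is legitimate because Assumption \ref{ass:activationfunc} guarantees that $\phi'$ is Lipschitz and non-constant (the latter because $\phi$ is non-linear). Conditional on $(W_k)_{k=1}^{L-1}$, the $(W_L)_i^2$ are i.i.d.\ sub-exponential with mean $\beta_L^2$, so Bernstein's inequality (Corollary 2.8.3 of \cite{vershynin2018high}) applied to $\sum_i((W_L)_i^2-\beta_L^2)\alpha_i$ with deviation $t = \Theta(\sum_i\alpha_i) = \Theta(n_{L-1})$ yields
\begin{equation}
\sum_i (W_L)_i^2 \alpha_i = \Theta(n_{L-1}),
\end{equation}
with probability at least $1 - 2\exp(-c\,n_{L-1})$, using $\sum_i\alpha_i^2 \le M^2\sum_i\alpha_i = \mathcal O(n_{L-1})$ to bound the variance factor.

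From $\E_x\norm{h(x)}_2^2 = \Theta(n_{L-1})$, Jensen's inequality yields the upper bound $\E_x\norm{h(x)}_2 = \mathcal O(\sqrt{n_{L-1}})$, while Lemma \ref{lemma:squarenonsquare} applied to the $\mathcal O(\log n_{L-1})$-Lipschitz function $\norm{h(\cdot)}_2$ gives
\begin{equation}
\E_x[\norm{h(x)}_2]^2 \ge \E_x\norm{h(x)}_2^2 - C\log^2 n_{L-1} = \Theta(n_{L-1}),
\end{equation}
since $\log^2 n_{L-1} = o(n_{L-1})$. Hence $\E_x\norm{h(x)}_2 = \Theta(\sqrt{n_{L-1}})$. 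Finally, Assumption \ref{ass:data_dist2} together with the Lipschitz estimate from Lemma \ref{lemma:lipconstbacklastlayer} yields
\begin{equation}
\PP_x\left(\bigl|\norm{h(x)}_2 - \E_x\norm{h(x)}_2\bigr| > t\right) \le 2\exp\!\left(-c\,t^2/\log^2 n_{L-1}\right);
\end{equation}
taking $t = \E_x\norm{h(x)}_2 / 2 = \Theta(\sqrt{n_{L-1}})$ gives a failure probability of at most $2\exp(-c\,n_{L-1}/\log^2 n_{L-1}) \le 2\exp(-c\log^2 n_{L-1})$ for $n_{L-1}$ large. Collecting the failure probabilities of all conditioning events produces the stated bound $1 - 10\exp(-c\log^2 n_{L-1}) - C\exp(-c n_{L-1})$. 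The main obstacle is the weighted-sum step, where the $(W_L)_i^2$ multiplying the $\alpha_i$ prevent a direct reduction to Lemma \ref{lemma:concnormcentered}; this is precisely what the Bernstein argument above resolves.
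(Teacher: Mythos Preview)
Your proof is correct and follows the same overall three-step scaffold as the paper (first show $\E_x\norm{h(x)}_2^2=\Theta(n_{L-1})$, then pass to $\E_x\norm{h(x)}_2$ via Jensen and Lemma~\ref{lemma:squarenonsquare}, then concentrate over $x$ using Lemma~\ref{lemma:lipconstbacklastlayer}). The difference lies in how you handle the first step, and your route is in fact cleaner.

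The paper writes $\E_x\norm{h(x)}_2^2 = n_{L-1}\bigl(\xi\,\E[(D_L)_{11}^2] + \xi\,\frac{1}{n_{L-1}}\sum_i\tilde Z_i + \frac{1}{n_{L-1}}\sum_i Z_i\bigr)$ where $Z_i$ carries randomness from \emph{both} $w_i$ and $(D_L)_{ii}$. To bound $\|Z_i\|_{\psi_1}$ they must first condition on $\max_i(D_L)_{ii}^2\le\log^2 n_{L-1}$, which is where several of the $\exp(-c\log^2 n_{L-1})$ terms in the paper's bookkeeping originate; they then run a \emph{second} Bernstein on the $\tilde Z_i=(D_L)_{ii}^2-\beta_L^2$. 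You instead condition first on $(W_k)_{k=1}^{L-1}$, freeze the $\alpha_i=\Var_x[\phi'(g_{L-1,i}(x))]$, invoke the proof of Lemma~\ref{lemma:concnormcentered}(1) with $\phi'$ in place of $\phi$ to get $\sum_i\alpha_i=\Theta(n_{L-1})$, and then run a \emph{single} Bernstein over $W_L$. The key observation that makes this work---and that the paper does not exploit---is that $|\phi'|\le M$ (from $\phi$ being $M$-Lipschitz), so $\alpha_i\le M^2$ uniformly, giving $\sum_i\alpha_i^2\le M^2\sum_i\alpha_i=\mathcal O(n_{L-1})$ and hence a Bernstein failure probability of $\exp(-cn_{L-1})$ rather than $\exp(-c\log^2 n_{L-1})$ at this step. (A minor remark: your explicit conditioning on Lemma~\ref{lemma:opnormlog} is redundant, since you never use $\opnorm{D_L}$ directly---only through Lemma~\ref{lemma:lipconstbacklastlayer}.)
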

\begin{proof}
	An application of  Lemma \ref{lemma:concnormcentered} for $l=L-2$ gives that
	\begin{equation}\label{eq:res3xx}
		\norm{f_{L-2}(x) - \E_x \left[f_{L-2}(x)\right] }_2  = \Theta(\sqrt{n_{L-2}}).
	\end{equation}
	with probability at least $1 - C' \exp (-c n_{L-1})$ over $(W_k)_{k=1}^{L-2}$ and $x$.
	
	To ease the notation, we use the shorthands $f(x) := f_{L-2}(x)$, $f = \E_x[f(x)]$, $\tilde f(x) = f(x) - f$, $W := W_{L-1}$ and $w_i = W_{:i}$. We also define $\tilde c(x) = \beta_{l} \norm{f( x)}/\sqrt{n_{l-1}}$ and $\tilde c = \E_x [\tilde c(x)]$.
	
	As in Lemma \ref{lemma:concnormcentered}, we condition on the 3 events (a)-(c), which jointly happen with probability at least $1 - 4C' \exp (-c n_{L-1})$ over $(W_k)_{k=1}^{L-2}$. Note that, to condition on the event (c), we use \eqref{eq:res3xx}.

    Now, we can write
	\begin{equation}\label{eq:expansionofsquarednorm2}
		\begin{aligned}
			& \E_x \left[ \norm{D_L\phi'(g_{L-1}(x)) - \E_x \left[D_L\phi'(g_{L-1}(x)) \right] }_2^2 \right] \\
			\hspace{1cm} &= \E_x \left[ \norm{D_L\phi'(W^\top f_{L-2}(x)) - \E_x \left[D_L\phi'(W^\top f_{L-2}(x)) \right] }_2^2 \right] \\
			\hspace{1cm} &= n_{L-1} \left( \frac{1}{n_{L-1}} \sum_{i=1}^{n_{L-1}} (D_L)_{ii}^2 \E_x \left[ \left( \phi'(w_i^\top f(x)) - \E_x \left[\phi'(w_i^\top f(x))\right] \right)^2  \right] \right) \\
			\hspace{1cm} &= n_{L-1} \left(\frac{1}{n_{L-1}} \sum_{i=1}^{n_{L-1}} (D_L)_{ii}^2 \E_{xw_1} \left[ \left( \phi'(w_1^\top f(x)) - \E_x \left[\phi'(w_1^\top f(x))\right] \right)^2 \right] + \frac{1}{n_{L-1}} \sum_{i=1}^{n_{L-1}} Z_i \right),
		\end{aligned}
	\end{equation}
	where we use that the $w_i$-s are identically distributed and we have defined the independent, mean-0 random variables
	\begin{equation}\label{eq:3Zinormyy}
	    \begin{aligned}
	        Z_i = & (D_L)_{ii}^2 \E_x \left[ \left( \phi'(w_i^\top f(x)) -  \E_x \left[\phi'(w_i^\top f(x))\right] \right)^2 \right] \\
	        \hspace{1cm} & - (D_L)_{ii}^2 \E_{w_1x} \left[ \left( \phi'(w_1^\top f(x)) - \E_x \left[\phi'(w_1^\top f(x))\right] \right)^2 \right].
	    \end{aligned}
	\end{equation}
	Note that in the definition of $Z_i$ the randomness comes only from $w_i$ and $(D_L)_{ii}$, since we are conditioning on $(W_k)_{k=1}^{L-2}$.

	If we fix the $(D_L)_{ii}$-s and follow the same argument in \eqref{eq:3fromsubGtosubE2}-\eqref{eq:ZiinC5aresubE} (cf. the proof of Lemma \ref{lemma:concnormcentered}), we have
	\begin{equation}
	    \subEnorm{Z_i} \le C_0 (D_L)_{ii}^2,
	\end{equation}
	where $C_0$ is a numerical constant and we have used that $\phi'$ is Lipschitz. Let $\mathcal E_{\rm bad}$ be the event s.t. $\max_i (D_L)_{ii}^2 > \log^2 n_{L-1}$. Then, by following the same argument as in Lemma \ref{lemma:opnormlog}, we have that 
	\begin{equation}\label{eq:cdprobxx}
	    \mathbb P(\mathcal E_{\rm bad})\le 2 \exp(- c\log^2 n_{L-1}).
	\end{equation}
	Hence, by conditioning on $\mathcal E_{\rm bad}^c$, we have that
	\begin{equation}
	    \max_i \subEnorm{Z_i} \leq C_0 \log^2 n_{L-1}.
	\end{equation}
    By applying Bernstein inequality (cf. Corollary 2.8.3. in \cite{vershynin2018high}), we get
	\begin{equation}\label{eq:concnormsmallpart2xx}
		\P \left(\left |\frac{1}{n_{L-1}} \sum_{i=1}^{n_{L-1}} Z_i\right | > \frac{1}{\sqrt[4]{n_{L-1}}}\, \Bigg | \,\mathcal E_{\rm bad}^c\right) \leq 2 \exp \left(-c \frac{\sqrt{ n_{L-1}}}{\log^4 n_{L-1}}\right),
	\end{equation}
	for some numerical constant $c$. By combining  \eqref{eq:cdprobxx} and \eqref{eq:concnormsmallpart2xx}, we obtain that 
	\begin{equation}\label{eq:concnormsmallpart2newxx}
	\begin{split}
		\P \left(\left |\frac{1}{n_{L-1}} \sum_{i=1}^{n_{L-1}} Z_i\right | > \frac{1}{\sqrt[4]{n_{L-1}}}\right) &\leq 2 \exp \left(-c \frac{\sqrt{ n_{L-1}}}{\log^4 n_{L-1}}\right) + 2 \exp\left(- c \log^2 n_{L-1}\right)\\
		&\leq 4\exp\left(- c \log^2 n_{L-1}\right),
		\end{split}
\end{equation}
	where this probability is over $W_{L-1}$ and $W_L$.

    Let's now focus on the first term in the last line of \eqref{eq:expansionofsquarednorm2}. In particular, we have that
	\begin{equation}\label{eq:centralterm}
			\xi = \E_{xw_1} \left[ \left( \phi'(w_1^\top f(x)) - \E_x \left[\phi'(w_1^\top f(x))\right] \right)^2 \right] = \Theta(1).
	\end{equation}
	This can be proven by following the same argument in \eqref{eq:dc1}-\eqref{eq:com5} (cf. the proof of Lemma \ref{lemma:concnormcentered}), as $\phi'$ is Lipschitz and non-constant.
	

	Next, we re-write \eqref{eq:expansionofsquarednorm2} as 
	\begin{equation}\label{eq:DLnormlong22xx}
		\begin{aligned}
		 & \E_x \left[ \norm{D_L\phi'(g_{L-1}(x)) - \E_x \left[D_L\phi'(g_{L-1}(x)) \right] }_2^2 \right] \\
			\hspace{1cm} &= n_{L-1} \left(\frac{1}{n_{L-1}} \sum_{i=1}^{n_{L-1}} (D_L)_{ii}^2 \E_{xw_1} \left[ \left( \phi'(w_1^\top f(x)) - \E_x \left[\phi'(w_1^\top f(x))\right] \right)^2 \right] + \frac{1}{n_{L-1}} \sum_{i=1}^{n_{L-1}} Z_i \right) \\
			\hspace{1cm} &= n_{L-1} \left(\xi \E_{W_L}\left[(D_L)_{11}^2\right] + \xi \frac{1}{n_{L-1}} \sum_{i=1}^{n_{L-1}} \tilde Z_i +   \frac{1}{n_{L-1}} \sum_{i=1}^{n_{L-1}} Z_i \right) \\
			\hspace{1cm} &= n_{L-1} \left(\xi \beta_L^2 + \xi \frac{1}{n_{L-1}}\sum_{i=1}^{n_{L-1}} \tilde Z_i +  \frac{1}{n_{L-1}} \sum_{i=1}^{n_{L-1}} Z_i \right),
		\end{aligned}
	\end{equation}
	where we have defined the independent, mean-0, sub-exponential random variables
	\begin{equation}
		\tilde Z_i = (D_L)_{ii}^2 - \E_{W_L}\left[(D_L)_{11}^2\right].
	\end{equation}
	Since the $(D_L)_{ii}$-s are standard Gaussian, we have
	\begin{equation}\label{eq:b0xx}
	    \subEnorm{\tilde Z_i} \leq \subEnorm{(D_L)_{ii}^2} = \subGnorm{(D_L)_{ii}}^2 = C_1.
	\end{equation}
	
	Hence, another application of Bernstein inequality (cf. Corollary 2.8.3. in \cite{vershynin2018high}) allows us to conclude that
	\begin{equation}\label{eq:b1xx}
	    \left|\frac{1}{n_{L-1}}\sum_{i=1}^{n_{L-1}} \tilde Z_i\right| = \bigO{n_{L-1}^{-1/4}},
	\end{equation}
	with probability at least $1 - 2 \exp (-c \sqrt{ n_{L-1}} )$ over $W_L$.
	
	Thus, by using \eqref{eq:concnormsmallpart2newxx} and \eqref{eq:b1xx}, and taking into account the initial conditioning over $(W_k)_{k=1}^{L-2}$, we conclude that
	\begin{equation}\label{eq:partialresultC7}
	    \E_x \left[ \norm{D_L\phi'(g_{L-1}(x)) - \E_x \left[D_L\phi'(g_{L-1}(x)) \right] }_2^2 \right] =  \Theta( n_{L-1}),
	\end{equation}
	with probability at least $1 - 6 \exp (-c \log^2 n_{L-1}) - 6 C' \exp(-c n_{L-1})$ over $(W_k)_{k=1}^{L}$.

    Proceeding in a similar fashion as in Lemma \ref{lemma:concnormcentered}, we apply Lemma \ref{lemma:lipconstbacklastlayer}, which gives that $\norm{D_L\phi'(g_{L-1}(x))}_{\Lip} = \bigO{\log n_{L-1}}$, with probability at least $1 - 2 \exp (- c \log ^2 n_{L-1}) - C' \exp{(-n_{L-1})}$ over $(W_k)_{k=1}^L$. By conditioning on this event, we also have that $\norm{\norm{D_L\phi'(g_{L-1}(x)) - \E_x \left[D_L\phi'(g_{L-1}(x)) \right] }_2}_{\Lip} = \bigO{\log n_{L-1}}$. Furthermore, we condition on a realization of $(W_k)_{k=1}^{L}$ such that \eqref{eq:partialresultC7} holds.
	
	We can now apply Jensen's inequality and Lemma \ref{lemma:squarenonsquare}, to obtain that
	\begin{equation}\label{eq:lesspartialresultC7}
		\E_x \left[ \norm{D_L\phi'(g_{L-1}(x)) - \E_x \left[D_L\phi'(g_{L-1}(x)) \right] }_2 \right] = \Theta(\sqrt{n_{l}}),
	\end{equation}
	with probability at least $1 - 8 \exp(-c \log^2 n_{L-1})- 7 C' \exp(-cn_{L-1})$ over $(W_k)_{k=1}^L$.

	Finally, we condition on a realization of $(W_k)_{k=1}^{L}$ such that $\norm{\norm{D_L\phi'(g_{L-1}(x)) - \E_x \left[D_L\phi'(g_{L-1}(x)) \right]}_2}_{\Lip} = \bigO{\log n_{L-1}}$ and \eqref{eq:lesspartialresultC7} hold. Then, by Assumption \ref{ass:data_dist2}, we have that 
	\begin{equation}
		\begin{aligned}
			&\P_x \big( \left|\norm{D_L\phi'(g_{L-1}(x)) - \E_x \left[D_L\phi'(g_{L-1}(x)) \right]}_2 - \E_x \left[ \norm{D_L\phi'(g_{L-1}(x)) - \E_x \left[D_L\phi'(g_{L-1}(x)) \right]}_2 \right]\right| \\
			& \hspace{5cm} > \E_x \left[ \norm{D_L\phi'(g_{L-1}(x)) - \E_x \left[D_L\phi'(g_{L-1}(x)) \right]}_2 \right] / 2 \big) \\
			& \hspace{10cm} \leq 2 \exp^(-c n_{L-1}).
		\end{aligned}
	\end{equation}
	This gives that
	\begin{equation}
		\norm{D_L\phi'(g_{L-1}(x)) - \E_x \left[D_L\phi'(g_{L-1}(x)) \right]}_2 = \Theta(\sqrt{n_{L-1}}),
	\end{equation}
	with probability at least $1 - 10 \exp(-c \log^2 n_{L-1}) - 8C' \exp(-cn_{L-1})$ over $x$ and $(W_k)_{k=1}^L$. This concludes the proof.
\end{proof}

\section{Proofs for Part 1: Centering}\label{app:centeringJ}

\subsection{Step (a): Centering $F_{L-2}$ and $B_{L-1}$}\label{app:centeringJ1}

\begin{lemma}[Centering $F_{L-2}$ and $B_{L-1}$]\label{lemma:cent1}
Consider the setting of Theorem \ref{thm:main}, let $F_{L-2}\in \mathbb R^{N\times n_{L-2}}$ be the feature matrix at layer $L-2$, and let $B_{L-1}$ contain the backpropagation terms from layer $L-1$, \ie $(B_{L-1})_{i:}= D_L \phi'(g_{L-1}(x_i))$. Let $J_{L-2}J_{L-2}^\top = F_{L-2}F_{L-2}^\top \circ B_{L-1}B_{L-1}^\top$ and $\cJ_{FB} \cJ_{FB}^\top = \Tilde F_{L-2} \Tilde F_{L-2}^\top \circ \Tilde B_{L-1} \Tilde B_{L-1}^\top$, where $\tilde{F}_{L-2}=F_{L-2}-\E_X[F_{L-2}]$ and $\tilde{B}_{L-1}=B_{L-1}-\E_X[B_{L-1}]$. Then, we have that
	\begin{equation}
	\evmin{J_{L-2}J_{L-2}^\top} \geq \evmin{\cJ_{FB} \cJ_{FB}^\top} - o(n_{L-1}n_{L-2}),
	\end{equation}
	with probability at least $1 - C \exp (- c n_{L-1}) - 4 \exp (-c \log^2 N) - 8 \exp (-c \log^2 n_{L-1})$ over $(W_k)_{k=1}^{L}$ and $(x_i)_{i=1}^N\sim_{\rm i.i.d.}P_X$, where $c$ and $C$ are numerical constants.
\end{lemma}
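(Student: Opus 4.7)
The plan is to start from the decomposition $F_{L-2}F_{L-2}^\top = \tilde F_{L-2}\tilde F_{L-2}^\top + M_F$ and $B_{L-1}B_{L-1}^\top = \tilde B_{L-1}\tilde B_{L-1}^\top + M_B$, with
\begin{equation}
M_F = (\tilde F_{L-2}\bar f)\mathbf{1}^\top + \mathbf{1}(\tilde F_{L-2}\bar f)^\top + \|\bar f\|^2\mathbf{1}\mathbf{1}^\top,
\end{equation}
where $\bar f = \E_x[f_{L-2}(x)]$, and $M_B$ defined analogously with $\bar b = \E_x[D_L\phi'(g_{L-1}(x))]$. Using $J_{L-2}J_{L-2}^\top = F_{L-2}F_{L-2}^\top \circ B_{L-1}B_{L-1}^\top$ and $\cJ_{FB}\cJ_{FB}^\top = \tilde F_{L-2}\tilde F_{L-2}^\top \circ \tilde B_{L-1}\tilde B_{L-1}^\top$, the difference expands as
\begin{equation}
J_{L-2}J_{L-2}^\top - \cJ_{FB}\cJ_{FB}^\top \;=\; \tilde F_{L-2}\tilde F_{L-2}^\top \circ M_B \;+\; M_F\circ \tilde B_{L-1}\tilde B_{L-1}^\top \;+\; M_F\circ M_B.
\end{equation}
By Weyl's inequality, the claim reduces to showing that the negative-eigenvalue part of this difference has operator norm $o(n_{L-1}n_{L-2})$, which I would do by identifying, via the Schur product theorem, the three PSD contributions $\|\bar b\|^2\tilde F_{L-2}\tilde F_{L-2}^\top$, $\|\bar f\|^2\tilde B_{L-1}\tilde B_{L-1}^\top$, and $\|\bar f\|^2\|\bar b\|^2\mathbf{1}\mathbf{1}^\top$, which can be discarded (they only help), and then bounding each remaining non-PSD residual.

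The non-PSD pieces fall into two flavors. The first consists of symmetric matrices of the form $\diag(r)A + A\diag(r)$ with $A\in\{\tilde F_{L-2}\tilde F_{L-2}^\top,\tilde B_{L-1}\tilde B_{L-1}^\top\}$ and $r\in\{\tilde F_{L-2}\bar f,\tilde B_{L-1}\bar b\}$; I would handle these by the elementary bound $\opnorm{\diag(r)A+A\diag(r)}\le 2\|r\|_\infty\opnorm{A}$, together with $\|r\|_\infty = O(\|\bar b\|\cdot\mathrm{polylog}\,n_{L-1})$ or $O(\|\bar f\|\cdot\mathrm{polylog}\,N)$ coming from the Lipschitz concentration of $\tilde b$ (Lemma \ref{lemma:lipconstbacklastlayer}) and of $\tilde f$, the operator norm estimates of Lemmas \ref{lemma:opnormcentfeat}--\ref{lemma:opnormcentbackprop}, and the over-parameterization inequalities of Lemma \ref{lemma:alphabeta}. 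The second flavor is the rank-four residual coming from $M_F\circ M_B$, which after using the identity $(u_F+\|\bar f\|^2\mathbf{1})(u_B+\|\bar b\|^2\mathbf{1})^\top+\mathrm{sym}=pq^\top+qp^\top$ (with $p=F_{L-2}\bar f$, $q=B_{L-1}\bar b$, $u_F=\tilde F_{L-2}\bar f$, $u_B=\tilde B_{L-1}\bar b$) can be rewritten as
\begin{equation}
pq^\top + qp^\top \;-\; \|\bar f\|^2\|\bar b\|^2\mathbf{1}\mathbf{1}^\top \;+\; (u_F\circ u_B)\mathbf{1}^\top + \mathbf{1}(u_F\circ u_B)^\top.
\end{equation}

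For this second flavor, I would apply the polarization identity $pq^\top+qp^\top=\tfrac{1}{2}[(p+q)(p+q)^\top-(p-q)(p-q)^\top]$. The \emph{key cancellation}, which is available only because we center $F$ and $B$ at the same time, is that the mean component of $\tfrac12(p+q)(p+q)^\top$, namely $\tfrac12(\|\bar f\|^2+\|\bar b\|^2)^2\mathbf{1}\mathbf{1}^\top$, dominates $\|\bar f\|^2\|\bar b\|^2\mathbf{1}\mathbf{1}^\top$ by AM--GM, so the catastrophic $-\|\bar f\|^2\|\bar b\|^2\mathbf{1}\mathbf{1}^\top$ term (which has operator norm $\Theta(Nn_{L-1}n_{L-2})$ and would be fatal in isolation) is absorbed into a PSD piece. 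The leftover contributions are the rank-one negative $-\tfrac12(p-q)(p-q)^\top$, the fluctuation of $(p+q)(p+q)^\top$ around its mean, and the bilinear residual $(u_F\circ u_B)\mathbf{1}^\top+\mathrm{sym}$. I would bound the first two via the sub-Gaussian tails $p_i=\bar f^\top f_{L-2}(x_i)$ and $q_i=\bar b^\top D_L\phi'(g_{L-1}(x_i))$ (obtaining $\|p-q\|^2$ and the fluctuation $\|p+q-c\mathbf{1}\|^2$ of the appropriate size by Bernstein), and the third via Hanson--Wright (Lemma \ref{lemma:HW}) applied to $(u_F\circ u_B)_i=(\tilde f_i^\top\bar f)(\tilde b_i^\top\bar b)$ viewed as a bilinear form in $(\tilde f_i,\tilde b_i)$ with coupling matrix $\bar f\bar b^\top$ of Frobenius norm $O(\sqrt{n_{L-1}n_{L-2}})$, so $(u_F\circ u_B)_i$ is sub-exponential of norm $\mathrm{polylog}\cdot\sqrt{n_{L-1}n_{L-2}}$.

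The main obstacle is ensuring that every residual lands in $o(n_{L-1}n_{L-2})$ under the mild over-parameterization $N^{1+\alpha}=o(n_{L-1}n_{L-2})$ alone: one-sided centering (only $F$ or only $B$) leaves behind a $-\|\bar f\|^2\|\bar b\|^2\mathbf{1}\mathbf{1}^\top$-type contribution that cannot be absorbed and would demand a much larger $\alpha$. The stated probability follows by a union bound over all the concentration events invoked --- Lemmas \ref{lemma:concnorm}--\ref{lemma:concnormcenteredwithD}, the operator-norm bounds of Lemmas \ref{lemma:opnormcentfeat}--\ref{lemma:opnormcentbackprop}, the Lipschitz-constant control of Lemma \ref{lemma:lipconstbacklastlayer}, and the Hanson--Wright tail from Lemma \ref{lemma:HW} --- each of which succeeds with probability $1-C\exp(-c\log^2 n_{L-1})-C\exp(-c\log^2 N)$.
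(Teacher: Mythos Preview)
Your decomposition $J_{L-2}J_{L-2}^\top-\cJ_{FB}\cJ_{FB}^\top=\tilde F\tilde F^\top\circ M_B+M_F\circ\tilde B\tilde B^\top+M_F\circ M_B$ is correct, and you correctly anticipate that the cancellation requires centering $F$ and $B$ together. However, both ``flavors'' of your bound are too weak to yield $o(n_{L-1}n_{L-2})$ under the minimal assumption $N^{1+\alpha}=o(n_{L-1}n_{L-2})$ for \emph{every} $\alpha>0$.

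\textbf{First flavor.} Your bound $\opnorm{\diag(u_B)\tilde F\tilde F^\top+\tilde F\tilde F^\top\diag(u_B)}\le 2\|u_B\|_\infty\opnorm{\tilde F\tilde F^\top}$ carries the factor $\|u_B\|_\infty=\mathcal O(\|\bar b\|\cdot\log N\log n_{L-1})$, and generically $\|\bar b\|=\Theta(\sqrt{n_{L-1}})$. The resulting error is of order $\sqrt{n_{L-1}}(N+n_{L-2})$ times logs; in the equal-width regime $n_{L-1}=n_{L-2}=n$ with $N$ close to $n^{2/(1+\alpha)}$ this is $o(n^2)$ only when $\alpha\ge 1/3$, and none of the inequalities in Lemma~\ref{lemma:alphabeta} rescue you. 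The paper instead \emph{completes the square},
\[
M_B=\Big(\|\bar b\|\mathbf 1+\tfrac{u_B}{\|\bar b\|}\Big)\Big(\|\bar b\|\mathbf 1+\tfrac{u_B}{\|\bar b\|}\Big)^\top-\tfrac{u_Bu_B^\top}{\|\bar b\|^2},
\]
so that $\tilde F\tilde F^\top\circ M_B$ is PSD minus $\diag(u_B/\|\bar b\|)\,\tilde F\tilde F^\top\,\diag(u_B/\|\bar b\|)$, whose operator norm is $\|u_B/\|\bar b\|\|_\infty^2\opnorm{\tilde F\tilde F^\top}=\mathcal O((N+n_{L-2})\log^2 N\log^2 n_{L-1})$: the factor $\|\bar b\|$ disappears.

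\textbf{Second flavor.} The polarization/AM--GM step is flawed. The inequality $\tfrac12(\|\bar f\|^2+\|\bar b\|^2)^2\ge\|\bar f\|^2\|\bar b\|^2$ is a scalar statement; it does \emph{not} imply $\tfrac12(p+q)(p+q)^\top\succeq\|\bar f\|^2\|\bar b\|^2\mathbf 1\mathbf 1^\top$, because the rank-one matrix on the left lives in the direction $p+q$, not $\mathbf 1$. Once you peel off $\tfrac12(a+b)^2\mathbf 1\mathbf 1^\top$ (with $a=\|\bar f\|^2$, $b=\|\bar b\|^2$) you are left with cross terms $\tfrac{a+b}{2}(\mathbf 1w^\top+w\mathbf 1^\top)$ of operator norm $\mathcal O((a+b)\sqrt N\,\|w\|)=\mathcal O(Nn_{L-2}^{3/2})$, and the negative piece $-\tfrac12(p-q)(p-q)^\top$ has operator norm at least $\tfrac12 N(a-b)^2$, generically $\Theta(Nn_{L-2}^2)$. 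Neither is $o(n_{L-1}n_{L-2})$.

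The paper's cancellation is structurally different. After the first completing-the-square it has
\[
J_{L-2}J_{L-2}^\top\;\succeq\;\cJ_F\cJ_F^\top\;-\;\frac{\Lambda\mathbf 1\mathbf 1^\top\Lambda}{\|\nu\|^2}\circ B_{L-1}B_{L-1}^\top,
\]
with $\nu=\bar f$ and $\Lambda=\diag(u_F)$. In the second completing-the-square on $\cJ_F\cJ_F^\top$, instead of discarding the entire PSD term, it \emph{retains} its projection onto $\nu\nu^\top/\|\nu\|^2$, namely $(\cdots)\circ\tilde F(\nu\nu^\top/\|\nu\|^2)\tilde F^\top=(\cdots)\circ\Lambda\mathbf 1\mathbf 1^\top\Lambda/\|\nu\|^2$. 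This retained piece exactly turns $B_{L-1}B_{L-1}^\top$ into $\tilde B_{L-1}\tilde B_{L-1}^\top$ in the error term above, leaving only
\[
\Big(\tfrac{\Gamma}{\|\eta\|}\tilde F\Big)\Big(\tfrac{\Gamma}{\|\eta\|}\tilde F\Big)^\top\quad\text{and}\quad\Big(\tfrac{\Lambda}{\|\nu\|}\tilde B\Big)\Big(\tfrac{\Lambda}{\|\nu\|}\tilde B\Big)^\top,
\]
both of which are $\mathcal O((N+n_{l})\log^2 N\log^2 n_{L-1})=o(n_{L-1}n_{L-2})$ by Lemma~\ref{lemma:alphabeta}. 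That projection step is the missing idea.
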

\begin{proof}
\simone{
    By Lemma \ref{lemma:lipschitzconst} and Lemma \ref{lemma:lipconstbacklastlayer}, we have that $\norm{f_{L-2}}_{\Lip} = \bigO{1}$ and $\norm{D_L\phi'(g_{L-1}(x))}_{\Lip} = \bigO{\log n_{L-1}}$ with probability $1 - C \exp (- n_{L-1}) - 2 \exp (-c \log^2 n_{L-1})$ over $(W_k)_{k=1}^L$. We will condition on these events for the rest of the proof.
    }
    
    \simone{
    Let's define $\cJ_{F} \cJ_{F}^\top = \Tilde F_{L-2} \Tilde F_{L-2}^\top \circ B_{L-1} B_{L-1}^\top$. We can now re-write the quantity $J_{L-2}J_{L-2}^\top$ as follows:
	\begin{equation}\label{eq:firstcentering}
		\begin{aligned}
			J_{L-2}J_{L-2}^\top &= \cJ_F \cJ_F^\top + \E[F_{L-2}] \E[F_{L-2}]^\top \circ B_{L-1}B_{L-1}^\top  \\
			&\hspace{-3em}+ \left( (F_{L-2} - \E[F_{L-2}]) \E[F_{L-2}]^\top + \E[F_{L-2}] (F_{L-2} - \E[F_{L-2}])^\top \right) \circ B_{L-1}B_{L-1}^\top \\
			& = \cJ_F \cJ_F^\top + \norm{\nu}_2^2 B_{L-1}B_{L-1}^\top + \left( \Lambda \mathbf{1} \mathbf{1}^\top + \mathbf{1}\mathbf{1}^\top \Lambda \right) \circ B_{L-1}B_{L-1}^\top\\
			& = \cJ_F \cJ_F^\top + \left(\norm{\nu}_2 \mathbf{1} + \frac{\Lambda \mathbf{1}}{\norm{\nu}_2} \right)\left(\norm{\nu}_2 \mathbf{1} + \frac{\Lambda \mathbf{1}}{\norm{\nu}_2} \right)^\top \circ B_{L-1}B_{L-1}^\top \\ 
			& \hspace{0.5cm} - \frac{\Lambda \mathbf{1} \mathbf{1}^\top \Lambda}{\norm{\nu}_2^2} \circ B_{L-1}B_{L-1}^\top \\
			& \succeq \cJ_F \cJ_F^\top - \frac{\Lambda \mathbf{1} \mathbf{1}^\top \Lambda}{\norm{\nu}_2^2} \circ B_{L-1}B_{L-1}^\top,
		\end{aligned}
	\end{equation}
	where $\nu = \E_{x_i}[(F_{L-2})_{i:}]\in\mathbb R^{n_{L-2}}$ (independent on $i$, since the $x_i$-s are i.i.d.),  $\Lambda$ is a diagonal matrix such that $\Lambda_{ii} = \nu^\top (F_{L-2})_{i:} - \norm{\nu}_2^2 =: \mu(x_i)$, and $\mathbf{1} \in \R^N$ is a vector full of ones. The last step is justified since the Hadamard product of PSD matrices is PSD by the Schur product theorem. Notice that we are assuming $\norm{\nu}_2 \neq 0$. In fact, if $\norm{\nu}_2 = 0$, then we immediately have that $J = \tilde J_{F}$.
	}
	
	\simone{
	Expanding in an analogous way the term $\cJ_F \cJ_F^\top$, we get
	\begin{equation}\label{eq:secondcentering}
		\begin{aligned}
			J_{L-2}J_{L-2}^\top & \succeq \cJ_F \cJ_F^\top - \frac{\Lambda \mathbf{1} \mathbf{1}^\top \Lambda}{\norm{\nu}_2^2} \circ B_{L-1}B_{L-1}^\top \\
			&= \cJ_{FB} \cJ_{FB}^\top + \left(\norm{\eta}_2 \mathbf{1} + \frac{\Gamma \mathbf{1}}{\norm{\eta}_2} \right) \left(\norm{\eta}_2 \mathbf{1} + \frac{\Gamma \mathbf{1}}{\norm{\eta}_2} \right)^\top \circ \tilde F_{L-2} \Tilde F_{L-2}^\top \\
			&  \hspace{0.5cm} - \frac{\Gamma \mathbf{1} \mathbf{1}^\top \Gamma}{\norm{\eta}_2^2} \circ \tilde F_{L-2} \Tilde F_{L-2}^\top - \frac{\Lambda \mathbf{1} \mathbf{1}^\top \Lambda}{\norm{\nu}_2^2} \circ B_{L-1}B_{L-1}^\top \\
			& \succeq \cJ_{FB} \cJ_{FB}^\top + \left(\norm{\eta}_2 \mathbf{1} + \frac{\Gamma \mathbf{1}}{\norm{\eta}_2} \right) \left(\norm{\eta}_2 \mathbf{1} + \frac{\Gamma \mathbf{1}}{\norm{\eta}_2} \right)^\top \circ \tilde F_{L-2} \left( \frac{\nu \nu^\top}{\norm{\nu}_2^2} \right) \Tilde F_{L-2}^\top \\
			&  \hspace{0.5cm} - \frac{\Gamma \mathbf{1} \mathbf{1}^\top \Gamma}{\norm{\eta}_2^2} \circ \tilde F_{L-2} \Tilde F_{L-2}^\top - \frac{\Lambda \mathbf{1} \mathbf{1}^\top \Lambda}{\norm{\nu}_2^2} \circ B_{L-1}B_{L-1}^\top
		\end{aligned}
	\end{equation}
	where $\eta = \E_{x_i}[(B_{L-1})_{i:}]\in\mathbb R^{n_{L-1}}$ (independent on $i$, since the $x_i$-s are i.i.d.),  $\Gamma$ is a diagonal matrix such that $\Gamma_{ii} = \eta^\top (B_{L-1})_{i:} - \norm{\eta}_2^2 =: \zeta(x_i)$. The last step is justified by the fact that the following matrix is PSD
	$$
	\left(\norm{\eta}_2 \mathbf{1} + \frac{\Gamma \mathbf{1}}{\norm{\eta}_2} \right) \left(\norm{\eta}_2 \mathbf{1} + \frac{\Gamma \mathbf{1}}{\norm{\eta}_2} \right)^\top \circ \tilde F_{L-2} \left(I- \frac{\nu \nu^\top}{\norm{\nu}_2^2} \right) \Tilde F_{L-2}^\top,
	$$
	since it is the Hadamard product of two PSD matrices. Notice that we are assuming $\norm{\eta}_2 \neq 0$. In fact, if $\norm{\eta}_2 = 0$, then we immediately have that $\tilde J_{F} = \tilde J_{FB}$.
	}
	
	\simone{
	Taking into account the following relations
	\begin{equation}
	    \Lambda \mathbf{1} = \Tilde F_{L-2} \nu, \qquad 
	    \Gamma \mathbf{1} = \Tilde B_{L-1} \eta, \qquad
	    \E_X[B_{L-1}] = \mathbf{1} \eta^\top,
	\end{equation}
	we can simplify the second and the fourth term of the RHS of equation \eqref{eq:secondcentering} as follows
	\begin{equation}
	\begin{aligned}
   	    & \left(\norm{\eta}_2 \mathbf{1} + \frac{\Gamma \mathbf{1}}{\norm{\eta}_2} \right) \left(\norm{\eta}_2 \mathbf{1}+
   	    \frac{\Gamma \mathbf{1}}{\norm{\eta}_2} \right)^\top \circ \tilde F_{L-2} \left( \frac{\nu \nu^\top}{\norm{\nu}_2^2}
   	    \right) \Tilde F_{L-2}^\top - B_{L-1}B_{L-1}^\top \circ \frac{\Lambda \mathbf{1} \mathbf{1}^\top \Lambda}{\norm{\nu}_2^2} \\
   	    & = \left(\norm{\eta}_2 \mathbf{1} + \frac{\Gamma \mathbf{1}}{\norm{\eta}_2} \right) \left(\norm{\eta}_2 \mathbf{1}+
   	    \frac{\Gamma \mathbf{1}}{\norm{\eta}_2} \right)^\top \circ \frac{\Lambda \mathbf{1} \mathbf{1}^\top \Lambda}{\norm{\nu}_2^2} \\
   	    & \hspace{1cm} - \left(\mathbf{1} \eta^\top + \Tilde B_{L-1} \right)\left(\mathbf{1} \eta^\top + \Tilde B_{L-1} \right)^\top \circ \frac{\Lambda \mathbf{1} \mathbf{1}^\top \Lambda}{\norm{\nu}_2^2} \\
   	    &= \left(\frac{\Gamma \mathbf{1} \mathbf{1}^\top \Gamma}{\norm{\eta}_2^2} - \Tilde{B}_{L-1} \Tilde{B}_{L-1}^\top \right) \circ \frac{\Lambda \mathbf{1} \mathbf{1}^\top \Lambda}{\norm{\nu}_2^2} \succeq - \Tilde{B}_{L-1} \Tilde{B}_{L-1}^\top \circ \frac{\Lambda \mathbf{1} \mathbf{1}^\top \Lambda}{\norm{\nu}_2^2}.
	\end{aligned}
	\end{equation}
	Merging this last relation with \eqref{eq:secondcentering} we get
	\begin{equation}\label{eq:firstsecondcentering}
	\begin{aligned}
   		J_{L-2}J_{L-2}^\top & \succeq \cJ_{FB} \cJ_{FB}^\top - \frac{\Gamma \mathbf{1} \mathbf{1}^\top \Gamma}{\norm{\eta}_2^2} \circ \tilde F_{L-2} \Tilde F_{L-2}^\top - \frac{\Lambda \mathbf{1} \mathbf{1}^\top \Lambda}{\norm{\nu}_2^2} \circ \Tilde B_{L-1} \Tilde B_{L-1}^\top \\
   		& = \cJ_{FB} \cJ_{FB}^\top - \left(\frac{\Gamma}{\norm{\eta}_2}\tilde F_{L-2} \right)\left(\frac{\Gamma}{\norm{\eta}_2}\tilde F_{L-2} \right)^\top - \left(\frac{\Lambda}{\norm{\nu}_2}\tilde B_{L-1} \right)\left(\frac{\Lambda}{\norm{\nu}_2}\tilde B_{L-1} \right)^\top.
	\end{aligned}
	\end{equation}
}
	
\simone{	
	Note that $\norm{\mu}_{\Lip} \leq \norm{f_{L-2}}_{\Lip}\norm{\nu}_2$, and that $\E_{x_i}[\mu(x_i)] = 0$ for all $i\in [N]$. Thus, by using Assumption \ref{ass:data_dist2} on $x_i$ and exploiting the initial conditioning on the weights, we have that 
	\begin{equation}
		\P(|\mu(x_i)|/\norm{\nu}_2 > t) < 2 \exp (-c t^2),
	\end{equation}
	where the probability is intended over $x_i\sim P_X$. Thus, following the same argument of Lemma \ref{lemma:opnormlog}, the last relation implies that
    \begin{equation}
        \opnorm{\Lambda / \norm{\nu}_2} = \bigO{\log N},
    \end{equation}
    with probability at least $1 - 2 \exp (-c \log^2 N)$, where $c$ is a numerical constant, and the probability is intended over $\{ x_i\}_{i=1}^N$. This implies that
    \begin{equation}\label{eq:firstopnorm}
    \begin{aligned}
        & \opnorm{\left(\frac{\Lambda}{\norm{\nu}_2}\tilde B_{L-1} \right)\left(\frac{\Lambda}{\norm{\nu}_2}\tilde B_{L-1}\right)^\top} \\
        & \hspace{1cm} \leq \opnorm{\frac{\Lambda}{\norm{\nu}_2}}^2 \opnorm{\tilde B_{L-1} \tilde B_{L-1}^\top}
        = \bigO{(N + n_{L-1})\cdot \log^2 N \cdot \log^2 n_{L-1}} = o(n_{L-2} n_{L-1}),
    \end{aligned}
    \end{equation}
    where the second equality is justified by Lemma \ref{lemma:opnormcentbackprop}, and the last by Lemma \ref{lemma:alphabeta}. This result holds with probability $1 - C \exp (- c n_{L-1}) - 2 \exp (-c \log^2 N) - 4 \exp (-c \log^2 n_{L-1})$ over $(W_k)_{k=1}^{L}$ and $(x_i)_{i=1}^N$.
    }
    
    \simone{
    Note that $\norm{\zeta}_{\Lip} \leq \norm{D_L\phi'(g_{L-1}(x))}_{\Lip} \norm{\eta}_2$, and that $\E_{x_i}[\zeta(x_i)] = 0$ for all $i\in [N]$. Thus, by using Assumption \ref{ass:data_dist2} on $x_i$ and exploiting the initial conditioning on the weights, we have that
	\begin{equation}
		\P(|\zeta(x_i)|/\norm{\eta}_2 > t\cdot \log{n_{L-1}}) < 2 \exp (-c t^2),
	\end{equation}
	where the probability is intended over $x_i\sim P_X$. Thus, following the same argument of Lemma \ref{lemma:opnormlog}, the last relation implies that
    \begin{equation}
        \opnorm{\Gamma / \norm{\eta}_2} = \bigO{\log N \cdot \log n_{L-1}},
    \end{equation}
    with probability at least $1 - 2 \exp (-c \log^2 N)$, where $c$ is a numerical constant, and the probability is intended over $\{ x_i\}_{i=1}^N$. This implies that
    \begin{equation}\label{eq:secondopnorm}
    \begin{aligned}
    & \opnorm{\left(\frac{\Gamma}{\norm{\eta}_2}\tilde F_{L-2} \right)\left(\frac{\Gamma}{\norm{\eta}_2}\tilde F_{L-2} \right)^\top} \\
    & \hspace{1cm} \leq \opnorm{\frac{\Gamma}{\norm{\eta}_2}}^2 \opnorm{\tilde F_{L-2} \tilde F_{L-2}^\top} = \bigO{(N + n_{L-2})\cdot\log^2 N \cdot \log^2 n_{L-1} } = o(n_{L-2} n_{L-1}),
    \end{aligned}
    \end{equation}
    where the second equality is justified by Lemma \ref{lemma:opnormcentfeat}, and the last by Lemma \ref{lemma:alphabeta}. This result holds with probability $1 - C \exp (- c n_{L-1}) - 2 \exp (-c \log^2 N) - 4 \exp (-c \log^2 n_{L-1})$ over $(W_k)_{k=1}^{L}$ and $(x_i)_{i=1}^N$.
    }
    
    \simone{
    By merging \eqref{eq:firstopnorm} and \eqref{eq:secondopnorm} with \eqref{eq:firstsecondcentering}, we readily obtain the desired result.
    }
\end{proof}

\subsection{Step (b): Centering everything}\label{app:centeringJ3}

\begin{lemma}[Centering everything]\label{lemma:cent3}
Consider the setting of Theorem \ref{thm:main}, let $F_{L-2}\in \mathbb R^{N\times n_{L-2}}$ be the feature matrix at layer $L-2$, and let $B_{L-1}$ contain backpropagation terms from layer $L-1$, \ie $(B_{L-1})_{i:}= D_L \phi'(g_{L-1}(x_i))$. Let $\cJ_{FB} \cJ_{FB}^\top = \Tilde F_{L-2} \Tilde F_{L-2}^\top \circ \Tilde B_{L-1}\Tilde B_{L-1}^\top$ and $\cJ_{L-2} \cJ_{L-2}^\top = \Tilde F_{L-2} \Tilde F_{L-2}^\top \circ \Tilde B_{L-1}\Tilde B_{L-1}^\top-\mathbb E_X[\Tilde F_{L-2} \Tilde F_{L-2}^\top \circ \Tilde B_{L-1}\Tilde B_{L-1}^\top]$, where $\tilde{F}_{L-2}=F_{L-2}-\E_X[F_{L-2}]$ and $\tilde{B}_{L-1}=B_{L-1}-\E_X[B_{L-1}]$.
Then, we have that 
    \begin{equation}
        \evmin{\cJ_{FB} \cJ_{FB}^\top} \geq  \evmin{\cJ_{L-2} \cJ_{L-2}^\top} - o(n_{L-1}n_{L-2}),
    \end{equation}
    with probability at least $1 - C\exp(-n_{L-1}) - 2 \exp(-c \log ^2 n_{L-1}) - 2 \exp (-c \log^2 N)$ over $(W_k)_{k=1}^L$ and $(x_i)_{i=1}^N$.
\end{lemma}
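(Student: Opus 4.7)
The strategy is to decompose $\cJ_{FB}\cJ_{FB}^\top-\cJ_{L-2}\cJ_{L-2}^\top$ as a PSD matrix minus a rank-one correction, and then control the operator norm of the correction via Lemma \ref{lemma:HW}. Set $\bar r := \mathbb E_x[\tilde f_{L-2}(x)\otimes D_L\tilde\phi'(g_{L-1}(x))]\in\mathbb R^{n_{L-2}n_{L-1}}$, which is deterministic once the weights are fixed and is the common mean row of $\cJ_{FB}$. If $\bar r=0$ the claim is trivial, so assume $\bar r\ne 0$ and put $w := \cJ_{L-2}\bar r\in\mathbb R^N$. Since $\cJ_{FB}=\cJ_{L-2}+\mathbf 1\bar r^\top$, expanding and completing the square gives
\begin{equation}
\cJ_{FB}\cJ_{FB}^\top - \cJ_{L-2}\cJ_{L-2}^\top \;=\; w\mathbf 1^\top + \mathbf 1 w^\top + \|\bar r\|_2^2\,\mathbf 1\mathbf 1^\top \;=\; \frac{(\|\bar r\|_2^2\mathbf 1 + w)(\|\bar r\|_2^2\mathbf 1 + w)^\top}{\|\bar r\|_2^2} - \frac{ww^\top}{\|\bar r\|_2^2}.
\end{equation}
Dropping the first (PSD) summand on the right yields $\evmin{\cJ_{FB}\cJ_{FB}^\top}\ge \evmin{\cJ_{L-2}\cJ_{L-2}^\top} - \|w\|_2^2/\|\bar r\|_2^2$, so it suffices to show that $\|w\|_2^2/\|\bar r\|_2^2=o(n_{L-1}n_{L-2})$.

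Next I would recognize each coordinate $w_i$ as a centered bilinear form. Letting $U\in\mathbb R^{n_{L-2}\times n_{L-1}}$ be the matrix with $\mathrm{vec}(U)=\bar r$, one has $U=\mathbb E_x[\tilde f_{L-2}(x)\,D_L\tilde\phi'(g_{L-1}(x))^\top]$, $\|U\|_F=\|\bar r\|_2$, and the identity $\langle a\otimes b,\mathrm{vec}(U)\rangle = a^\top U b$ gives
\begin{equation}
w_i \;=\; \tilde f_{L-2}(x_i)^\top U\, D_L\tilde\phi'(g_{L-1}(x_i)) \;-\; \mathbb E_x\!\bigl[\tilde f_{L-2}(x)^\top U\, D_L\tilde\phi'(g_{L-1}(x))\bigr].
\end{equation}
On the weight event where $\tilde f_{L-2}(x)$ is $\mathcal O(1)$-Lipschitz in $x$ (Lemma \ref{lemma:lipschitzconst}) and $D_L\tilde\phi'(g_{L-1}(x))$ is $\mathcal O(\log n_{L-1})$-Lipschitz in $x$ (Lemmas \ref{lemma:lipconstbacklastlayer}-\ref{lemma:opnormlog}) — which fails with probability at most $C\exp(-n_{L-1})+2\exp(-c\log^2 n_{L-1})$ over $(W_k)_{k=1}^L$ — both functions are mean-zero with respect to $x$, so Lemma \ref{lemma:HW} yields $\|w_i\|_{\psi_1}\le C\log^2 n_{L-1}\cdot\|\bar r\|_2$.

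A standard sub-exponential tail bound then gives $\mathbb P\bigl(|w_i|>C\|\bar r\|_2\log^2 n_{L-1}\log^2 N\bigr)\le 2\exp(-c\log^2 N)$, and a union bound over $i\in[N]$ (using $2N\exp(-c\log^2 N)\le 2\exp(-c'\log^2 N)$ for large $N$) yields $\max_i|w_i|=\mathcal O(\|\bar r\|_2\log^2 n_{L-1}\log^2 N)$ with probability at least $1-2\exp(-c\log^2 N)$ over $(x_i)_{i=1}^N$. Consequently
\begin{equation}
\frac{\|w\|_2^2}{\|\bar r\|_2^2}\;\le\;\frac{N\,\max_i w_i^2}{\|\bar r\|_2^2}\;=\;\mathcal O\!\bigl(N\log^4 n_{L-1}\log^4 N\bigr)\;=\;o(n_{L-1}n_{L-2}),
\end{equation}
where the last estimate follows from \eqref{eq:lemmarel3} of Lemma \ref{lemma:alphabeta}, since Assumption \ref{ass:overparam} provides polynomial slack that absorbs every fixed power of $\log n_{L-1}$. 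Combined with the first paragraph, this gives the claimed bound.

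The main subtlety is the matching between the $\|\bar r\|_2$ factor produced by Lemma \ref{lemma:HW} (via $\|U\|_F=\|\bar r\|_2$) and the $\|\bar r\|_2^2$ denominator coming from the rank-one correction: it is precisely this cancellation that makes the bound on $\|w\|_2^2/\|\bar r\|_2^2$ \emph{uniform in $\bar r$}, so no a priori lower bound on $\|\bar r\|_2$ is needed. A cruder estimate such as Cauchy-Schwarz, which gives $\|w\|_2\le \|\cJ_{L-2}\|_{\op}\|\bar r\|_2$, would scale like $\opnorm{\cJ_{L-2}\cJ_{L-2}^\top}$ and be far too loose; extracting the bilinear-form structure $u^\top U v$ is therefore essential.
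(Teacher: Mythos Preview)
Your proof is correct and follows essentially the same route as the paper's: the same rank-one-plus-PSD decomposition (your $w$ is exactly the paper's $\Lambda\mathbf 1$, and your $\bar r$ is the paper's $\mathrm{vec}(A)$), the same use of Lemma \ref{lemma:HW} to control the $\psi_1$-norm of each coordinate, and the same union bound. The only substantive difference is that you absorb $D_L$ into the Lipschitz function $v=D_L\tilde\phi'(g_{L-1}(\cdot))$ (Lipschitz constant $\mathcal O(\log n_{L-1})$), whereas the paper absorbs it into the matrix and keeps $v=\tilde\phi'(g_{L-1}(\cdot))$ with $\mathcal O(1)$ Lipschitz constant, yielding $\|\Lambda_{ii}\|_{\psi_1}\le C\|UD_L\|_F\le C\log n_{L-1}\,\|\bar r\|_2$; this saves one $\log n_{L-1}$ factor and lets the paper invoke \eqref{eq:lemmarel3} verbatim, while you need the (easy) observation that the proof of \eqref{eq:lemmarel3} goes through for any fixed power of $\log n_{L-1}$.
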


\begin{proof}
Note that the $i$-th row of $\cJ_{FB}$ is now in the form
\begin{equation}
	(\cJ_{FB})_{i:} = \tilde f_{L-2}(x_i) \otimes (D_L \tilde{\phi'}(g_{L-1}(x_i))),
\end{equation}
where we recall that $\tilde f_{L-2}(x_i)= f_{L-2}(x_i)-\E_{x_i}[ f_{L-2}(x_i)]$ and $\tilde{\phi'}(g_{L-1}(x_i))=\phi'(g_{L-1}(x_i))-\E[\phi'(g_{L-1}(x_i))]$.
Furthermore, $(\cJ_{L-2})_{i:} = (\cJ_{FB})_{i:} - \E_{x_i}[(\cJ_{FB})_{i:}]$. \simone{Then, by following similar passages as in \eqref{eq:firstcentering}, we have
\begin{equation}\label{eq:lastcentering}
	\begin{aligned}
		\cJ_{FB} \cJ_{FB}^\top & = \cJ_{L-2} \cJ_{L-2}^\top + \E[\cJ_{FB}] \E[\cJ_{FB}]^\top \\
		& \hspace{3em} + (\cJ_{FB} - \E[\cJ_{FB}]) \E[\cJ_{FB}]^\top + \E[\cJ_{FB}] (\cJ_{FB} - \E[\cJ_{FB}])^\top \\
		&=  \cJ_{L-2} \cJ_{L-2}^\top + \norm{A}_F^2 \mathbf{1}\mathbf{1}^\top + \Lambda \mathbf{1}\mathbf{1}^\top + \mathbf{1}\mathbf{1}^\top \Lambda \\ 
		&= \cJ_{L-2} \cJ_{L-2}^\top  + \left( \norm{A}_F \mathbf{1} + \frac{\Lambda \mathbf{1}}{\norm{A}_F}\right) \left( \norm{A}_F \mathbf{1} + \frac{\Lambda \mathbf{1}}{\norm{A}_F}\right)^\top - \frac{\Lambda \mathbf{1} \mathbf{1}^\top \Lambda}{\norm{A}_F^2} \\
		&\succeq \cJ_{L-2} \cJ_{L-2}^\top - \frac{\Lambda \mathbf{1} \mathbf{1}^\top \Lambda}{\norm{A}_F^2},
	\end{aligned}
\end{equation}
where we have defined
\begin{equation}\label{eq:A}
	A = \E_{x_i} \left[\tilde f_{L-2}(x_i)(D_L \tilde{\phi'}(g_{L-1}(x_i)))^\top \right],
\end{equation}
which is independent on $i$ (as the $x_i$-s are i.i.d.), and $\Lambda$ is a diagonal matrix that contains in the $i$-th position
\begin{equation}
	\Lambda_{ii} = \tilde f_{L-2}(x_i)^\top A (D_L \tilde{\phi'}(g_{L-1}(x_i))) - \E_{x_i} \left[ \tilde f_{L-2}(x_i) ^\top A (D_L \tilde{\phi'}(g_{L-1}(x_i))) \right].
\end{equation}
The last passage of \eqref{eq:lastcentering} is true since we are subtracting a PSD matrix.
}

\simone{
An application of Lemma \ref{lemma:lipschitzconst} gives that $\| f_{L-2}(x_i)\|_{\Lip}$ and $\|\phi'(g_{L-1}(x_i))\|_{\Lip}$ are upper bounded by a numerical constant both with probability at least $1 - C\exp(-n_{L-1})$ over $(W_k)_{k=1}^{L-1}$. Let us condition on this event on the probability space of $(W_k)_{k=1}^{L-1}$. Then, we can apply Lemma \ref{lemma:HW} with $u(x):=\tilde{f}_{L-2}(x)$ and $v(x):=\tilde{\phi'}(g_{L-1}(x))$, which implies that
\begin{equation}\label{eq:Lii}
	\norm{\Lambda_{ii}}_{\psi_1} < C \norm{A D_L}_F \leq C \norm{A}_F \opnorm{D_L} = \bigO{\log n_{L-1}} \norm{A}_F.
\end{equation}
In \eqref{eq:Lii}, $C$ is a numerical constant and the last equality holds with probability at least $1 - 2 \exp(-c \log ^2 n_{L-1})$ over $W_L$ by Lemma \ref{lemma:opnormlog}. Thus,
\begin{equation}
	\P(|\Lambda_{ii}|/\norm{A}_F > t\cdot \log{n_{L-1}} ) < 2 \exp (- c t),
\end{equation}
where the probability is intended over $x_i\sim P_X$ and $c$ is a numerical constant. Thus, following the same argument of Lemma \ref{lemma:opnormlog}, the last relation implies that
\begin{equation}
    \opnorm{\Lambda/\norm{A}_F} = \bigO{\log^2 N \cdot \log n_{L-1}},
\end{equation}
with probability at least $1 - 2 \exp (-c \log^2 N)$, where $c$ is a numerical constant, and the probability is intended over $\{ x_i\}_{i=1}^N$.
}

\simone{
Thus, with probability $1 - C\exp(-n_{L-1}) - 2 \exp(-c \log ^2 n_{L-1}) - 2 \exp (-c \log^2 N)$ over $(W_k)_{k=1}^L$ and $(x_i)_{i=1}^N$, we have
\begin{equation}\label{eq:smallopnormlastcent}
    \opnorm{\frac{\Lambda \mathbf{1} \mathbf{1}^\top \Lambda}{\norm{A}_F^2}} \leq \opnorm{\mathbf{1} \mathbf{1}} \opnorm{\frac{\Lambda}{\norm{A}_F}}^2 =  \bigO{N\cdot \log^4 N \cdot \log^2 n_{L-1}} = o(n_{L-2}n_{L-1}),
\end{equation}
where the last equality is a consequence of Lemma \ref{lemma:alphabeta}.
}

\simone{
The desired result follows from merging \eqref{eq:smallopnormlastcent} with \eqref{eq:lastcentering}.}
\end{proof}

\section{Proofs for Part 2: Bounding the Centered Jacobian}\label{app:E}

\subsection{$\ell_2$ and sub-exponential norms of centered Jacobian}\label{app:bdrow}

We start by providing an upper bound on the quantity $\E_x \left[\tilde f_{L-2}(x) (D_L \tilde{\phi'}(g_{L-1}(x))) ^\top \right]$. This preliminary result will be useful when bounding the $\ell_2$ norm of the rows of the centered Jacobian.

\begin{lemma}\label{lemma:Asmall}
	Consider the setting of Theorem \ref{thm:main}, let $x \sim P_X$, and let $A$ be defined as
	\begin{equation}\label{eq:definitionAnoi}
		A = \E_x \left[\tilde f_{L-2}(x) (D_L \tilde{\phi'}(g_{L-1}(x))) ^\top \right].
	\end{equation}
	Then, we have
	\begin{equation}
		\norm{A}_F = \bigO{\sqrt{n_{L-1}}\log (n_{L-1})},
	\end{equation}
	with probability at least $1 - 2 \exp(-c \log^2 n_{L-1}) - C \exp(-c n_{L-1})$ over $(W_k)_{k=1}^L$, where $c$ is an absolute constant.
\end{lemma}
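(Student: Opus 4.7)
\textbf{Proof proposal for Lemma \ref{lemma:Asmall}.} The plan is to isolate the randomness coming from $W_L$ (which sits inside $D_L$) from the randomness coming from $(W_k)_{k=1}^{L-1}$ and the data. Concretely, I would rewrite
\begin{equation}
A \;=\; \E_x\!\left[\tilde f_{L-2}(x)\,\bigl(D_L\tilde{\phi'}(g_{L-1}(x))\bigr)^{\!\top}\right] \;=\; B\, D_L, \qquad B \,:=\, \E_x\!\left[\tilde f_{L-2}(x)\,\tilde{\phi'}(g_{L-1}(x))^{\top}\right]\in\R^{n_{L-2}\times n_{L-1}},
\end{equation}
so that $\norm{A}_F=\norm{B D_L}_F \leq \opnorm{D_L}\cdot\norm{B}_F$. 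This is the step I expect to be delicate: bounding the sub-Gaussian norm of the combined vector $D_L\tilde{\phi'}(g_{L-1}(x))$ directly pulls in an extra $\log n_{L-1}$ factor (as in Lemma \ref{lemma:lipconstbacklastlayer}), which would only give $\norm{A}_F=\bigO{\sqrt{n_{L-1}}\log^2 n_{L-1}}$; factoring out $D_L$ first avoids this loss.

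To bound $\opnorm{B}$, I would condition on $(W_k)_{k=1}^{L-1}$ such that $\norm{f_{L-2}}_{\Lip}=\bigO{1}$ and $\norm{\phi'(g_{L-1})}_{\Lip}=\bigO{1}$; by Lemma \ref{lemma:lipschitzconst} (applied once with $\varphi=\mathrm{id}$ and once with $\varphi=\phi'$) this holds with probability at least $1-C\exp(-n_{L-1})$. Under this conditioning, Assumption \ref{ass:data_dist2} implies that $u(x):=\tilde f_{L-2}(x)$ and $v(x):=\tilde{\phi'}(g_{L-1}(x))$ are mean-zero sub-Gaussian vectors in the sense $\subGnorm{u}=\bigO{1}$ and $\subGnorm{v}=\bigO{1}$. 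Since $B=\E_x[u(x)v(x)^\top]$ is the cross-covariance of $u$ and $v$, Lemma \ref{lemma:opnormcovariance} yields $\opnorm{B}=\bigO{1}$. Converting operator norm to Frobenius norm via $\norm{B}_F\leq \sqrt{\rank(B)}\,\opnorm{B}\leq \sqrt{\min(n_{L-1},n_{L-2})}\,\opnorm{B}$, together with $\min(n_{L-1},n_{L-2})\leq n_{L-1}$, then gives $\norm{B}_F=\bigO{\sqrt{n_{L-1}}}$.

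Finally, I would invoke Lemma \ref{lemma:opnormlog} to obtain $\opnorm{D_L}\leq \log n_{L-1}$ with probability at least $1-2\exp(-c\log^2 n_{L-1})$ over $W_L$, independently of the previous conditioning since $W_L$ is independent of $(W_k)_{k=1}^{L-1}$. Combining the two estimates via $\norm{A}_F\leq\opnorm{D_L}\cdot\norm{B}_F$ yields the claimed $\norm{A}_F=\bigO{\sqrt{n_{L-1}}\log n_{L-1}}$, and a union bound over the two high-probability events gives the stated failure probability $C\exp(-cn_{L-1})+2\exp(-c\log^2 n_{L-1})$.
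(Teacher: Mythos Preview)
Your proposal is correct and follows essentially the same route as the paper: factor $A=B D_L$ with $B=\E_x[\tilde f_{L-2}(x)\tilde{\phi'}(g_{L-1}(x))^\top]$, condition on Lipschitz constants via Lemma \ref{lemma:lipschitzconst}, bound $\opnorm{B}=\bigO{1}$ via Lemma \ref{lemma:opnormcovariance}, pass to $\norm{B}_F\le\sqrt{n_{L-1}}\opnorm{B}$, and finish with $\opnorm{D_L}\le\log n_{L-1}$ from Lemma \ref{lemma:opnormlog}. Your explicit remark about why one should factor out $D_L$ before invoking sub-Gaussianity (to avoid a spurious extra $\log n_{L-1}$) is a nice touch that the paper leaves implicit.
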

\begin{proof}
	We condition on $\norm{f_{L-2}(x)}_\Lip = \bigO{1}$ and on $\norm{\phi'(g_{L-1}(x))}_\Lip = \bigO{1}$. 	By Lemma \ref{lemma:lipschitzconst}, these two conditions hold with probability at least $1 - C' \exp(-c n_{L-1})$ over $(W_k)_{k=1}^{L-1}$. Then, as $P_X$ satisfies Assumption \ref{ass:data_dist2}, we have that 
	$\norm{\tilde f_{L-2}(x)}_{\psi_2}=\bigO{1}$ and  $\norm{\tilde{\phi'}(g_{L-1}(x))}_{\psi_2}=\bigO{1}$. Hence, 
	an application of Lemma \ref{lemma:opnormcovariance} gives that
	\begin{equation}\label{eq:bdopnm1}
	    \opnorm{\E_x \left[\tilde f_{L-2}(x) \tilde{\phi'}(g_{L-1}(x)) ^\top \right] }\le C_1,
	\end{equation}
	where $C_1$ is a numerical constant.
	
 The following chain of inequalities holds:
	\begin{equation}
		\begin{aligned}
			\norm{A}_F &\leq \norm{\E_x \left[\tilde f_{L-2}(x) \tilde{\phi'}(g_{L-1}(x)) ^\top \right] }_F \opnorm{D_L} \\
			&\leq \sqrt{n_{L-1}} \opnorm{\E_x \left[\tilde f_{L-2}(x) \tilde{\phi'}(g_{L-1}(x)) ^\top \right] } \opnorm{D_L} \\
			&\leq C_1 \sqrt{n_{L-1}} \opnorm{D_L} \\
			&= \bigO{\sqrt{n_{L-1}}\log (n_{L-1})},
		\end{aligned}
	\end{equation}
	where the third line uses \eqref{eq:bdopnm1}, and the last holds with probability $1 - 2 \exp(-c \log^2 n_{L-1})$ over $W_L$, because of Lemma \ref{lemma:opnormlog}. Taking into account the initial conditioning, we get the desired result.
\end{proof}

The next two results provide bounds on the $\ell_2$ norm and on the sub-exponential $\psi_1$ norm of the rows of $\cJ_{L-2}$, respectively.

\begin{lemma}[$\ell_2$ norm of rows of centered Jacobian]\label{lemma:l2normsJacobian}
	Consider the setting of Theorem \ref{thm:main}, let $x \sim P_X$, and let $\tilde J_x$ be defined as
	\begin{equation}\label{eq:defJx}
		\tilde J_x = \tilde f_{L-2}(x) \otimes (D_L \tilde{\phi'}(g_{L-1}(x))) - \E_{x} \left[ \tilde f_{L-2}(x) \otimes D_L \tilde{\phi'}(g_{L-1}(x)) \right].
	\end{equation}
	Then, we have
	\begin{equation}
		\norm{\tilde J_x}_2 = \Theta (\sqrt{n_{L-1}n_{L-2}}),
	\end{equation}
	with probability at least $1 - C \exp(-cn_{L-1}) - 12 \exp(-c \log^2 n_{L-1})$ over $x$ and $(W_k)_{k=1}^L$ and $x$.
\end{lemma}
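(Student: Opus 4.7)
The plan is to observe that the row $\tilde J_x$ is a difference of a pure tensor product and its mean, so its $\ell_2$ norm can be sandwiched via the triangle inequality between $\|u\|_2\|v\|_2 \pm \|\E[u\otimes v]\|_2$, where I set $u(x):=\tilde f_{L-2}(x)\in\RR^{n_{L-2}}$ and $v(x):=D_L\tilde{\phi'}(g_{L-1}(x))\in\RR^{n_{L-1}}$. The key identities are $\norm{u\otimes v}_2 = \norm{u}_2\norm{v}_2$ and $\norm{\E[u\otimes v]}_2=\norm{\E[uv^\top]}_F=\norm{A}_F$, with $A$ as defined in \eqref{eq:definitionAnoi}. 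Thus
\begin{equation}
    \bigl|\norm{u}_2\norm{v}_2 - \norm{A}_F\bigr| \;\le\; \norm{\tilde J_x}_2 \;\le\; \norm{u}_2\norm{v}_2 + \norm{A}_F.
\end{equation}

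The next step is to control each of the three factors on the right with the estimates already at our disposal. Lemma \ref{lemma:concnormcentered} (item \eqref{eq:res3}, applied at layer $L-2$) yields $\norm{u}_2=\Theta(\sqrt{n_{L-2}})$ with probability at least $1-C\exp(-cn_{L-1})$ over $x$ and $(W_k)_{k=1}^{L-2}$. Lemma \ref{lemma:concnormcenteredwithD}, noting that $D_L\tilde{\phi'}(g_{L-1}(x))=D_L\phi'(g_{L-1}(x))-\E_x[D_L\phi'(g_{L-1}(x))]$ once we condition on $W_L$, yields $\norm{v}_2=\Theta(\sqrt{n_{L-1}})$ with probability at least $1-10\exp(-c\log^2 n_{L-1})-C\exp(-cn_{L-1})$ over $x$ and $(W_k)_{k=1}^{L}$. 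Finally, Lemma \ref{lemma:Asmall} gives $\norm{A}_F=\bigO{\sqrt{n_{L-1}}\log n_{L-1}}$ with probability at least $1-2\exp(-c\log^2 n_{L-1})-C\exp(-cn_{L-1})$ over $(W_k)_{k=1}^L$.

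Combining these via a union bound on the three events and substituting into the sandwich inequality gives
\begin{equation}
    \norm{\tilde J_x}_2 \;=\; \Theta(\sqrt{n_{L-2}n_{L-1}}) \;\pm\; \bigO{\sqrt{n_{L-1}}\log n_{L-1}}.
\end{equation}
To conclude, the expectation correction is of strictly smaller order than the main term: by Assumption \ref{ass:topology} together with Assumption \ref{ass:overparam} we have $n_{L-2}^2\ge n_{L-2}n_{L-1}=\Omega(N^{1+\alpha})$, hence $n_{L-2}$ is polynomial in $N$, so $\log^2 n_{L-1}=o(n_{L-2})$. Therefore $\sqrt{n_{L-1}}\log n_{L-1}=o(\sqrt{n_{L-1}n_{L-2}})$, and the two-sided bound collapses to $\norm{\tilde J_x}_2=\Theta(\sqrt{n_{L-1}n_{L-2}})$, as claimed. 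I do not expect any real obstacle here: the only subtle point is to verify that the lower bound indeed survives the subtraction of $\norm{A}_F$, which is why bounding $\norm{A}_F$ by a sub-dominant quantity in Lemma \ref{lemma:Asmall} (crucially using $\opnorm{D_L}=\mathcal O(\log n_{L-1})$ rather than $\mathcal O(\sqrt{n_{L-1}})$) is essential.
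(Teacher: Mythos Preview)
Your proposal is correct and follows essentially the same argument as the paper: the sandwich via the triangle inequality using $\norm{u\otimes v}_2=\norm{u}_2\norm{v}_2$ and $\norm{\E[u\otimes v]}_2=\norm{A}_F$, followed by the three estimates from Lemmas \ref{lemma:concnormcentered}, \ref{lemma:concnormcenteredwithD}, and \ref{lemma:Asmall}. Your explicit verification that $\sqrt{n_{L-1}}\log n_{L-1}=o(\sqrt{n_{L-1}n_{L-2}})$ is slightly more detailed than the paper (which just asserts the conclusion); note that for this step it suffices to use $n_{L-1}=\bigO{n_{L-2}}$ and $n_{L-2}\to\infty$, so that $\log^2 n_{L-1}=\bigO{\log^2 n_{L-2}}=o(n_{L-2})$.
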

\begin{proof}
	We have that
	\begin{equation}
		\begin{aligned}
			\norm{\tilde{J}_x}_2 =& \norm{\tilde f_{L-2}(x) \otimes (D_L \tilde{\phi'}(g_{L-1}(x))) - \E_{x} \left[ \tilde f_{L-2}(x) \otimes D_L \tilde{\phi'}(g_{L-1}(x)) \right]}_2 \\
			=& \norm{\tilde f_{L-2}(x)(D_L \tilde{\phi'}(g_{L-1}(x)))^\top - \E_{x} \left[ \tilde f_{L-2}(x) (D_L \tilde{\phi'}(g_{L-1}(x)))^\top \right]}_F \\
			=& \norm{\tilde f_{L-2}(x)^\top (D_L \tilde{\phi'}(g_{L-1}(x))) - A}_F,
		\end{aligned}
	\end{equation}
	where $A$ is defined in \eqref{eq:definitionAnoi}. The second equality is justified by the identity $\norm{u \otimes v}_2 = \norm{uv^\top}_F$ that holds for any vectors $u, v$. An application of the triangle inequality gives that
	\begin{equation}\label{eq:t11}
		\eta - \norm{A}_F \leq \norm{\tilde{J}_x}_2 \leq \eta + \norm{A}_F, \quad \mbox{ with }\eta = \norm{\tilde f_{L-2}(x)}_2 \norm{D_L \tilde{\phi'}(g_{L-1}(x))}_2.
	\end{equation}
	Lemma \ref{lemma:concnormcentered} gives that
	\begin{equation}\label{eq:con1}
		\norm{\tilde f_{L-2}(x)}_2 = \norm{f_{L-2}(x) - \E_x \left[f_{L-2}(x)\right] }_2  = \Theta(\sqrt{n_{L-2}}),
	\end{equation}
with probability at least $1 - C' \exp(-c n_{L-1})$ over $(W_k)_{k=1}^{L}$ and $x$. Furthermore,
	Lemma \ref{lemma:concnormcenteredwithD} gives that 
	\begin{equation}\label{eq:con2}
		\norm{D_L \tilde{\phi'}(g_{L-1}(x))}_2 = \norm{D_L \left( \phi'(g_{L-1}(x)) - \E_x \left[ \phi'(g_{L-1}(x)) \right] \right) }_2  = \Theta(\sqrt{n_{L-1}}),
	\end{equation}
	with probability at least  $1 - 10 \exp(-c \log^2 n_{L-1})- C' \exp(-cn_{L-1})$ over $x$ and $(W_k)_{k=1}^L$. By combining \eqref{eq:t11}, \eqref{eq:con1}, \eqref{eq:con2} and the bound on $\norm{A}_F$ provided by Lemma \ref{lemma:Asmall}, we conclude that 
	\begin{equation}
		 \norm{\tilde{J}_x}_2 = \Theta(\sqrt{n_{L-1}n_{L-2}}),
	\end{equation}
	with probability at least
	\begin{equation}
	\begin{aligned}
	    &1 - C' \exp(-c n_{L-1}) - 10 \exp(-c \log^2 n_{L-1}) \\
	    & \hspace{1cm} - C' \exp(-cn_{L-1}) - 2 \exp(-c \log^2 n_{L-1}) - C' \exp(-c n_{L-1})\\
	    & \geq 1 - C \exp(-cn_{L-1}) - 12 \exp(-c \log^2 n_{L-1}),
	\end{aligned}
	\end{equation}
	over $x$ and $(W_k)_{k=1}^L$, which gives the desired result.
\end{proof}

\begin{lemma}[Sub-exponential norm of rows of centered Jacobian]\label{lemma:psi1normacobian}
	Consider the setting of Theorem \ref{thm:main}, let $x \sim P_X$, and let $\tilde J_x$ be defined as in \eqref{eq:defJx}. Fix a realization of $(W_k)_{k=1}^L$. Then, with probability at least $1 - 2 \exp(-c \log^2 n_{L-1}) - C \exp(-c n_{L-1})$ over this realization ($c$ being a numerical constant), we have that
	\begin{equation}
		\subEnorm{\tilde J_x} = \bigO{\log{n_{L-1}}}.
	\end{equation}
\end{lemma}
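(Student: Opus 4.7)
The plan is to work from the definition $\subEnorm{\tilde J_x} = \sup_{\norm{u}_2 = 1} \subEnorm{u^\top \tilde J_x}$ and reduce each one-dimensional projection to the bilinear-form tail bound of Lemma \ref{lemma:HW}. Given any unit vector $u \in \mathbb R^{n_{L-2} n_{L-1}}$, I would reshape it into a matrix $U \in \mathbb R^{n_{L-2} \times n_{L-1}}$ with $\norm{U}_F = 1$ using the standard identity $u^\top (a \otimes b) = a^\top U b$. This rewrites
\begin{equation}
u^\top \tilde J_x = \tilde f_{L-2}(x)^\top (U D_L) \tilde{\phi'}(g_{L-1}(x)) - \E_x\!\left[ \tilde f_{L-2}(x)^\top (U D_L) \tilde{\phi'}(g_{L-1}(x)) \right],
\end{equation}
which is exactly the quadratic form $\Gamma$ in Lemma \ref{lemma:HW} with the choices $u(x) := \tilde f_{L-2}(x)$, $v(x) := \tilde{\phi'}(g_{L-1}(x))$, and coupling matrix $U D_L$. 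Both $u(x)$ and $v(x)$ are mean-zero by construction of the centering.

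To apply Lemma \ref{lemma:HW}, I need (i) the two Lipschitz constants of $u(\cdot)$ and $v(\cdot)$ and (ii) the Frobenius norm of $U D_L$. For (i), centering by a deterministic vector does not change Lipschitz constants, so Lemma \ref{lemma:lipschitzconst} applied with $\varphi$ the identity and $\varphi = \phi'$ gives $\norm{\tilde f_{L-2}}_{\Lip} = \bigO{1}$ and $\norm{\tilde{\phi'}(g_{L-1})}_{\Lip} = \bigO{1}$ simultaneously, with probability at least $1 - C \exp(-n_{L-1})$ over $(W_k)_{k=1}^{L-1}$. Hence the constant $K$ appearing in Lemma \ref{lemma:HW} satisfies $K^2 = \bigO{1}$. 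For (ii), the sub-multiplicativity of the Frobenius norm yields $\norm{U D_L}_F \leq \norm{U}_F \opnorm{D_L} = \opnorm{D_L}$, and Lemma \ref{lemma:opnormlog} gives $\opnorm{D_L} \leq \log n_{L-1}$ with probability at least $1 - 2 \exp(-c \log^2 n_{L-1})$ over $W_L$.

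Conditioning on these high-probability events, which hold jointly with probability at least $1 - 2\exp(-c \log^2 n_{L-1}) - C \exp(-c n_{L-1})$ over the realization of $(W_k)_{k=1}^L$, Lemma \ref{lemma:HW} yields
\begin{equation}
\subEnorm{u^\top \tilde J_x} \leq C K^2 \norm{U D_L}_F = \bigO{\log n_{L-1}},
\end{equation}
uniformly in the unit vector $u$. Taking the supremum over $u$ gives the claim.

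I do not anticipate a substantive obstacle: the entire argument is a direct packaging of the Hanson-Wright estimate of Lemma \ref{lemma:HW} with the Lipschitz and operator-norm estimates already established. The only conceptual point to get right is that the random weights $(W_k)_{k=1}^L$ are fixed (so the probability in the statement refers only to the event that makes the Lipschitz and $\opnorm{D_L}$ bounds hold), while the expectation inside $\tilde J_x$ and the sub-exponential norm are both with respect to $x \sim P_X$, which is exactly the setting of Lemma \ref{lemma:HW}.
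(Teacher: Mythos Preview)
Your proposal is correct and follows essentially the same argument as the paper: reshape the unit vector into a matrix $U$ with $\norm{U}_F=1$, recognize $u^\top \tilde J_x$ as the centered bilinear form of Lemma~\ref{lemma:HW} with coupling matrix $UD_L$, control the Lipschitz constants via Lemma~\ref{lemma:lipschitzconst} and the Frobenius norm via $\norm{UD_L}_F\le \opnorm{D_L}\le \log n_{L-1}$ from Lemma~\ref{lemma:opnormlog}. The only cosmetic slip is that for $\tilde f_{L-2}=\phi(g_{L-2})-\E_x[\phi(g_{L-2})]$ one invokes Lemma~\ref{lemma:lipschitzconst} with $\varphi=\phi$ rather than the identity, but this does not affect the argument.
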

\begin{proof}
	We condition on $\norm{f_{L-2}(x)}_\Lip = \bigO{1}$ and on $\norm{\phi'(g_{L-1}(x))}_\Lip = \bigO{1}$. By Lemma \ref{lemma:lipschitzconst}, these two conditions hold with probability at least $1 - C \exp(-c n_{L-1})$ over $(W_k)_{k=1}^{L-1}$. Then, we have
	\begin{equation}
		\begin{aligned}
			\subEnorm{\tilde J_x} &= \sup_{u \text{ s.t. } \norm{u}_2 = 1} \subEnorm{u^\top \tilde J_x}  \\
			&= \sup_{U \text{ s.t. } \norm{U}_F = 1} \subEnorm{\tilde f_{L-2}(x) U (D_L \tilde{\phi'}(g_{L-1}(x))) - \E_{x} \left[ \tilde f_{L-2}(x) U D_L \tilde{\phi'}(g_{L-1}(x)) \right]} \\
			&\leq C_0 \sup_{U \text{ s.t. } \norm{U}_F = 1} \norm{UD_L}_F \\
			&\leq C_0 \opnorm{D_L}\\
			&\leq C_0 \log{n_{L-1}},
		\end{aligned}
	\end{equation}
	where the third line follows from Lemma \ref{lemma:HW} and the last inequality holds
	with probability at least $1 - 2 \exp(-c \log^2 n_{L-1})$ over $W_L$ by Lemma \ref{lemma:opnormlog}. Taking into account the initial conditioning, we get the desired result.
\end{proof}

\subsection{Proof of Proposition \ref{cor:hammer}}\label{app:hammproof}

\begin{proof}[Proof of Proposition \ref{cor:hammer}]
    Following the notation in \cite{hammer}, we define
    \begin{equation}
        B := \sup_{z\in \mathbb R^N : \norm{z}_2=1} \left| \norm{\sum_{i=1}^N z_i \cJ_{i:}}_2^2 - \sum_{i=1}^N z_i^2 \norm{\cJ_{i:}}_2^2 \right|^\frac{1}{2}.
    \end{equation}
    Then, for any $z\in \mathbb R^N$ with unit norm, we have that
    \begin{equation}
    \begin{split}
        \norm{\cJ z}_2^2 &= \norm{\sum_{i=1}^N z_i \cJ_{i:}}_2^2-\sum_{i=1}^N z_i^2 \norm{\cJ_{i:}}_2^2 + \sum_{i=1}^N z_i^2 \norm{\cJ_{i:}}_2^2  \ge \min_i\norm{\cJ_{i:}}_2^2 - B^2,
    \end{split}
    \end{equation}
    which implies that
    \begin{equation}\label{eq:Beig}
        \evmin{\cJ \cJ^\top}= \inf_{z\in \mathbb R^N : \norm{z}_2=1}\norm{\cJ z}_2^2\geq \min_{i} \norm{\cJ_{i:}}_2^2 - B^2.
    \end{equation}
    In our case, $\cJ_{i:} \in \R ^{n_{L-2}n_{L-1}}$. Notice that this dimension is indicated with $n$ in Theorem 3.2 of \cite{hammer}. In the statement of the mentioned Theorem, let's fix $r=1$, $m=N$, and $\theta = (N / (n_{L-1}n_{L-2}))^{1/4} < 1/4$. Then, we have that the condition required to apply Theorem 3.2 is satisfied, \ie
    \begin{equation}
        N \log^2 \left(2 \sqrt[4]{\frac{n_{L-2}n_{L-1}}{N}}\right) \leq \sqrt{N n_{L-2}n_{L-1}},
    \end{equation}
    where the inequality follows from Assumption \ref{ass:overparam}. By combining \eqref{eq:Beig} with the upper bound on $B$ given by Theorem 3.2 of \cite{hammer}, the desired result readily follows.
\end{proof}

\subsection{Proof of Theorem \ref{thm:centered}}\label{app:pfce}

\begin{proof}[Proof of Theorem \ref{thm:centered}]
By Lemma \ref{lemma:psi1normacobian}, we have that, with probability at least $1 - 2 \exp(-c \log^2 n_{L-1}) - C' \exp(-c n_{L-1})$ over $(W_k)_{k=1}^L$, the rows of $\cJ$ are sub-exponential (with respect to the randomness in $(x_i)_{i=1}^N$). In particular, we have that
\begin{equation}\label{eq:pnormoverx0}
    \psi := \max_i \subEnorm{\cJ_{i:}} \leq C_1 \log n_{L-1}.
\end{equation}
Furthermore, by Lemma \ref{lemma:l2normsJacobian}, we have that
\begin{equation}
    \norm{\tilde{J}_{i:}}_2 = \Theta(\sqrt{n_{L-2} n_{L-1}}),
\end{equation}
with probability at least $1 - p$ over $x_i$ and $(W_k)_{k=1}^L$, where to ease the notation we have defined $p := C' \exp(-c_0 n_{L-1}) + 12 \exp(-c_0 \log^2 n_{L-1})$. Hence, with probability at least $1 - \sqrt{p}$ over $(W_k)_{k=1}^L$, we have that
\begin{equation}\label{eq:pnormoverx}
	\mathbb P_{x_i} \left( c_1 \sqrt{n_{L-2} n_{L-1}} \le \norm{\tilde{J}_{i:}}_2 \le c_2 \sqrt{n_{L-2} n_{L-1}} \right) \geq 1 - \sqrt{p},
\end{equation}
for some numerical constants $c_2 > c_1 > 0$. In \eqref{eq:pnormoverx}, we use the symbol $\mathbb P_{x_i}$ to highlight that this probability is taken over $x_i$. For the rest of the argument, we condition on a realization of $(W_k)_{k=1}^L$ s.t. \eqref{eq:pnormoverx0} and \eqref{eq:pnormoverx} hold. Then, by performing a union bound over the samples, we have that
\begin{equation}
    \eta_{\textup{min}} = \min_i \norm{\cJ_{i:}}_2 \geq c_1 \sqrt{n_{L-2} n_{L-1}},
\end{equation}
and
\begin{equation}\label{eq:boundetamax}
    \eta_{\textup{max}} = \max_i \norm{\cJ_{i:}}_2 \leq c_2 \sqrt{n_{L-2} n_{L-1}},
\end{equation}
with probability at least $1 - N \sqrt{p}$ over $(x_i)_{i=1}^N$.

\simone{Next, we apply Proposition \ref{cor:hammer} with $K = 1$, $K' = c_2$ and
\begin{equation}
\begin{aligned}
    \Delta &= C_1 (\psi K + K')^2 N^{1/4} (n_{L-1}n_{L-2})^{3/4}  \\
    &\leq C_2 \log^2 n_{L-1} N^{1/4} (n_{L-1}n_{L-2})^{3/4} \\
    &= o(n_{L-1}n_{L-2}).
\end{aligned}
\end{equation}
Note that \eqref{eq:lemmarel1} in Lemma \ref{lemma:alphabeta} gives that $N^{1/4}\cdot \log^2 n_{L-1}=o((n_{L-1}n_{L-2})^{1/4})$, which justifies the last line.} Thus, \eqref{eq:hm} implies that 
\begin{equation}
    \evmin{\cJ\cJ^\top} \geq \eta_{\textup{min}}^2 - \Delta \geq c_1 n_{L-2} n_{L-1} - o(n_{L-1}n_{L-2}) = \Theta(n_{L-2} n_{L-1}),
\end{equation}
with probability at least
\begin{equation}
\begin{aligned}
    & 1 -  \exp \left( -cK \sqrt{N} \log \left( \frac{2 n_{L-1}n_{L-2}}{N} \right) \right) - \mathbb{P}\left(\eta_{\text{max}} \geq K' \sqrt{n_{L-1}n_{L-2}}\right) \\
    &\hspace{2cm} \geq  1 -  \exp \left( -c \sqrt{N} \right) - \mathbb{P}\left(\eta_{\text{max}} \geq c_2 \sqrt{n_{L-1}n_{L-2}}\right) \\
    &\hspace{2cm} \geq  1 -  \exp \left( -c \sqrt{N} \right) - N \sqrt{p},
\end{aligned}
\end{equation}
where the last inequality follows from \eqref{eq:boundetamax}. By taking into account the conditioning over $(W_k)_{k=1}^L$ made in order to guarantee \eqref{eq:pnormoverx0} and \eqref{eq:pnormoverx}, we conclude that $\evmin{\cJ \cJ^\top} = \Omega(n_{L-1}n_{L-2})$ with probability at least
\begin{equation}
\begin{aligned}
    & 1 -  \exp \left( -c \sqrt{N} \right) - N \sqrt{p} - \sqrt{p} - 2 \exp(-c \log^2 n_{L-1}) - C' \exp(-c n_{L-1})\\
    & \hspace{0.25cm} = 1 -  \exp \left( -c \sqrt{N} \right) - (N+1) \sqrt{C' \exp(-c_0n_{L-1}) + 12 \exp(-c_0 \log^2 n_{L-1})}\\
        &\hspace{6.5cm}- 2 \exp(-c \log^2 n_{L-1}) - C' \exp(-c n_{L-1})\\
         & \hspace{.25cm} \geq 1 - \exp \left( -c \sqrt{N} \right) - (N+1) \left( \sqrt{C'} \exp(-c_0 n_{L-1} / 2) + \sqrt{12} \exp(-c_0 \log^2 n_{L-1}/ 2)\right) \\
        & \hspace{6.5cm}  -  2 \exp(-c \log^2 n_{L-1}) - C' \exp(-c n_{L-1})\\
    & \hspace{.25cm}\geq 1 - \exp \left( -c \sqrt{N} \right) - C''N \exp(-c_1 n_{L-1}) - C''N \exp(-c \log^2 n_{L-1}),
 \end{aligned}
\end{equation}
over $(x_i)_{i=1}^N$ and $(W_k)_{k=1}^L$, which gives the desired result.
\end{proof}

\section{Proof of the Upper Bound \ref{eq:ubNTK}}\label{app:pfub}

Before giving the proof of the upper bound \ref{eq:ubNTK}, we provide again its statement for the reader's convenience. 

\begin{lemma}[Upper bound on the smallest NTK eigenvalue]
    Consider the setting of Theorem \ref{thm:main},
    and let $K$ be the NTK Gram matrix \eqref{eq:NTKgramdef}.
Then, we have 
	\begin{equation}
		\evmin{K} = \bigO{d n_{L-1}},
	\end{equation}
	with probability at least
	$1 - C \exp(-c n_{L-1})$ over $(x_i)_{i=1}^N$ and $(W_k)_{k=1}^L$, where $c$ and $C$ are numerical constants.
\end{lemma}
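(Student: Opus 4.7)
My plan is to follow the approach outlined at the end of Section~\ref{subsec:sketch}: upper-bound $\evmin{K}$ by the diagonal entry $K_{11}$ and estimate $K_{11}$ directly. Since $K$ is PSD, $\evmin{K} \le e_1^\top K e_1 = K_{11}$, and the decomposition \eqref{eq:sumterms}--\eqref{eq:Bmatr} gives
\begin{equation*}
    K_{11} \;=\; \sum_{k=0}^{L-1} \norm{f_k(x_1)}_2^2 \cdot \norm{(B_{k+1})_{1:}}_2^2,
\end{equation*}
so it suffices to control each of the $L$ summands separately and then sum.

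For the feature norms, I will apply Lemma~\ref{lemma:concnorm} to obtain $\norm{f_k(x_1)}_2^2 = \Theta(n_k)$ with probability at least $1 - C \exp(-c n_{L-1})$ for every $0 \le k \le L-1$; a union bound over the $L$ (constantly many) layers preserves the form of this probability. For the backpropagation norms, $(B_L)_{1:} = 1$ trivially, while for $k \le L-2$ the formula~\eqref{eq:Bmatr} yields
\begin{equation*}
    \norm{(B_{k+1})_{1:}}_2 \;\le\; \biggl(\prod_{l=k+1}^{L-1} \opnorm{\Sigma_l(x_1)}\biggr) \biggl(\prod_{l=k+2}^{L-1} \opnorm{W_l}\biggr) \norm{W_L}_2.
\end{equation*}
Three standard facts close the estimate: (i) $\opnorm{\Sigma_l(x_1)} \le M$ because $\phi$ is $M$-Lipschitz by Assumption~\ref{ass:activationfunc}, so $|\phi'| \le M$ almost everywhere; (ii) $\opnorm{W_l} = O(1)$ with probability at least $1 - 2 \exp(-n_{L-1})$ by the Gaussian-matrix bound already invoked in~\eqref{eq:opnmW1}--\eqref{eq:opWknm} together with Assumption~\ref{ass:topology}; and (iii) $\norm{W_L}_2^2 = \beta_L^2 \sum_{i=1}^{n_{L-1}} \rho_i^2$ for i.i.d.\ standard Gaussians $\rho_i$, hence $\norm{W_L}_2^2 = \Theta(n_{L-1})$ by chi-squared concentration. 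Combining (i)--(iii) and once more union-bounding over $L$ layers yields $\norm{(B_{k+1})_{1:}}_2^2 = O(n_{L-1})$ for every $0 \le k \le L-1$, with probability at least $1 - C \exp(-c n_{L-1})$.

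Substituting into the expression for $K_{11}$ gives $K_{11} = O\bigl(n_{L-1} \cdot \max_{0 \le k \le L-1} n_k\bigr)$. The loose pyramidal topology of Assumption~\ref{ass:topology}, iterated $L$ times (a constant number of steps), gives $\max_k n_k = O(n_0) = O(d)$, so $\evmin{K} = O(d\, n_{L-1})$ with the claimed probability. I do not expect any real obstacle here: unlike the lower bound, this argument requires neither centering, nor Hanson--Wright, nor the smallest-singular-value machinery of Proposition~\ref{cor:hammer}, because we only need to estimate a single scalar (the top-left diagonal entry) at a single data point, and every ingredient is already available in Appendix~\ref{app:useful} and Appendix~\ref{app:concentration}.
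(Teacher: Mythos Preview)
Your proposal is correct and follows essentially the same approach as the paper: bound $\evmin{K}$ by the diagonal entry $K_{11}$, use Lemma~\ref{lemma:concnorm} for the feature norms, and control the backpropagation norms via $\opnorm{\Sigma_l}\le M$, $\opnorm{W_l}=O(1)$, and $\norm{W_L}_2^2=\Theta(n_{L-1})$. The only cosmetic difference is that the paper writes the final bound as $O\bigl(n_{L-1}\sum_{l=0}^{L-1} n_l\bigr)=O(d\,n_{L-1})$ rather than $O(n_{L-1}\max_k n_k)$, which is equivalent since $L$ is treated as a constant.
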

\begin{proof}
By using the expression in \eqref{eq:sumterms}, we have that
\begin{equation}\label{eq:initineq}
    \evmin{K} = \evmin{JJ^\top} \leq (J J^\top)_{11} = \sum_{l=0}^{L-1} \norm{(F_l)_{1:}}_2^2 \norm{(B_{l+1})_{1:}}_2^2.
\end{equation}
An application of Lemma \ref{lemma:concnorm} gives that 
\begin{equation}\label{eq:upboundF}
    \norm{(F_l)_{1:}}_2^2 = \norm{f_l(x_1)}_2^2 = \Theta(n_l),
\end{equation}
with probability at least $1 - C' \exp(-c n_{L-1})$ over $(W_k)_{k=1}^l$ and $x_1$. We condition on the event such that \eqref{eq:upboundF} holds for all $l\in \{0, \ldots, L-1\}$. This happens with probability at least $1 - C'' \exp(-c n_{L-1})$ over $(W_k)_{k=1}^L$ and $x_1$.

By definition, we have that $\norm{B_L}_2 = 1$ and that, for $l \in [L-1]$,
\begin{equation}
    \norm{(B_{l})_{1:}}_2^2 = \norm{\prod_{k=l}^{L-1} \Sigma_k(x_1)W_{k+1}}_2^2.
\end{equation}

Since $\Sigma_k(x_1)=\diag([\phi'(g_{k,j}(x_1))]_{j=1}^{n_k})$, by Assumption \ref{ass:activationfunc}, we have that
\begin{equation}
    \opnorm{\Sigma_k(x_1)} \leq M. 
\end{equation}

Let us now condition on the following two events: \emph{(i)} $\opnorm{W_k} = \Theta(1)$, for all $k \in [L-1]$ (this happens with probability at least $1 - C' \exp (-c n_{L-1})$ over $(W_k)_{k=1}^{L-1}$, see \eqref{eq:opWknm} in the proof of Lemma \ref{lemma:lipschitzconst}), and \emph{(ii)} $\norm{W_L}_2 = \Theta(\sqrt{n_{L-1}})$ (this happens with probability at least $1 - \exp (-c n_{L-1})$ over $W_L$, by Theorem 3.1.1 in \cite{vershynin2018high}). Then, we readily get
\begin{equation}
    \norm{(B_{l})_{1:}}_2^2 = \bigO{n_{L-1}}.
\end{equation}
Taking the intersection of all the events over which we conditioned, we finally obtain
\begin{equation}\label{eq:fineq}
    \sum_{l=0}^{L-1} \norm{(F_l)_{1:}}_2^2 \norm{(B_{l+1})_{1:}}_2^2 = \bigO{n_{L-1} \sum_{l=0}^{L-1} n_l} = \bigO{d n_{L-1}},
\end{equation}
with probability at least $1 - (1 + C' + C'') \exp(-c n_{L-1})$ over $(W_k)_{k=1}^L$ and $x_1$, where in the last step we have used Assumption \ref{ass:topology}. By combining \eqref{eq:initineq} and \eqref{eq:fineq}, the desired result follows.
\end{proof}

\section{Proof of Corollary \ref{cor:memcap}}\label{app:newmem}

\begin{proof}[Proof of Corollary \ref{cor:memcap}]
    By Theorem \ref{thm:main}, we have that the smallest eigenvalue of $JJ^\top$ is bounded away from zero with probability at least $1 - p$ over $(x_i)_{i=1}^N$ and $(W_k)_{k=1}^L$, where
    \begin{equation}
        p:=C\,N e^{-c \log^2 n_{L-1}} - C e^{-c \log^2 N}.
    \end{equation}
    Hence, with probability at least $1 - p$ over $(x_i)_{i=1}^N$, there exists a set of parameters $\theta_0$ such that $J(\theta_0)$ has a right inverse. Thus, for all $Y \in \R^N$, there exists $\theta'$ such that
	\begin{equation}
		J(\theta_0) \theta' = \frac{\partial F_L(\theta)}{\partial \theta}\bigg\rvert_{\theta = \theta_0} \theta' = Y.
	\end{equation}
	This can also be written, for all $i \in [N]$, as
	\begin{equation}
		y_i = \frac{\partial f_L(\theta, x_i)}{\partial \theta}\bigg\rvert_{\theta = \theta_0}^\top \theta' = \lim_{h \to 0} \frac{f_L(\theta_0 + h \theta', x_i) - f_L(\theta_0, x_i)}{h}.
	\end{equation}
	Then, for all $\varepsilon>0$, there exists $h^*$ such that, for all $i \in [N]$, 
	\begin{equation}
		|y_i - f^*(x_i)| \le \frac{\varepsilon}{\sqrt{N}},
	\end{equation}
	where
	\begin{equation}
		f^*(x_i) := \frac{f_L(\theta_0 + h^* \theta', x_i) - f_L(\theta_0, x_i)}{h^*}.
	\end{equation}
	Finally, the desired result follows by noticing that $f^*$ can be implemented by a network with the same depth and twice more neurons at every hidden layer.
\end{proof}

\section{Proof of Theorem \ref{thm:optimization}}\label{app:pfopt}

\paragraph{Notation for this appendix.} In this appendix, we use $J(\theta)$ to denote the Jacobian of the network output $F_L$, evaluated in $\theta$. We recall that $J(\theta)$ is a matrix with $N$ rows and $\sum_{l=0}^{L-3} n_l n_{l+1}   +   2 n_{L-2}n_{L-1} + 2n_{L-1}$ columns (for the optimization result, we assume that the $(L-1)$-th layer has an even number of neurons and denote its width as $2 n_{L-1}$). Let $K(\theta) = J(\theta) (J(\theta))^\top$ be the associated empirical NTK Gram matrix, and let $\theta_0$ be the initialization defined in \eqref{eq:theta0}. We also make the dependence on $\theta$ explicit for feature vectors and backpropagation terms: the feature vector at the $l$-th layer with input $x_i$ and network parameter $\theta$ is denoted by $f_l (\theta, x_i)$, and the corresponding backpropagation term is denoted by $b_l(\theta, x_i)$, where
$b_l(\theta, x_i) = (B_l(\theta))_{i:}$. Finally, we use $W_l(\theta)$ to denote the weights of the $l$-th layer evaluated at the parameter $\theta$.


A straightforward application of Theorem 2.1 in \cite{oymak2019overparameterized} gives the following proposition. 

\begin{proposition}\label{prop:optim}
 Consider solving the least-squares optimization problem
    \begin{equation}\label{eq:loss2}
        \min_\theta \mathcal L(\theta) := \frac{1}{2} \min_\theta \norm{F_L(\theta) - Y}_2^2,
    \end{equation}
    by running gradient descent updates of the form 
    $\theta_{t+1} = \theta_t - \eta \nabla \mathcal L(\theta_t)$,
    with some initialization $\tilde\theta_0$. Assume there exists $\alpha, \beta\in\mathbb R$, such that, if we define $\mathcal D = \mathcal B(\tilde\theta_0, R)$ as the $\ell_2$ ball centered in $\tilde\theta_0$ with radius $R$, with
\begin{equation}
    R := \frac{4 \norm{F_L(\tilde\theta_0) - Y}_2}{\alpha},
\end{equation}
the following holds
\begin{equation}\label{eq:optass1}
    \forall \theta \in \mathcal D: \; \alpha \leq \sigma_{\textup{min}}(J(\theta)) \leq \opnorm{J(\theta)} \leq \beta, 
\end{equation}
\begin{equation}\label{eq:optass2}
    \forall \theta_1, \theta_2 \in \mathcal D: \; \opnorm{J(\theta_1) - J(\theta_2)} \leq \frac{\alpha^2}{2 \beta}.
\end{equation}
Then, by setting $\eta \leq 1/ (2 \beta^2)$, we have that, for all $t\ge 1$,
\begin{equation}
    \mathcal L(\theta_t) \leq \left( 1 -  \frac{\eta \alpha^2}{2}\right)^t \mathcal L(\tilde\theta_0).
\end{equation}
\end{proposition}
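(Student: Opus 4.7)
My plan is to establish Proposition~\ref{prop:optim} by strong induction on $t\ge0$, simultaneously maintaining the two statements
\textbf{(i)} $\theta_t\in\mathcal D$ and \textbf{(ii)} $\|r_t\|_2^2\le(1-\eta\alpha^2/2)^t\,\|r_0\|_2^2$, where $r_t:=F_L(\theta_t)-Y$. The base case $t=0$ is immediate. For the inductive step, I will use the fundamental theorem of calculus applied to the smooth map $\theta\mapsto F_L(\theta)$ along the segment from $\theta_t$ to $\theta_{t+1}$. Setting $\bar J_t:=\int_0^1 J\bigl(\theta_t+s(\theta_{t+1}-\theta_t)\bigr)\,ds$ and using the GD update $\theta_{t+1}-\theta_t=-\eta J(\theta_t)^\top r_t$, this yields
\[
r_{t+1}\;=\;r_t-\eta\,\bar J_t\,J(\theta_t)^\top r_t\;=\;\bigl(I-\eta J(\theta_t)J(\theta_t)^\top\bigr)\,r_t\;-\;\eta\bigl(\bar J_t-J(\theta_t)\bigr)J(\theta_t)^\top r_t,
\]
which is the central algebraic identity.

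The rate $(1-\eta\alpha^2/2)$ comes from bounding these two pieces separately. The first term: assuming the segment $\{\theta_t+s(\theta_{t+1}-\theta_t):s\in[0,1]\}\subset\mathcal D$, \eqref{eq:optass1} gives that $J(\theta_t)J(\theta_t)^\top$ has eigenvalues in $[\alpha^2,\beta^2]$; since $\eta\le1/(2\beta^2)$, all eigenvalues of $I-\eta J(\theta_t)J(\theta_t)^\top$ lie in $[1-\eta\beta^2,1-\eta\alpha^2]\subset[0,1-\eta\alpha^2]$, so its operator norm is at most $(1-\eta\alpha^2)$, and the first term has norm at most $(1-\eta\alpha^2)\|r_t\|_2$. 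The second term: \eqref{eq:optass2} gives $\|\bar J_t-J(\theta_t)\|_{\rm op}\le\alpha^2/(2\beta)$, while \eqref{eq:optass1} gives $\|J(\theta_t)^\top r_t\|_2\le\beta\|r_t\|_2$; multiplying yields a bound of $(\eta\alpha^2/2)\|r_t\|_2$. Triangle inequality then yields $\|r_{t+1}\|_2\le(1-\eta\alpha^2/2)\|r_t\|_2$, and squaring (using $(1-x)^2\le1-x$ for $x\in[0,1]$) delivers statement (ii) at step $t+1$.

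For statement (i), I telescope the GD displacements: $\|\theta_{t+1}-\tilde\theta_0\|_2\le\eta\sum_{s=0}^{t}\|J(\theta_s)^\top r_s\|_2\le\eta\beta\sum_{s=0}^{t}\|r_s\|_2$. Using the inductive geometric decay $\|r_s\|_2\le(1-\eta\alpha^2/2)^s\|r_0\|_2$, the right-hand side sums to a multiple of $\|r_0\|_2/\alpha$, which is controlled by $R=4\|r_0\|_2/\alpha$. Crucially, one needs the same bound along the full segment $[\theta_t,\theta_{t+1}]$ (not just at the endpoints), which follows from the same telescoping argument since each increment is length $\eta\|J(\theta_s)^\top r_s\|_2$ and any intermediate point is at distance at most $\|\theta_{t+1}-\tilde\theta_0\|_2+\|\theta_{t+1}-\theta_t\|_2$ from $\tilde\theta_0$.

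The main obstacle—and the reason the proposition's constants (the $4$ in $R$, the $\alpha^2/(2\beta)$ in \eqref{eq:optass2}, and the $1/(2\beta^2)$ on $\eta$) are delicately balanced—is closing this induction: the Jacobian bounds in \eqref{eq:optass1}--\eqref{eq:optass2} can only be used \emph{inside} $\mathcal D$, but the segment from $\theta_t$ to $\theta_{t+1}$ must be shown to lie in $\mathcal D$ before one may apply them. The tight interplay between the per-step contraction, the perturbation bound \eqref{eq:optass2}, and the radius $R$ is exactly what Theorem~2.1 of \cite{oymak2019overparameterized} is designed to resolve, and the proposition is obtained by specializing that theorem to our notation.
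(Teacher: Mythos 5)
Your bottom line coincides with the paper's: the paper proves Proposition \ref{prop:optim} purely by invoking Theorem 2.1 of \cite{oymak2019overparameterized}, and you end by deferring to the same theorem, so at that level the two arguments agree. Moreover, your per-step contraction analysis (the fundamental-theorem-of-calculus identity, the split into $(I-\eta J(\theta_t)J(\theta_t)^\top)r_t$ plus the $\bigl(\bar J_t-J(\theta_t)\bigr)$ correction, and the resulting rate $1-\eta\alpha^2/2$) is correct, provided the segment $[\theta_t,\theta_{t+1}]$ lies in $\mathcal D$.

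However, the step where you close the induction, i.e.\ keeping the iterates inside $\mathcal D$, has a genuine quantitative gap. Your telescoping gives $\|\theta_{t+1}-\tilde\theta_0\|_2\le\eta\beta\sum_{s\le t}\|r_s\|_2\le\eta\beta\|r_0\|_2\cdot\tfrac{2}{\eta\alpha^2}=\tfrac{2\beta}{\alpha^2}\|r_0\|_2$, which is not ``a multiple of $\|r_0\|_2/\alpha$'' in the needed sense: it equals $\tfrac{\beta}{2\alpha}\cdot R$, so it is bounded by $R=4\|r_0\|_2/\alpha$ only when $\beta\le 2\alpha$. Nothing in the hypotheses forces this, and in the paper's application one takes $\alpha=\Theta(\sqrt{\gamma n_{L-2}n_{L-1}})$ and $\beta=\Theta(\sqrt{\gamma N d n_{L-1}})$, so $\beta/\alpha=\Theta(\sqrt{Nd/n_{L-2}})\gg 1$ and the naive bound overshoots the radius by a diverging factor. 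Establishing the radius $4\|r_0\|_2/\alpha$ is precisely the non-trivial content of Theorem 2.1 of \cite{oymak2019overparameterized} (``gradient descent takes the shortest path''): there the displacement $\theta_t-\tilde\theta_0=-\eta\sum_{s<t}J(\theta_s)^\top r_s$ is compared with the corresponding linearized dynamics, whose accumulated displacement is controlled by $\|J^\dagger r_0\|_2\le\|r_0\|_2/\alpha$ thanks to cancellations across iterations, rather than by summing individual step lengths. So your sketch is fine as motivation for why the constants are balanced, but as a standalone proof it does not establish the stated radius; the citation is doing real work, exactly as in the paper.
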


In order to apply this proposition with initialization $\tilde\theta_0=\theta_0$, we need to prove that the necessary assumptions hold. We will do so by showing the following intermediate results:

\begin{itemize}
    \item Lemma \ref{lemma:firstlemmaopt} shows that, at the initial point $\theta_0$, the network output is $0$ and the smallest NTK eigenvalue is lower bounded.
    
    \item Lemma \ref{lemma:optW} gives a tight estimate on the operator norm of the weights inside a ball $\mathcal D$ centered at $\theta_0$ and with radius $R=o(1)$.
    
    \item Lemma \ref{lemma:featuresinball1} gives an upper bound on the distance between a feature vector in $\mathcal D$ and the feature vector at $\theta_0$.
    
    \item Lemma \ref{lemma:featuresinball2} gives upper bounds on the $\ell_2$ norm and the $\ell_2$ distance between feature vectors in $\mathcal D$.
    
    \item Lemmas \ref{lemma:binball} and \ref{lemma:binball2} give upper bounds on the $\ell_2$ norm and the $\ell_2$ distance of backpropagation terms in $\mathcal D$, respectively.
    
    \item Lemma \ref{lemma:jacobianinball} gives an upper bound on the difference in operator norm between Jacobians in $\mathcal D$.
    
    \item Finally, Lemma \ref{lemma:opnormsigminball} gives upper and lower bounds on the NTK spectrum in $\mathcal D$. 
\end{itemize}

\begin{lemma}[Network output and smallest NTK eigenvalue at initialization]\label{lemma:firstlemmaopt}
   Let $\theta_0$ be defined in \eqref{eq:theta0}. Then, we have that, for all $x \in \R^d$,
    \begin{equation}\label{eq:fl0}
        f_{L}(x, \theta_0) = 0.
    \end{equation}
    Furthermore, we have that
    \begin{equation}\label{eq:J00}
        \sigma_{\textup{min}}(J(\theta_0)) \ge c_1 \sqrt{\gamma n_{L-2} n_{L-1}}),
    \end{equation}
    with probability at least $1 - C\,N e^{-c \log^2 n_{L-1}} - C e^{-c \log^2 N}$ over $(x_i)_{i=1}^N\sim_{\rm i.i.d.}P_X$ and $\theta_0$, where $c,c_1$ and $C$ are numerical constants.
\end{lemma}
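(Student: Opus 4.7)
The plan is to prove the two assertions separately, both by exploiting the mirrored structure of the initialization $\theta_0$ defined in \eqref{eq:theta0}. For \eqref{eq:fl0}, since $W_{L-1,0}^{(2)} = W_{L-1,0}^{(1)}$, the two halves of the pre-activation $g_{L-1}(x,\theta_0)$ coincide, so $f_{L-1}(x,\theta_0)$ consists of two identical copies of $\phi(W_{L-1,0}^{(1)\top} f_{L-2}(x,\theta_0))$. The output is then
\begin{equation*}
f_L(x,\theta_0) = (W_{L,0}^{(1)} + W_{L,0}^{(2)})^\top \phi(W_{L-1,0}^{(1)\top} f_{L-2}(x,\theta_0)) = 0,
\end{equation*}
since $W_{L,0}^{(2)} = -W_{L,0}^{(1)}$ by construction.

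For \eqref{eq:J00}, the first step is to reduce to the block of $J(\theta_0)$ corresponding to $W_{L-1}$. By the decomposition \eqref{eq:sumterms}, $K(\theta_0) \succeq K_{L-2}(\theta_0) = J_{L-2}(\theta_0) J_{L-2}(\theta_0)^\top$, since every $K_k$ is PSD. Using the row formula \eqref{eq:Jacobian} together with the mirror properties of $\theta_0$, the row of $J_{L-2}(\theta_0)$ splits into two sub-blocks (corresponding to $W_{L-1}^{(1)}$ and $W_{L-1}^{(2)}$), and the second sub-block equals the first multiplied by $-1$: indeed $\phi'(W_{L-1,0}^{(2)\top} f_{L-2}) = \phi'(W_{L-1,0}^{(1)\top} f_{L-2})$ while $D_L^{(2)} = -D_L^{(1)}$. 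Consequently $J_{L-2}(\theta_0) = [J_{L-2}^{(1)}(\theta_0), -J_{L-2}^{(1)}(\theta_0)]$ and $K_{L-2}(\theta_0) = 2\, J_{L-2}^{(1)}(\theta_0) J_{L-2}^{(1)}(\theta_0)^\top$.

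To finish, I relate $J_{L-2}^{(1)}(\theta_0)$ to a Jacobian evaluated at a standard initialization via the rescaling $W_{L,0}^{(1)} = \sqrt{\gamma}\,\tilde W_L$ with $\tilde W_L$ having i.i.d.\ $\mathcal{N}(0,\beta_L^2)$ entries. Since the row formula \eqref{eq:Jacobian} is linear in $D_L^{(1)}$, we get $J_{L-2}^{(1)}(\theta_0) = \sqrt{\gamma}\,\tilde J_{L-2}$, where $\tilde J_{L-2}$ is the Jacobian of an $L$-layer network with widths $n_0,\ldots,n_{L-2},n_{L-1},1$ and the standard initialization used in Theorem \ref{thm:main}. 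This half-network inherits Assumptions \ref{ass:activationfunc}-\ref{ass:overparam} from the original one, so Theorem \ref{thm:main} applied to it yields $\lambda_{\min}(\tilde J_{L-2} \tilde J_{L-2}^\top) = \Omega(n_{L-2} n_{L-1})$ with probability at least $1 - C\,N e^{-c \log^2 n_{L-1}} - C e^{-c \log^2 N}$. Combining the mirror factor $2$, the rescaling factor $\gamma$, and the PSD bound $K(\theta_0) \succeq K_{L-2}(\theta_0)$, one obtains $\sigma_{\min}(J(\theta_0))^2 \geq 2c\,\gamma\, n_{L-2}\, n_{L-1}$, which is the claim. The only delicate point is checking that the rescaling and mirror reductions preserve the probability budget of Theorem \ref{thm:main}; since the half-network uses the same data and independent copies of all weight matrices, this is immediate, and no additional concentration estimates are needed.
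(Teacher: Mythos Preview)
Your argument is correct and follows essentially the same route as the paper: exploit the mirror structure to show $f_L(\theta_0,x)=0$, reduce to the $(L-2)$-block of the kernel, observe that the two halves contribute identically (up to sign) so that $K_{L-2}(\theta_0)=2\gamma\,\tilde J_{L-2}\tilde J_{L-2}^\top$, and invoke the smallest-eigenvalue bound for the standard-initialization network. The paper additionally verifies that the Jacobian blocks for layers $1,\ldots,L-2$ vanish at $\theta_0$, but as you correctly note this is unnecessary once $K(\theta_0)\succeq K_{L-2}(\theta_0)$ is used.

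One small gap to fix: you cannot literally invoke Theorem~\ref{thm:main} to conclude $\lambda_{\min}(\tilde J_{L-2}\tilde J_{L-2}^\top)=\Omega(n_{L-2}n_{L-1})$. Theorem~\ref{thm:main} bounds $\lambda_{\min}(\tilde K)$ for the \emph{full} NTK of the half-network, and since $\tilde K\succeq \tilde K_{L-2}$ the inequality goes the wrong way to deduce a lower bound on $\tilde K_{L-2}$. What you need is the combination of Theorems~\ref{thm:maincentering} and~\ref{thm:centered}, which directly lower bound $\lambda_{\min}(J_{L-2}J_{L-2}^\top)$; this is precisely what the paper cites at this step.
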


\begin{proof}
By definition \eqref{eq:theta0} of the initialization $\theta_0$, we have that
\begin{equation}
\begin{aligned}
    f_{L}(x, \theta_0) &= (W_{L}^{(1)}(\theta_0))^\top \phi((W_{L-1}^{(1)}(\theta_0))^\top f_{L-2}(\theta_0, x)) \\
    & \hspace{0.5cm} + (W_{L}^{(2)}(\theta_0))^\top \phi((W_{L-1}^{(2)}(\theta_0))^\top f_{L-2}(\theta_0, x))\\
    &= (W_{L}^{(1)}(\theta_0))^\top \phi((W_{L-1}^{(1)}(\theta_0))^\top f_{L-2}(\theta_0, x)) \\
    & \hspace{0.5cm} + (- W_{L}^{(1)}(\theta_0))^\top \phi((W_{L-1}^{(1)}(\theta_0))^\top f_{L-2}(\theta_0, x))\\
    &= 0,
\end{aligned}
\end{equation}
where in the second equality we use that $W_{L-1}^{(2)}(\theta_0)=W_{L-1}^{(1)}(\theta_0)$ and that $W_{L}^{(2)}(\theta_0)=-W_{L}^{(1)}(\theta_0)$. This proves \eqref{eq:fl0}. 

Let us now compute the Jacobian at initialization $J(\theta_0)$. For $l \in [L-2]$, we have that 
\begin{equation}
\begin{aligned}
    \frac{\partial f_L(x)}{\partial (W_l)_{ij}} \bigg\rvert_{\theta = \theta_0} \hspace{-1.75em} &= (W_{L}^{(1)}(\theta_0))^\top \hspace{-.4em} \left(\phi'\left((W_{L-1}^{(1)}(\theta_0))^\top f_{L-2}(\theta_0, x)\right) \left((W_{L-1}^{(1)}(\theta_0))^\top \frac{\partial f_{L-2}(\theta, x)}{\partial (W_l)_{ij}}\bigg\rvert_{\theta = \theta_0} \right) \right) \\
    &\hspace{-1cm} + (W_{L}^{(2)}(\theta_0))^\top \left(\phi'\left((W_{L-1}^{(2)}(\theta_0))^\top f_{L-2}(\theta_0, x)\right) \left((W_{L-1}^{(2)}(\theta_0))^\top \frac{\partial f_{L-2}(\theta, x)}{\partial (W_l)_{ij}}\bigg\rvert_{\theta = \theta_0} \right) \right) \\
    &= (W_{L}^{(1)}(\theta_0))^\top\hspace{-.4em} \left(\phi'\left((W_{L-1}^{(1)}(\theta_0))^\top f_{L-2}(\theta_0, x)\right) \left((W_{L-1}^{(1)}(\theta_0))^\top \frac{\partial f_{L-2}(\theta, x)}{\partial (W_l)_{ij}}\bigg\rvert_{\theta = \theta_0} \right) \right) \\
    &\hspace{-1cm} - (W_{L}^{(1)}(\theta_0))^\top \left(\phi'\left((W_{L-1}^{(1)}(\theta_0))^\top f_{L-2}(\theta_0, x)\right) \left((W_{L-1}^{(1)}(\theta_0))^\top \frac{\partial f_{L-2}(\theta, x)}{\partial (W_l)_{ij}}\bigg\rvert_{\theta = \theta_0} \right) \right) \\
    &= 0,
\end{aligned}
\end{equation}
where in the second equality we use again that $W_{L-1}^{(2)}(\theta_0)=W_{L-1}^{(1)}(\theta_0)$ and that $W_{L}^{(2)}(\theta_0)=-W_{L}^{(1)}(\theta_0)$.

Let us define $f_{L-1}^{(k)}(\theta, x) :=\phi( (W_{L-1}^{(k)}(\theta))^\top f_{L-2}(\theta, x))$ for $k\in\{1, 2\}$. Then, for the $(L-1)$-th layer, by isolating the computation over $W_{L-1}^{(1)}$, we have that
\begin{equation}
\begin{aligned}
    \frac{\partial f_L(\theta, x)}{\partial (W_{L-1}^{(1)})_{ij}} \bigg\rvert_{\theta = \theta_0} &= (W_{L}^{(1)}(\theta_0))^\top \frac{\partial f_{L-1}^{(1)}(\theta, x)}{\partial (W_{L-1}^{(1)})_{ij}} \bigg\rvert_{\theta = \theta_0} + (W_{L}^{(2)}(\theta_0))^\top \frac{\partial f_{L-1}^{(2)}(\theta, x)}{\partial (W_{L-1}^{(1)})_{ij}} \bigg\rvert_{\theta = \theta_0} \\
    &= (W_{L}^{(1)}(\theta_0))^\top \frac{\partial f_{L-1}^{(1)}(\theta, x)}{\partial (W_{L-1}^{(1)})_{ij}} \bigg\rvert_{\theta = \theta_0} \\
    &=: J^{(1)}(\theta_0),
\end{aligned}
\end{equation}
where we use that $f_{L-1}^{(2)}(\theta, x)$ does not depend on the parameters $W_{L-1}^{(1)}$. 
Proceeding in the same way and using that $W_{L-1}^{(2)}(\theta_0)=W_{L-1}^{(1)}(\theta_0)$ and $W_{L}^{(2)}(\theta_0)=-W_{L}^{(1)}(\theta_0)$, we also obtain that 
\begin{equation}
    \frac{\partial f_L(\theta, x)}{\partial (W_{L-1}^{(2)})_{ij}} \bigg\rvert_{\theta = \theta_0} = - J^{(1)}(\theta_0).
\end{equation}

Finally, by observing that $f_L(\theta, x) = (W_{L}^{(1)})^\top f_{L-1}^{(1)}(\theta, x) + (W_{L}^{(2)})^\top f_{L-1}^{(2)}(\theta, x)$, we deduce 
\begin{equation}
    \frac{\partial f_L(\theta, x)}{\partial (W_{L}^{(k)})_{i}} \bigg\rvert_{\theta = \theta_0} = \left(f_{L-1}^{(k)}(\theta_0, x)\right)_i,\quad \mbox{for }k\in\{1, 2\}.
\end{equation}
Hence, the NTK at initialization $K(\theta_0)$ can be expressed as
\begin{equation}
\begin{aligned}
    K(\theta_0) &= J^{(1)}(\theta_0) (J^{(1)}(\theta_0))^\top + J^{(1)}(\theta_0) (J^{(1)}(\theta_0))^\top \\
    & \hspace{0.5cm} + F_{L-1}^{(1)}(\theta_0) (F_{L-1}^{(1)}(\theta_0))^\top + F_{L-1}^{(2)}(\theta_0) (F_{L-1}^{(2)}(\theta_0))^\top.
\end{aligned}
\end{equation}

Note that, by construction, $J^{(1)}(\theta_0)$ has the same distribution of $J_{L-2}$, whose rows are given by \eqref{eq:Jacobian}. Therefore, by combining the results from Theorem \ref{thm:maincentering} and Theorem \ref{thm:centered}, we conclude that 
\begin{equation}
    \sigma_{\text{min}}(J(\theta_0)) \geq 2 \sqrt{\gamma} \sigma_{\text{min}}(J_{L-2}) \geq c_1 \sqrt{\gamma n_{L-2} n_{L-1}},
\end{equation}
with probability at least
	$1 - C\,N e^{-c \log^2 n_{L-1}} - C e^{-c \log^2 N}
	$ over $(x_i)_{i=1}^N$ and $\theta_0$, where $c_1, C$ are numerical constants. This proves \eqref{eq:J00}, and concludes the proof of the lemma. 
\end{proof}

\begin{lemma}[Operator norm of weights in $\mathcal D$]\label{lemma:optW}
Let $\theta_0$ be defined in \eqref{eq:theta0}, let $\mathcal D = \mathcal B(\theta_0, R)$ and assume that $R = o(1)$. Then, for any $l \in [L-1]$,
    \begin{equation}
        \sup_{\theta\in\mathcal D}\opnorm{W_l(\theta)} = \bigO{1},
    \end{equation}
    with probability at least $1 - 2 \exp(-c n_{L-1})$ over $W_l(\theta_0)$.
\end{lemma}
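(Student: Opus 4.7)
The plan is to combine a standard concentration bound on the operator norm of $W_l$ at initialization with a simple Lipschitz-type argument inside the ball $\mathcal D$, using that $\|\cdot\|_{\op} \leq \|\cdot\|_F$ and that the radius $R$ is small.

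First I would bound $\opnorm{W_l(\theta_0)}$ at initialization. For $l \in [L-2]$, $W_l(\theta_0)$ has i.i.d.\ $\mathcal N(0,\beta_l^2/n_{l-1})$ entries, so by Theorem 4.4.5 of \cite{vershynin2018high} (as already invoked in \eqref{eq:opnmW1} in the proof of Lemma \ref{lemma:lipschitzconst}), $\opnorm{W_l(\theta_0)} \leq C\beta_l(\sqrt{n_{l-1}}+2\sqrt{n_l})/\sqrt{n_{l-1}} = \bigO{1}$ with probability at least $1-2\exp(-n_l)$. For $l=L-1$, by construction $W_{L-1}(\theta_0) = [W_{L-1,0}^{(1)},\, W_{L-1,0}^{(1)}]$, so $\opnorm{W_{L-1}(\theta_0)} \leq \sqrt{2}\,\opnorm{W_{L-1,0}^{(1)}}$ and the same matrix concentration gives $\bigO{1}$ with probability at least $1-2\exp(-n_{L-1})$. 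By iterating Assumption \ref{ass:topology}, we have $n_l \geq c' n_{L-1}$ for a constant $c'$ depending only on $L$, so in every case the probability can be written as $1-2\exp(-cn_{L-1})$.

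Second I would control the fluctuation inside $\mathcal D$. For any $\theta \in \mathcal D$,
\begin{equation}
\opnorm{W_l(\theta)-W_l(\theta_0)} \leq \|W_l(\theta)-W_l(\theta_0)\|_F \leq \|\theta-\theta_0\|_2 \leq R,
\end{equation}
since the entries of $W_l(\theta)-W_l(\theta_0)$ are a subset of the coordinates of $\theta-\theta_0$. For $l=L-1$ this uses that $W_{L-1}^{(1)}$ and $W_{L-1}^{(2)}$ are independent coordinates of $\theta$, which are only coupled at the initial point. Combining the two steps via the triangle inequality yields
\begin{equation}
\sup_{\theta\in\mathcal D}\opnorm{W_l(\theta)} \leq \opnorm{W_l(\theta_0)} + R = \bigO{1} + o(1) = \bigO{1},
\end{equation}
which is the claim. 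There is no genuine obstacle here: the argument is a combination of standard Gaussian matrix concentration and the elementary bound $\|\cdot\|_{\op}\leq\|\cdot\|_F$; the only minor subtlety is the factor of $\sqrt 2$ stemming from the symmetrized block structure of $W_{L-1}$ at initialization.
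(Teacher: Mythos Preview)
Your proof is correct and follows essentially the same approach as the paper: triangle inequality on the operator norm, Gaussian matrix concentration (Theorem 4.4.5 of \cite{vershynin2018high}) for $\opnorm{W_l(\theta_0)}$, and the bound $\opnorm{\cdot}\le\|\cdot\|_F\le\|\theta-\theta_0\|_2\le R=o(1)$ for the fluctuation. The paper handles $l=L-1$ by bounding each block $W_{L-1}^{(k)}$ separately rather than via your $\sqrt{2}$ factor, but this is a cosmetic difference.
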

\begin{proof}
By Weyl's theorem, we have that, for all $l \in [L-2]$,
\begin{equation}
\begin{aligned}
    \sup_{\theta\in\mathcal D}\opnorm{W_l(\theta)} &\leq \opnorm{W_l(\theta_0)} + \sup_{\theta\in\mathcal D}\opnorm{W_l(\theta) - W_l(\theta_0)} \\
    &\leq \opnorm{W_l(\theta_0)} + \sup_{\theta\in\mathcal D}\norm{W_l(\theta) - W_l(\theta_0)}_F \\
    &\leq \opnorm{W_l(\theta_0)} + \sup_{\theta\in\mathcal D}\norm{\theta - \theta_0}_2 \\
    &= \opnorm{W_l(\theta_0)} + o(1)\\
    &= \bigO{1},
\end{aligned}
\end{equation}
where in the fourth line we use that $R=o(1)$, and the result of the last line holds with probability at least $1 - 2 \exp(-c n_{L-1})$ over $W_l(\theta_0)$ by Theorem 4.4.5 of \cite{vershynin2018high}. By following the same argument, we have that, with probability at least $1 - 2 \exp(-c n_{L-1})$ over $W_{L-1}(\theta_0)$,
\begin{equation}
\sup_{\theta\in\mathcal D}\opnorm{W_{L-1}^{(k)}(\theta)}= \bigO{1}, \quad \mbox{for } k\in\{1, 2\},
\end{equation}
which readily implies that $\sup_{\theta\in\mathcal D}\opnorm{W_{L-1}(\theta)}=\bigO{1}$ and concludes the proof.
\end{proof}

\begin{lemma}[Distance of features in $\mathcal D$ from initialization]\label{lemma:featuresinball1}
Let $\theta_0$ be defined in \eqref{eq:theta0}, $x \sim P_X$, $\mathcal D = \mathcal B(\theta_0, R)$ and assume that $R = o(1)$. Then, for any $0 \leq l \leq L-1$, we have 
    \begin{equation}\label{eq:featb}
        \sup_{\theta\in\mathcal D}\norm{f_l(\theta, x) - f_l(\theta_0, x)}_2 \le C\,R\, \sqrt{d},
    \end{equation}
    with probability at least $1 - C \exp (-c n_{L-1})$ over $(W_k(\theta_0))_{k=1}^l$ and $x$, where $c, C$ are numerical constants.
\end{lemma}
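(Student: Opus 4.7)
The plan is to proceed by induction on the layer index $l$, with the base case $l=0$ being trivial since $f_0(\theta,x)=x$ does not depend on $\theta$ and the LHS of \eqref{eq:featb} vanishes. Throughout the argument, I treat $L$ as a constant so that any union bound over layers only inflates numerical constants.

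For the inductive step, I will use $f_l(\theta,x)=\phi(W_l(\theta)^\top f_{l-1}(\theta,x))$ together with the $M$-Lipschitz property of $\phi$ from Assumption \ref{ass:activationfunc}, and then add and subtract $W_l(\theta_0)^\top f_{l-1}(\theta,x)$ to obtain
\begin{equation}
\norm{f_l(\theta,x)-f_l(\theta_0,x)}_2
\le M\opnorm{W_l(\theta)-W_l(\theta_0)}\norm{f_{l-1}(\theta,x)}_2
+M\opnorm{W_l(\theta_0)}\norm{f_{l-1}(\theta,x)-f_{l-1}(\theta_0,x)}_2.
\end{equation}
The first operator norm is controlled by $\opnorm{W_l(\theta)-W_l(\theta_0)}\le\norm{W_l(\theta)-W_l(\theta_0)}_F\le\norm{\theta-\theta_0}_2\le R$ for every $\theta\in\mathcal D$. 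The factor $\opnorm{W_l(\theta_0)}$ is $\mathcal O(1)$ with probability at least $1-2\exp(-cn_{L-1})$ by Lemma \ref{lemma:optW} (applied at $\theta_0$, which is equivalent to the standard initialization up to the layer-$L{-}1$ duplication that preserves the pyramidal assumption).

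For the two remaining $\ell_2$ norms I use Lemma \ref{lemma:concnorm} to obtain $\norm{f_{l-1}(\theta_0,x)}_2=\Theta(\sqrt{n_{l-1}})=\mathcal O(\sqrt d)$ with probability $1-C\exp(-cn_{L-1})$ (using $n_{l-1}=\mathcal O(d)$ from Assumption \ref{ass:topology}), and the inductive hypothesis to get $\sup_{\theta\in\mathcal D}\norm{f_{l-1}(\theta,x)-f_{l-1}(\theta_0,x)}_2\le C_{l-1}R\sqrt d$, which is $o(\sqrt d)$ since $R=o(1)$; by the triangle inequality, this yields $\sup_{\theta\in\mathcal D}\norm{f_{l-1}(\theta,x)}_2=\mathcal O(\sqrt d)$. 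Plugging these estimates back gives
\begin{equation}
\sup_{\theta\in\mathcal D}\norm{f_l(\theta,x)-f_l(\theta_0,x)}_2
\le MR\cdot\mathcal O(\sqrt d)+M\cdot\mathcal O(1)\cdot C_{l-1}R\sqrt d
= C_lR\sqrt d,
\end{equation}
which closes the induction. Finally, I take a union bound over the $L-1$ layers of the events coming from Lemmas \ref{lemma:optW} and \ref{lemma:concnorm}, absorbing $L$ into the numerical constant $C$; this yields the advertised failure probability $C\exp(-cn_{L-1})$ over $(W_k(\theta_0))_{k=1}^l$ and $x$.

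I do not foresee a serious obstacle: all the ingredients (Lipschitz activation, uniform control of the weight operator norms on $\mathcal D$ from Lemma \ref{lemma:optW}, and the concentration of $\norm{f_{l-1}(\theta_0,x)}_2$ from Lemma \ref{lemma:concnorm}) are already established, and the recursion is linear in $R\sqrt d$ with constants that stay $\mathcal O(1)$ in $L$. The only mild care needed is to verify that the bounds are uniform in $\theta\in\mathcal D$, which is automatic because $\opnorm{W_l(\theta)-W_l(\theta_0)}\le R$ is a deterministic bound and Lemma \ref{lemma:optW} is itself stated as a supremum over $\mathcal D$.
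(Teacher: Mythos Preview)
Your proposal is correct and follows essentially the same inductive argument as the paper, relying on the same ingredients (Lipschitz $\phi$, Lemma \ref{lemma:optW}, and Lemma \ref{lemma:concnorm}). The only cosmetic difference is which intermediate term is added and subtracted: the paper inserts $(W_l(\theta))^\top f_{l-1}(\theta_0,x)$ so that $\norm{f_{l-1}(\theta_0,x)}_2$ appears directly, whereas you insert $W_l(\theta_0)^\top f_{l-1}(\theta,x)$ and then use the inductive hypothesis once more (via the triangle inequality) to bound $\sup_{\theta\in\mathcal D}\norm{f_{l-1}(\theta,x)}_2$; both routes lead to the same $C_l R\sqrt d$ recursion.
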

\begin{proof}
We prove the claim by induction over $l$. For the base case, we have $f_0(\theta, x) = f_0(\theta_0, x)$, hence \eqref{eq:featb} holds with probability 1.

For the induction case, let $l > 0$. Then,
\begin{equation}\label{eq:lip00}
\begin{aligned}
\sup_{\theta\in\mathcal D}\norm{f_l(\theta, x) - f_l(\theta_0, x)}_2 &= \sup_{\theta\in\mathcal D}\norm{\phi\left( (W_l(\theta))^\top f_{l-1}(\theta, x )\right) - \phi\left( (W_l(\theta_0))^\top f_{l-1}(\theta_0, x)\right)}_2 \\
&\leq M\sup_{\theta\in\mathcal D}\norm{ (W_l(\theta))^\top f_{l-1}(\theta, x)-  (W_l(\theta_0))^\top f_{l-1}(\theta_0, x)}_2 \\
&\leq M\sup_{\theta\in\mathcal D}\norm{ (W_l(\theta))^\top f_{l-1}(\theta, x) -  (W_l(\theta))^\top f_{l-1}(\theta_0, x)}_2 \\
& \hspace{0.5cm} + M\sup_{\theta\in\mathcal D}\norm{ (W_l(\theta))^\top f_{l-1}(\theta_0, x)-  (W_l(\theta_0))^\top f_{l-1}(\theta_0, x)}_2\\
&\leq M\sup_{\theta\in\mathcal D}\opnorm{W_l(\theta)} \sup_{\theta\in\mathcal D}\norm{f_{l-1}(\theta, x) -  f_{l-1}(\theta_0, x)}_2 \\
& \hspace{0.5cm} + M\sup_{\theta\in\mathcal D}\opnorm{W_l(\theta) - W_l(\theta_0)} \norm{f_{l-1}(\theta_0, x)}_2.
\end{aligned}
\end{equation}
By Lemma \ref{lemma:optW}, we have that
\begin{equation}\label{eq:lip0}
\sup_{\theta\in\mathcal D}\opnorm{W_l(\theta)} = \bigO{1},    
\end{equation}
with probability at least $1 - 2 \exp(-c n_{L-1})$ over $W_l(\theta_0)$. By inductive hypothesis, we have
\begin{equation}\label{eq:lip1}
 \sup_{\theta\in\mathcal D}\norm{f_{l-1}(\theta, x) -  f_{l-1}(\theta_0, x)}_2 \le C \,R\,\sqrt{d},   
\end{equation}
with probability at least $1 - C\exp(-c n_{L-1})$ over $(W_k(\theta_0))_{k=1}^{l-1}$ and $x$. Clearly, we also have that
\begin{equation}\label{eq:lip2}
 \sup_{\theta\in\mathcal D}\opnorm{W_l(\theta) - W_l(\theta_0)} \leq \sup_{\theta\in\mathcal D}\norm{W_l(\theta) - W_l(\theta_0)}_F \leq \sup_{\theta\in\mathcal D}\norm{\theta - \theta_0} \le R.  
\end{equation}
Furthermore, an application of Lemma \ref{lemma:concnorm} gives that
\begin{equation}\label{eq:lip3}
    \norm{f_{l-1}(\theta_0, x)}_2 = \Theta(\sqrt{n_{l-1}}) = \bigO{\sqrt{d}},
\end{equation}
with probability at least $1 - C \exp(-c n_{L-1})$ over $(W_k(\theta_0))_{k=1}^{l-1}$ and $x$. 
By combining \eqref{eq:lip00}, \eqref{eq:lip0}, \eqref{eq:lip1}, \eqref{eq:lip2} and \eqref{eq:lip3}, we obtain that
\begin{equation}
\begin{aligned}
 \sup_{\theta\in\mathcal D}\norm{f_l(\theta, x) - f_l(\theta_0, x)}_2 &\leq C\,R\sqrt{d},
\end{aligned}
\end{equation}
with probability at least $1 - C \exp (-c n_{L-1})$ over $(W_k(\theta_0))_{k=1}^l$ and $x$, which completes the proof.
\end{proof}

\begin{lemma}[$\ell_2$ norm and $\ell_2$ distance of features in $\mathcal D$]\label{lemma:featuresinball2}
Let $\theta_0$ be defined in \eqref{eq:theta0}, $x \sim P_X$, $\mathcal D = \mathcal B(\theta_0, R)$ and assume that $R = o(1)$. Then, for any $0 \leq l \leq L-1$, we have
    \begin{equation}\label{eq:firststat1}
     \sup_{\theta\in\mathcal D}   \norm{f_l(\theta, x)}_2 = \bigO{\sqrt{d}},
    \end{equation}
    with probability at least $1 - C \exp(-c n_{L-1})$ over $(W_k(\theta_0))_{k=1}^l$ and $x$, where $c,C$ are numerical constants. Furthermore,
    \begin{equation}\label{eq:statind}
      \sup_{\theta_1, \theta_2\in\mathcal D}  \norm{f_l(\theta_1, x) - f_l(\theta_2, x)}_2 \le C\,R\,\sqrt{d},
    \end{equation}
    with probability at least $1 - C \exp (-c n_{L-1})$ over $(W_k(\theta_0))_{k=1}^l$ and $x$.
\end{lemma}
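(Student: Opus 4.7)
The plan is to deduce both statements directly from Lemma \ref{lemma:featuresinball1} via triangle inequality, rather than carrying out a fresh induction. Both bounds reduce to combining the established deviation bound around $\theta_0$ with a good control on the feature norm at the initialization itself.

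For the first statement, I would write
\[
\sup_{\theta\in\mathcal D}\norm{f_l(\theta,x)}_2 \le \norm{f_l(\theta_0,x)}_2 + \sup_{\theta\in\mathcal D}\norm{f_l(\theta,x)-f_l(\theta_0,x)}_2.
\]
Lemma \ref{lemma:concnorm} applied to $\theta_0$ (whose weight distribution before the duplication at layer $L-1$ is exactly the one required by that lemma) gives $\norm{f_l(\theta_0,x)}_2 = \Theta(\sqrt{n_l})$ with the claimed probability. Iterating the loose pyramidal condition $n_l = \bigO{n_{l-1}}$ from Assumption \ref{ass:topology} down to $n_0=d$ yields $n_l = \bigO{d}$, so this term is $\bigO{\sqrt d}$. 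The second term is $\bigO{R\sqrt d} = \littleO{\sqrt d}$ by Lemma \ref{lemma:featuresinball1} and the assumption $R=o(1)$. For the case $l=L-1$, one should briefly observe that although the initialization \eqref{eq:theta0} doubles the width at layer $L-1$, the duplicated neurons $W_{L-1,0}^{(2)}=W_{L-1,0}^{(1)}$ just produce two copies of the same feature vector, so the $\ell_2$ norm of $f_{L-1}(\theta_0,x)$ is $\sqrt 2$ times that of a standardly initialized layer, still $\Theta(\sqrt{n_{L-1}})=\bigO{\sqrt d}$.

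For the second statement, I would apply the triangle inequality through $\theta_0$:
\[
\norm{f_l(\theta_1,x)-f_l(\theta_2,x)}_2 \le \norm{f_l(\theta_1,x)-f_l(\theta_0,x)}_2 + \norm{f_l(\theta_2,x)-f_l(\theta_0,x)}_2,
\]
and bound both summands on the right by $CR\sqrt d$ via Lemma \ref{lemma:featuresinball1}. Taking the supremum over $\theta_1,\theta_2\in\mathcal D$ and absorbing the factor of $2$ into $C$ gives \eqref{eq:statind}.

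There is no real obstacle here; the only thing to be careful about is bookkeeping. Both invocations of Lemma \ref{lemma:featuresinball1} (and of Lemma \ref{lemma:concnorm} for the first part) are conditioned on the same randomness --- the initial weights $(W_k(\theta_0))_{k=1}^l$ and the sample $x$ --- so a single union bound over the failure events of those lemmas gives the final probability estimate $1 - C\exp(-cn_{L-1})$, matching the statement. All the hard work of propagating Lipschitz constants layer by layer has already been done in Lemma \ref{lemma:featuresinball1}, so the present lemma is essentially a corollary.
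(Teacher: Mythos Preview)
Your argument for the first statement \eqref{eq:firststat1} matches the paper's proof exactly: triangle inequality through $\theta_0$, then Lemma~\ref{lemma:concnorm} plus Lemma~\ref{lemma:featuresinball1}.

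For the second statement \eqref{eq:statind}, your route differs from the paper's. The paper carries out a fresh induction over $l$, essentially replaying the structure of Lemma~\ref{lemma:featuresinball1} with $\theta_1,\theta_2$ in place of $\theta,\theta_0$: it expands via the Lipschitz property of $\phi$, splits $(W_l(\theta_1))^\top f_{l-1}(\theta_1,x)-(W_l(\theta_2))^\top f_{l-1}(\theta_2,x)$ into two pieces, and invokes Lemma~\ref{lemma:optW}, the inductive hypothesis, and the already-proved first statement. Your approach instead observes that the triangle inequality through $\theta_0$ reduces \eqref{eq:statind} to two applications of the \emph{same} high-probability event from Lemma~\ref{lemma:featuresinball1}, so no new induction or union bound is needed. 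This is strictly simpler and correct; the paper's induction is redundant given that Lemma~\ref{lemma:featuresinball1} already did the layerwise work. Your added remark about the duplicated neurons at layer $L-1$ is also a detail the paper leaves implicit.
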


\begin{proof}
The first statement follows from the chain of inequalities below:
\begin{equation}
   \sup_{\theta\in\mathcal D}  \norm{f_l(\theta, x)}_2 \leq  \norm{f_l(\theta_0, x)}_2 + \sup_{\theta\in\mathcal D}\norm{f_l(\theta) - f_l(\theta_0)}_2 \le C\sqrt{n_l} + C\,R\sqrt{d},
\end{equation}
where the second inequality holds with probability at least $1 - C \exp(-c n_{L-1})$ over $(W_k(\theta_0))_{k=1}^l$ and $x$ by combining Lemma \ref{lemma:concnorm} and Lemma \ref{lemma:featuresinball1}.

We prove the second statement by induction over $l$. For the base case, we have $f_0(\theta_1, x) = f_0(\theta_2, x)$, hence \eqref{eq:statind} holds with probability 1.

For the induction case, let $l > 0$. Then,
\begin{equation}\label{eq:lip00x}
\begin{aligned}
\sup_{\theta_1, \theta_2\in\mathcal D}\norm{f_l(\theta_1, x) \hspace{-.2em}-\hspace{-.2em} f_l(\theta_2, x)}_2 \hspace{-.2em}&=\hspace{-.5em} \sup_{\theta_1, \theta_2\in\mathcal D}\hspace{-.2em}\norm{\phi\left( (W_l(\theta_1))^\top \hspace{-.1em}f_{l-1}(\theta_1, x )\right) \hspace{-.2em}- \hspace{-.2em}\phi\left( (W_l(\theta_2))^\top\hspace{-.1em} f_{l-1}(\theta_2, x)\right)}_2 \\
&\leq M\sup_{\theta_1, \theta_2\in\mathcal D}\norm{ (W_l(\theta_1))^\top f_{l-1}(\theta_1, x)-  (W_l(\theta_2))^\top f_{l-1}(\theta_2, x)}_2 \\
&\leq M\sup_{\theta_1, \theta_2\in\mathcal D}\norm{ (W_l(\theta_1))^\top f_{l-1}(\theta_1, x) -  (W_l(\theta_1))^\top f_{l-1}(\theta_2, x)}_2 \\
& \hspace{0.25cm} +M \sup_{\theta_1, \theta_2\in\mathcal D}\norm{ (W_l(\theta_1))^\top f_{l-1}(\theta_2, x)-  (W_l(\theta_2))^\top f_{l-1}(\theta_2, x)}_2\\
&\leq M\sup_{\theta_1\in\mathcal D}\opnorm{W_l(\theta_1)} \sup_{\theta_1, \theta_2\in\mathcal D}\norm{f_{l-1}(\theta_1, x) -  f_{l-1}(\theta_2, x)}_2 \\
& \hspace{0.25cm} +M \sup_{\theta_1, \theta_2\in\mathcal D}\opnorm{W_l(\theta_1) - W_l(\theta_2)}\sup_{\theta_2\in\mathcal D} \norm{f_{l-1}(\theta_2, x)}_2.
\end{aligned}
\end{equation}
By Lemma \ref{lemma:optW}, we have that
\begin{equation}\label{eq:lip0x}
\sup_{\theta_1\in\mathcal D}\opnorm{W_l(\theta_1)} = \bigO{1}, 
\end{equation}
with probability at least $1 - 2 \exp(-c n_{L-1})$ over $W_l(\theta_0)$. By inductive hypothesis, we have
\begin{equation}\label{eq:lip1x}
 \sup_{\theta_1, \theta_2\in\mathcal D}\norm{f_{l-1}(\theta_1, x) -  f_{l-1}(\theta_2, x)}_2 \le C \,R\,\sqrt{d},   
\end{equation}
with probability at least $1 - C\exp(-c n_{L-1})$ over $(W_k(\theta_0))_{k=1}^{l-1}$ and $x$. Clearly, we also have that
\begin{equation}\label{eq:lip2x}
 \sup_{\theta_1, \theta_2\in\mathcal D}\opnorm{W_l(\theta_1) - W_l(\theta_2)} \leq \sup_{\theta_1, \theta_2\in\mathcal D}\norm{W_l(\theta_1) - W_l(\theta_2)}_F \leq \sup_{\theta_1, \theta_2\in\mathcal D}\norm{\theta_1 - \theta_2} \le R.  
\end{equation}
Furthermore, by using \eqref{eq:firststat1}, we have that
\begin{equation}\label{eq:lip3x}
\sup_{\theta_2\in\mathcal D}    \norm{f_{l-1}(\theta_2, x)}_2 \le C\,R\,\sqrt{d},
\end{equation}
with probability at least $1 - C \exp(-c n_{L-1})$ over $(W_k(\theta_0))_{k=1}^{l-1}$ and $x$. By combining \eqref{eq:lip00x}, \eqref{eq:lip0x}, \eqref{eq:lip1x}, \eqref{eq:lip2x} and \eqref{eq:lip3x}, we obtain that
\begin{equation}
\begin{aligned}
 \sup_{\theta_1, \theta_2\in\mathcal D}\norm{f_l(\theta_1, x) - f_l(\theta_2, x)}_2 &\leq C\,R\sqrt{d},
\end{aligned}
\end{equation}
with probability at least $1 - C \exp (-c n_{L-1})$ over $(W_k(\theta_0))_{k=1}^l$ and $x$, which completes the proof.
\end{proof}

\begin{lemma}[$\ell_2$ norm of backpropagation in $\mathcal D$]\label{lemma:binball}
Let $\theta_0$ be defined in \eqref{eq:theta0}, $x \sim P_X$, and $\mathcal D = \mathcal B(\theta_0, R)$. Assume that $R = o(1)$ and that $\gamma>1$. Then, for any $l \in [L]$, we have
    \begin{equation}\label{eq:statbin}
    \sup_{\theta\in\mathcal D}     \norm{b_l(\theta, x)}_2 \le C\sqrt{\gamma \cdot n_{L-1}},
    \end{equation}
    with probability at least $1 - C \exp (-c n_{L-1})$ over $(W_k(\theta_0))_{k = l+1}^L$, where $c, C$ are numerical constants.
\end{lemma}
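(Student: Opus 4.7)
The plan is to prove the bound by backward induction on $l$, starting from $l=L$ and decreasing to $l=1$. The base case $l=L$ is immediate from the definition $b_L(\theta,x)=1$, together with the assumption $\gamma > 1$ and $n_{L-1}\geq 1$. For $l=L-1$, we use $b_{L-1}(\theta,x)=\Sigma_{L-1}(\theta,x)\,W_L(\theta)$ and the submultiplicativity of the operator norm, which gives $\|b_{L-1}(\theta,x)\|_2 \leq M\,\|W_L(\theta)\|_2$, since $\|\phi'\|_\infty \leq M$ under Assumption \ref{ass:activationfunc}. Thus the whole task reduces to controlling $\sup_{\theta\in\mathcal D}\|W_L(\theta)\|_2$.

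For the bound on $W_L$, observe that at initialization $W_L(\theta_0)$ is a Gaussian vector in $\mathbb R^{2n_{L-1}}$ with coordinate variance $\beta_L^2\,\gamma$, so by standard Gaussian concentration (e.g., Theorem 3.1.1 of \cite{vershynin2018high}), $\|W_L(\theta_0)\|_2 \leq C\sqrt{\gamma\,n_{L-1}}$ with probability at least $1-2\exp(-c n_{L-1})$. For any $\theta\in\mathcal D$, $\|W_L(\theta)-W_L(\theta_0)\|_2\leq \|\theta-\theta_0\|_2\leq R=o(1)$, which is negligible compared to $\sqrt{\gamma\,n_{L-1}}\geq 1$. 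Hence $\sup_{\theta\in\mathcal D}\|W_L(\theta)\|_2 \leq C\sqrt{\gamma\,n_{L-1}}$ on the same event, which settles the case $l=L-1$.

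For the inductive step with $l\leq L-2$, I would use the recursion $b_l(\theta,x)=\Sigma_l(\theta,x)\,W_{l+1}(\theta)\,b_{l+1}(\theta,x)$, which yields
\begin{equation*}
\sup_{\theta\in\mathcal D}\|b_l(\theta,x)\|_2 \;\leq\; M\,\sup_{\theta\in\mathcal D}\opnorm{W_{l+1}(\theta)}\;\sup_{\theta\in\mathcal D}\|b_{l+1}(\theta,x)\|_2.
\end{equation*}
Lemma \ref{lemma:optW} provides $\sup_{\theta\in\mathcal D}\opnorm{W_{l+1}(\theta)}=\mathcal O(1)$ with probability at least $1-2\exp(-c n_{L-1})$ over $W_{l+1}(\theta_0)$, which combined with the inductive hypothesis and a union bound over the (constantly many) layers gives the claimed estimate $C\sqrt{\gamma\,n_{L-1}}$.

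The only potential subtlety is at layer $L-1$, because the initialization \eqref{eq:theta0} duplicates the columns of $W_{L-1}$, so that $W_{L-1}(\theta_0)=[W_{L-1,0}^{(1)},W_{L-1,0}^{(1)}]$. However, this merely inflates the operator norm by at most a factor of $\sqrt 2$, so $\opnorm{W_{L-1}(\theta_0)}=\mathcal O(1)$ still holds with high probability by Theorem 4.4.5 of \cite{vershynin2018high}, and the perturbation argument of Lemma \ref{lemma:optW} (with $R=o(1)$) transfers to this non-standard initialization without modification. All other layers $l+1\in[L-2]$ fall directly under Lemma \ref{lemma:optW}. Collecting the bounds completes the induction and yields the statement with the claimed probability over $(W_k(\theta_0))_{k=l+1}^L$.
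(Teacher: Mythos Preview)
Your proposal is correct and follows essentially the same approach as the paper: both argue by backward induction from $l=L$, use $\opnorm{\Sigma_k}\le M$, invoke Lemma \ref{lemma:optW} for $\opnorm{W_{l+1}(\theta)}$ with $l+1\in[L-1]$, and control $\|W_L(\theta)\|_2$ via Gaussian concentration at $\theta_0$ plus the $R=o(1)$ perturbation. The only cosmetic difference is that the paper unrolls the full product $\prod_{k=l}^{L-1}\Sigma_k W_{k+1}$ in one shot rather than using the one-step recursion $b_l=\Sigma_l W_{l+1}b_{l+1}$, and your explicit remark about the duplicated columns in $W_{L-1}(\theta_0)$ is already absorbed into the paper's statement of Lemma \ref{lemma:optW}.
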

\begin{proof}
We prove the claim by induction on $l\in \{L, L-1, \ldots, 1\}$. For the base case, we have that $\norm{b_L(\theta, x)}_2 = 1$, hence \eqref{eq:statbin} clearly holds. 

For the induction case, pick $l \in [L-1]$. Then,
\begin{equation}
\begin{aligned}
\sup_{\theta\in\mathcal D}     \norm{b_l(\theta, x)}_2 &= \sup_{\theta\in\mathcal D}\norm{\left( \prod_{k = l}^{L-2} \Sigma_{k}(\theta, x) W_{k+1}(\theta)\right) \Sigma_{L-1}(\theta, x) W_L(\theta)}_2 \\
    &\leq \sup_{\theta\in\mathcal D}  \opnorm{\left(\prod_{k = l}^{L-2}\Sigma_{k}(\theta, x) W_{k+1}(\theta)\right) \Sigma_{L-1}(\theta, x)}  \sup_{\theta\in\mathcal D}\norm{W_L(\theta)}_2 \\
    &\leq  \left( \prod_{k = l}^{L-2}\sup_{\theta\in\mathcal D}\opnorm{\Sigma_{k}(\theta, x)} \sup_{\theta\in\mathcal D}\opnorm{W_{k+1}(\theta)}\right)\sup_{\theta\in\mathcal D} \opnorm{\Sigma_{L-1}(\theta, x)} \sup_{\theta\in\mathcal D}\norm{W_L(\theta)}_2 \\
    &\leq M^{L-l} \left(\prod_{k=l+1}^{L-1} \sup_{\theta\in\mathcal D}\opnorm{W_{k}(\theta)} \right)\left(\norm{W_L(\theta_0)}_2 +\sup_{\theta\in\mathcal D} \norm{W_L(\theta) - W_L(\theta_0)}_2\right) \\
    &\leq C\, M^{L-l} (\norm{W_L(\theta_0)}_2 + \sup_{\theta\in\mathcal D}\norm{\theta - \theta_0}_2) \\
    &\leq C\,M^{L-l}  (\sqrt{\gamma n_{L-1}} + \sup_{\theta\in\mathcal D}\norm{\theta - \theta_0}_2) \\
    &= C\sqrt{\gamma n_{L-1}}.
\end{aligned}
\end{equation}
Here, the fourth line follows from Assumption \ref{ass:activationfunc}, which gives $\sup_{\theta\in\mathcal D}\opnorm{\Sigma_k(\theta, x)} \leq M$; the fifth line holds with probability $1 - C \exp (-c n_{L-1})$ over $(W_k(\theta_0))_{k=l+1}^{L-1}$ by Lemma \ref{lemma:optW}; the sixth line holds with probability at least $1 - \exp (-c n_{L-1})$ over $W_L(\theta_0)$ by Theorem 3.1.1 in \cite{vershynin2018high}; and the last line follows from $R=o(1)$. Taking the intersection of these events gives the desired result.
\end{proof}

\begin{lemma}[$\ell_2$ distance of backpropagation in $\mathcal D$]\label{lemma:binball2}
Let $\theta_0$ be defined in \eqref{eq:theta0}, $x \sim P_X$, and $\mathcal D = \mathcal B(\theta_0, R)$. Assume that $R = o(1)$ and that $\gamma>1$. Then, for any $l \in [L]$, we have
    \begin{equation}\label{eq:statbinbac}
    \sup_{\theta_1, \theta_2\in\mathcal D}     \norm{b_l(\theta_1, x) - b_l(\theta_2, x)}_2 \le C\, R\, \sqrt{\gamma d n_{L-1}},
    \end{equation}
    with probability at least $1 - C \exp(-cn_{L-1})$ over $(W_k(\theta_0))_{k = l+ 1}^L$ and $x$, where $c,C$ are numerical constants.
\end{lemma}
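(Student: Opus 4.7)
The plan is to prove the statement by downward induction on $l$, exploiting the recurrence $b_l(\theta,x) = \Sigma_l(\theta,x)\,W_{l+1}(\theta)\,b_{l+1}(\theta,x)$ for $l\in\{0,1,\ldots,L-1\}$ with the convention $b_L(\theta,x)=1$. The base case $l=L$ is trivial since $b_L(\theta_1,x)-b_L(\theta_2,x)=0$. For the inductive step at level $l$, I will write
\begin{equation*}
b_l(\theta_1,x)-b_l(\theta_2,x) = T_1 + T_2 + T_3,
\end{equation*}
where
\begin{align*}
T_1 &= [\Sigma_l(\theta_1,x)-\Sigma_l(\theta_2,x)]\,W_{l+1}(\theta_1)\,b_{l+1}(\theta_1,x), \\
T_2 &= \Sigma_l(\theta_2,x)\,[W_{l+1}(\theta_1)-W_{l+1}(\theta_2)]\,b_{l+1}(\theta_1,x), \\
T_3 &= \Sigma_l(\theta_2,x)\,W_{l+1}(\theta_2)\,[b_{l+1}(\theta_1,x)-b_{l+1}(\theta_2,x)].
\end{align*}

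The main ingredients are standard operator-norm bounds. By Assumption \ref{ass:activationfunc}, $\opnorm{\Sigma_l(\theta,x)}\le M$, and since $\phi'$ is $M'$-Lipschitz and $\Sigma_l$ is diagonal, $\opnorm{\Sigma_l(\theta_1,x)-\Sigma_l(\theta_2,x)}\le M'\,\norm{g_l(\theta_1,x)-g_l(\theta_2,x)}_2$. Expanding $g_l = W_l^\top f_{l-1}$, adding and subtracting an intermediate term, and invoking Lemmas \ref{lemma:optW} and \ref{lemma:featuresinball2}, this pre-activation difference is $\bigO{R\sqrt d}$. The weight difference gives $\opnorm{W_{l+1}(\theta_1)-W_{l+1}(\theta_2)}\le \norm{\theta_1-\theta_2}_2\le R$. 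For the weight norms themselves and the backpropagation norms, I will distinguish two regimes: if $l+1\le L-1$, then Lemma \ref{lemma:optW} gives $\opnorm{W_{l+1}(\theta)}=\bigO{1}$ and Lemma \ref{lemma:binball} gives $\norm{b_{l+1}(\theta_1,x)}_2=\bigO{\sqrt{\gamma n_{L-1}}}$, so $T_1 = \bigO{R\sqrt{\gamma d n_{L-1}}}$, $T_2=\bigO{R\sqrt{\gamma n_{L-1}}}$, and $T_3$ is $\bigO{R\sqrt{\gamma d n_{L-1}}}$ by induction. If $l+1=L$, then $W_L$ is a vector with $\norm{W_L(\theta)}_2=\bigO{\sqrt{\gamma n_{L-1}}}$ (Theorem 3.1.1 of \cite{vershynin2018high}) and $b_L\equiv 1$, so $T_3=0$, while $T_1$ and $T_2$ are bounded in the same manner and yield the claimed rate. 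In all cases, summing the three contributions gives $\norm{b_l(\theta_1,x)-b_l(\theta_2,x)}_2\le C R\sqrt{\gamma d n_{L-1}}$, closing the induction.

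The probabilistic bookkeeping is straightforward but needs care: at each induction step I condition on the intersection of the events from Lemmas \ref{lemma:optW}, \ref{lemma:featuresinball2}, \ref{lemma:binball}, and on $\norm{W_L(\theta_0)}_2=\bigO{\sqrt{\gamma n_{L-1}}}$, each of which fails with probability at most $C\exp(-cn_{L-1})$ over $(W_k(\theta_0))_{k=l+1}^{L}$ and $x$. Taking a union bound across the $L-l$ inductive steps (absorbing $L$ into the constant $C$, as throughout the paper) yields the desired failure probability $C\exp(-cn_{L-1})$. The main (minor) obstacle is handling the boundary case $l+1=L$ consistently with the generic inductive step, since $W_L$ has a different scaling than the earlier weights; this is why the $\sqrt\gamma$ factor propagates all the way through and appears in the final bound. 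No new tail estimate beyond those already established is needed.
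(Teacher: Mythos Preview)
Your proposal is correct and follows essentially the same approach as the paper: downward induction on $l$ with a three-term telescoping of $\Sigma_l W_{l+1} b_{l+1}$, bounding each piece via Lemmas~\ref{lemma:optW}, \ref{lemma:featuresinball2}, and \ref{lemma:binball}. Your telescoping order differs slightly (you vary $\Sigma_l$, then $W_{l+1}$, then $b_{l+1}$, whereas the paper varies $b_{l+1}$ first), and you handle the boundary case $l+1=L$ more explicitly than the paper does, but these are cosmetic differences.
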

\begin{proof}
We prove the claim by induction on $l\in \{L, L-1, \ldots, 1\}$. For the base case, $b_L(\theta, x)$ does not depend on $\theta$, hence \eqref{eq:statbinbac} clearly holds. 
For the induction case, pick $l\in [L-1]$. Then,
\begin{equation}\label{eq:i1}
    \begin{aligned}
\sup_{\theta_1, \theta_2\in\mathcal D}   &     \norm{b_l(\theta_1, x) - b_l(\theta_2, x)}_2 \\
&= \sup_{\theta_1, \theta_2\in\mathcal D}\norm{\Sigma_l(\theta_1, x) W_{l+1}(\theta_1) b_{l + 1}(\theta_1, x) - \Sigma_l(\theta_2, x) W_{l+1}(\theta_2) b_{l + 1}(\theta_2, x)}_2\\
        &\leq \sup_{\theta_1, \theta_2\in\mathcal D}\norm{\Sigma_l(\theta_1, x) W_{l+1}(\theta_1) b_{l + 1}(\theta_1, x) - \Sigma_l(\theta_1, x) W_{l+1}(\theta_1) b_{l + 1}(\theta_2, x)}_2 \\
        & \hspace{0.5cm} + \sup_{\theta_1, \theta_2\in\mathcal D}\norm{\Sigma_l(\theta_1, x) W_{l+1}(\theta_1) b_{l + 1}(\theta_2, x) - \Sigma_l(\theta_2, x) W_{l+1}(\theta_2) b_{l + 1}(\theta_2, x)}_2\\
        &\leq \sup_{\theta_1\in\mathcal D}\opnorm{\Sigma_l(\theta_1, x)} \sup_{\theta_1\in\mathcal D}\opnorm{W_{l+1}(\theta_1)} \sup_{\theta_1, \theta_2\in\mathcal D}\norm{b_{l+1}(\theta_1, x) - b_{l+1}(\theta_2, x)}_2 \\
        & \hspace{0.5cm} + \sup_{\theta_1, \theta_2\in\mathcal D}\opnorm{\Sigma_l(\theta_1, x) W_{l+1}(\theta_1) - \Sigma_l(\theta_2, x) W_{l+1}(\theta_2)} \sup_{\theta_2\in\mathcal D}\norm{b_{l+1}(\theta_2, x)}_2 \\
        &\leq \sup_{\theta_1\in\mathcal D}\opnorm{\Sigma_l(\theta_1, x)} \sup_{\theta_1\in\mathcal D}\opnorm{W_{l+1}(\theta_1)} \sup_{\theta_1,\theta_2\in\mathcal D}\norm{b_{l+1}(\theta_1, x) - b_{l+1}(\theta_2, x)}_2 \\
        & \hspace{0.5cm} +\sup_{\theta_1,\theta_2\in\mathcal D} \opnorm{\left( \Sigma_l(\theta_1, x) - \Sigma_l(\theta_2, x)\right) W_{l+1}(\theta_2)} \sup_{\theta_2\in\mathcal D}\norm{b_{l+1}(\theta_2, x)}_2 \\
        & \hspace{0.5cm} + \sup_{\theta_1,\theta_2\in\mathcal D}\opnorm{\Sigma_l(\theta_2, x) \left( W_{l+1}(\theta_1) -  W_{l+1}(\theta_2) \right)} \sup_{\theta_2\in\mathcal D}\norm{b_{l+1}(\theta_2, x)}_2.
    \end{aligned}
\end{equation}
Furthermore, we have that the following results hold. 
\begin{itemize}
    \item[(i)] By Assumption \ref{ass:activationfunc} and Lemma \ref{lemma:optW}, $$\sup_{\theta_1\in\mathcal D}\opnorm{\Sigma_l(\theta_1, x)}\sup_{\theta_1\in\mathcal D}\opnorm{W_{l+1}(\theta_1)} = \bigO{1},$$ with probability $1 - 2 \exp (-c n_{L-1})$ over $W_{l+1}(\theta_0)$;
    \item[(ii)] By inductive hypothesis, $$\sup_{\theta_1,\theta_2\in\mathcal D}\norm{b_{l+1}(\theta_1, x) - b_{l+1}(\theta_2, x)}_2 \le C \,R\,\sqrt{\gamma d n_{L-1}},$$ with probability at least $1 - C \exp(-cn_{L-1})$ over $(W_k(\theta_0))_{k=l+2}^L$ and $x$;
    \item[(iii)] By the same argument of the second statement in Lemma \ref{lemma:featuresinball2} and again Lemma \ref{lemma:optW}, \begin{equation*}
        \begin{split}
            \sup_{\theta_1,\theta_2\in\mathcal D}&\opnorm{\left( \Sigma_l(\theta_1, x) - \Sigma_l(\theta_2, x)\right) W_{l+1}(\theta_2)} \\
            &\leq \sup_{\theta_1,\theta_2\in\mathcal D}\norm{\phi'\left( g_l(\theta_1, x)\right) - \phi'\left( g_l(\theta_2, x) \right)}_2 \sup_{\theta_2\in\mathcal D}\opnorm{W_{l+1}(\theta_2)} \\
            &\le C\,R\,\sqrt{d},
        \end{split}
    \end{equation*} with probability at least $1 - C \exp(-c n_{L-1})$ over $(W_k(\theta_0))_{k=1}^{l+1}$ and $x$;
    \item[(iv)] By Lemma \ref{lemma:binball}, $$\sup_{\theta_2\in\mathcal D}\norm{b_{l+1}(\theta_2, x)}_2 \le C\, \sqrt{\gamma n_{L-1}},$$ with probability at least $1 - C \exp (-c n_{L-1})$ over $(W_k(\theta_0))_{k = l+1}^L$;
    \item[(v)] By Assumption \ref{ass:activationfunc}, $$\sup_{\theta_1,\theta_2\in\mathcal D}\opnorm{\Sigma_l(\theta_2, x) \left( W_{l+1}(\theta_1) -  W_{l+1}(\theta_2) \right)} \le C\,R .$$
\end{itemize}
By combining (i)-(v) with \eqref{eq:i1}, we conclude that
\begin{equation}
\begin{aligned}
   \sup_{\theta_1,\theta_2\in\mathcal D} \norm{b_l(\theta_1,x) - b_l(\theta_2,x)}_2 &\leq C\,R\,\sqrt{\gamma d n_{L-1}},
\end{aligned}
\end{equation}
with probability at least $1 - C \exp(-cn_{L-1})$ over $x$ and $(W_k(\theta_0))_{k=1}^L$, which concludes the proof.
\end{proof}

\begin{lemma}[Difference of Jacobians in $\mathcal D$]\label{lemma:jacobianinball}
Let $\theta_0$ be defined in \eqref{eq:theta0}, $x \sim P_X$, and $\mathcal D = \mathcal B(\theta_0, R)$. Assume that $R = o(1)$ and that $\gamma>1$. Then, we have
    \begin{equation}
\sup_{\theta_1,\theta_2\in\mathcal D}        \opnorm{J(\theta_1) - J(\theta_2)} \le C \,R\,d\sqrt{ \gamma n_{L-1}N},
    \end{equation}
    with probability at least $1 - C N \exp (-c n_{L-1})$ over $(x_i)_{i=1}^N$ and $\theta_0$, where $c, C$ are numerical constants.
\end{lemma}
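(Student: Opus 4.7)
The plan is to bound $\opnorm{J(\theta_1) - J(\theta_2)}$ by its Frobenius norm and then decompose the squared Frobenius norm sample-by-sample and layer-by-layer. Recall that for any $\theta$, the block of the $i$-th row of $J(\theta)$ corresponding to the parameters of layer $k\in[L]$ has the form $f_{k-1}(\theta,x_i)\otimes b_k(\theta,x_i)$ (this is just the chain rule $\partial f_L(x_i)/\partial (W_k)_{jp} = (f_{k-1}(\theta,x_i))_j(b_k(\theta,x_i))_p$ vectorized). The duplicated initialization scheme defining $\theta_0$ does not alter this structure away from $\theta_0$, since once we move into $\mathcal D$ the four blocks $W_{L-1}^{(1)}, W_{L-1}^{(2)}, W_L^{(1)}, W_L^{(2)}$ are independent free parameters. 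Thus
\begin{equation*}
    \|J(\theta_1)-J(\theta_2)\|_F^2 = \sum_{i=1}^N\sum_{k=1}^L \bigl\|f_{k-1}(\theta_1,x_i)\otimes b_k(\theta_1,x_i) - f_{k-1}(\theta_2,x_i)\otimes b_k(\theta_2,x_i)\bigr\|_2^2.
\end{equation*}

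The next step is the standard telescoping identity for Kronecker products,
\begin{equation*}
    \|u_1\otimes v_1 - u_2\otimes v_2\|_2 \leq \|u_1-u_2\|_2\|v_1\|_2 + \|u_2\|_2\|v_1-v_2\|_2,
\end{equation*}
applied with $u=f_{k-1}(\cdot,x_i)$ and $v=b_k(\cdot,x_i)$. I then plug in the four sup-bounds from the preceding lemmas: $\sup_{\theta\in\mathcal D}\|f_{k-1}(\theta,x_i)\|_2=\bigO{\sqrt{d}}$ and $\sup_{\theta_1,\theta_2\in\mathcal D}\|f_{k-1}(\theta_1,x_i)-f_{k-1}(\theta_2,x_i)\|_2\leq CR\sqrt{d}$ from Lemma \ref{lemma:featuresinball2}, together with $\sup_{\theta\in\mathcal D}\|b_k(\theta,x_i)\|_2\leq C\sqrt{\gamma n_{L-1}}$ and $\sup_{\theta_1,\theta_2\in\mathcal D}\|b_k(\theta_1,x_i)-b_k(\theta_2,x_i)\|_2\leq CR\sqrt{\gamma d n_{L-1}}$ from Lemmas \ref{lemma:binball} and \ref{lemma:binball2}. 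Combining gives
\begin{equation*}
    \sup_{\theta_1,\theta_2\in\mathcal D}\bigl\|f_{k-1}(\theta_1,x_i)\otimes b_k(\theta_1,x_i) - f_{k-1}(\theta_2,x_i)\otimes b_k(\theta_2,x_i)\bigr\|_2 \leq CR\sqrt{\gamma d^2 n_{L-1}},
\end{equation*}
where the $d^2$ term (coming from $\sqrt{d}\cdot\sqrt{\gamma d n_{L-1}}$) dominates since $d\geq 1$.

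Summing this squared bound over $k\in[L]$ (absorbing the constant $L$) and $i\in[N]$, and then using $\opnorm{M}\leq\|M\|_F$, yields $\sup_{\theta_1,\theta_2\in\mathcal D}\opnorm{J(\theta_1)-J(\theta_2)} \leq CRd\sqrt{\gamma n_{L-1}N}$, which is the target. For the probability bound, I would take a union bound over $i\in[N]$ of the joint good event on which Lemmas \ref{lemma:featuresinball2}, \ref{lemma:binball}, and \ref{lemma:binball2} simultaneously hold for $x=x_i$; each has failure probability $C\exp(-cn_{L-1})$ over the weights $\theta_0$ and $x_i$, giving the stated $CN\exp(-cn_{L-1})$ after the union bound. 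I do not anticipate a genuine obstacle here: the whole argument is bookkeeping around the Kronecker identity plus the four auxiliary estimates, and the only points needing care are (i) verifying that the Jacobian block structure $f_{k-1}\otimes b_k$ is unaffected by the twin initialization once $\theta\neq\theta_0$, and (ii) ensuring the worst-case scaling in the Kronecker identity is indeed the second summand, both of which are straightforward.
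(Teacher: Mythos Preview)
Your proposal is correct and follows essentially the same route as the paper: bound the operator norm by the Frobenius norm, decompose each row layer-by-layer via the Kronecker structure $f_{k-1}\otimes b_k$, telescope, and plug in the four auxiliary estimates from Lemmas \ref{lemma:featuresinball2}, \ref{lemma:binball}, \ref{lemma:binball2}, then union bound over the $N$ samples. The paper's proof is the same argument with the same ingredients, so there is nothing to add.
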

\begin{proof}
Pick $i \in [N]$. Then, we have
\begin{equation}
    \begin{aligned}
\sup_{\theta_1,\theta_2\in\mathcal D}    &    \norm{(J(\theta_1))_{i:} - (J(\theta_2))_{i:}}^2_2 \\
&\le \sum_{l=0}^{L-1} \sup_{\theta_1,\theta_2\in\mathcal D}\norm{(F_l(\theta_1))_{:i} \otimes (B_{l+1}(\theta_1))_{:i} - (F_l(\theta_2))_{:i} \otimes (B_{l+1}(\theta_2))_{:i} }_2^2 \\
        &= \sum_{l=0}^{L-1} \sup_{\theta_1,\theta_2\in\mathcal D}\norm{f_l(\theta_1, x_i) \otimes b_{l+1}(\theta_1, x_i) - f_l(\theta_2, x_i) \otimes b_{l+1}(\theta_2, x_i)}_2^2\\
        &\leq \sum_{l=0}^{L-1} \sup_{\theta_1,\theta_2\in\mathcal D}\norm{\left( f_l(\theta_1, x_i) - f_l(\theta_2, x_i) \right) \otimes b_{l+1}(\theta_1, x_i)}_2^2 \\
        & \hspace{0.5cm} + \sum_{l=0}^{L-1} \sup_{\theta_1,\theta_2\in\mathcal D}\norm{f_l(\theta_2, x_i) \otimes \left(b_{l+1}(\theta_1, x_i) - b_{l+1}(\theta_1, x_i) \right) }_2^2\\
        &\leq \sum_{l=0}^{L-1} \sup_{\theta_1,\theta_2\in\mathcal D}\norm{f_l(\theta_1, x_i) - f_l(\theta_2, x_i)}_2^2 \sup_{\theta_1\in\mathcal D}\norm{b_{l+1}(\theta_1, x_i)}_2^2 \\
        & \hspace{0.5cm} + \sum_{l=0}^{L-1}\sup_{\theta_2\in\mathcal D} \norm{f_l(\theta_2, x_i)}_2^2 \sup_{\theta_1,\theta_2\in\mathcal D} \norm{b_{l+1}(\theta_1, x_i) - b_{l+1}(\theta_1, x_i)}_2^2.
    \end{aligned}
\end{equation}
Since $x_i \sim P_X$, we can merge together the results from Lemmas \ref{lemma:featuresinball1}, \ref{lemma:featuresinball2}, \ref{lemma:binball} and \ref{lemma:binball2} and obtain
\begin{equation}\label{eq:optboundonrows}
    \begin{aligned}
       \sup_{\theta_1,\theta_2\in\mathcal D} \norm{(J(\theta_1))_{i:} - (J(\theta_2))_{i:}}^2_2 &\leq C\gamma R^2 d^2 n_{L-1},
    \end{aligned}
\end{equation}
with probability at least $1 - C \exp (-c n_{L-1})$ over $x_i$ and $\theta_0$.

Therefore, we have
\begin{equation}
\begin{aligned}
   \sup_{\theta_1,\theta_2\in\mathcal D}  \opnorm{J(\theta_1) - J(\theta_2)} &\leq \sup_{\theta_1,\theta_2\in\mathcal D} \norm{J(\theta_1) - J(\theta_2)}_F \\
    &\le \sqrt{\sum_{i = 1}^N \sup_{\theta_1,\theta_2\in\mathcal D} \norm{(J(\theta_1))_{i:} - (J(\theta_2))_{i:}}^2_2} \\
    &\leq C\,R\,d\sqrt{\gamma n_{L-1}N},
\end{aligned}
\end{equation}
with probability $1 - C N \exp (-c n_{L-1})$ over $(x_i)_{i=1}^N$ and $\theta_0$.
\end{proof}

\begin{lemma}[NTK spectrum in $\mathcal D$]\label{lemma:opnormsigminball}
  Let $\theta_0$ be defined in \eqref{eq:theta0}, $x \sim P_X$, and $\mathcal D = \mathcal B(\theta_0, R)$. Assume that $R = o(1)$ and that $\gamma>1$. Then, we have
    \begin{equation}
      \sup_{\theta\in\mathcal D}  \opnorm{K(\theta)} \le C\,\gamma\, N\, d\, n_{L-1},
    \end{equation}
    with probability at least $1 - C\,N \exp(-c n_{L-1})$ over $\theta_0$ and $(x_i)_{i=1}^N$, where $c,C$ are numerical constants. Furthermore, 
    \begin{equation}
      \inf_{\theta\in\mathcal D}   \sigma_{\textup{min}}(J(\theta)) \geq c_1\sqrt{\gamma n_{L-2} n_{L-1}} - C_1\,R\,d\sqrt{\gamma n_{L-1}N},
    \end{equation}
    with probability at least $1 - C\,N e^{-c \log^2 n_{L-1}} - C e^{-c \log^2 N}$ over $\theta_0$ and $(x_i)_{i=1}^N$, where $c_1, C_1$ are also numerical constants.
\end{lemma}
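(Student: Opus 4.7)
The plan is to derive both bounds directly from the lemmas already assembled in this appendix, using Weyl's inequality to transfer known estimates at $\theta_0$ to a uniform statement over $\mathcal D$.

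For the upper bound on $\sup_{\theta\in\mathcal D}\opnorm{K(\theta)}$, I would first use
\begin{equation}
\opnorm{K(\theta)} = \opnorm{J(\theta) J(\theta)^\top} \le \norm{J(\theta)}_F^2 = \sum_{i=1}^N \norm{(J(\theta))_{i:}}_2^2,
\end{equation}
and then bound each row by exploiting the decomposition of the Jacobian into layer contributions, as done at the start of the proof of Lemma \ref{lemma:jacobianinball}:
\begin{equation}
\norm{(J(\theta))_{i:}}_2^2 \le \sum_{l=0}^{L-1} \norm{f_l(\theta,x_i)}_2^2 \, \norm{b_{l+1}(\theta,x_i)}_2^2.
\end{equation}
At this point Lemma \ref{lemma:featuresinball2} gives $\sup_{\theta\in\mathcal D}\norm{f_l(\theta,x_i)}_2 = \mathcal O(\sqrt d)$ and Lemma \ref{lemma:binball} gives $\sup_{\theta\in\mathcal D}\norm{b_{l+1}(\theta,x_i)}_2 \le C\sqrt{\gamma n_{L-1}}$, each with probability at least $1 - C\exp(-cn_{L-1})$ over $\theta_0$ and $x_i$. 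Taking a union bound over $i\in[N]$ yields the claimed bound $\sup_{\theta\in\mathcal D}\opnorm{K(\theta)} \le C\gamma N d n_{L-1}$, up to a failure probability $CN\exp(-cn_{L-1})$.

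For the lower bound on $\inf_{\theta\in\mathcal D}\sigma_{\min}(J(\theta))$, I would apply Weyl's inequality
\begin{equation}
\sigma_{\min}(J(\theta)) \ge \sigma_{\min}(J(\theta_0)) - \opnorm{J(\theta) - J(\theta_0)},
\end{equation}
and then plug in the two ingredients already available: Lemma \ref{lemma:firstlemmaopt} provides $\sigma_{\min}(J(\theta_0)) \ge c_1\sqrt{\gamma n_{L-2} n_{L-1}}$ with probability at least $1 - C N e^{-c\log^2 n_{L-1}} - C e^{-c\log^2 N}$ over $\theta_0$ and $(x_i)_{i=1}^N$, while Lemma \ref{lemma:jacobianinball} gives the uniform estimate $\sup_{\theta\in\mathcal D}\opnorm{J(\theta) - J(\theta_0)} \le C_1 R d \sqrt{\gamma n_{L-1} N}$ with probability at least $1 - C N\exp(-cn_{L-1})$. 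A union bound over these two events then yields the stated lower bound with the required probability.

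This proof is essentially routine given the preceding lemmas; no step looks technically hard. The main subtlety to keep track of is probability bookkeeping, since the events from Lemmas \ref{lemma:featuresinball2}, \ref{lemma:binball}, \ref{lemma:jacobianinball}, and \ref{lemma:firstlemmaopt} must all hold simultaneously on the same realization of $\theta_0$ and $(x_i)_{i=1}^N$. This is handled by a single union bound, and the dominant tail terms $CNe^{-c\log^2 n_{L-1}} + Ce^{-c\log^2 N}$ come from the smallest-eigenvalue bound at $\theta_0$ in Lemma \ref{lemma:firstlemmaopt}, which absorbs the weaker $CN\exp(-cn_{L-1})$ tails coming from the uniform-in-$\mathcal D$ estimates.
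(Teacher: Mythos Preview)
Your proposal is correct and follows essentially the same route as the paper: for the lower bound you use Weyl's inequality together with Lemmas \ref{lemma:firstlemmaopt} and \ref{lemma:jacobianinball} exactly as the paper does, and for the upper bound both arguments reduce to the same estimate $\sum_i\sum_l \norm{f_l(\theta,x_i)}_2^2\norm{b_{l+1}(\theta,x_i)}_2^2$ via Lemmas \ref{lemma:featuresinball2} and \ref{lemma:binball}, the only cosmetic difference being that you bound $\opnorm{K(\theta)}\le \norm{J(\theta)}_F^2$ directly while the paper passes through the Hadamard decomposition $K(\theta)=\sum_l F_lF_l^\top\circ B_{l+1}B_{l+1}^\top$ before taking Frobenius norms.
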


\begin{proof}
We have
\begin{equation}
    \begin{aligned}
    \sup_{\theta\in\mathcal D}    \opnorm{K(\theta)} &= \sup_{\theta\in\mathcal D}\opnorm{\sum_{l = 0}^{L-1} F_l(\theta) F_l^\top(\theta) \circ B_{l+1}(\theta) B_{l+1}^\top(\theta)} \\
        &\leq \sum_{l = 0}^{L-1} \sup_{\theta\in\mathcal D}\opnorm{F_l(\theta) F_l^\top(\theta) \circ B_{l+1}(\theta) B_{l+1}^\top(\theta)} \\
        &\leq \sum_{l = 0}^{L-1} \sup_{\theta\in\mathcal D}\opnorm{F_l(\theta) F_l^\top(\theta)} \sup_{\theta\in\mathcal D}\max_{i\in[N]} \norm{\left( B_{l+1}(\theta) \right)_{i:}}_2^2 \\
        &\leq \sum_{l = 0}^{L-1}\sup_{\theta\in\mathcal D} \norm{F_l(\theta)}_F^2 \sup_{\theta\in\mathcal D}\max_{i\in [N]} \norm{b_{l+1}(\theta, x_i)}_2^2 \\
        &\le \sum_{l = 0}^{L-1} \left(\sum_{i=1}^N \sup_{\theta\in\mathcal D}\norm{f_l(\theta, x_i)}_2^2\right) \sup_{\theta\in\mathcal D}\max_i \norm{b_{l+1}(\theta, x_i)}_2^2.
    \end{aligned}
\end{equation}
By Lemma \ref{lemma:featuresinball2}, we have that $\sup_{\theta\in\mathcal D}\norm{f_l(\theta, x_i)}_2^2 \le C\,d$ with probability at least $1 - C \exp(-c n_{L-1})$ over $(W_k(\theta_0))_{k=1}^l$ and $x_i$, for any $0 \leq l \leq L-1$ and $i \in [N]$. By Lemma \ref{lemma:binball}, we have that $\sup_{\theta\in\mathcal D}\norm{b_l(\theta, x_i)}_2^2 \le C \gamma n_{L-1}$ with probability at least $1 - C \exp (-c n_{L-1})$ over $(W_k(\theta_0))_{k = l+1}^L$, for any $l \in [L]$ and $i \in [N]$. Therefore, we obtain
\begin{equation}
\sup_{\theta\in\mathcal D}    \opnorm{K(\theta)} \le C\,\gamma\, N\, d\, n_{L-1},
\end{equation}
with probability at least $1 - C\,N \exp(-c n_{L-1})$ over $\theta_0$ and $(x_i)_{i=1}^N$, which gives the first statement of the lemma.

By using Weyl's inequality, we get
\begin{equation}
\begin{aligned}
 \inf_{\theta\in\mathcal D}   \sigma_{\text{min}}(J(\theta)) &\geq \sigma_{\text{min}}(J(\theta_0)) - \sup_{\theta\in\mathcal D}\opnorm{J(\theta_1) - J(\theta_0)} \\
    &\geq c_1\sqrt{\gamma n_{L-2} n_{L-1}} - C_1\,R\,d\,\sqrt{\gamma \,n_{L-1}\,N},
\end{aligned}
\end{equation}
where the last inequality follows from Lemma \ref{lemma:firstlemmaopt} and Lemma \ref{lemma:jacobianinball}, and it holds with probability $1 - C\,N e^{-c \log^2 n_{L-1}} - C e^{-c \log^2 N}$ over $(x_i)_{i=1}^N$ and $\theta_0$. This gives the second statement of the lemma and concludes the proof.
\end{proof}

Armed with Proposition \ref{prop:optim} and the intermediate estimates of Lemmas \ref{lemma:firstlemmaopt}-\ref{lemma:opnormsigminball}, we are finally ready to prove Theorem \ref{thm:optimization}.

\begin{proof}[Proof of Theorem \ref{thm:optimization}]
We show that there exist two absolute constants $\tilde c$ and $\tilde C$ such that
    \begin{equation}\label{eq:optalpha}
        \alpha = \tilde c \sqrt{\gamma n_{L-2} n_{L-1}}
    \end{equation}
    and
    \begin{equation}\label{eq:betadef}
        \beta = \tilde C \sqrt{\gamma Nd n_{L-1}}
    \end{equation}
    satisfy the two assumptions in Proposition \ref{prop:optim} with initialization $\tilde\theta_0:=\theta_0$, where $\theta_0$ is defined in \eqref{eq:theta0}. This holds with probability at least $1 - C\,N e^{-c \log^2 n_{L-1}} - C e^{-c \log^2 N}$ over $(x_i)_{i=1}^N$ and $\theta_0$. 
    
    Recall from Proposition \ref{prop:optim} that $R$ is defined as $ 4\norm{F_L(\theta_0) - Y}_2/\alpha$, since we have set $\tilde\theta_0=\theta_0$. For the moment, we assume that
    \begin{equation}\label{eq:Rassm}
        R=\bigO{\sqrt{\frac{N}{\gamma n_{L-2}n_{L-1}}}},
    \end{equation}
    and we will verify that this is the case later. Note that $\gamma=d^3N^2>1$ and, hence, \eqref{eq:Rassm} and Assumption \ref{ass:overparam} imply that $R=o(1)$. Thus, we can apply Lemma \ref{lemma:opnormsigminball} and obtain
\begin{equation}\label{eq:optboundsigmamin}
\begin{aligned}
   \inf_{\theta\in\mathcal D}  \sigma_{\text{min}}(J(\theta)) &\geq c_1\sqrt{\gamma n_{L-2} n_{L-1}} - C_1\,R\,d\sqrt{\gamma n_{L-1}N} \geq \tilde{c}\sqrt{\gamma n_{L-2} n_{L-1}},
\end{aligned}
\end{equation}
with probability at least $1 - C\,N e^{-c \log^2 n_{L-1}} - C e^{-c \log^2 N}$ over $(x_i)_{i=1}^N$ and $\theta_0$, where the last inequality uses \eqref{eq:Rassm}. This shows that the lower bound in \eqref{eq:optass1} holds. 

Now, by using \eqref{eq:optboundsigmamin}, we verify that \eqref{eq:Rassm} holds. Recall that, by assumption of the theorem, $\norm{Y}_2 = \Theta(\sqrt{N})$. Furthermore, by Lemma \ref{lemma:firstlemmaopt}, $F_L(\theta_0)$ is a vector of all zeros. Then, 

\begin{equation}
    R = \frac{4\norm{F_L(\theta_0) - Y}_2}{\alpha} = \frac{4\norm{Y}_2}{\alpha} = \bigO{\sqrt{\frac{N}{\gamma n_{L-2} n_{L-1}}}}.
\end{equation}

By Lemma \ref{lemma:opnormsigminball}, we have that
\begin{equation}
 \sup_{\theta\in\mathcal D}   \opnorm{J(\theta)} \le C \sqrt{\gamma N d n_{L-1}},
\end{equation}
with probability at least $1 - C\,N \exp(-c n_{L-1})$ over $\theta_0$. Thus, by our choice \eqref{eq:betadef} of $\beta$, we obtain that the upper bound in \eqref{eq:optass1} holds.   

Next, we verify the second assumption of Proposition \ref{prop:optim}. To do so, let us write 
\begin{equation}\label{eq:optalphabeta}
    \frac{\alpha^2}{2\beta} = \frac{\tilde{c}^2 n_{L-2}n_{L-1} \gamma}{2\tilde C \sqrt{\gamma Nd n_{L-1}}} = \Omega (\sqrt{n_{L-2}} N d ),
\end{equation}
where we have used Assumption \ref{ass:overparam}. Thus, 
\begin{equation}
  \sup_{\theta_1, \theta_2\in\mathcal D}      \opnorm{J(\theta_1) - J(\theta_2)} \le C \,R\,d\sqrt{ \gamma n_{L-1}N} = \bigO{\frac{d N}{\sqrt{n_{L-2}}}}\le  \frac{\alpha^2}{2\beta},
\end{equation}
with probability at least $1 - C\, N \exp (-c n_{L-1})$ over $(x_i)_{i=1}^N$ and $\theta_0$. Here, the first passage follows from Lemma \ref{lemma:jacobianinball}, in the second passage we use \eqref{eq:Rassm}, and in the last one we use \eqref{eq:optalphabeta}. This completes the proof of \eqref{eq:optass2} and also of the theorem, since the desired claim follows from an application of Proposition \ref{prop:optim}.
\end{proof}

\end{document}